\documentclass{article}

\usepackage[top=1in, bottom=1in, left=1in, right=1in]{geometry}

%--------------------------------------------------------------------------------------------------------------------------------
% Dependencies
%--------------------------------------------------------------------------------------------------------------------------------
\usepackage{comment,url,algorithm,algorithmic,graphicx,subcaption,relsize}
\usepackage{amssymb,amsfonts,amsmath,amsthm,amscd,dsfont,mathrsfs,mathtools,nicefrac}
\usepackage{float,psfrag,epsfig,color,xcolor,url,hyperref}
\usepackage{epstopdf,bbm,enumitem}
\usepackage[toc,page]{appendix}
\usepackage[mathscr]{euscript}

%%%%%%%%%% Page layout
% \usepackage[top=1in, bottom=1in, left=1in, right=1in]{geometry}
% \footnotesep 14pt
% \floatsep 27pt plus 2pt minus 4pt   
% \textfloatsep 40pt plus 2pt minus 4pt
% \intextsep 27pt plus 4pt minus 4pt
% %% 
% \topmargin 0.1 in
% \headsep -0.15in
% \textheight 8.5in
% \oddsidemargin -0.08in
% \evensidemargin -0.08in
% \textwidth 6.4in

%--------------------------------------------------------------------------------------------------------------------------------
% Common differentials with a small space in front of them
%--------------------------------------------------------------------------------------------------------------------------------

 % Brownian motion
 % Wiener process

%--------------------------------------------------------------------------------------------------------------------------------
% Set notation
%--------------------------------------------------------------------------------------------------------------------------------
 % Well-spaced \middle | symbol

\def\ddefloop#1{\ifx\ddefloop#1\else\ddef{#1}\expandafter\ddefloop\fi}

%mathbb
\def\ddef#1{\expandafter\def\csname bb#1\endcsname{\ensuremath{\mathbb{#1}}}}
\ddefloop ABCDEFGHIJKLMNOPQRSTUVWXYZ\ddefloop

%mathcal
\def\ddef#1{\expandafter\def\csname c#1\endcsname{\ensuremath{\mathcal{#1}}}}
\ddefloop ABCDEFGHIJKLMNOPQRSTUVWXYZ\ddefloop

%boldsymbol letter
\def\ddef#1{\expandafter\def\csname v#1\endcsname{\ensuremath{\boldsymbol{#1}}}}
\ddefloop ABCDEFGHIJKLMNOPQRSTUVWXYZabcdefghijklmnopqrstuvwxyz\ddefloop

%boldsymbol greeks
\def\ddef#1{\expandafter\def\csname v#1\endcsname{\ensuremath{\boldsymbol{\csname #1\endcsname}}}}
\ddefloop {alpha}{beta}{gamma}{delta}{epsilon}{varepsilon}{zeta}{eta}{theta}{vartheta}{iota}{kappa}{lambda}{mu}{nu}{xi}{pi}{varpi}{rho}{varrho}{sigma}{varsigma}{tau}{upsilon}{phi}{varphi}{chi}{psi}{omega}{Gamma}{Delta}{Theta}{Lambda}{Xi}{Pi}{Sigma}{varSigma}{Upsilon}{Phi}{Psi}{Omega}{ell}\ddefloop

%--------------------------------------------------------------------------------------------------------------------------------
% Environment shortcuts
%--------------------------------------------------------------------------------------------------------------------------------
\def\balign#1\ealign{\begin{align}#1\end{align}}
\def\baligns#1\ealigns{\begin{align*}#1\end{align*}}
\def\balignat#1\ealign{\begin{alignat}#1\end{alignat}}
\def\balignats#1\ealigns{\begin{alignat*}#1\end{alignat*}}
\def\bitemize#1\eitemize{\begin{itemize}#1\end{itemize}}
\def\benumerate#1\eenumerate{\begin{enumerate}#1\end{enumerate}}

% Align environments that use textstyle instead of displaystyle
\newenvironment{talign*}
{\csname align*\endcsname}
{\endalign}
\newenvironment{talign}
{\csname align\endcsname}
{\endalign}

\def\balignst#1\ealignst{\begin{talign*}#1\end{talign*}}
\def\balignt#1\ealignt{\begin{talign}#1\end{talign}}
%---------------------------------------------------

%--------------------------------------------------------------------------------------------------------------------------------
% For leaving notes in a document
%--------------------------------------------------------------------------------------------------------------------------------

%--------------------------------------------------------------------------------------------------------------------------------
%Text with quads around it
%--------------------------------------------------------------------------------------------------------------------------------

%--------------------------------------------------------------------------------------------------------------------------------
% Redefine left and right to remove initial and trailing space
%--------------------------------------------------------------------------------------------------------------------------------
\let\originalleft\left
\let\originalright\right
\renewcommand{\left}{\mathopen{}\mathclose\bgroup\originalleft}
\renewcommand{\right}{\aftergroup\egroup\originalright}

%--------------------------------------------------------------------------------------------------------------------------------
% Words with special symbols
%--------------------------------------------------------------------------------------------------------------------------------

%--------------------------------------------------------------------------------------------------------------------------------
% Smaller citations
%--------------------------------------------------------------------------------------------------------------------------------
\def\tinycitep*#1{{\tiny\citep*{#1}}}
\def\tinycitealt*#1{{\tiny\citealt*{#1}}}
\def\tinycite*#1{{\tiny\cite*{#1}}}
\def\smallcitep*#1{{\scriptsize\citep*{#1}}}
\def\smallcitealt*#1{{\scriptsize\citealt*{#1}}}
\def\smallcite*#1{{\scriptsize\cite*{#1}}}

%--------------------------------------------------------------------------------------------------------------------------------
% Colors
%--------------------------------------------------------------------------------------------------------------------------------

%--------------------------------------------------------------------------------------------------------------------------------
% Font styles
%--------------------------------------------------------------------------------------------------------------------------------
\def\mbi#1{\boldsymbol{#1}} % Bold and italic (math bold italic)

\def\mbb#1{\mathbb{#1}}

% norms, trace and others
\newcommand{\norm}[1]{\left\lVert#1\right\rVert}
\newcommand{\snorm}[1]{\lVert#1\rVert}

\theoremstyle{plain}

%%%%%%%%%%%%%%%%%%%%%%%%%%%%%%%%%%%%%%%%%%%%%%%%%%%%55

%--------------------------------------------------------------------------------------------------------------------------------
% Bold and italic variables
%--------------------------------------------------------------------------------------------------------------------------------

%--------------------------------------------------------------------------------------------------------------------------------
% Textstyle vs. displaystyle
%--------------------------------------------------------------------------------------------------------------------------------
 % Sum in textstyle form
 % Prod in textstyle form
 % Bigcap in textstyle form
 % Bigcup in textstyle form

%--------------------------------------------------------------------------------------------------------------------------------
% Mathematical sets
%--------------------------------------------------------------------------------------------------------------------------------
 % Real number symbol
\def\R{\mathbb{R}}
 % Integer symbol

 % Rational numbers

 % Natural numbers
\def\N{\mathbb{N}}
 % Complex numbers

%--------------------------------------------------------------------------------------------------------------------------------
% Special symbols
%--------------------------------------------------------------------------------------------------------------------------------
\def\<{\left\langle} % Angle brackets
\def\>{\right\rangle}

%\def\choose#1#2{\left(\begin{array}{c}{#1} \\ {#2}\end{array}\right)}

 % defined equal to
 % backslash

 % Sum in textstyle form

% Semidefinite orders

%--------------------------------------------------------------------------------------------------------------------------------
% Vectors and matrices
%--------------------------------------------------------------------------------------------------------------------------------
 % Bold 1
 % Identity matrix
% \def\v#1{\mbi{#1}} % Vector notation
\def\norm#1{\left\|{#1}\right\|} % A norm with 1 argument
 % L1 norm
 % L2 norm
 % Linfty norm
 % Operator norm
 % Frobenius norm
 % Frobenius norm
 % A static norm that does not resize with input
 % L2 norm
 % inner product
 % Inner product with expandable brackets

 % An n-vector with three arguments

%--------------------------------------------------------------------------------------------------------------------------------
% Eigenvalues
%--------------------------------------------------------------------------------------------------------------------------------

%--------------------------------------------------------------------------------------------------------------------------------
% Operators
%-------------------------------------------------------------------------------------------------------------------------------- 
\def\Re{\operatorname{Re}} % Real part
 % Indicator function
 % log with argument

 % max with argument
 % min with argument
\def\E{\mbb{E}} % Expectation symbol

 % big-oh notation
 % big-oh hiding log factors
 % big-theta
 % big-theta hiding log factors
 % big-oh in probability notation
 % big-theta in probability notation
 % little-oh notation
 % little-oh in probability notation
\def\P{\mbb{P}} % Probability symbol

\newcommand{\Tr}{\operatorname{Tr}}
 % transpose
 % Trace with argument
 % trace with argument
 % Variance symbol

 % Covariance symbol

 % Covariance symbol

 % Information symbol
 % Exponential 
 % An exponential with parens that do not resize with input
 % log likelihood

% Copied from mathrsfs.sty
\DeclareSymbolFont{rsfs}{U}{rsfs}{m}{n}
\DeclareSymbolFontAlphabet{\mathscrsfs}{rsfs}
%--------------------------------------------------------------------------------------------------------------------------------
% Distributions
%--------------------------------------------------------------------------------------------------------------------------------

%--------------------------------------------------------------------------------------------------------------------------------
% Derivative symbols
%--------------------------------------------------------------------------------------------------------------------------------

 % Hessian
 % Laplace operator / Laplacian
 % derivative
 % partial derivative

%--------------------------------------------------------------------------------------------------------------------------------
% Probability and statistics macros
%--------------------------------------------------------------------------------------------------------------------------------

%\def\indep{\perp\!\!\!\perp} % conditional independence

%--------------------------------------------------------------------------------------------------------------------------------
% Optimization macros
%--------------------------------------------------------------------------------------------------------------------------------
 % Defining math symbols

%\providecommand{\abs}{\mathop\mathrm{abs}}
\providecommand{\diag}{\mathop\mathrm{diag}}
\providecommand{\offdiag}{\mathop\mathrm{offdiag}}

 % Convex hull
\def\rank#1{\mathrm{rank}({#1})}
\def\supp#1{\mathrm{supp}({#1})}

%\renewcommand\eqref[1]{Eq.~(\ref{#1})}

%--------------------------------------------------------------------------------------------------------------------------------
% Proof environments
%--------------------------------------------------------------------------------------------------------------------------------
\ifdefined\nonewproofenvironments\else
% The Theorems are numbered consecutively
% Lemmas are numbered by section, and observations, claims, facts, and 
% assumptions take their numbering. Propositions and definitions have their
% own numbering by section.
\ifdefined\ispres\else
% These conflict with Beamer definitions in pres mode
\newtheorem{theorem}{Theorem}
\newtheorem{lemma}[theorem]{Lemma}
\newtheorem{coro}[theorem]{Corollary}

\newtheorem{prop}[theorem]{Proposition}

\theoremstyle{definition}
\newtheorem{defi}[theorem]{Definition}

\renewenvironment{proof}{\noindent\textbf{Proof.}\hspace*{.3em}}{\qed\\}
\newenvironment{proof-sketch}{\noindent\textbf{Proof Sketch}
	\hspace*{0.3em}}{\qed\\}
\newenvironment{proof-idea}{\noindent\textbf{Proof Idea}
	\hspace*{0.3em}}{\qed\\}
\newenvironment{proof-of-lemma}[1][{}]{\noindent\textbf{Proof of Lemma {#1}.}
	\hspace*{0.3em}}{\qed\\}
\newenvironment{proof-of-theorem}[1][{}]{\noindent\textbf{Proof of Theorem {#1}.}
	\hspace*{0.3em}}{\qed\\}
\newenvironment{proof-of-coro}[1][{}]{\noindent\textbf{Proof of Corollary {#1}.}
	\hspace*{0.3em}}{\qed\\}
\fi

\newtheorem{proposition}[theorem]{Proposition}

\newtheorem{assumption}{Assumption}

%\renewcommand{\theassumption}{\Alph{assumption}} % Set counter for assumptions
% to be alphabetical
\fi
% Makes equation numbers have (1.1) style
% \numberwithin{equation}{section}
% \numberwithin{equation}{subsection}
\makeatletter
\@addtoreset{equation}{section}
\makeatother

\hypersetup{
colorlinks,
linkcolor={red!50!black},
citecolor={blue!50!black},
urlcolor={blue!80!black}
}

%--------------------------------------------------------------------------------------------------------------------------------
% For Deep Learning
%--------------------------------------------------------------------------------------------------------------------------------

%--------------------------------------------------------------------------------------------------------------------------------
% Equation environments
%--------------------------------------------------------------------------------------------------------------------------------

\mathtoolsset{showonlyrefs}

\usepackage{caption}
\usepackage{authblk}
\usepackage{etoc}
\usepackage{wrapfig}

\allowdisplaybreaks

\newcommand{\iid}{\stackrel{\mathrm{\tiny{i.i.d.}}}{\sim}}

\newcommand{\C}{\mathbb{C}}

\newcommand{\eps}{\varepsilon}

\renewcommand{\Im}{\operatorname{Im}}
\newcommand{\bI}{\mathbf{1}}
\newcommand{\dist}[1]{\textnormal{dist}(#1)}
\newcommand{\vcR}{\mbi{\cR}}

\newcommand{\spec}{\operatorname{spec}}

\newcommand{\MP}{\textnormal{MP}}

\newcommand{\SD}[1]{O_{\prec}\left(#1 \right)}
\newcommand{\SDcE}[1]{O_{\prec}^{\,\cE}\left(#1 \right)}
\newcommand{\SDcEL}[1]{O_{\prec}^{\,\cE_L}\left(#1 \right)}
\newcommand{\SDcES}[1]{O_{\prec}^{\,\cE(S)}\left(#1 \right)}

\newcommand{\citep}[1]{\cite{#1}} 
\newcommand{\citet}[1]{\cite{#1}}

\makeatletter
\renewcommand{\paragraph}{%
\@startsection{paragraph}{4}%
{\z@}{1.5ex \@plus 1ex \@minus .2ex}{-1em}%
{\normalfont\normalsize\bfseries}%
}
\makeatother

\title{Nonlinear spiked covariance matrices and signal propagation\\
 in deep neural networks}
 
\author{
Zhichao Wang\thanks{Department of Mathematics, University of California San Diego.  \texttt{zhw036@ucsd.edu}.} ,\,
Denny Wu\thanks{Center for Data Science, New York University, and Flatiron Institute. \texttt{dennywu@nyu.edu}.} ,\,
Zhou Fan\thanks{Department of Statistics and Data Science, Yale University. \texttt{zhou.fan@yale.edu}. 
} 

}
\date{}

\begin{document}
\maketitle

\begin{abstract}

Many recent works have studied the eigenvalue spectrum of the Conjugate Kernel
(CK) defined by the nonlinear feature map of a feedforward neural network.
However, existing results only establish weak convergence of the empirical
eigenvalue distribution, and fall short of providing precise quantitative
characterizations of the ``spike'' eigenvalues and eigenvectors that
often capture the low-dimensional signal structure of the 
learning problem. In this work, we characterize these signal eigenvalues and
eigenvectors for a nonlinear version of the spiked covariance model,
including the CK as a special case. Using this general result, we give
a quantitative description of how spiked eigenstructure in the input data
propagates through the hidden layers of a neural network with random weights.
As a second application, we study a simple regime of representation learning
where the weight matrix develops a rank-one signal component over training
and characterize the alignment of the target function with the spike
eigenvector of the CK on test data.

\end{abstract}

\section{Introduction}

Kernel matrices associated with the nonlinear feature map
of deep neural networks (NNs) provide insight into the optimization dynamics \citep{jacot2018neural,montanari2020interpolation,fort2020deep} and predictive performance \citep{lee2017deep,arora2019fine,ortiz2021can}; consequently, properties of these kernel matrices can guide the design of network architecture 
\citep{xiao2018dynamical,martens2021rapid,li2022neural} and learning algorithms
\citep{karakida2020understanding,zhou2022dataset}. Particular
emphasis has been placed on the \textit{spectral properties} of kernel matrices, due to their connection with the training and test performance of the underlying NN \citep{bordelon2020spectrum,loureiro2021learning,wei2022more}. 

In this paper, we focus on the \textit{conjugate kernel} (CK)
\citep{neal1995bayesian,cho2009kernel} defined as the Gram matrix of the
features at the penultimate (or more generally, any intermediate) NN layer.
In the high-dimensional asymptotic setting where the width of the NN and the
number of training samples diverge at the same rate, prior works 
employed random matrix theory to analyze the limit eigenvalue distribution of
the CK matrix at random initialization
\citep{pennington2017nonlinear,louart2018random,peche2019note,fan2020spectra}.
These and related characterizations of the CK resolvent enable precise
computations of various errors for NNs with random first-layer
weights, known as \textit{random features models} 
\citep{mei2019generalization,tripuraneni2021covariate,hassani2022curse}. 

It is worth noting that while existing results establish the weak convergence of
the empirical spectral measure, the precise behavior of ``spike'' eigenvalues
that are separated from the spectral bulk remains largely unexplored.
In learning applications, these spike eigenvalues and corresponding eigenvectors
are often the primary spectral features (signal) of interest, because they pertain to
low-rank structure of the underlying learning problem (e.g., class labels or
the direction of the target function). For the linearly defined \textit{spiked
covariance model} $\vX=\vZ\vSigma^{1/2} \in \R^{n \times d}$,
whose dependence across features is induced by a
linear map $\vSigma^{1/2}(\cdot)$ applied to $\vZ$ having i.i.d.\ coordinates,
classical work in random matrix theory provides a quantitative description of
the spike eigenvalue/eigenvector behavior
\citep{johnstone2001distribution,baik2006eigenvalues,benaych2012singular,bloemendal2016principal}.
In this paper, we establish an analogous characterization of spiked spectral
structure for the CK, motivated in part by the following applications: 

\begin{itemize}[leftmargin=*]
    \item \textbf{Structured input data.} Real data often contain low-dimensional structure despite the high ambient dimensionality \citep{lee2007nonlinear,hastie2009elements,pope2021intrinsic}, and the leading eigenvectors of the input covariance matrix may be good predictors of the training labels. 
    Common examples where the input features exhibit a low-dimensional spiked structure include
Gaussian mixture models \citep{loureiro2021learning,refinetti2021classifying,arous2023high1} and the
block-covariance setting of \citep{ghorbani2020neural,ba2023learning,mousavi2023gradient}. Assuming that the input data $\vX$ has informative spikes
eigenvectors, we ask the natural question: 
    \vspace{-1.mm}
    \begin{center}
        \textit{
        How does the low-dimensional signal propagate through nonlinear layers
of the NN? \\ When do we observe a similar spiked structure in the CK matrix? }
    \end{center}
    \item \textbf{Spiked weight matrices in early training.} It is known that NNs can
learn useful representations that adapt to the learning problem, and outperform
the random features model defined by randomly initialized weights \citep{ghorbani2019limitations,wei2019regularization,abbe2022merged}. 
    Recent works have shown that when the target function is low-dimensional,
the gradient update for two-layer NNs around initialization is \textit{low-rank}
\citep{ba2022high,damian2022neural,wang2022spectral}, and hence the updated weight matrix $\vW$
is well-approximated by a spiked model. We consider the following question on this pre-trained kernel model in NNs: 
    \vspace{-1.mm}
    \begin{center}
        \textit{
        When gradient descent produces a spiked structure in the weight matrix,
how does the feature representation of the NN change, in terms of spectral properties of the CK?}
    \end{center}
\end{itemize}

\subsection{Our Contributions}

We analyze the spike eigenstructure in a general nonlinear spiked 
covariance model, which includes the CK as a special case. Specifically, we
characterize the BBP phase transition \citep{baik2005phase} and first-order
limits of the eigenvalues and eigenvector alignments in the proportional
asymptotics regime, for spike eigenvalues of bounded size.
Our work makes the following contributions:
\begin{itemize}[leftmargin=*]
    \item \textbf{Signal propagation in deep random NNs.} Following the setup of
\cite{fan2020spectra}, we consider the CK matrix defined by a multi-layer
fully-connected NN at random initialization, where the width of each layer grows
linearly with the sample size. Given spiked input data, we compute the magnitude
of the leading CK eigenvalues and the alignments between the corresponding CK
eigenvectors with those of the input data, across network depth.
    \item \textbf{Feature learning in two-layer NNs.} We consider the
early-phase feature learning setting in \cite{ba2022high}, where the first-layer
weights in a two-layer NN are optimized by gradient descent, and the learned
weight matrix exhibits a rank-one spiked structure. We characterize the
spiked eigenstructure of the corresponding CK matrix for independent test data,
and the alignment of spike eigenvectors with the test labels.
This provides a quantitative description of how gradient descent improves the NN representation. 
    \item \textbf{Spectral analysis for nonlinear spiked covariance models.} We
give a general analysis of the signal eigenvalues/eigenvectors of spiked
covariance matrices with arbitrary and possibly nonlinear dependence across
features, showing a ``Gaussian equivalence'' with the quantitative spectral
properties of linear spiked covariance models established by
\cite{bai2012sample}. We prove a deterministic equivalent for the Stieltjes
transform and resolvent for any spectral argument separated from the support of
the limit spectral measure,
extending recent results for spectral arguments bounded away from the
positive real line \citep{chouard2022quantitative,chouard2023deterministic,schroder2023deterministic}.
\end{itemize}

\subsection{Related Works}

\paragraph{Eigenvalues of nonlinear random matrices.} 
Global convergence of the empirical eigenvalue distribution of
nonlinear kernel matrices has been studied in both proportional and polynomial
scaling regimes
\citep{el2010spectrum,cheng2013spectrum,fan2019spectral,lu2022equivalence,dubova2023universality}.
Building upon related techniques, recent works characterized the spectrum of the
CK matrix \citep{pennington2017nonlinear,louart2018random,peche2019note} and the
neural tangent kernel (NTK) matrix
\citep{montanari2020interpolation,adlam2020neural}, with generalizations to
deeper networks studied in \cite{fan2020spectra} and \cite{chouard2023deterministic}. 

\cite{benigni2022largest} gave a precise characterization
of the largest eigenvalue in a one-hidden-layer CK matrix when the input data
$\vX$ and weight matrix $\vW$ both have i.i.d.~entries, identifying possible
uninformative spike eigenvalues when the nonlinear activation is not an odd
function. \cite{guionnet2023spectral} and \cite{feldman2023spectral} recently
characterized spiked eigenstructure in models where an activation is applied to
a spiked Wigner matrix or rectangular information-plus-noise matrix entrywise,
for possibly growing spike sizes and activations having degenerate
information/Hermite coefficients.

\paragraph{Precise error analysis of NNs.} 
An important application of spectral analyses of the CK matrix is the precise
computation of generalization error of random features regression, 
first performed for two-layer models in proportional scaling regimes
\citep{louart2018random,mei2019generalization} and later extended to deep
random features models \citep{schroder2023deterministic,bosch2023precise} and
polynomial scaling regimes \citep{ghorbani2019linearized,xiao2022precise}. 
These risk analyses reveal a \textit{Gaussian equivalence principle}, where
generalization error coincides with that of a Gaussian covariates model,
and this equivalence has been extended to other settings of nonlinear
(regularized) empirical risk minimization 
\citep{hu2020universality,goldt2021gaussian,montanari2022universality}.

Going beyond random features, \cite{ba2022high} derived the
precise asymptotics of representation learning in a two-layer NN when the
first-layer weights are trained by one (or finitely many) gradient descent
steps; see also \cite{damian2022neural,ba2023learning,dandi2023learning}.
The computation follows from an information-plus-noise characterization
of the weight matrix due to a low-rank gradient update. \cite{moniri2023theory}
derived a corresponding information-plus-noise decomposition
of the CK matrix defined by the resulting trained weights, in an asymptotic
regime different from ours where the learning rate and spike eigenvalues
diverge. \cite{arous2023high2} examined the emerging spike
eigenstructure in the NN Hessian that arises during SGD training.

\paragraph{Eigenvalues of sample covariance matrices.} Asymptotic spectral
analyses of sample covariance matrices have a long history in random matrix
theory
\citep{marchenko1967,silverstein1995strong,silverstein1995empirical,bai1998no},
with the strongest known results in the linearly defined model
$\vX=\vZ\vSigma^{1/2}$, see e.g.\ \cite{bloemendal2014isotropic,knowles2017anisotropic}. Outside of this linear setting, \cite{srivastava2013covariance} and
\cite{chafai2018convergence} develop sharp bounds for the extremal eigenvalues
with isotropic population covariance, and \cite{bao2022extreme} develop
eigenvalue rigidity and Tracy-Widom fluctuation results for isotropic and log-concave distributions.

The spiked covariance model was introduced in \cite{johnstone2001distribution}.
\cite{baik2005phase,baik2006eigenvalues,paul2007asymptotics} initiated the study
of spiked eigenstructure and phase transition phenomena for spiked
covariance matrices with isotropic bulk covariance.
\cite{peche2006largest,benaych2011eigenvalues,benaych2012singular,capitaine2013additive,capitaine2018limiting}
studied spiked eigenstructure in related Wigner and information-plus-noise
models. Closely related to our work are the results of
\cite{bai2012sample} that characterize spike eigenvalues in linearly
defined models $\vX=\vZ\vSigma^{1/2}$ with general population covariance
$\vSigma$, and we extend this characterization to nonlinear settings.

\section{Results for neural network models}\label{sec:main_results}

\subsection{Propagation of signal through multi-layer neural networks}\label{sec:spike_CK}

Consider input features $\vX=[\vx_1,\ldots,\vx_n]\in\R^{d\times n}$, where
$\vx_i\in\R^d$ are independent samples. Define a $L$-hidden-layer feedforward neural network by
\begin{equation}\label{eq:NN}
    \vX_\ell=\frac{1}{\sqrt{d_\ell}}\sigma (\vW_\ell\vX_{\ell-1}) \in \R^{d_\ell
\times n} \qquad \text{ for } \ell=1\ldots,L
\end{equation}
with weight matrices $\vW_\ell \in \R^{d_\ell\times d_{\ell-1}}$, 
$\vX_0 \equiv \vX$ and $d_0 \equiv d$,
and a nonlinear activation function $\sigma:\R \to \R$ applied entrywise. The
Conjugate Kernel (CK) at each layer $\ell=1,\ldots,L$ is given by the Gram matrix
\begin{equation}\label{eq:K_L}
	\vK_\ell=\vX_\ell^\top\vX_\ell \in \R^{n \times n}.
\end{equation}

In the limit $n,d_0,\ldots,d_L \to \infty$ with
$n/d_\ell \to \gamma_\ell \in (0,\infty)$
for each $\ell=0,\ldots,L$, under deterministic
conditions for the input data $\vX$ and for random weight matrices
$\vW_1,\ldots,\vW_L$ as specified below, it is shown in \cite{fan2020spectra}
that the empirical eigenvalue distribution
$\widehat{\mu}_\ell$ of $\vK_\ell$ for each $\ell=1,\ldots,L$ satisfies
the weak convergence
\begin{equation}\label{eq:CKweakconvergence}
\widehat{\mu}_\ell:=\frac{1}{n}\sum_{i=1}^n \delta_{\lambda_i(\vK_\ell)}
\to \mu_\ell \text{ a.s.}
\end{equation}
for limit measures $\mu_1,\ldots,\mu_L$ defined 
as follows: Let $\mu_0$ be the limit eigenvalue distribution of the input gram
matrix $\vK_0=\vX^\top \vX$ (c.f.\ Assumption \ref{assump:data}).
Then, for $\ell=1,\ldots,L$, let
\begin{equation}\label{eq:nu_L_def}
	\nu_{\ell-1}=b_\sigma^2 \otimes \mu_{\ell-1} \oplus (1-b_\sigma^2)
\end{equation}
denote the law of $b_\sigma^2 \mathsf{x}+(1-b_\sigma^2)$ when $\mathsf{x}
\sim \mu_{\ell-1}$ and $b_\sigma:=\E_{\xi \sim \cN(0,1)}[\sigma'(\xi)]$,
and define
\begin{align}\label{eq:mu_L_def}
\mu_\ell=\rho^\MP_{\gamma_\ell}\boxtimes \nu_{\ell-1}.
\end{align}
Here, $\rho_\gamma^\MP \boxtimes \nu$ denotes the \emph{deformed Marcenko-Pastur
law} describing the limit eigenvalue distribution of a sample covariance
matrix with limit dimension ratio $\gamma \in (0,\infty)$ and population
spectral measure $\nu$, and we review its definition in Appendix
\ref{appendix:background}.

In this section, we provide a precise quantitative characterization of the spike
eigenvalues and eigenvectors of $\vK_\ell$ for each $\ell=1,\ldots,L$ when the
input data $\vX$ has a fixed number of spike singular values of bounded
magnitude. We assume the following conditions for the random weights, input
data, and activation.

\begin{assumption}\label{assump:NNasymptotics}
The number of layers $L \geq 1$ is fixed, and
$n,d_0,\ldots,d_L \to \infty$ such that
\[n/d_\ell \to \gamma_\ell \in (0,\infty) \text{ for each } \ell=0,\ldots,L.\]
The weights $\vW_1,\ldots,\vW_L$ have entries
$[\vW_\ell]_{ij} \overset{iid}{\sim} \cN(0,1)$, independent of each other and
of $\vX$.
\end{assumption}

\begin{defi}\label{def:ortho}
A feature matrix $\vX\in\R^{d\times n}$ is {\bf $\tau_n$-orthonormal} if
\[\big|\|\vx_{\alpha}\|_2-1\big| \leq \tau_n,
\qquad \big|\|\vx_{\beta}\|_2-1\big| \leq \tau_n,
\qquad \big|\vx_{\alpha}^\top \vx_{\beta}\big| \leq \tau_n\]
for all pairs $\alpha\neq \beta\in [n]$, where
$\{\vx_{\alpha}\}_{\alpha=1}^n$ are the columns of $\vX$.
\end{defi}

\begin{assumption}\label{assump:data}
For some $\tau_n>0$ such that $\lim_{n \to \infty} \tau_n \cdot n^{1/3}=0$,
$\vX \equiv \vX_0$ is $\tau_n$-orthonormal almost surely for all large $n$.
Furthermore, $\vK_0=\vX^\top \vX$ has eigenvalues
$\lambda_1(\vK_0),\ldots,\lambda_n(\vK_0)$
(not necessarily ordered by magnitude) such that
for some fixed $r \geq 0$, as $n,d \to \infty$,
\begin{enumerate}[label=(\alph*)]
\item There exists a compactly supported probability measure $\mu_0 $ on $[0,\infty)$ 
such that
$$\frac{1}{n-r}\sum_{i=r+1}^n \delta_{\lambda_i(\vK_0)} \to \mu_0
\text{ weakly a.s.}$$ 
and for any fixed $\eps>0$, almost surely for all large $n$,
\[\lambda_i(\vK_0) \in \supp{\mu_0}+(-\eps,\eps) \text{ for all }
i \geq r+1.\]   
\item  There exist distinct values $\lambda_1,\ldots,\lambda_r>0$ with
$\lambda_1,\ldots,\lambda_r \not\in\supp{\mu_0}$ such that
\[\lambda_i(\vK_0) \to \lambda_i\quad \text{ a.s.\ for each }\quad  i=1,\ldots,r.\]
\end{enumerate}
\end{assumption}

\begin{assumption}\label{assump:sigma}
The activation $\sigma:\R \to \R$ is twice differentiable with
$\sup_{x \in \R} |\sigma'(x)|,|\sigma''(x)| \leq \lambda_\sigma$
for some $\lambda_\sigma\in (0,\infty)$.
Under $\xi \sim \cN(0,1)$, we have
$\E[\sigma(\xi)]=0$ and $\E[\sigma^2(\xi)]=1$.
Furthermore,
\begin{equation}\label{eq:conditionsigma}
b_\sigma:=\E[\sigma'(\xi)] \neq 0, \qquad
\E[\sigma''(\xi)]=0.
\end{equation}
\end{assumption}

Assumption \ref{assump:NNasymptotics} defines the linear-width asymptotic
regime. Assumption \ref{assump:data} requires an orthogonality condition
for the input features that is similar to \cite[Definition 3.1]{fan2020spectra},
and also codifies our spiked eigenstructure assumption for the input data.
We briefly comment on \eqref{eq:conditionsigma}
in Assumption \ref{assump:sigma}: The condition $b_\sigma \neq 0$ ensures that
the linear component of $\sigma(\cdot)$ is non-degenerate; if $b_\sigma=0$,
then spiked eigenstructure does not propagate across the NN layers in our
studied regime of bounded spike magnitudes. The condition $\E[\sigma''(\xi)]=0$
ensures that $\vK_\ell$ does not have uninformative spike eigenvalues;
otherwise, as shown in \cite{benigni2022largest}, $\vK_\ell$ may have
spike eigenvalues even when the input $\vK_0$ has no spiked structure.
We assume $\E[\sigma''(\xi)]=0$ for clarity, to avoid characterizing also such
uninformative spikes across layers. This condition holds, in particular, for odd
activation functions $\sigma(\cdot)$ such as $\mathrm{tanh}$.

The following theorem first extends \cite[Theorem 3.4]{fan2020spectra} by
affirming that the weak convergence statement \eqref{eq:CKweakconvergence}
holds under the above assumptions, and furthermore, each $\vK_\ell$ has
no outlier eigenvalues outside its limit spectral support when the input
$\vK_0$ has no spike eigenvalues.

\begin{theorem}\label{thm:no_outlier_ck}
Suppose Assumptions \ref{assump:NNasymptotics}, \ref{assump:data},
and \ref{assump:sigma} hold. Then for each $\ell=1,\ldots,L$,
\eqref{eq:CKweakconvergence} holds weakly a.s.\ as $n \to \infty$. 
Furthermore, if the number of spikes is $r=0$ in Assumption \ref{assump:data},
then for any fixed $\eps>0$, almost surely for all large $n$,
\begin{align*}
\vK_\ell \text{ has no eigenvalues outside }
\supp{\mu_\ell}+(-\eps,\eps).
\end{align*}
\end{theorem}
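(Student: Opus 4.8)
\emph{Proof plan.}~The plan is to run an induction on the layer index $\ell$, conditioning at each step on the randomness of the earlier layers and reducing a single layer to the general deterministic-equivalent result for nonlinear spiked covariance matrices that this paper establishes (the theorem announced in the introduction, giving a deterministic equivalent for the resolvent away from the limiting support). The induction hypothesis at level $k\in\{0,1,\dots,L\}$ is that, almost surely for all large $n$: (i) $\vX_k$ is $\tau_{k,n}$-orthonormal in the sense of Definition~\ref{def:ortho} for some sequence with $\tau_{k,n}\,n^{1/3}\to0$; (ii) $\widehat\mu_k\to\mu_k$ weakly; and (iii) if $r=0$ then, for every fixed $\eps>0$, all eigenvalues of $\vK_k$ lie in $\supp{\mu_k}+(-\eps,\eps)$. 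The base case $k=0$ is exactly Assumption~\ref{assump:data}: (i) holds with $\tau_{0,n}=\tau_n$; (ii) follows because $\tfrac1{n-r}\sum_{i>r}\delta_{\lambda_i(\vK_0)}\to\mu_0$ and $r$ is fixed; and (iii) is precisely the displayed containment in part~(a) of that assumption.

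For the inductive step, I would fix $\ell\ge1$ and work conditionally on $\vX_{\ell-1}$, which is a function of $\vX,\vW_1,\dots,\vW_{\ell-1}$ only and hence independent of $\vW_\ell$. On the almost-sure event that $\vX_{\ell-1}$ satisfies (i)--(iii) for all large $n$, the matrix $\vX_\ell=\tfrac1{\sqrt{d_\ell}}\sigma(\vW_\ell\vX_{\ell-1})$ is precisely an instance of the nonlinear spiked covariance model with i.i.d.\ Gaussian weights, dimension ratio $\gamma_\ell$, input Gram-spectrum limit $\mu_{\ell-1}$ (which carries no outliers when $r=0$), and activation $\sigma$ obeying Assumption~\ref{assump:sigma}. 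The general result then yields, first, weak convergence $\widehat\mu_\ell\to\rho^\MP_{\gamma_\ell}\boxtimes\nu_{\ell-1}$ with $\nu_{\ell-1}$ as in~\eqref{eq:nu_L_def}, i.e.\ $=\mu_\ell$ by~\eqref{eq:mu_L_def}; and, second, when $r=0$, that $\opnorm{(\vK_\ell-z)^{-1}}=O(1)$ a.s.\ for all large $n$, uniformly over real $z$ in any fixed compact set at positive distance from $\supp{\mu_\ell}$ (via the resolvent deterministic equivalent, valid up to the spectral edges and inside any bounded gap). From the latter, claim (iii) at level $\ell$ follows by the standard net argument: a crude a~priori bound $\opnorm{\vK_\ell}\le C_\ell$ confines $\spec(\vK_\ell)$ to a fixed compact set, and boundedness of the resolvent at an $\eps$-fine net of real points covering the part of that set outside $\supp{\mu_\ell}+(-\eps,\eps)$ precludes eigenvalues there. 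The a~priori bound uses $\opnorm{\vX_{\ell-1}}=O(1)$ (from (iii) at level $\ell-1$, since $\mu_{\ell-1}$ is compactly supported), Gaussian concentration of $\opnorm{\sigma(\vW_\ell\vX_{\ell-1})}$ about its mean (which is $O(\sqrt{d_\ell})$ once the rank-one bias matrix $\E[\sigma(\vW_\ell\vX_{\ell-1})\mid\vX_{\ell-1}]=\boldone_{d_\ell}\vm^\top$ is removed), and crucially the hypothesis $\E[\sigma''(\xi)]=0$: by Gaussian integration by parts this forces $\E[\sigma(\|\vx_{\ell-1,\alpha}\|_2\,\xi)]=O(\tau_{\ell-1,n}^2)$ rather than $O(\tau_{\ell-1,n})$, so $\opnorm{\boldone_{d_\ell}\vm^\top}=O(\tau_{\ell-1,n}^2\,n)=o(\sqrt{d_\ell})$ is spectrally negligible and, as isolated in~\cite{benigni2022largest}, no uninformative outlier is created. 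Integrating these conditionally almost-sure statements over $\vW_\ell$ against the almost-sure event for $\vX_{\ell-1}$ (with the routine caveat that the threshold in $n$ may depend on the full history) gives (ii)--(iii) at level $\ell$ unconditionally; the weak-convergence half is moreover insensitive to the $r=O(1)$ input outliers (they shift the relevant Stieltjes transforms by $O(r/n)$), so (ii) holds for all $r$, recovering and mildly extending~\cite[Theorem~3.4]{fan2020spectra} once one checks that Assumptions~\ref{assump:NNasymptotics}--\ref{assump:sigma} supply that theorem's hypotheses.

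It remains to propagate the orthonormality, i.e.\ to exhibit $\tau_{\ell,n}$ with $\tau_{\ell,n}n^{1/3}\to0$ for which $\vX_\ell$ is $\tau_{\ell,n}$-orthonormal a.s.\ for large $n$. Conditionally on $\vX_{\ell-1}$, the entry $[\vW_\ell\vX_{\ell-1}]_{i\alpha}$ is $\cN(0,\|\vx_{\ell-1,\alpha}\|_2^2)$ with $\big|\,\|\vx_{\ell-1,\alpha}\|_2^2-1\,\big|=O(\tau_{\ell-1,n})$, and for $\alpha\ne\beta$ the pair $([\vW_\ell\vX_{\ell-1}]_{i\alpha},[\vW_\ell\vX_{\ell-1}]_{i\beta})$ is bivariate Gaussian with off-diagonal covariance $\vx_{\ell-1,\alpha}^\top\vx_{\ell-1,\beta}=O(\tau_{\ell-1,n})$. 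A Gaussian-moment (Hermite) computation, using $\E[\sigma(\xi)]=0$ and $\E[\sigma^2(\xi)]=1$, then gives $\E[\|\vx_{\ell,\alpha}\|_2^2\mid\vX_{\ell-1}]=1+O(\tau_{\ell-1,n})$ and $\E[\vx_{\ell,\alpha}^\top\vx_{\ell,\beta}\mid\vX_{\ell-1}]=O(\tau_{\ell-1,n})$; concentration over $\vW_\ell$ --- Lipschitz concentration for the squared column norms, a Hanson--Wright-type bound for the cross terms --- controls the fluctuations at scale $n^{-1/2+o(1)}$, and a union bound over the $O(n^2)$ column pairs yields $\tau_{\ell,n}=O(\tau_{\ell-1,n}+n^{-1/2+o(1)})$, which satisfies $\tau_{\ell,n}n^{1/3}\to0$ since $n^{-1/6}\to0$. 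This is, up to bookkeeping, the orthonormality-propagation lemma of~\cite{fan2020spectra}, which I would reuse.

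The main obstacle is the ingredient being imported: proving the resolvent deterministic equivalent for a single nonlinear layer at \emph{real} spectral arguments arbitrarily close to $\supp{\mu_\ell}$ (and inside its bounded gaps), uniformly and with high probability, while absorbing the Gaussian-equivalence error between $\sigma(\vW_\ell\vX_{\ell-1})$ and the linear surrogate with Gram matrix $b_\sigma^2\vK_{\ell-1}+(1-b_\sigma^2)\ident$. Away from the real axis this is classical; near the support the self-consistent equations become near-degenerate, and making the estimate quantitative there --- which is exactly what the general nonlinear spiked covariance theorem delivers --- is the crux. Everything else above (the induction, the conditioning and tower-property bookkeeping, the orthonormality tracking, and the reduction to the general model) should be routine once that theorem is in hand.
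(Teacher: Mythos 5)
Your overall plan --- induct on $\ell$, condition on $\vX_{\ell-1}$, center the features via $\vG=\vY-\E\vY$ to fit the nonlinear spiked covariance framework, approximate $\vSigma$ by $\vSigma_{\mathrm{lin}}=b_\sigma^2\vK_{\ell-1}+(1-b_\sigma^2)\ident$, propagate orthonormality as in \cite{fan2020spectra}, and use $\E[\sigma''(\xi)]=0$ to make $\E[\sigma(\vW_\ell\vX_{\ell-1})\mid \vX_{\ell-1}]$ spectrally negligible --- is essentially the paper's own route (Appendices~\ref{subsec:onelayer}--\ref{subsec:propagation}, via Lemmas~\ref{lemm:expect_rows}, \ref{lemm:phi_0}, and \ref{lemma:onelayer}).

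There is, however, one genuine gap. You claim that the general deterministic-equivalent/no-outlier result gives resolvent boundedness ``uniformly over real $z$ in any fixed compact set at positive distance from $\supp{\mu_\ell}$,'' and that this precludes eigenvalues in the complement of $\supp{\mu_\ell}+(-\eps,\eps)$. But Theorem~\ref{thm:no_outlier} and Theorem~\ref{thm:deterministic_equ} are proved only for $z$ bounded away from $\cS_N=\supp{\mu_N}\cup\{0\}$, i.e.\ they always exclude a neighborhood of the origin: the self-consistent analysis runs through the companion transform $\tilde m_N(z)=\gamma_N m_N(z)+(1-\gamma_N)(-1/z)$, which is singular at $z=0$ whenever $\gamma_N<1$. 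So when $\gamma_\ell<1$ and $0\notin\supp{\mu_\ell}$, the region between $0$ and the left edge of $\supp{\mu_\ell}$ is \emph{not} controlled by the resolvent estimate, and the general theorem only rules out eigenvalues outside $\supp{\mu_\ell}\cup\{0\}$, which is strictly weaker than the claim of Theorem~\ref{thm:no_outlier_ck}. The paper closes this gap with a separate small-ball argument (Lemma~\ref{lemma:outliersat0}, invoking Yaskov's $\lambda_{\min}$ bound) showing $\lambda_{\min}(\vG^\top\vG)>c$ almost surely when $\gamma_\ell<1$; when $\gamma_\ell\ge 1$ it instead checks directly that $0\in\supp{\mu_\ell}$, so the two support sets coincide. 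You need an explicit ingredient of this kind --- the deterministic equivalent alone will not rule out spurious small eigenvalues near the origin.
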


The main result of this section characterizes the eigenvalues of $\vK_\ell$
outside $\supp{\mu_\ell}$ when $r\geq 1$. To describe this characterization,
define for each $\ell=1,\ldots,L$ the domain
\begin{align*}
\cT_\ell&=\{-1/\lambda:\lambda \in \supp{\nu_{\ell-1}}\}
\end{align*}
where $\nu_{\ell-1}$ is defined by \eqref{eq:nu_L_def},
and define $z_\ell,\varphi_\ell:(0,\infty) \setminus \cT_\ell \to \R$ by
\begin{align}
z_\ell(s)={-}\frac{1}{s}+\gamma_\ell \int\frac{\lambda}{1+\lambda s}
\nu_{\ell-1}(d\lambda), \qquad
\varphi_\ell(s)={-}\frac{sz_\ell'(s)}{z_\ell(s)}.
\label{eq:zell}
\end{align}
It is known from the results of \cite{bai2012sample} and
\cite[Chapter 11]{yao2015sample} that these are precisely the functions
that characterize the spike eigenvalues and eigenvectors in linear spiked
covariance models. Set
\[\cI_0=\{1,\ldots,r\}, \qquad
		s_{i,0}=-\frac{1}{b_\sigma^2\lambda_i+(1-b_\sigma^2)}
\text{ for } i \in \cI_0,\]
where $\lambda_i$ and $b_\sigma$ are defined in Assumptions~\ref{assump:data}
and~\ref{assump:sigma} respectively. Here, $\cI_0$ records the indices of the
spike eigenvalues of the input Gram matrix $\vK_0$.
Then define recursively for $\ell=1,\ldots,L$
\begin{equation}\label{eq:index_set}
\cI_\ell=\Big\{i \in \cI_{\ell-1}:z_\ell'(s_{i,{\ell-1}})>0\Big\},
\qquad
s_{i,\ell}=-\frac{1}{b_\sigma^2z_\ell(s_{i,{\ell-1}})+(1-b_\sigma^2)}
\text{ for } i \in \cI_\ell.
\end{equation}
The condition $z_\ell'(s_{i,\ell-1})>0$ describes the ``phase transition''
phenomenon for spike eigenvalues in this model, where spikes
$i \in \cI_{\ell-1}$ with
$z_\ell'(s_{i,\ell-1})>0$ induce spike eigenvalues in the CK matrix $\vK_\ell$
of the next layer, while spikes with $z_\ell'(s_{i,\ell-1}) \leq 0$ are
absorbed into the bulk spectrum of $\vK_\ell$.

\begin{theorem}\label{thm:ck_spike}
Suppose Assumptions~\ref{assump:NNasymptotics}, \ref{assump:data}, and
\ref{assump:sigma} hold.
Then for each $\ell=1,\ldots,L$:
\begin{enumerate}[label=(\alph*)]
\item $s_{i,\ell-1} \in (0,\infty) \setminus \cT_\ell$ for each $i \in \cI_{\ell-1}$,
so $z_\ell(s_{i,\ell-1})$ and $\cI_\ell$ are well-defined. Furthermore,
if $i \in \cI_\ell$ (i.e.\ if $z_\ell'(s_{i,\ell-1})>0$) then
$z_\ell(s_{i,\ell-1})>0$ and $\varphi_\ell(s_{i,\ell-1})>0$.
\item For any fixed and sufficiently small $\eps>0$, almost surely for all large $n$,
there is a 1-to-1 correspondence between the eigenvalues of $\vK_\ell$
outside $\supp{\mu_\ell}+(-\eps,\eps)$ and $\{i:i\in\cI_\ell\}$.
Denoting these eigenvalues of $\vK_\ell$ by
$\{\widehat \lambda_{i,\ell}:i\in\cI_\ell\}$,
for each $i\in \cI_\ell$ as $n \to \infty$,
\[\widehat \lambda_{i,\ell} \to z_\ell(s_{i,\ell-1}) \text{ a.s.}\]
\item Let $\widehat \vv_{i,\ell}$ be a
unit-norm eigenvector of $\vK_\ell$ corresponding to its eigenvalue
$\widehat \lambda_{i,\ell}$, and let $\vv_j$ be a
unit-norm eigenvector of $\vK_0$ corresponding to its spike
eigenvalue $\lambda_j(\vK_0)$.
Then for each $i \in \cI_\ell$ and $j \in \cI_0$, as $n \to \infty$,
	\[|\widehat \vv_{i,\ell}^\top \vv_j |^2 \to  \prod_{k=1}^\ell
\varphi_{k}(s_{i,k-1}) \cdot \bI\{i=j\} \text{ a.s.}\]
Moreover, for each $i \in \cI_\ell$ and any
unit vector $\vv\in\R^n$ independent of $\vW_1,\ldots,\vW_\ell$,
	\[|\widehat \vv_{i,\ell}^\top \vv |^2-\prod_{k=1}^\ell
\varphi_{k}(s_{i,k-1}) \cdot  |\vv_i^\top \vv |^2 \to 0 \text{ a.s.}\]
\end{enumerate}
\end{theorem}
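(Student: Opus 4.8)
The plan is to establish parts (a)--(c) simultaneously by induction on the layer index $\ell$, analyzing $\vK_\ell$ conditionally on $\vX_{\ell-1}$ and reducing each layer to a linear spiked sample covariance model in the spirit of \cite{bai2012sample}. The inductive hypothesis carried from layer $\ell-1$ would be: $\vX_{\ell-1}$ is $\tau_n'$-orthonormal for some $\tau_n'$ with $\tau_n' n^{1/3}\to 0$; the bulk eigenvalue distribution of $\vK_{\ell-1}=\vX_{\ell-1}^\top\vX_{\ell-1}$ converges weakly a.s.\ to $\mu_{\ell-1}$ with no eigenvalues outside $\supp{\mu_{\ell-1}}$ except the spikes indexed by $\cI_{\ell-1}$, which satisfy $\widehat\lambda_{i,\ell-1}\to z_{\ell-1}(s_{i,\ell-2})$ (using the convention $z_0(s_{i,-1})=\lambda_i$); and, for a suitable choice of sign, the associated unit eigenvectors decompose as $\widehat\vv_{i,\ell-1}=c_{i,\ell-1}\vv_i+\vw_{i,\ell-1}$ with $c_{i,\ell-1}^2\to\prod_{k=1}^{\ell-1}\varphi_k(s_{i,k-1})$ and $\vv^\top\vw_{i,\ell-1}\to 0$ a.s.\ for every unit vector $\vv$ independent of $\vW_1,\dots,\vW_{\ell-1}$. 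The base case $\ell=1$ is exactly Assumption~\ref{assump:data}. Granting the hypothesis at layer $\ell$, part (c) follows: taking $\vv=\vv_i$ gives $|\widehat\vv_{i,\ell}^\top\vv_i|^2\to\prod_{k\le\ell}\varphi_k(s_{i,k-1})$; taking $\vv=\vv_j$ with $j\ne i$ gives $|\widehat\vv_{i,\ell}^\top\vv_j|^2\to0$, since distinct spike eigenvectors of the symmetric matrix $\vK_0$ are orthogonal and $\vv_j$ depends only on $\vX_0$; and $\widehat\vv_{i,\ell}^\top\vv=c_{i,\ell}(\vv_i^\top\vv)+o(1)$ yields the second display of (c).

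For the inductive step I would condition on $\vX_{\ell-1}$. Since the rows of $\vW_\ell$ are i.i.d.\ $\cN(0,\ident_{d_{\ell-1}})$, the rows of $\sqrt{d_\ell}\,\vX_\ell=\sigma(\vW_\ell\vX_{\ell-1})$ are i.i.d.\ copies of $\sigma(\vg)^\top$ with $\vg\sim\cN(0,\vK_{\ell-1})$, so $\vK_\ell$ is a sample covariance matrix of dimension $n$ with $d_\ell$ samples, aspect ratio $\to\gamma_\ell$, and population covariance $\vPsi_{\ell-1}:=\E[\sigma(\vg)\sigma(\vg)^\top\mid\vX_{\ell-1}]$. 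Expanding $\sigma$ in Hermite polynomials and using $\tau_n'$-orthonormality (so $[\vK_{\ell-1}]_{\alpha\alpha}\approx1$ and $[\vK_{\ell-1}]_{\alpha\beta}$ is small for $\alpha\ne\beta$), Mehler's formula gives $\vPsi_{\ell-1}=b_\sigma^2\vK_{\ell-1}+(1-b_\sigma^2)\ident_n+\vE_{\ell-1}$ with $\|\vE_{\ell-1}\|_{\op}\to0$: the degree-$0$ Hermite term drops since $\E[\sigma(\xi)]=0$, the linear coefficient is $b_\sigma=\E[\sigma'(\xi)]$, and — this is where \eqref{eq:conditionsigma} enters — the degree-$2$ term, which would otherwise contribute a $\mathbf{1}\mathbf{1}^\top$-type rank-one piece producing the uninformative spikes of \cite{benigni2022largest}, vanishes because $\E[\sigma''(\xi)]=0$. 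Hence $\vPsi_{\ell-1}$ has population spectral measure $\nu_{\ell-1}=b_\sigma^2\otimes\mu_{\ell-1}\oplus(1-b_\sigma^2)$, no eigenvalues outside $\supp{\nu_{\ell-1}}$ except the spikes $b_\sigma^2\widehat\lambda_{i,\ell-1}+(1-b_\sigma^2)\to-1/s_{i,\ell-1}$ for $i\in\cI_{\ell-1}$ (using \eqref{eq:index_set} and the inductive limit of $\widehat\lambda_{i,\ell-1}$), and spike eigenvectors $\widehat\vv_{i,\ell-1}+o(1)$ (Davis--Kahan against $\vE_{\ell-1}$).

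I would then invoke the quantitative spiked sample covariance theory. The crucial ingredient is a Gaussian-equivalence statement — the general nonlinear spiked covariance result of this paper — that the outlier eigenvalues and all eigenvector overlaps of $\vK_\ell$ agree asymptotically with those of the linear model $\tfrac1{d_\ell}\vG^\top\vG$, where $\vG\in\R^{d_\ell\times n}$ has i.i.d.\ rows $\cN(0,\vPsi_{\ell-1})$. For this linear model, \cite{bai2012sample} gives: the bulk converges to $\rho^\MP_{\gamma_\ell}\boxtimes\nu_{\ell-1}=\mu_\ell$ with no outliers (which in particular gives Theorem~\ref{thm:no_outlier_ck} and the no-outlier half of (b)); a population spike $-1/s$ produces a sample outlier at $z_\ell(s)$ precisely when it is supercritical, i.e.\ $z_\ell'(s)>0$ — the BBP transition of \cite{baik2005phase}, which is exactly the definition of $\cI_\ell$ in \eqref{eq:index_set}; and in that case the sample spike eigenvector has squared overlap $\varphi_\ell(s)=-sz_\ell'(s)/z_\ell(s)$ with the population spike eigenvector, its orthogonal complement being delocalized against every vector fixed given $\vX_{\ell-1}$. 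Applied with $s=s_{i,\ell-1}$ this yields part (b), including $\widehat\lambda_{i,\ell}\to z_\ell(s_{i,\ell-1})$ and the $1$-to-$1$ correspondence with $\cI_\ell$. Writing $\widehat\vv_{i,\ell}=\tilde c_{i,\ell}(\widehat\vv_{i,\ell-1}+o(1))+\vr_{i,\ell}$ with $\tilde c_{i,\ell}^2\to\varphi_\ell(s_{i,\ell-1})$ and $\vr_{i,\ell}$ the delocalized residual, and substituting the inductive decomposition of $\widehat\vv_{i,\ell-1}$, gives $\widehat\vv_{i,\ell}=c_{i,\ell}\vv_i+\vw_{i,\ell}$ with $c_{i,\ell}=\tilde c_{i,\ell}c_{i,\ell-1}$ (hence $c_{i,\ell}^2\to\prod_{k=1}^{\ell}\varphi_k(s_{i,k-1})$) and $\vw_{i,\ell}=\tilde c_{i,\ell}\vw_{i,\ell-1}+o(1)+\vr_{i,\ell}$; for $\vv$ independent of $\vW_1,\dots,\vW_\ell$ one gets $\vv^\top\vw_{i,\ell-1}\to0$ from the inductive hypothesis and $\vv^\top\vr_{i,\ell}\to0$ by conditioning on $\vX_{\ell-1}$ (under which $\vv$ is independent of $\vW_\ell$), closing the induction. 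Part (a) is verified along the way: $s_{i,\ell-1}$ lies in the domain of $z_\ell$ because $-1/s_{i,\ell-1}$ is a population spike outside $\supp{\nu_{\ell-1}}$, while $z_\ell(s_{i,\ell-1})>0$ and $\varphi_\ell(s_{i,\ell-1})>0$ at supercritical arguments follow from elementary monotonicity properties of $z_\ell$ and $\varphi_\ell$ in the linear theory.

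The main obstacle is making this layerwise reduction rigorous with uniform, self-propagating quantitative estimates. One must show (i) that $\vX_{\ell-1}$ genuinely inherits $\tau_n'$-orthonormality with $\tau_n'n^{1/3}\to0$ together with the stated spiked eigenstructure — a quantitative sharpening of the ``orthonormal signal propagation'' of \cite{fan2020spectra}, where the $n^{1/3}$ scale is precisely what lets the induction proceed; (ii) that the linearization error $\vE_{\ell-1}$ and, more delicately, the Gaussian-equivalence error between $\vK_\ell$ and $\tfrac1{d_\ell}\vG^\top\vG$ are $o(1)$ not only in the bulk but after projection onto the $O(1)$-dimensional spike subspace, so that no outlier is spuriously created or destroyed; and (iii) that the eigenvector statements hold in the \emph{anisotropic} sense needed to track overlaps with arbitrary deterministic test vectors. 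Item (iii) is the core difficulty and is exactly what forces a deterministic equivalent for the resolvent $(\vK_\ell-z)^{-1}$ at every $z$ separated from $\supp{\mu_\ell}$ — including real $z$ to the right of the bulk — generalizing \cite{chouard2022quantitative,chouard2023deterministic,schroder2023deterministic}, which treat only spectral arguments bounded away from the positive real axis. A final subtlety is the conditional delocalization in (c): the residual $\vr_{i,\ell}$ depends on $\vW_\ell$ while an admissible test vector may depend on $\vW_1,\dots,\vW_{\ell-1}$ through $\vX_{\ell-1}$, so one genuinely uses independence from $\vW_\ell$ and conditions.
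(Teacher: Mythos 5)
Your proposal tracks the paper's proof closely: the paper also fixes a layer, conditions on $\vX_{\ell-1}$, verifies the general nonlinear spiked covariance hypotheses (Assumption~\ref{assump:G}) for the rows of $\sigma(\vW_\ell\vX_{\ell-1})$, linearizes the conditional population covariance to $b_\sigma^2\vK_{\ell-1}+(1-b_\sigma^2)\vI$ using $\tau_n$-orthonormality and $\E[\sigma''(\xi)]=0$, applies Theorem~\ref{thm:spiked}, shows $\vX_\ell$ is again $C\tau_n$-orthonormal, and chains the resulting eigenvalue and eigenvector-overlap limits through the layers. The three obstacles you single out (self-propagating orthonormality, control of the linearization/equivalence error on the spike subspace, and anisotropic resolvent estimates for real spectral arguments) are exactly the three pillars of the paper's Appendices~\ref{appendix:resolvent}--\ref{appendix:spike} and~\ref{sec:CK_spike_proof}.

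Two steps that the paper needs and your plan omits. First, the rows of $\sqrt{d_\ell}\vX_\ell$ are \emph{not} mean zero: $\E_{\vw}[\sigma(\vw^\top\vx_\alpha)]=\E[\sigma(\|\vx_\alpha\|\xi)]$ vanishes only when $\|\vx_\alpha\|=1$ exactly, so your conditional ``population covariance'' $\vPsi_{\ell-1}$ is a second-moment matrix and Theorem~\ref{thm:spiked} (formulated for centered rows under Assumption~\ref{assump:G}) does not apply verbatim. The paper works with the centered matrix $\vG=\vY-\E\vY$, shows $\|\E\vY\|\to0$ using $\tau_n$-orthonormality together with $\E[\sigma''(\xi)]=0$ (Lemma~\ref{lemm:expect_rows}), and only then transfers eigenvalue and eigenvector limits to the uncentered $\vK_\ell$ via Weyl and Davis--Kahan. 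Second, Theorem~\ref{thm:spiked} only gives the $1$-to-$1$ correspondence for outliers outside $(\supp{\mu_\ell}\cup\{0\})+(-\eps,\eps)$, whereas the stated theorem excludes only $\supp{\mu_\ell}+(-\eps,\eps)$. When $\gamma_\ell\geq1$ one shows $0\in\supp\mu_\ell$, but when $\gamma_\ell<1$ an extra argument is required to rule out spurious outliers near $0$; the paper supplies this via a small-ball lower bound on $\lambda_{\min}(\vG^\top\vG)$ (Lemma~\ref{lemma:outliersat0}, using Yaskov's bound together with the nonlinear Hanson--Wright inequality). Your ``the bulk converges to $\mu_\ell$ with no outliers'' does not cover this, since $0$ sits outside the support controlled by the no-outlier theorem; you would need to add this left-edge argument to close the proof.
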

\vspace{-4mm}
We present the following corollary as a concrete example in which the
assumptions of the theorem are satisfied. The corollary encompasses, for
instance, Gaussian mixture models with a fixed number $r$ of balanced classes,
each class having $\Theta(n)$ samples.
\begin{coro}\label{cor:gmm}
Suppose the input data $\vX$ is itself a low-rank signal-plus-noise matrix
\begin{equation}
	\label{eq:gmm}
\vX=\sum_{i=1}^r \theta_i\va_i\vb_i^\top+\vZ \in \R^{d\times n}
\end{equation}
where $\theta_1,\ldots,\theta_r>0$ are fixed distinct signal strengths,
$\va_1,\ldots,\va_r \in \R^d$ and $\vb_1,\ldots,\vb_r \in \R^n$ are
orthonormal sets of unit vectors, and
$\vZ$ has i.i.d.\ $\cN(0,1/d)$ entries. Assume that $\vb_1,\ldots,\vb_r$
satisfy the $\ell_\infty$-delocalization condition: for any sufficiently small
$\eps>0$ and all large $n$,
\[\max_{i=1}^r \|\vb_i\|_\infty<n^{-1/2+\eps}.\]
Define $\varphi_\ell(\cdot)$ and $s_{i,\ell-1}$ by \eqref{eq:zell}
and \eqref{eq:index_set}, with the initial measures $\mu_0=\rho_{\gamma_0}^\MP$
and $\nu_0=b_\sigma^2 \otimes \mu_0 \oplus (1-b_\sigma^2)$
and initial spike values
$\lambda_i=(1+\theta_i^2)(\gamma_0+\theta_i^2)/\theta_i^2$ for $i\in\cI_0$.

Then for each $\ell=1,\ldots,L$, $\vK_\ell$ has a spike eigenvalue corresponding
to the input signal component $\theta_i$ if and only if
$\theta_i>\gamma_0^{1/4}$ and $i \in \cI_\ell$. In this case,
its corresponding unit eigenvector $\widehat \vv_{i,\ell}$ satisfies, as $n \to
\infty$,
\begin{equation}\label{eq:gmmalignment}
|\widehat \vv_{i,\ell}^\top \vb_i|^2 \to
\prod_{k=1}^\ell \varphi_k(s_{i,k-1}) \cdot
\left(1-\frac{\gamma_0(1+\theta_i^2)}{\theta_i^2(\theta_i^2+\gamma_0)}\right)
\text{ a.s.}
\end{equation}
\end{coro}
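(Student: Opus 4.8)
The plan is to show that the signal-plus-noise input~\eqref{eq:gmm} fits the framework of Section~\ref{sec:spike_CK}: Assumptions~\ref{assump:NNasymptotics} and~\ref{assump:sigma} hold by hypothesis, so the work is to verify Assumption~\ref{assump:data} with $\mu_0=\rho_{\gamma_0}^\MP$, with the effective set of input spikes equal to $\cI_0=\{i:\theta_i>\gamma_0^{1/4}\}$, and with the corresponding limiting spike eigenvalues of $\vK_0=\vX^\top\vX$ equal to $\lambda_i=(1+\theta_i^2)(\gamma_0+\theta_i^2)/\theta_i^2$; once this is done, Theorem~\ref{thm:ck_spike} applies and~\eqref{eq:gmmalignment} follows by feeding the alignment $|\vv_i^\top\vb_i|^2$ into part~(c).

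First I would check that $\vX$ is $\tau_n$-orthonormal (Definition~\ref{def:ortho}) for some $\tau_n=o(n^{-1/3})$. Writing $\vx_\alpha=\vs_\alpha+\vz_\alpha$ with $\vs_\alpha:=\sum_{i=1}^r\theta_i[\vb_i]_\alpha\va_i$ and $\vz_\alpha$ the $\alpha$-th column of $\vZ$, orthonormality of $\{\va_i\}$ and the $\ell_\infty$-delocalization of $\{\vb_i\}$ give $\|\vs_\alpha\|_2^2=\sum_i\theta_i^2[\vb_i]_\alpha^2=O(n^{-1+2\eps})$ and $|\vs_\alpha^\top\vs_\beta|=O(n^{-1+2\eps})$ uniformly over $\alpha,\beta$. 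Standard $\chi^2$/Gaussian concentration for $\|\vz_\alpha\|_2^2$ and $\vz_\alpha^\top\vz_\beta$ ($\alpha\neq\beta$), and for the conditionally Gaussian cross terms $\vz_\alpha^\top\vs_\beta$ of conditional variance $\|\vs_\beta\|_2^2/d=O(n^{-2+2\eps})$, together with a union bound over the $O(n^2)$ pairs, then yield $\big|\|\vx_\alpha\|_2-1\big|=O(n^{-1/2+\eps})$ and $|\vx_\alpha^\top\vx_\beta|=O(n^{-1/2+\eps})$ almost surely for all large $n$; choosing $\eps<1/6$ fixes a valid $\tau_n$. (This is the only place the delocalization hypothesis is used.)

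Next I would determine the spectrum and spike eigenvectors of $\vK_0=\vX^\top\vX$. Writing $\vL=\sum_i\theta_i\va_i\vb_i^\top$, the difference $\vK_0-\vZ^\top\vZ=\vL^\top\vL+\vL^\top\vZ+\vZ^\top\vL$ has rank at most $3r$; combined with the no-outlier result for the Wishart matrix $\vZ^\top\vZ$ \citep{bai1998no} and low-rank eigenvalue interlacing, this shows that $\widehat{\mu}_0$ still converges a.s.\ to $\rho_{\gamma_0}^\MP$ and that all but at most $3r$ eigenvalues of $\vK_0$ lie in $\supp{\rho_{\gamma_0}^\MP}+(-\eps,\eps)$ for all large $n$. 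The exact positions of the remaining eigenvalues and the alignments of their eigenvectors with $\vb_i$ come from the low-rank perturbation theory for rectangular random matrices of \cite{benaych2012singular} (applicable since $\vZ$ is bi-orthogonally invariant): for each $i$, $\vX$ has a singular value detaching from the bulk if and only if $\theta_i>\gamma_0^{1/4}$, in which case the associated eigenvalue of $\vK_0$ converges a.s.\ to $\lambda_i=(1+\theta_i^2)(\gamma_0+\theta_i^2)/\theta_i^2$ and its unit eigenvector $\vv_i$ satisfies $|\vv_i^\top\vb_i|^2\to 1-\gamma_0(1+\theta_i^2)/(\theta_i^2(\theta_i^2+\gamma_0))$, while indices with $\theta_i\le\gamma_0^{1/4}$ produce no outlier (their eigenvalue returns to the edge of the bulk). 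Finally one checks that $u\mapsto u+(1+\gamma_0)+\gamma_0/u$, which equals $(1+u)(\gamma_0+u)/u$, is strictly increasing for $u>\sqrt{\gamma_0}$ with value $(1+\sqrt{\gamma_0})^2$ at $u=\sqrt{\gamma_0}$; hence the retained $\lambda_i$ are distinct and each lies strictly to the right of $\supp{\rho_{\gamma_0}^\MP}\subseteq[0,(1+\sqrt{\gamma_0})^2]$. This verifies Assumption~\ref{assump:data} with $\cI_0=\{i:\theta_i>\gamma_0^{1/4}\}$ in place of $\{1,\dots,r\}$ and with these $\mu_0,\lambda_i$.

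With Assumption~\ref{assump:data} in hand, Theorem~\ref{thm:ck_spike} applies directly. Parts~(b)--(c), specialized to this data, give the claimed one-to-one correspondence (so $\vK_\ell$ has a spike eigenvalue for signal component $\theta_i$ exactly when $\theta_i>\gamma_0^{1/4}$ and $i\in\cI_\ell$), and applying the last display of part~(c) with the fixed unit vector $\vv=\vb_i\in\R^n$, which is independent of $\vW_1,\dots,\vW_L$, yields $|\widehat{\vv}_{i,\ell}^\top\vb_i|^2-\prod_{k=1}^\ell\varphi_k(s_{i,k-1})\,|\vv_i^\top\vb_i|^2\to 0$; substituting the value of $|\vv_i^\top\vb_i|^2$ established above produces~\eqref{eq:gmmalignment}. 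I expect the main obstacle to be the third step — transcribing the spiked rectangular signal-plus-noise results of \cite{benaych2012singular} into the normalization of this model, pinning down the exact outlier locations $\lambda_i$, ruling out spurious bulk outliers uniformly in $n$, and matching the phase-transition threshold $\gamma_0^{1/4}$; the $\tau_n$-orthonormality check and the final substitution are routine by comparison.
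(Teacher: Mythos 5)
Your proposal is correct and follows essentially the same route as the paper's proof: verify $\tau_n$-orthonormality of $\vX$ via the delocalization of $\{\vb_i\}$ and Gaussian concentration for $\vZ$, invoke the spiked rectangular signal-plus-noise results of \cite{benaych2012singular} to obtain the phase transition $\theta_i>\gamma_0^{1/4}$, the limit $\lambda_i=(1+\theta_i^2)(\gamma_0+\theta_i^2)/\theta_i^2$, and the alignment $|\vv_i^\top\vb_i|^2$, and then apply Theorem~\ref{thm:ck_spike}(b)--(c) with $\vv=\vb_i$. The extra detail you supply (the explicit column decomposition $\vx_\alpha=\vs_\alpha+\vz_\alpha$, the interlacing argument ruling out spurious bulk outliers, and the monotonicity check that the $\lambda_i$ are distinct and separated from $[0,(1+\sqrt{\gamma_0})^2]$) is consistent with, and a slightly more careful elaboration of, what the paper delegates to \cite{benaych2012singular}.
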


\begin{figure}[t]
\centering
\begin{minipage}[t]{0.33\linewidth}
\centering
{\includegraphics[width=\textwidth]{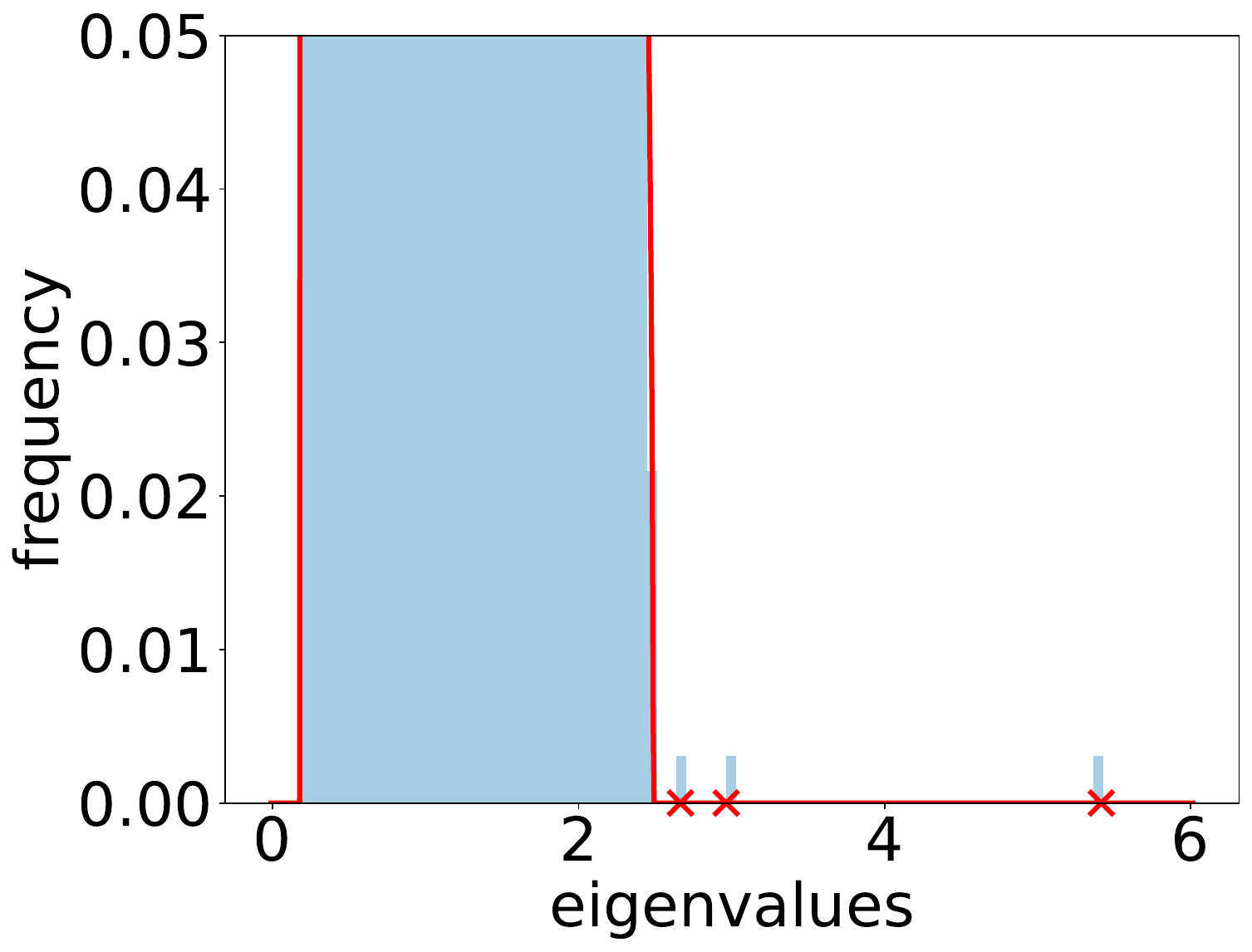}}  \\
\small (a) Spectrum of $\vK_0$ 
\end{minipage}%
\begin{minipage}[t]{0.33\linewidth}
\centering 
{\includegraphics[width=\textwidth]{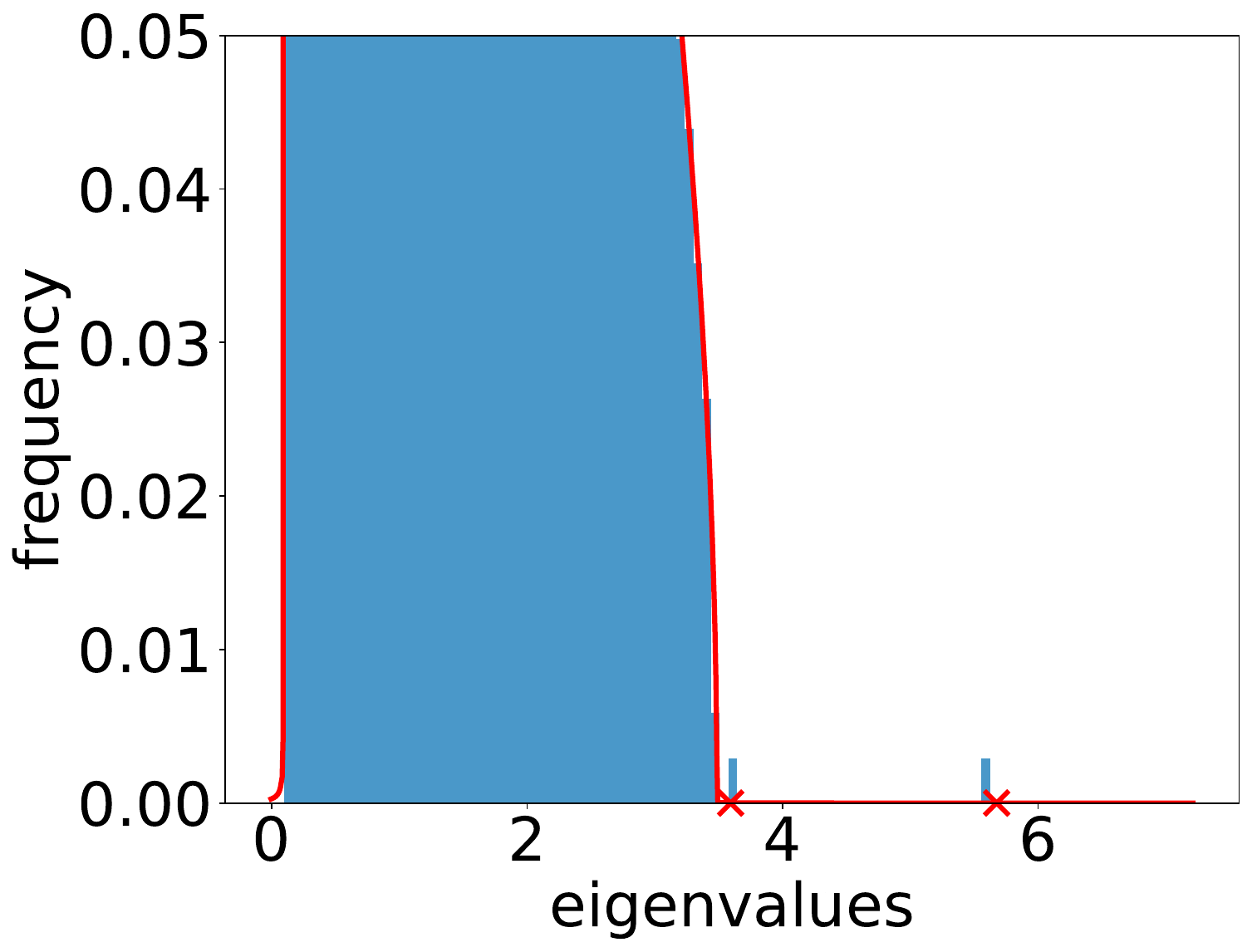}} \\ 
\small (b) Spectrum of $\vK_{1}$
\end{minipage}%
\begin{minipage}[t]{0.33\linewidth}
\centering 
{\includegraphics[width=\textwidth]{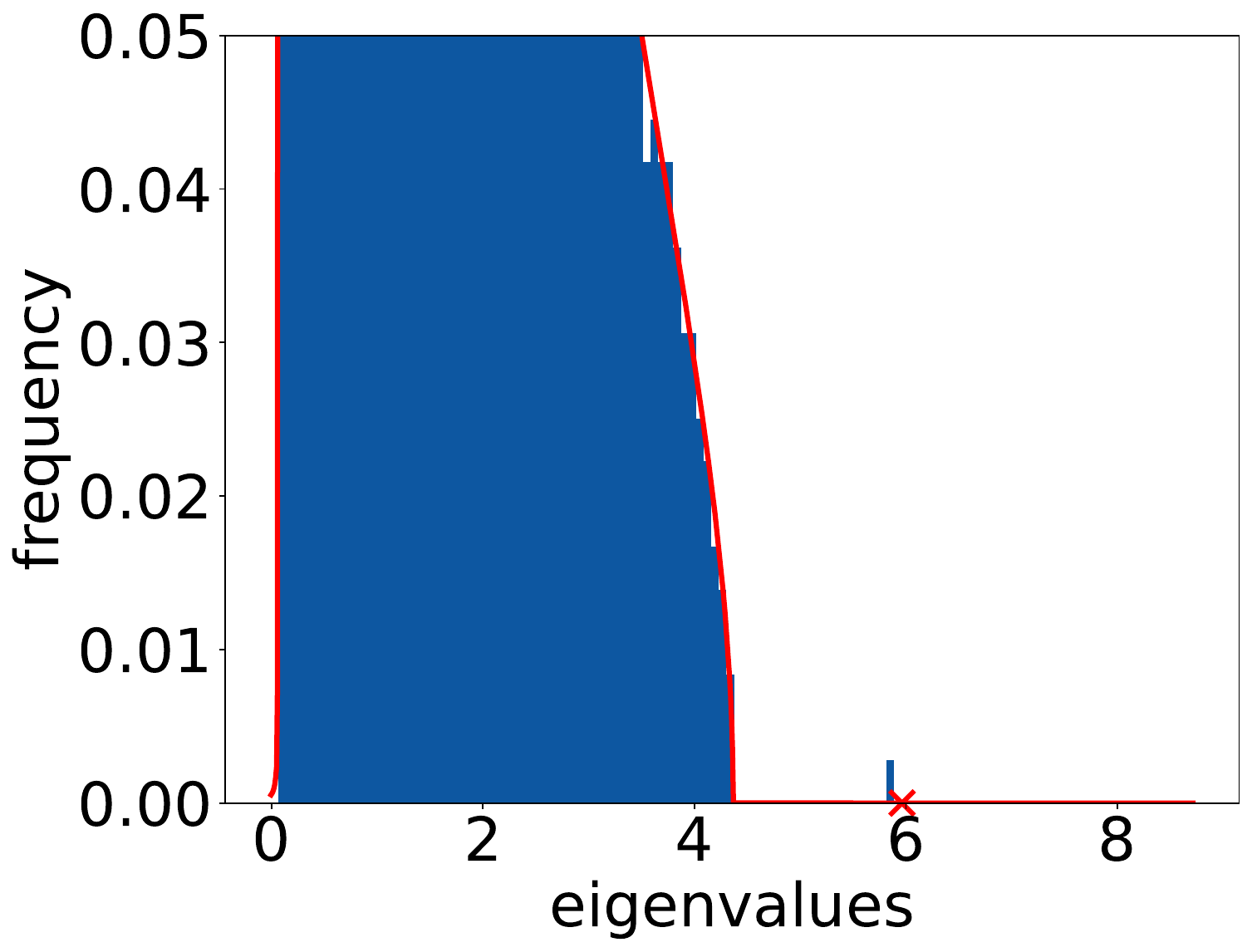}} \\ 
\small (c) Spectrum of $\vK_{2}$
\end{minipage}% 
\caption{\small  Spectra of three-layer CK matrices defined by \eqref{eq:K_L} with $n = 5000$,
$d_0=d_1=d_2 = 15000$, and $\sigma\propto \arctan$. Input data is a GMM satisfying \eqref{eq:gmm} with $r=3$, $\theta_1=2.0$, $\theta_2=1.18$,
and $\theta_3=1.0$. (a)-(c) are theoretically predicted (red) and empirical (blue)
bulk distributions and spikes of $\vK_\ell$ for $\ell=0,1,2$.}
\label{fig:GMM}  
\end{figure} 
\paragraph{Numerical illustration.}
A simple illustration of this result for a 3-component Gaussian mixture model
is provided in Figure \ref{fig:GMM}. We note that
$\cI_L \subseteq \cdots \subseteq \cI_0$ and $\varphi_\ell(s_{i,\ell-1})
\in (0,1)$, so the number of spike eigenvalues of $\vK_\ell$
induced by $\vK_0$ and the alignment of the spike eigenvectors of $\vK_\ell$
with the true class label vectors $\{\vb_i\}_{i=1}^r$ are both non-increasing in the network depth, see also Figure~\ref{fig:training}. In other words, at random initialization, the input signal diminishes as the depth of the NN increases. 

In Figure~\ref{fig:training} we highlight two remedies to this ``curse of depth'' at random initialization. 
\begin{itemize}[leftmargin=*]
    \item In Figure~\ref{fig:training}(a) we observe that when the width of NN becomes larger, alignment between the leading eigenvector of $\vK_\ell$ at random initialization and the signal can be preserved across a larger depth. This illustrates the benefit of overparameterization by increasing the network width. 
    \item In Figure~\ref{fig:training}(b) we observe that gradient descent training on the weight matrices also restores and even amplifies the informative signal in the CK matrix of each layer; specifically, after 50 steps of GD training (yellow curve), the alignment between the class labels and the leading eigenvector of $\vK_\ell$ may increase through depth. This demonstrates the benefit of gradient-based feature learning. In Section~\ref{sec:trained_CK} we precisely quantify this improved alignment due to gradient descent in a simplified two-layer setting. 
\end{itemize}

\begin{figure}[t]
\centering
\begin{minipage}[t]{0.42\linewidth}
\centering 
{\includegraphics[width=0.8\textwidth]{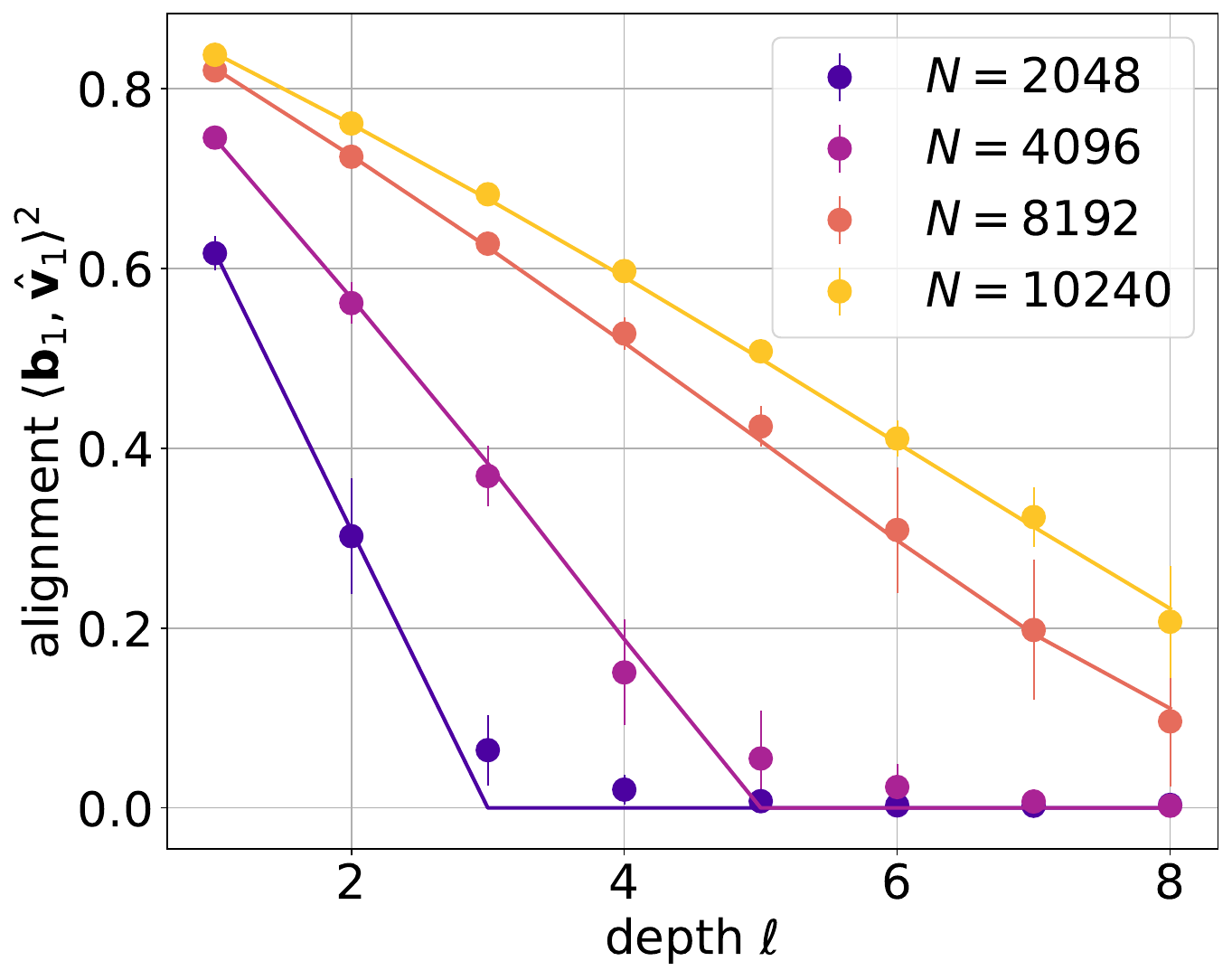}} \\ 
\small (a) Effect of width on alignment. 
\end{minipage}    
\begin{minipage}[t]{0.42\linewidth}
\centering
{\includegraphics[width=0.83\textwidth]{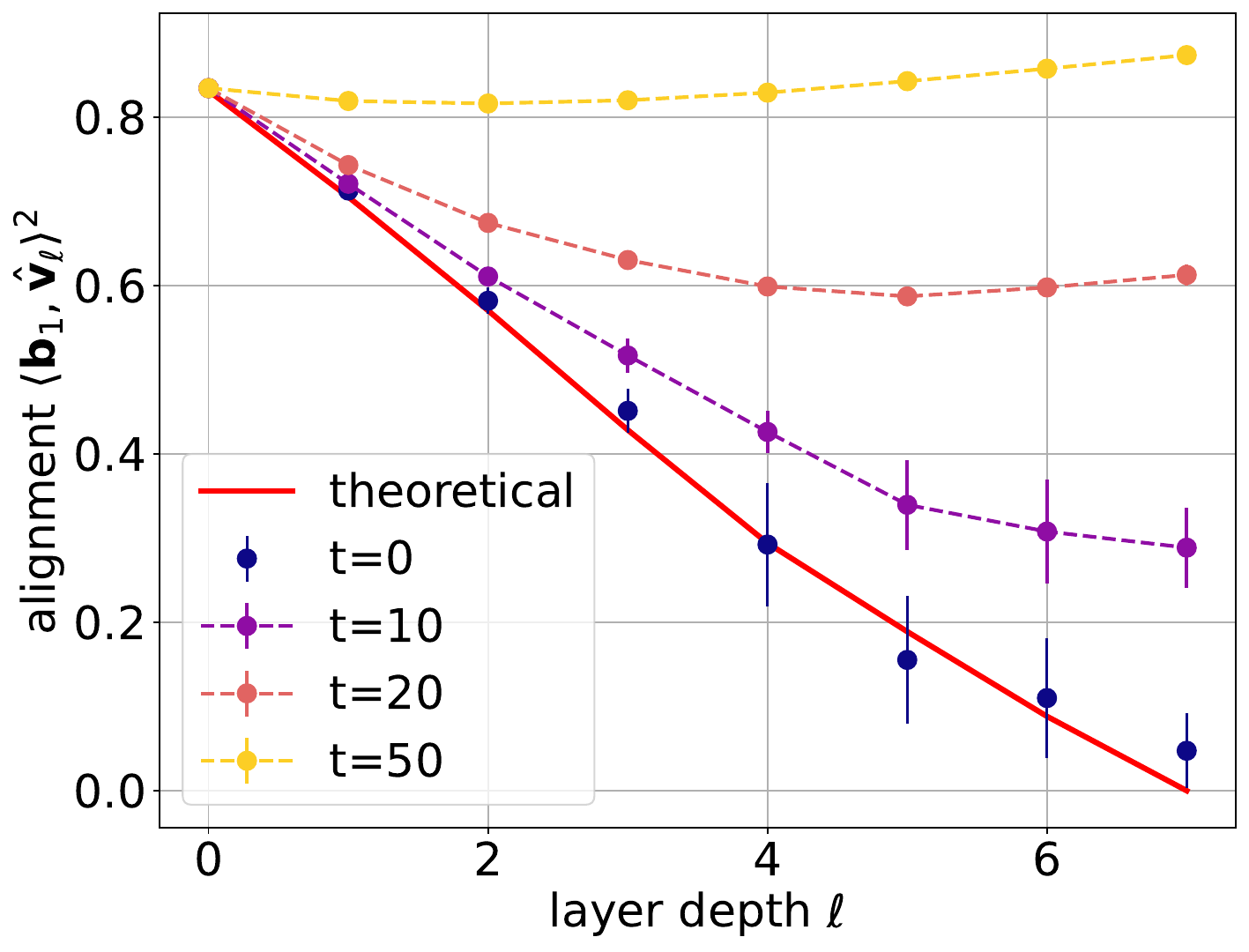}}  \\
\small (b) Effect of GD training. 
\end{minipage} 
\caption{\small  We consider multiple-layer NNs in \eqref{eq:NN} with $ \sigma\propto \tanh$ on Gaussian mixture data \eqref{eq:gmm} for $r=1$, and compute the alignment between the largest eigenvector of the CK matrix $\vK_\ell$ with genuine signal $\vb_1$ (class labels) for different layer $\ell$. 
(a)  NNs at random initialization with varying hidden widths $N=2048,4096,8192,10240$. (b) NNs trained by gradient descent with learning rate $\eta=0.1$ for varying steps $T=0,10,20,50$; we use the $\mu$-parameterization \citep{yang2020feature} to encourage feature learning. $\theta_1$ is $2.5$ and  $1.8$ for (a) and (b), respectively. Dots are empirical values (over 10 runs) and solid curves represent theoretical predictions at random initialization from Theorem~\ref{thm:ck_spike}.  
}   
\label{fig:training}  
\end{figure}

\subsection{CK matrix after $O(1)$ steps of gradient descent}\label{sec:trained_CK}

The preceding section studied the spike eigenstructure of the CK induced by
low-rank structure in the input data.
Here, focusing on a two-layer model, we study an alternative setting 
where spiked structure arises instead
in the weight matrix $\vW$ from gradient descent training.

We consider an early training regime studied in \cite{ba2022high}, with
a width-$N$ two-layer feedforward NN,
\begin{align} 
f_{\text{NN}}(\vx)  
= \frac{1}{\sqrt{N}} \sum_{i=1}^N a_i\sigma(\langle \vx,\vw_i\rangle)
= \frac{1}{\sqrt{N}} \sigma(\vx^\top\vW)\va.
\label{eq:two-layer-nn}
\end{align} 
Here $\vx\in\R^d$ is the input, and
$\vW=[\vw_1,\ldots,\vw_N]\in\R^{d\times N}$ and $\va\in\R^N$ are the network
weights. For clarity of the subsequent discussion, we will
transpose the notation for $\vX$ and $\vW$ from the preceding section,
and incorporate a $1/\sqrt{d}$ scaling into $\vW$ rather than into the input
data $\vX$.

Given are an input feature matrix
$\vX=[\vx_1,\ldots,\vx_n]^\top \in \R^{n\times d}$ and labels $\vy \in \R^n$
for $n$ samples, where $y_i=f_*(\vx_i)+\text{noise}$.
We consider the training of first-layer weights $\vW$
to minimize the mean squared error
\[\cL(\vW) = {\frac{1}{2n}}\sum_{i=1}^n (f_{\text{NN}}(\vx_i)-y_i)^2,\]
fixing the second-layer weight vector $\va$.
From a random initialization $\vW_0 \in \R^{d \times N}$, and
over $T$ steps with learning rates $\eta_1,\ldots,\eta_T$ scaled by
$\sqrt{N}$, the gradient descent (GD) updates take the form
\begin{equation}\label{eq:trained_W}
	\vW_{\!t+1} = \vW_{\!t} + \eta_{t+1} \sqrt{N}\cdot\vG_t,
\qquad \vG_t=-\nabla\cL(\vW_t).
\end{equation}
Of interest is the information about the label function $f_*$ that is learned by
$\vW_{\text{trained}} \equiv \vW_T$, which may be characterized by the spectral
alignment of the CK matrix with the class label vector on independent
test data $(\tilde \vX,\tilde \vy)$.
This use of independent test data may be understood as a pre-training
setup, also considered previously
in \cite{ba2022high,moniri2023theory} and studied for real-world data in \cite{wei2022more}.

It was shown in \cite{ba2022high} that in a training regime
with initialization $\|\vW_0\| \asymp 1$ such that
$|f_{\text{NN}}(\vx_i)| \ll 1$ for each $i=1,\ldots,N$,
and with learning rates $\eta_1,\ldots,\eta_T \asymp
1$ for a fixed number $T$ of GD steps, the weight matrix $\vW$ undergoes a
change during training that is $O(1)$ in operator norm and approximately rank-1,
\[\vW_{\text{trained}}\approx \vW_0+\frac{\eta b_\sigma}{n}\vX^\top\vy\va^\top \qquad
\text{ where } \qquad \eta=\sum_{t=1}^T \eta_t.\]
Moreover, \cite[Conjecture 4]{ba2022high} conjectured that for the CK matrix
\begin{equation}\label{eq:trained_CK}
\vK=\frac{1}{N} \sigma(\tilde \vX\vW_{\text{trained}})\sigma(\tilde
\vX\vW_{\text{trained}})^\top \in \R^{n \times n}
\end{equation}
defined by the pre-trained weights and test data $\tilde \vX$,
the resulting spike eigenvalue and the alignment of its spike eigenvector with the test labels $\tilde
\vy \in \R^n$ are accurately predicted by a Gaussian equivalent model.
Our main result of this section is an affirmative verification of this
conjecture and precise characterization of the spike eigenstructure of $\vK$,
in the following representative setting.

\begin{assumption}\label{assum:GD}
For a two-layer NN in \eqref{eq:two-layer-nn} with GD training defined by \eqref{eq:trained_W}, we assume that~
\begin{enumerate}[label=(\alph*)]
\item $n,d,N \to \infty$ such that $N/d \to\gamma_0 \in
(0,\infty)$ and $N/n \to \gamma_1 \in (0,\infty)$. 
\item Training features $\vX=[\vx_1,\ldots,\vx_n]^\top \in \R^{n \times d}$
have entries $[\vX]_{ij} \overset{iid}{\sim} \cN(0,1)$, training labels $\vy \in
\R^n$ have entries
$y_i = \sigma_*(\vbeta_*^\top\vx_i)+\eps_i$ where $\vbeta_*\in\R^d$ is a
deterministic unit vector and
$\eps_i \overset{iid}{\sim} \cN(0,\sigma_\eps^2)$, and test data
$(\tilde \vX,\tilde \vy)$ is an independent copy of $(\vX,\vy)$.
\item The NN activation $\sigma:\R \to \R$ and label function
$\sigma_*:\R \to \R$ both satisfy Assumption~\ref{assump:sigma}, with
$b_\sigma:=\E[\sigma'(\xi)] \neq 0$ and
$b_{\sigma_*}:=\E[\sigma_*'(\xi)] \neq 0$.
\item The weight initializations satisfy
$[\vW_0]_{ij}\iid\cN(0,1/d)$ and $a_j \iid \cN(0,1/N)$.
\item The number of iterations $T$ and learning rates $\eta_1,\ldots,\eta_T$ are
fixed independently of $n,d,N$.
\end{enumerate}
\end{assumption}

Under these assumptions, the following theorem characterizes the spike eigenvalue of the CK matrix and the alignment between the corresponding eigenvector and the test labels, 
as a function of the learning rate $\eta_t$ and the number of gradient descent steps $T$. 

\begin{figure}[t]
\centering
\begin{minipage}[t]{0.46\linewidth}
\centering
{\includegraphics[height=0.6\textwidth]{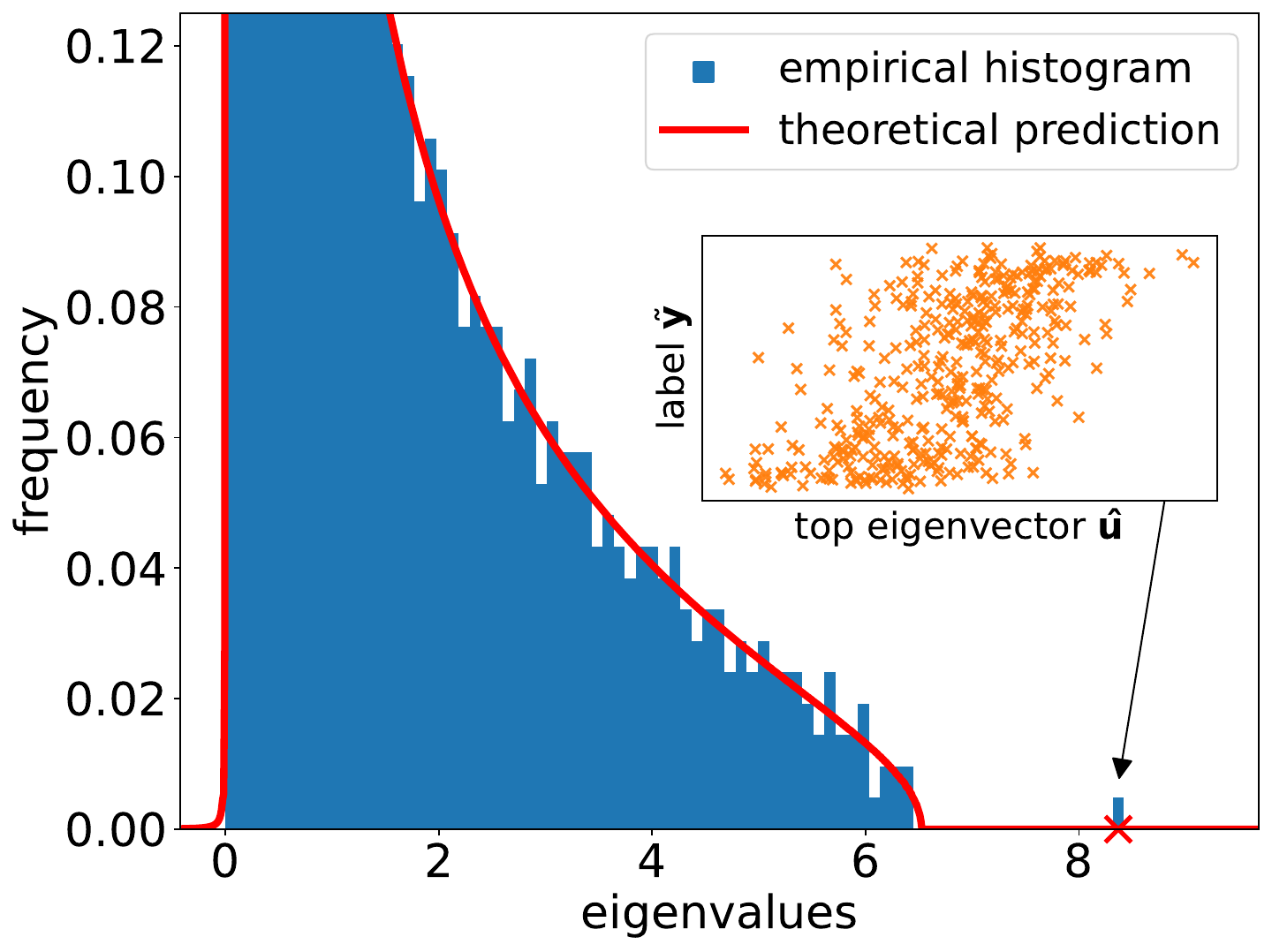}}  \\
\small (a) Spectrum of the updated CK. 
\end{minipage}
\begin{minipage}[t]{0.46\linewidth} 
\centering 
{\includegraphics[height=0.6\textwidth]{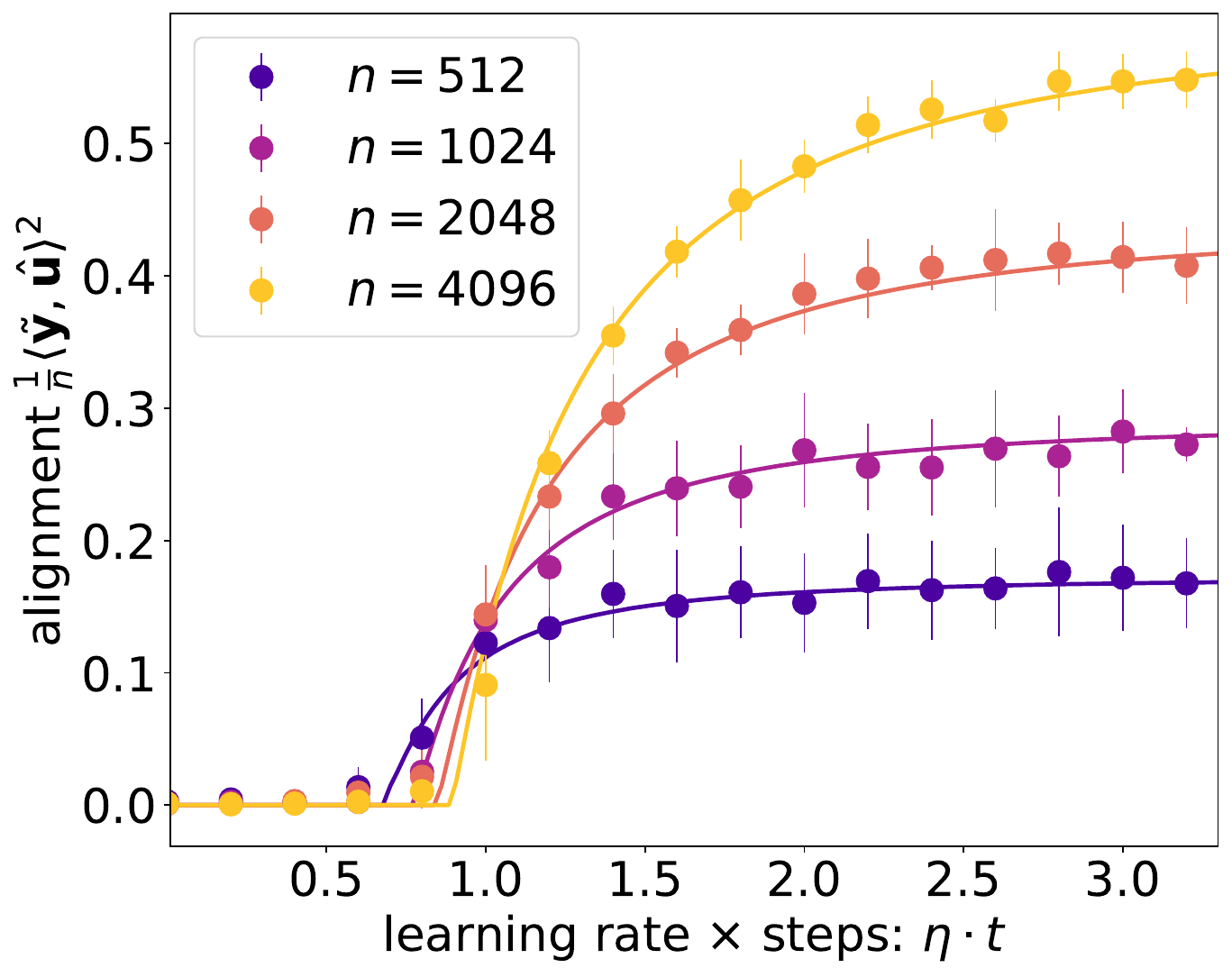}} \\ 
\small (b) Eigenvector alignment of the updated CK. 
\end{minipage}   
\caption{\small $(a)$ We set $n=2000, d=1600,N=2400,\eta\cdot t = 2$, and $\sigma=\sigma_*=\text{erf}$. 
$(b)$ We set $d=2048,N=1024,\eta=0.2$, $\sigma=\text{tanh}, \sigma_*=\text{SoftPlus}$, and vary the sample size $n$ and number of GD steps $t$; 
dots represent empirical simulations (over 10 runs) and solid curves are theoretical predictions from Theorem~\ref{thm:gd_spike}. 
 }   
\label{fig:GD}  
\end{figure} 

\begin{theorem}\label{thm:gd_spike}
Suppose that Assumption \ref{assum:GD} holds, and set
$\eta=\sum_{t=1}^T \eta_t$. Define  
\begin{equation}\label{def:theta_12}
\theta_1= b_\sigma\eta\cdot\sqrt{(\gamma_1/\gamma_0)(1+\sigma_\eps^2) +
b_{\sigma_*}^{2}}, \qquad
\theta_2=b_\sigma b_{\sigma_*}\eta.
\end{equation} 
Let $z_1(\cdot)$ and $\varphi_1(\cdot)$ be defined by \eqref{eq:zell} with
$\gamma_1$ and $\nu_0=b_\sigma^2\otimes\rho^{\MP}_{\gamma_0} \oplus (1-b_\sigma^2)$, and set
\[\lambda_1=b_\sigma^2\frac{(1+\theta_1^2)(\gamma_0+\theta_1^2)}{\theta_1^2}+1-b_\sigma^2.\]
Then $\vK$ defined by \eqref{eq:trained_CK} has a spike eigenvalue if and only
if $\theta_1>\gamma_0^{1/4}$ and $z'(-1/\lambda_1)>0$. In this case,
$\lambda_{\max}(\vK) \to \gamma_1^{-1}z(-1/\lambda_1)$ a.s.,
and the leading unit eigenvector $\widehat \vu\in\R^n$ of $\vK$ satisfies
\begin{equation}\label{eq:gd_alignment}
\frac{1}{\sqrt{n}}|\tilde{\vy}^\top \widehat\vu|\to b_\sigma b_{\sigma_*}
\frac{\sqrt{z(-1/\lambda_1)\varphi(-1/\lambda_1)}}{\lambda_1}\cdot
\frac{\theta_2\sqrt{(\theta_1^4-\gamma_0)(\gamma_0+\theta_1^2)}}{\theta_1^3}>0
\text{ a.s.}
\end{equation}
\end{theorem}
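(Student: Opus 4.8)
\emph{Step 1 (reduction to the idealized weights).} The plan is to first replace the trained weights by the idealized rank-one update and then recognize the resulting kernel as an instance of our general nonlinear spiked covariance model. By \cite{ba2022high} --- whose single-step analysis extends to a fixed number $T$ of steps with $\eta=\sum_t\eta_t$, and whose hypotheses hold here because Assumption~\ref{assum:GD} gives $|f_{\mathrm{NN}}(\vx_i)|=O(N^{-1/2})$ at initialization --- one has $\normop{\vW_{\text{trained}}-\bar\vW}\to 0$ almost surely, where $\bar\vW:=\vW_0+\tfrac{\eta b_\sigma}{n}\vX^\top\vy\va^\top$. Since $\sigma$ is $\lambda_\sigma$-Lipschitz and $\normop{\tilde\vX}$, $\normop{\sigma(\tilde\vX\vW_{\text{trained}})}$, $\normop{\sigma(\tilde\vX\bar\vW)}$ are $O(\sqrt n)$ with high probability, replacing $\vW_{\text{trained}}$ by $\bar\vW$ perturbs $\vK$ by $o(1)$ in operator norm; by Weyl's inequality the limiting outlier eigenvalue is unaffected, and since that outlier is separated from the limiting bulk support by a positive constant, the Davis--Kahan theorem shows the leading eigenvector --- hence $\tfrac{1}{\sqrt n}\tilde\vy^\top\hat\vu$ --- is perturbed by $o(1)$. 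It thus suffices to treat $\vW_{\text{trained}}=\bar\vW$, and we write $\vK$ for the resulting kernel.

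\emph{Step 2 (the spiked structure and the effective parameters).} Put $\vphi:=\tilde\vX\big(\tfrac{\eta b_\sigma}{n}\vX^\top\vy\big)\in\R^n$, so that $\tilde\vX\bar\vW=\tilde\vX\vW_0+\vphi\va^\top$ is a rank-one perturbation of the null feature matrix $\tilde\vX\vW_0$; the conjugate kernel of the latter is free of outliers by Theorem~\ref{thm:no_outlier_ck} applied with $r=0$ (its input $\tilde\vX^\top/\sqrt d$ is $\tau_n$-orthonormal with a spike-free Marchenko--Pastur input Gram $\tilde\vX\tilde\vX^\top/d$). Since $a_j\overset{iid}{\sim}\cN(0,1/N)$, the perturbation $\vphi\va^\top$ is $O(N^{-1/2+\eps})$ entrywise, and a first-order Taylor expansion of $\sigma$ gives
\[\sigma(\tilde\vX\bar\vW)=\sigma(\tilde\vX\vW_0)+b_\sigma\,\vphi\va^\top+\vE,\]
where $\vE$ collects $(\vphi\va^\top)\odot\big(\sigma'(\tilde\vX\vW_0)-b_\sigma\big)$, the second-order term $\tfrac12(\vphi\va^\top)^{\odot 2}\odot\sigma''(\tilde\vX\vW_0)$ --- whose entrywise mean vanishes precisely because $\E[\sigma''(\xi)]=0$ --- and smaller terms; that $\vE$ affects only the bulk of $\vK$ and not its outlier is part of the content of our general nonlinear spiked covariance result. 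Next, Stein's lemma and concentration of sums of i.i.d.\ vectors give $\|\va\|^2\to 1$ and $\tfrac1n\vX^\top\vy=b_{\sigma_*}\vbeta_*+\vnu$ with $\vnu\perp\vbeta_*$, $\|\vnu\|^2\to(\gamma_1/\gamma_0)(1+\sigma_\eps^2)$, while for the independent test data, with $\vg:=\tilde\vX\vbeta_*$, one has $\tfrac1n\vg^\top\tilde\vy\to b_{\sigma_*}$ and $\tfrac1n\|\tilde\vy\|^2\to 1+\sigma_\eps^2$. Consequently, conditionally on the training data, $\vphi=\theta_2\,\vg+(\text{component asymptotically orthogonal to }\vg)$ with $\|\vphi\|^2/n\to\theta_1^2$, which is exactly why $\theta_1,\theta_2$ of \eqref{def:theta_12} are the correct effective parameters.

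\emph{Step 3 (propagation and the label alignment).} Conditionally on $(\tilde\vX,\vX^\top\vy)$ the columns of $\sigma(\tilde\vX\bar\vW)$ are i.i.d.\ (in $\vw_{0,k},a_k$), and a column of $\tilde\vX\bar\vW$ is Gaussian with covariance $\vK_0^{\mathrm{eff}}:=\vK_0+\tfrac1N\vphi\vphi^\top$, a rank-one spiked perturbation of the Marchenko--Pastur matrix $\vK_0=\tilde\vX\tilde\vX^\top/d$ along the delocalized direction $\vphi/\|\vphi\|$ (delocalization of $\vphi/\|\vphi\|$ and $\va/\|\va\|$ is automatic here, both being Gaussian-type). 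An isotropic-law computation, using that $\tfrac{\eta b_\sigma}{n}\vX^\top\vy$ is independent of $\tilde\vX$, shows that $\vK_0^{\mathrm{eff}}$ has spike eigenvalue and eigenvector governed by the same signal-plus-noise formulas as in Corollary~\ref{cor:gmm} with $\theta_i$ replaced by $\theta_1$: the spike equals $\kappa=(1+\theta_1^2)(\gamma_0+\theta_1^2)/\theta_1^2$ (present iff $\theta_1>\gamma_0^{1/4}$), and its unit eigenvector $\vw^{\mathrm{eff}}$ satisfies $|(\vw^{\mathrm{eff}})^\top(\vphi/\|\vphi\|)|^2\to 1-\tfrac{\gamma_0(1+\theta_1^2)}{\theta_1^2(\theta_1^2+\gamma_0)}$; the Hermite linearization of the column second moment, $\E[\sigma(\vh)\sigma(\vh)^\top]\approx b_\sigma^2\vK_0^{\mathrm{eff}}+(1-b_\sigma^2)\ident$, then makes the effective population spike $\lambda_1=b_\sigma^2\kappa+(1-b_\sigma^2)$. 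Applying our general nonlinear spiked covariance theorem with population measure $\nu_0$ and the functions $z_1,\varphi_1$ of \eqref{eq:zell}: $\vK$ has an outlier iff in addition $z_1'(-1/\lambda_1)>0$, in which case $\lambda_{\max}(\vK)\to\gamma_1^{-1}z_1(-1/\lambda_1)$ and the leading unit eigenvector $\hat\vu$ satisfies $|\hat\vu^\top\vv|^2-\varphi_1(-1/\lambda_1)\,|(\vw^{\mathrm{eff}})^\top\vv|^2\to 0$ for every unit $\vv$ independent of $(\vW_0,\va)$, in particular $\vv=\tilde\vy/\|\tilde\vy\|$. Chaining the overlaps $\hat\vu\leftrightarrow\vw^{\mathrm{eff}}$ (factor $\varphi_1(-1/\lambda_1)$), $\vw^{\mathrm{eff}}\leftrightarrow\vphi/\|\vphi\|$ (the displayed factor), $\vphi\leftrightarrow\tilde\vy$ (factor $\theta_2 b_{\sigma_*}/\theta_1$, from $\vphi=\theta_2\vg+\cdots$ and $\tfrac1n\vg^\top\tilde\vy\to b_{\sigma_*}$), together with $\|\tilde\vy\|/\sqrt n\to\sqrt{1+\sigma_\eps^2}$ and the Bai--Yao resolvent identities relating sample- and population-eigenvector overlaps (which contribute the residual $b_\sigma\sqrt{z_1(-1/\lambda_1)}/\lambda_1$ and the powers of $\theta_1$), yields \eqref{eq:gd_alignment} after simplification.

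\emph{Main obstacle.} The crux is the general nonlinear spiked covariance theorem used in Steps 2--3: a deterministic-equivalent (anisotropic-law) statement for $(\vK-z)^{-1}$ valid for all $z$ separated from the limiting bulk support --- not merely for $z$ bounded away from the positive real axis as in \cite{chouard2023deterministic} --- which underlies every outlier statement and is the technical heart of the paper. Within the present application the two extra subtleties are: (i) the spike direction $\vphi=\tilde\vX\big(\tfrac{\eta b_\sigma}{n}\vX^\top\vy\big)$ and the test-label vector $\tilde\vy$ are both functions of the test matrix $\tilde\vX$ that generates the bulk, so the concentration and alignment estimates must be carried out conditionally on $\tilde\vX$, exploiting that $\tfrac{\eta b_\sigma}{n}\vX^\top\vy$ and $(\vW_0,\va)$ are independent of $\tilde\vX$; and (ii) the Taylor remainder $\vE$ must be shown not to displace the outlier, which is exactly where $\E[\sigma''(\xi)]=0$ is essential --- otherwise the second-order term would plant an additional, uninformative spike, as in \cite{benigni2022largest}. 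Verifying \cite{ba2022high}'s rank-one approximation for $T>1$ steps with $o(1)$ operator-norm error under Assumption~\ref{assum:GD} is routine but also required.
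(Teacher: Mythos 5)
Your Steps~1--2 are fine in outline and Proposition~\ref{prop:W_spike} is essentially your Step~1. However, Step~3 contains a concrete error that breaks the argument. You condition on $(\tilde\vX,\vX^\top\vy)$ and treat the $N$ columns of $\sigma(\tilde\vX\bar\vW)$ as the i.i.d.\ samples of the nonlinear spiked covariance model, so the conditional population covariance is $b_\sigma^2\vK_0^{\mathrm{eff}}+(1-b_\sigma^2)\vI_n$ with $\vK_0^{\mathrm{eff}}=\tfrac1d\tilde\vX\tilde\vX^\top+\tfrac1N\vphi\vphi^\top=\tfrac1d\tilde\vX\big(\vI_d+\tfrac{d}{N}\vc\vc^\top\big)\tilde\vX^\top$ where $\vc=\tfrac{\eta b_\sigma}{n}\vX^\top\vy$. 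This is a \emph{spiked Wishart} (the population spike sits in the $d\times d$ inner matrix $\vI_d+\tfrac{d}{N}\vc\vc^\top$, with spike size $1+\theta_1^2/\gamma_0$ and aspect ratio $n/d\to\gamma_0/\gamma_1$), \emph{not} a signal-plus-noise matrix, so the Benaych-Georges/Nadakuditi formulas of Corollary~\ref{cor:gmm} do not apply to it. Carrying out the spiked-Wishart computation via $z(\tilde m)=-1/\tilde m+1/(1+(\gamma_0/\gamma_1)\tilde m)$ at $\tilde m=-\gamma_1/(\gamma_0+\theta_1^2)$ gives spike eigenvalue
\[
(\gamma_0+\theta_1^2)\,\frac{\theta_1^2+\gamma_1}{\gamma_1\theta_1^2},
\]
with phase transition at $\theta_1>(\gamma_0\gamma_1)^{1/4}$, neither of which equals your quoted $\kappa=(1+\theta_1^2)(\gamma_0+\theta_1^2)/\theta_1^2$ or $\theta_1>\gamma_0^{1/4}$ unless $\gamma_1=1$. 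Similarly, the bulk of $\vK_0=\tilde\vX\tilde\vX^\top/d$ is $\rho^{\MP}_{\gamma_0/\gamma_1}$, not $\rho^{\MP}_{\gamma_0}$, so the population measure feeding into $z_1,\varphi_1$ in your framing would be $b_\sigma^2\otimes\rho^{\MP}_{\gamma_0/\gamma_1}\oplus(1-b_\sigma^2)$, not $\nu_0$. These mismatches propagate through the eigenvector-alignment chain as well.

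The paper avoids this entirely by using the \emph{opposite conditioning}: fix the trained $\vW$ (it satisfies a near-orthonormality condition a.s., by Lemma~\ref{lemma:event_Wt}), let the randomness come from the test rows $\tilde\vx_\alpha$, so that the $n$ rows of $\vY$ are i.i.d.\ with conditional covariance $\vSigma\approx b_\sigma^2\vW^\top\vW+(1-b_\sigma^2)\vI_N\in\R^{N\times N}$. In that framing the bulk of $\vW^\top\vW$ really is $\rho^{\MP}_{\gamma_0}$, the spike of the information-plus-noise matrix $\vW=\vW_0+\vc\va^\top$ really is $\kappa=(1+\theta_1^2)(\gamma_0+\theta_1^2)/\theta_1^2$ at threshold $\theta_1>\gamma_0^{1/4}$ (Proposition~\ref{prop:W_spike}, via \cite{benaych2012singular}), and the CK $\vK=\vY\vY^\top$ is the companion matrix $\gamma_1^{-1}\widetilde\vK$ in the general model, with aspect $N/n\to\gamma_1$ --- all three of which match the theorem statement directly. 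The label alignment then comes from Theorem~\ref{thm:spiked}(c), adjoining $\vu=\tilde\vy/\sqrt n$ as an extra coordinate to $\vG$ (since $\tilde\vy$ has i.i.d.\ entries jointly with the rows of $\vG$), together with the Stein-type identity $\E_{\vx}[\sigma_*(\vbeta_*^\top\vx)\sigma(\vx^\top\vW)]\approx b_\sigma b_{\sigma_*}\vbeta_*^\top\vW$ and the explicit overlap $|\vu(\vW)^\top\vbeta_*|^2$ of the trained weight matrix. Your dual decomposition is in principle an alternative route, but then every intermediate formula --- spike, threshold, bulk measure, and eigenvector overlap --- must be re-derived for the Wishart-with-aspect-$\gamma_0/\gamma_1$ structure, and one would still have to verify (a non-trivial algebraic identity) that these compose to the same final expression \eqref{eq:gd_alignment}.
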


\paragraph{Numerical illustration.}
Figure~\ref{fig:GD} empirically validates the predictions of
Theorem~\ref{thm:gd_spike}, for a two-layer NN trained with a small number of
GD steps.
Figure \ref{fig:GD}(a) shows that one spike eigenvalue emerges over training in the test-data CK, the location of which is accurately predicted by Theorem~\ref{thm:gd_spike}; 
moreover, the leading eigenvector $\widehat{\vu}$ aligns with the labels $\tilde{\vy}$. 
This is quantified in Figure~\ref{fig:GD}(b), where above a phase transition
threshold, the alignment $\langle\widehat{\vu},\tilde{\vy}\rangle^2$ (predicted
by \eqref{eq:gd_alignment}) increases with the learning rate or number of GD steps; 
in addition, alignment also increases with the training set size $n$.
Compared with random initialization ($\eta=0$), this illustrates that training
improves the NN representation, and the test-data CK contains information on the label function $f_*$. 

\section{Analysis of a nonlinear spiked covariance model}\label{sec:proof_overview}

The results of Sections~\ref{sec:spike_CK} and~\ref{sec:trained_CK} rest on an
analysis of spiked eigenstructure in a general nonlinear spiked covariance
model. We describe the assumptions and statement of this general result
informally here, deferring formal and more quantitative statements
to Appendix \ref{sec:spiked_sample_covariance}.

Let
$\vG=\frac{1}{\sqrt{N}}[\vg_1,\ldots,\vg_N]^\top\in\R^{N\times n}$
have independent rows $\vg_1,\ldots,\vg_N \in \R^n$ with mean 0 and common
covariance $\vSigma \in \R^{n \times n}$. We assume that the law of $\vg_i$
satisfies concentration of quadratic forms $\vg_i^\top \vA\vg_j$, but has
otherwise arbitrary dependence across coordinates.
As $n,N \to \infty$ with $n/N \to \gamma \in (0,\infty)$,
the eigenvalues of $\vSigma$ satisfy
\[\lambda_i(\vSigma) \to \lambda_i \text{ for } i=1,\ldots,r, \qquad
\frac{1}{n-r}\sum_{i=r+1}^n \delta_{\lambda_i(\vSigma)} \to \nu \text{ weakly},\]
for fixed spike values $\lambda_1,\ldots,\lambda_r>0$ and a deterministic
limit spectral law $\nu$. Then the empirical spectral law of the sample
covariance matrix $\vK=\vG^\top \vG$ satisfies
\[\frac{1}{n}\sum_{i=1}^n \delta_{\lambda_i(\vK)} \to \mu=\rho_\gamma^{\MP}
\boxtimes \nu \text{ weakly a.s.}\]
Under these assumptions, let us define
\[z(s)={-}\frac{1}{m}+\gamma\int \frac{\lambda}{1+\lambda s}d\nu(\lambda),
\qquad \varphi(s)=\frac{z'(s)}{(-1/s)z(s)}.\]
\vspace{-\baselineskip}

\begin{theorem}[informal]\label{thm:informal}~
\begin{enumerate}[label=(\alph*)]
\item If $r=0$, then all eigenvalues of $\vK$ converge to
$\supp{\mu} \cup \{0\}$. More generally for $r \geq 0$, the eigenvalues of
$\vK$ asymptotically
separated from $\supp{\mu} \cup \{0\}$ are in 1-to-1 correspondence with
$\cI=\{i:z'(-1/\lambda_i)>0\}$, and
$\widehat{\lambda}_i(\vK) \to z(-1/\lambda_i)$.
\item For each $i \in \cI$ and any deterministic unit vector $\vv \in \R^n$,
$(\vv^\top \widehat \vv_i)^2-\varphi(-1/\lambda_i)(\vv^\top \vv_i)^2 \to 0$,
where $\vv_i,\widehat\vv_i$ are the unit eigenvectors of $\vSigma,\vK$ for
eigenvalues $\lambda_i(\vSigma),\widehat \lambda_i(\vK)$.
\item Let $\vu=\frac{1}{\sqrt{N}}(u_1,\ldots,u_N)^\top \in \R^N$ be such that
$[\vu,\vG] \in \R^{N \times (n+1)}$ has i.i.d.\ rows $\{[u_j,\vg_j]\}_{j=1}^N$, and denote
$\E[u\vg]=\E[u_j\vg_j]$ for all $j \in [N]$. Then for each $i \in \cI$,
\[(\vu^\top \widehat{\vu}_i)^2-\frac{z(-1/\lambda_i)\varphi(-1/\lambda_i)}{\lambda_i^2}
(\E[u\vg]^\top \vv_i)^2 \to 0\]
where $\widehat{\vu}_i$ is the unit eigenvector of
$\vG\vG^\top$ for its eigenvalue
$\widehat\lambda_i(\vG\vG^\top)=\widehat\lambda_i(\vK)$.
\end{enumerate}
\end{theorem}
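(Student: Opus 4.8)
The plan is to funnel all three parts through a single analytic input: a \emph{deterministic equivalent} for the resolvent $\vR(z)=(\vK-z)^{-1}$ (and for $(\vG\vG^\top-z)^{-1}$) valid not merely off the real axis but for \emph{real} spectral arguments $z$ separated from $\supp{\mu}\cup\{0\}$, under only the assumed concentration of the quadratic forms $\vg_i^\top\vA\vg_j$. Writing $\underline m(z)$ for the companion Stieltjes transform---characterized as the inverse of the function $z(\cdot)$ in \eqref{eq:zell}, so that $z(\underline m(z))=z$---the target estimate is, for deterministic unit vectors $\vv,\vw$,
\[
\vv^\top\vR(z)\vw=\vv^\top\vPi(z)\vw+o(1),\qquad \vPi(z):=-\frac{1}{z}\bigl(\ident+\underline m(z)\,\vSigma\bigr)^{-1},
\]
together with an operator-norm bound on $\vR(z)$ restricted to the bulk and the absence of eigenvalues of $\vK$ in the forbidden region. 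This is the extension of the deterministic-equivalent results of \cite{chouard2023deterministic,schroder2023deterministic} (valid for $z$ bounded away from $\R_+$) up to the real line outside $\supp{\mu}$, and it is the step I expect to be the main obstacle: it requires a stability analysis of the Marchenko--Pastur self-consistent equation that is uniform on contours approaching the real axis, together with a rigidity / exact-separation input (the latter already yielding the $r=0$ claim). Granting this engine, the remaining arguments follow the deterministic skeleton of \cite{bai2012sample} supplemented by contour integration.

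\smallskip\noindent\emph{Part (a).} Since $\vv_1,\dots,\vv_r$ are eigenvectors of $\vSigma$, I would pass to an eigenbasis where $\vSigma=\diag(\vSigma_1,\vSigma_2)$ with $\vSigma_1=\diag(\lambda_1,\dots,\lambda_r)$ and $\vSigma_2$ carrying the bulk spectrum $\nu$, and partition $\vG$, $\vK=\vG^\top\vG$ accordingly. For $z\notin\supp{\mu}\cup\{0\}$ the Schur complement of the bulk block reduces $\det(\vK-z)=0$ to a finite $r\times r$ secular equation $\det M(z)=0$, where $M(z)$ is assembled from the sample-covariance resolvent of the bulk block tested against the spike directions. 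Substituting its deterministic equivalent collapses $M(z)$, to leading order, to $-z\bigl(\ident_r+\underline m(z)\vSigma_1\bigr)$, whose $i$th diagonal entry vanishes exactly when $1+\lambda_i\underline m(z)=0$, i.e.\ when $\underline m(z)=-1/\lambda_i$, i.e.\ (inverting) when $z=z(-1/\lambda_i)$; the requirement that this root lie outside $\supp{\mu}$ is precisely the phase-transition condition $z'(-1/\lambda_i)>0$, while for $z'(-1/\lambda_i)\le 0$ the candidate is absorbed into the bulk. (When $r=0$ there is no secular equation and the rigidity input directly gives that all eigenvalues converge to $\supp{\mu}\cup\{0\}$.) A continuity-and-counting argument (exact separation) then upgrades this to the stated one-to-one correspondence with $\cI=\{i:z'(-1/\lambda_i)>0\}$ and the limit $\widehat\lambda_i(\vK)\to z(-1/\lambda_i)$.

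\smallskip\noindent\emph{Part (b).} Represent the spectral projector by a contour integral, $\widehat\vv_i\widehat\vv_i^\top=-\frac{1}{2\pi i}\oint_{\Gamma_i}\vR(z)\,dz$, with $\Gamma_i$ a small positively oriented circle around $\widehat\lambda_i$ enclosing no other eigenvalue, so that $(\vv^\top\widehat\vv_i)^2=-\frac{1}{2\pi i}\oint_{\Gamma_i}\vv^\top\vR(z)\vv\,dz$. On $\Gamma_i$ I would replace $\vR(z)$ by $\vPi(z)$; since $\vSigma\vv_j=\lambda_j\vv_j$ we have $\vv^\top\vPi(z)\vv=\sum_j(\vv^\top\vv_j)^2\pi_j(z)+(\text{bulk part})$ with $\pi_j(z)=\tfrac{-1}{z(1+\lambda_j\underline m(z))}$, and both the bulk part and the terms $j\ne i$ are analytic inside $\Gamma_i$. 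Only the simple pole of $\pi_i$ at $z=z(-1/\lambda_i)$ survives, and computing its residue via $\underline m'(z(-1/\lambda_i))=1/z'(-1/\lambda_i)$ gives
\[
(\vv^\top\widehat\vv_i)^2=(\vv^\top\vv_i)^2\cdot\frac{z'(-1/\lambda_i)}{\lambda_i\,z(-1/\lambda_i)}+o(1)=\varphi(-1/\lambda_i)\,(\vv^\top\vv_i)^2+o(1),
\]
which is the claim; general (non-unit) deterministic $\vv$ follows by homogeneity.

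\smallskip\noindent\emph{Part (c).} Use the singular-value identity $\widehat\vu_i=\widehat\lambda_i^{-1/2}\,\vG\widehat\vv_i$. Decompose each row by the population least-squares regression $u_j=\vc^\top\vg_j+e_j$, where $e_j$ is uncorrelated with $\vg_j$ and $\vc$ satisfies $\vc^\top\vv_i=\lambda_i^{-1}\E[u\vg]^\top\vv_i$; then $\vu=\vG\vc+\ve$ with $\ve=\tfrac{1}{\sqrt N}(e_1,\dots,e_N)^\top$, hence
\[
\vu^\top\vG\widehat\vv_i=\vc^\top(\vG^\top\vG)\widehat\vv_i+\frac1N\sum_{j}e_j(\vg_j^\top\widehat\vv_i)=\widehat\lambda_i\,\vc^\top\widehat\vv_i+\frac1N\sum_{j}e_j(\vg_j^\top\widehat\vv_i),
\]
using that $\widehat\vv_i$ is a $\widehat\lambda_i$-eigenvector of $\vK$. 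A leave-one-out argument---approximating $\widehat\vv_i$ by the eigenvector $\widehat\vv_i^{(j)}$ obtained after deleting sample $j$, which is independent of $(e_j,\vg_j)$---shows the remainder is $o(1)$, so $\vu^\top\widehat\vu_i=\widehat\lambda_i^{1/2}\,\vc^\top\widehat\vv_i+o(1)$. Combining $\widehat\lambda_i\to z(-1/\lambda_i)$ from part (a), part (b) applied to $\vv=\vc/\|\vc\|$ (which gives $(\vc^\top\widehat\vv_i)^2=\varphi(-1/\lambda_i)(\vc^\top\vv_i)^2+o(1)$ by homogeneity), and $\vc^\top\vv_i=\lambda_i^{-1}\E[u\vg]^\top\vv_i$, one obtains
\[
(\vu^\top\widehat\vu_i)^2=\frac{z(-1/\lambda_i)\,\varphi(-1/\lambda_i)}{\lambda_i^2}\bigl(\E[u\vg]^\top\vv_i\bigr)^2+o(1),
\]
as claimed.
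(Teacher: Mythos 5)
Your proposal takes a genuinely different route from the paper's. You propose an anisotropic local law for the \emph{unregularized} resolvent $\vR(z)=(\vK-z)^{-1}$ at real $z$ separated from $\supp{\mu}\cup\{0\}$, then deduce (a) via a Schur complement against the bulk block, (b) by contour integration, and (c) by a population least-squares decomposition of $u_j$. The paper instead builds everything on the $\alpha$-regularized generalized resolvent $\vcR(z,\alpha)$ of \eqref{eq:matrix_block}, proves a deterministic equivalent for \emph{that} object (Lemma~\ref{lemm:apply_fluctuation}, and Lemma~\ref{lemm:apply_fluctuation2} for the version augmented with $\vu$), and reads off eigenvalues and singular-vector alignments via the master-equation identities of Proposition~\ref{prop:master}. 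This design difference is not cosmetic; your plan as written has two genuine gaps that the paper's formulation is engineered to avoid.

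The first gap is in (a). Your Schur complement against the bulk block reduces $\det(\vK-z\vI)=0$ to $\det M(z)=0$ with $M(z)=-z\vI_r-z\,\vG_r^\top(\vG_0\vG_0^\top-z\vI)^{-1}\vG_r$, and requires $\vG_r^\top(\vG_0\vG_0^\top-z\vI)^{-1}\vG_r$ to concentrate around $\underline{m}(z)\vSigma_r$. That concentration is immediate in the linearly defined model because $\vG_r$ is independent of $\vG_0$, but the independence is precisely what is absent here: $\vG_r=\vG\vV_r$ and $\vG_0$ are built from the \emph{same}, generally nonlinearly coupled, rows $\vg_i$. The paper flags exactly this at the start of the ``Proof ideas'' discussion in Section~\ref{sec:proof_overview} and replaces the block decomposition with the master equation $\det\big(\vI_r+\alpha\vV_r^\top(\vG^\top\vG-z\vI-\alpha\vV_r\vV_r^\top)^{-1}\vV_r\big)=0$, which quotients against the \emph{full} Gram matrix so that only the row-wise quadratic-form concentration of Assumption~\ref{assump:G}(c) is invoked, never cross-block independence. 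Your argument does not say how to estimate the cross term without that independence.

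The second gap concerns the local law you take as the single analytic input. Theorem~\ref{thm:deterministic_equ} gives an equivalent for $\vR(z)$ only for $z$ separated from $\cS_N=\supp{\mu_N}\cup\{0\}$, and when $\vSigma$ is spiked $\cS_N$ contains short intervals around each $z(-1/\lambda_i)$, so that result does not reach the region you need for (b) and (c). The equivalent that \emph{is} valid near $z(-1/\lambda_i)$ is Lemma~\ref{lemm:apply_fluctuation}, and it is stated for the $\alpha$-shifted $\vR(\vGamma)=(\vK-z\vI-\alpha\vV_r\vV_r^\top)^{-1}$ precisely because for large $\alpha$ this object is uniformly bounded near the spike, whereas $\vR(z)$ has a pole at the random point $\widehat\lambda_i$. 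Asking for $\vv^\top\vR(z)\vw\approx\vv^\top\vPi(z)\vw$ uniformly near the spike is asking for the location and residue of that pole to already match those of $\vPi(z)$, which is a \emph{conclusion} of the analysis (it follows from Theorem~\ref{thm:spiked}), not a plausible self-contained hypothesis; to prove it one would in effect recreate the paper's $\alpha$-regularization and a Woodbury step. Lastly, in (c) the residual $\frac{1}{N}\sum_j e_j\vg_j^\top\widehat\vv_i$ does not become $o(1)$ merely by leave-one-out: each summand $e_j\vg_j^\top\widehat\vv_i^{(j)}$ is $O(1)$ with conditional mean zero, and the crude perturbation bound $\|\widehat\vv_i-\widehat\vv_i^{(j)}\|\prec N^{-1/2}$ only gives $\frac{1}{N}\sum_j|e_j|\,|\vg_j^\top(\widehat\vv_i-\widehat\vv_i^{(j)})|=O(1)$; one needs a fluctuation-averaging estimate of the type proved in Lemma~\ref{lemma:fluctuationavg}, which the paper obtains more systematically by absorbing $\vu$ into the enlarged resolvent $\widetilde\vcR$.
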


Statements (a--b) are known in a linear
setting $\vg_i=\vSigma^{1/2}\vz_i$ when $\vz_i$ has i.i.d.\ entries,
see e.g.\ \citep{bai2012sample} and \cite[Theorems 11.3 and 11.5]{yao2015sample}. The above theorem thus
verifies an exact asymptotic equivalence between spiked spectral phenomena
in a nonlinear spiked covariance model with those of a linearly defined 
(possibly Gaussian) model.

In Section~\ref{sec:spike_CK}, each CK matrix $\vK_\ell$ has (approximately)
the structure of the above matrix $\vK$ over the randomness of $\vW_\ell$,
conditional on the features $\vX_{\ell-1}$ of the preceding layer, and
Theorem \ref{thm:ck_spike} follows from Theorem \ref{thm:informal}(a,b).
In Section~\ref{sec:trained_CK}, the CK matrix $\vK$ defined by trained weights
has (approximately) this structure over the randomness of $\tilde \vX$,
conditional on $\vW_{\mathrm{trained}}$, and Theorem
\ref{thm:gd_spike} follows from Theorem \ref{thm:informal}(a,c).

\paragraph{Proof ideas.} Analyses in the linearly defined model
$\vg_i=\vSigma^{1/2}\vz_i$ commonly stem from block matrix inversion identities 
with respect to the block decompositions
\[\vSigma=\begin{pmatrix}
	\vSigma_r & \mathbf{0}\\
	\mathbf{0} & \vSigma_0
\end{pmatrix}, \qquad
\vG=\begin{pmatrix} \vG_r & \vG_0 \end{pmatrix}\]
where $\vSigma_r$ contains the spike eigenvalues of $\vSigma$, and $\vG_r$ is
independent of $\vG_0$. This independence does not hold in our setting,
and we develop a different ``master equation'' approach.

Let $\widehat\lambda^{1/2}$ be a spike singular value of $\vG$ with
corresponding unit singular vectors $(\widehat\vu,\widehat\vv)$. We consider the
linearized equation
\begin{equation}\label{eq:linearized}
0=\begin{pmatrix} -\widehat{\lambda}\vI & \vG^\top \\ \vG & -\vI \end{pmatrix}
\begin{pmatrix} \widehat\vv \\ \widehat{\lambda}^{1/2}\widehat\vu
\end{pmatrix}.\end{equation}
Writing $\vV_r \in \R^{n \times r}$ for the $r$ spike eigenvectors of $\vSigma$,
we define a generalized resolvent
\[\vcR(z,\alpha)=\begin{pmatrix} -z\vI-\alpha\vV_r\vV_r^\top & \vG^\top \\ \vG & -\vI
\end{pmatrix}^{-1},\]
add to \eqref{eq:linearized} the quantity
$-\alpha \begin{pmatrix} \vV_r \\ \mathbf{0}
\end{pmatrix} \cdot \vV_r^\top \widehat \vv$ on both sides for some large
$\alpha>0$, and rewrite this as
\begin{equation} \label{eq:eigenvector_master}
\begin{pmatrix} \widehat\vv \\ \widehat\lambda^{1/2}\widehat\vu \end{pmatrix}
=-\alpha\,\vcR(\widehat\lambda,\alpha)\begin{pmatrix}\vV_r \\ \mathbf{0} \end{pmatrix}
\cdot \vV_r^\top \widehat\vv.
\end{equation}
We will show that $\vcR(z,\alpha)$ exists and is bounded in operator norm
for any $z$ separated from the limit
bulk spectral support of $\vK$ and any large enough $\alpha>0$. Then,
multiplying \eqref{eq:eigenvector_master} by $(\vV_r^\top\;\mathbf{0})$ and
applying a block matrix inversion identity,
\begin{align*}
\vV_r^\top \widehat\vv&={-}\alpha \begin{pmatrix} \vV_r \\ \mathbf{0}
\end{pmatrix}^\top \vcR(\widehat\lambda,\alpha)\begin{pmatrix}\vV_r \\ \mathbf{0} \end{pmatrix}
\cdot \vV_r^\top \widehat\vv
=-\alpha \vV_r^\top\Big(\vG^\top
\vG-\widehat\lambda \vI-\alpha\vV_r\vV_r^\top \Big)^{-1}\vV_r \cdot \vV_r^\top \widehat\vv.
\end{align*}
As a result, spike eigenvalues $\widehat\lambda$ are roots
$z=\widehat\lambda$ of the master equation
	\[\det\left(\vI_{r}+\alpha
\vV_r^\top\Big(\vG^\top\vG-z\vI-\alpha\vV_r\vV_r^\top\Big)^{-1}\vV_r \right)=0,\]
for any fixed and large $\alpha>0$. Singular vector alignments may be
characterized likewise from \eqref{eq:eigenvector_master}.

The core of the proof is an asymptotic analysis of this master equation
via a deterministic equivalent approximation
\begin{equation}\label{eq:detequivinformal}
\vv_1^\top \vR(\vGamma)\vv_2:=
\vv_1^\top (\vG^\top \vG-\vGamma)^{-1}\vv_2 \approx {-}\vv_1^\top
(\vGamma+z\tilde m(z)\vSigma)^{-1}\vv_2
\end{equation}
for any deterministic unit vectors $\vv_1,\vv_2 \in \R^n$ and low-rank perturbations $\vGamma$
of $z\vI$, where $\tilde m(z)$ is the
Stieltjes transform of the ``companion'' limit measure $\tilde \mu$ for the
eigenvalue distribution of $\vG\vG^\top \in \R^{N \times N}$. We extend
results of \cite{chouard2022quantitative,schroder2023deterministic}
by establishing this approximation not only for $\vGamma=z\vI$ but also
perturbations thereof, and for spectral arguments $z \in \C \setminus
\supp{\mu}$ that may belong to the positive real line. The latter
extension requires showing, a priori, that all eigenvalues of
$\vK=\vG^\top \vG$ fall close to $\supp{\mu}$ in the absence of spiked structure.
We show this by adapting an argument of \cite{bai1998no} and using a fluctuation
averaging lemma described below.

Let us conclude with a brief discussion of our proof
of \eqref{eq:detequivinformal}: From manipulations of the identity
	\[\Tr \vB=
\Tr (\vG^\top \vG-\vGamma)\vR(\vGamma)\vB={-}\Tr
\vR(\vGamma)\vB\vGamma+\frac{1}{N}\sum_{i=1}^N \vg_i^\top \vR(\vGamma)\vB\vg_i\]
for appropriately chosen matrices $\vB \in \C^{n \times n}$,
the Sherman-Morrison (leave-one-out) formula for matrix
inversion applied to $\vR(\vGamma)$, and the concentration of bilinear
forms in $\vg_i$, one may show
\begin{equation}\label{eq:prefluctavg}
\vv_1^\top (\vGamma+z\tilde m(z)\vSigma)^{-1}\vv_2
\approx {-}\vv_1^\top \vR(\vGamma)\vv_2+
			 \frac{1}{1+N^{-1}\Tr \vSigma\vR(\vGamma)}
			 \cdot \frac{1}{N}\sum_{i=1}^N (1-\E_{\vg_i})T_i
\end{equation}
where $T_i=\vg_i^\top
	\vR^{(i)}(\vGamma)\vv_2 \cdot \vv_1^\top(\vGamma+z m_{\tilde
\vK}^{(i)}(\vGamma)\vSigma)^{-1} \vg_i$. Here, $\vR^{(i)}(\vGamma)$ and
$m_{\tilde \vK}^{(i)}(\vGamma)$ are generalized leave-one-out resolvents and
empirical Stieltjes transforms defined by $\{\vg_j\}_{j \neq i}$, and $\E_{\vg_i}$
is the partial expectation over only $\vg_i$. Under our assumptions for $\vg_i$,
each error term $(1-\E_{\vg_i})T_i$ has mean 0 and $O(1)$ fluctuations.
We develop a fluctuation averaging lemma using recursive applications of the
Sherman-Morrison identity to further resolve the dependence of
$\vR^{(i)}(\vGamma)$ and $m_{\tilde \vK}^{(i)}(\vGamma)$ on fixed subsets of
rows $\{\vg_j\}_{j \neq i}$, to show that the errors
$(1-\E_{\vg_i})T_i$ are weakly correlated across $i \in [N]$. Hence their
average has a mean 0 and fluctuates on the asymptotically negligible
scale of $O(N^{-1/2})$, and applying this to \eqref{eq:prefluctavg} shows
\eqref{eq:detequivinformal}.

\bigskip

\subsection*{Acknowledgements}
This research was supported in part by NSF DMS-2142476, NSF DMS-2055340 and NSF DMS-2154099.

{
\fontsize{10}{11}\selectfont  
 
\bibliography{ref}
\bibliographystyle{alpha}

}
%%%%%%%%%%%%%%%%%%%%%%%%%%%%%%%%%%%%%%%%%%%%%%%%%%%%%%%%%%%%%%%%%%%%%%%%%%%%%%%%%%%%%%%%%%%%%%%%%%%%%%%%%%%%%%%%%%%%%%%%%%%%%
%%%%%%%%%%%%%%%%%%%%%%%%%%%%%%%%%%%%%%%%%%%%%%%%%%%%%%%%%%%%%%%%%%%%%%%%%%%%%%%%%%%%%%%%%%%%%%%%%%%%%%%%%%%%%%%%%%%%%%%%%%%%%
\appendix

\newpage
\subsection*{Organization of the Appendices}
\begin{itemize}[leftmargin=*]
\item Appendix \ref{appendix:background} introduces relevant notation and background.
\item Appendix \ref{sec:spiked_sample_covariance} states our main results for the general nonlinear spiked covariance model
\[\vK=\vG^\top \vG,\]
formalizing the discussion in Section \ref{sec:proof_overview}. These results
are divided into two subsections: Appendix \ref{subsec:nonasymptotic} gives a
``no outliers'' statement for $\vK$ and a deterministic equivalent approximation
for its resolvent, under minimal asymptotic assumptions.
Appendix \ref{subsec:spikes} then states the main characterizations of spike
eigenvalues/eigenvectors in an asymptotic setting with a spiked eigenstructure.
\item Appendix \ref{appendix:resolvent} develops a general fluctuation averaging
lemma for the sample covariance model, and proves the results of
Appendix \ref{subsec:nonasymptotic}.
\item Appendix \ref{appendix:spike} proves the
results of Appendix \ref{subsec:spikes}.
\item Finally, Appendix \ref{sec:CK_spike_proof} proves the results of Section \ref{sec:spike_CK}
on propagation of spiked eigenstructure through the layers of a neural network, and
Appendix \ref{sec:trained_CK_proof} proves the results of Section
\ref{sec:trained_CK} on the eigenstructure of the CK after gradient descent
training.
\end{itemize}

\bigskip

\section{Notations and background}\label{appendix:background}

\subsection{Stochastic domination}

We use the following standard notation for stochastic domination of random
variables, see e.g.\ \cite[Definition 2.4]{erdHos2013averaging}:
For random variables $X \equiv X(u)$ and $Y \equiv Y(u) \geq 0$ depending implicitly on $N$
and a parameter $u \in U_N$, as $N \to \infty$, we write
\[X \prec Y \text{ or } X=\SD{Y} \text{ uniformly over } u \in U_N\]
if, for any fixed $\eps,D>0$ and all large $N$,
\[\sup_{u\in U_N}\P \Big[|X(u)|>N^\eps Y(u)\Big]<N^{-D}.\]
Throughout, ``for all large $N$'' means for all $N \geq N_0$ where
$N_0$ may depend on $\eps,D$, any quantities that are constant in the context
of the statement, and convergence rates of the spike eigenvalues and
empirical spectral measures in the given assumptions.

If $X=\bI\{\cE\}$ is the indicator of an event $\cE \equiv \cE_N$, then
$\bI\{\cE\} \prec 0$ means $\P[\cE]<N^{-D}$ for any fixed $D>0$ and all large
$N$. If $X$ and $Y$ are both deterministic, then $X \prec Y$ means
$|X| \leq N^\eps Y$ (deterministically) for any $\eps>0$ and all large $N$.
For an event $\cE \equiv \cE_N$, we will write
\[X=\SDcE{Y}\]
as shorthand for $X \cdot \bI\{\cE\} \prec Y$.

We will use the following basic properties often implicitly.

\begin{proposition}\label{prop:domination}
Suppose $X \prec Y$ uniformly over $u \in U_N$.
\begin{enumerate}[label=(\alph*)]
\item If $|U_N|\leq N^C$ for a constant $C>0$, then
for any fixed $\eps,D>0$ and all large $N$,
\[\P \Big[\text{there exists } u\in U_N \text{ with } |X(u)|
\geq N^\eps Y(u)\Big] \leq N^{-D}.\]
\item If $|U_N| \leq N^C$ for a constant $C>0$, then
$\sum_{u \in U_N} X(u) \prec \sum_{u \in U_N} Y(u)$.
\item If $|U_N| \leq C$ for a constant $C>0$, then
$\prod_{u \in U_N} X(u) \prec \prod_{u \in U_N} Y(u)$.
\item If $Y$ is deterministic, and $\E[X^2] \leq N^C$ and $Y \geq N^{-C}$
for a constant $C>0$, then also $\E[|X|] \prec Y$ uniformly over $u \in U_N$.
\end{enumerate}
\end{proposition}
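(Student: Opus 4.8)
The plan is to derive all four statements from the definition of $\prec$ by elementary union-bound and moment estimates, reducing (b)--(d) to (a).

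For \textbf{(a)}, I would apply the hypothesis $X \prec Y$ with the fixed error exponent $D$ replaced by the constant $D+C$: for any fixed $\eps>0$ and all large $N$, $\sup_{u \in U_N}\P[|X(u)|>N^\eps Y(u)]<N^{-(D+C)}$. Since $|U_N|\le N^C$, a union bound over $u$ immediately gives $\P[\exists u\in U_N:\ |X(u)|>N^\eps Y(u)]\le N^C\cdot N^{-(D+C)}=N^{-D}$, which is the claim (the version with ``$\ge$'' follows by the same argument applied with $\eps/2$ in place of $\eps$, so that $\{|X(u)|\ge N^\eps Y(u)\}\subseteq\{|X(u)|>N^{\eps/2}Y(u)\}$ for large $N$).

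For \textbf{(b)} and \textbf{(c)}, I would condition on the high-probability event $\cE_N$ supplied by part (a), on which $|X(u)|\le N^\eps Y(u)$ holds for every $u\in U_N$ simultaneously. On $\cE_N$ the triangle inequality gives $|\sum_{u}X(u)|\le\sum_{u}|X(u)|\le N^\eps\sum_{u}Y(u)$, and since $\eps,D$ were arbitrary this is precisely $\sum_u X(u)\prec\sum_u Y(u)$. Similarly, on $\cE_N$ (using $Y(u)\ge 0$) one gets $|\prod_u X(u)|\le\prod_u|X(u)|\le N^{\eps|U_N|}\prod_u Y(u)\le N^{\eps C}\prod_u Y(u)$; as $|U_N|\le C$ is a constant and $\eps>0$ is arbitrary, the prefactor $N^{\eps C}$ is absorbed into the $N^{\eps'}$ slack, giving $\prod_u X(u)\prec\prod_u Y(u)$.

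For \textbf{(d)}, since $\E[|X|]$ and $Y$ are deterministic it suffices to show $\E[|X(u)|]\le N^\eps Y(u)$ for all large $N$, uniformly over $u$. I would split
\[\E[|X(u)|]=\E\big[|X(u)|\,\bI\{|X(u)|\le N^{\eps/2}Y(u)\}\big]+\E\big[|X(u)|\,\bI\{|X(u)|> N^{\eps/2}Y(u)\}\big].\]
The first term is at most $N^{\eps/2}Y(u)$. For the second, Cauchy--Schwarz together with $\E[X^2]\le N^C$ gives $\E\big[|X(u)|\,\bI\{|X(u)|>N^{\eps/2}Y(u)\}\big]\le N^{C/2}\,\P[|X(u)|>N^{\eps/2}Y(u)]^{1/2}\le N^{C/2}N^{-D'/2}$ for any fixed $D'$, by the uniform version of $X\prec Y$; taking $D'=3C$ makes this at most $N^{-C}\le Y(u)$. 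Hence $\E[|X(u)|]\le(N^{\eps/2}+1)Y(u)\le N^\eps Y(u)$ for large $N$, uniformly in $u$, which is the claim. I do not anticipate a genuine obstacle: (a)--(c) are pure union bounds, and the only mild wrinkle is in (d), where one must trade the tail probability against the second-moment bound $\E[X^2]\le N^C$ via Cauchy--Schwarz and exploit $Y\ge N^{-C}$ to absorb the resulting error — which is precisely why those two hypotheses are imposed.
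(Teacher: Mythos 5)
Your proof is correct and follows essentially the same route as the paper's: a union bound with a boosted error exponent for (a)--(c), and for (d) the split at threshold $N^{\eps/2}Y$ followed by Cauchy--Schwarz, trading the tail probability against $\E[X^2]\le N^C$ and absorbing the remainder via $Y\ge N^{-C}$.
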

\begin{proof}
The first three statements follow from a union bound over $U_N$. For the last
statement, for any fixed $\eps>0$, observe that
\[\E|X| \leq N^{\eps/2}Y+\E\Big[|X|\bI\{|X|>N^{\eps/2}Y\}\Big]
\leq N^{\eps/2}Y+\E[X^2]^{1/2}\P[|X|>N^{\eps/2}Y]^{1/2}.\]
Applying $\E[X^2] \leq N^C$, $Y \geq N^{-C}$, and
$\P[|X|>N^{\eps/2}Y]<N^{-D}$ for sufficiently large $D>0$ shows that the
second term is less than $N^{\eps/2}Y$ for all large $N$, hence $\E|X|<N^\eps
Y$.
\end{proof}

\subsection{Deformed Marcenko-Pastur law}

For a probability measure $\nu$ supported on $[0,\infty)$ and an aspect ratio
parameter $\gamma>0$, consider the deformed Marcenko–Pastur measure
\[\mu=\rho^{\MP}_{\gamma}\boxtimes \nu\]
and its ``companion'' probability measure
\[\tilde \mu=\gamma\mu+(1-\gamma)\delta_0.\]
Here, $\mu$ and $\tilde\mu$ represent the limit eigenvalue distributions of
$\vG^\top \vG \in \R^{n \times n}$ and
$\vG\vG^\top \in \R^{N \times N}$ respectively, when
$\vG=\frac{1}{\sqrt{N}}[\vg_1,\ldots,\vg_N] \in \R^{N
\times n}$ has i.i.d.\ rows with mean 0 and covariance $\vSigma$, and
$n,N \to \infty$ with $n/N \to \gamma$ and 
$\frac{1}{n}\sum_{i=1}^n \delta_{\lambda_i(\vSigma)} \to \nu$ weakly.

These measures $\mu,\tilde\mu$ may be defined by their Stieltjes transform
\begin{equation}\label{eq:stieltjes}
m(z)=\int \frac{1}{x-z}d\mu(x), \qquad
\tilde m(z)=\int \frac{1}{x-z}d\tilde \mu(x)
\end{equation}
where $\tilde m(z)=\gamma m(z)+(1-\gamma)(-1/z)$. 
By the results of \citep{marchenko1967,silverstein1995empirical},
for any $z \in \C^+$, $m(z)$ and $\tilde m(z)$ are the unique roots in
$\{m \in \C:\gamma m+(1-\gamma)(-1/z) \in\C^+\}$ and $\C^+$, respectively,
to the \textit{Marcenko-Pastur equations}
\begin{equation}\label{eq:MPeq}
m(z)=\int \frac{1}{\lambda(1-\gamma-\gamma z m(z))-z}\,d\nu(\lambda), \quad
z=-\frac{1}{\tilde m(z)}+\gamma \int \frac{\lambda}{1+\lambda
\tilde m(z)}d\nu(\lambda).
\end{equation}
We define $m(z),\tilde m(z)$ via (\ref{eq:stieltjes}) also on the full domains
$\C \setminus \supp{\mu}$ and $\C \setminus \supp{\tilde \mu}$ respectively,
where the support sets $\supp{\mu}$ and $\supp{\tilde \mu}$ may differ only at
the single point $\{0\}$.

In the setting $\vSigma=\vI$ (and $\nu=\delta_1$), the law $\mu=\rho_\gamma^\MP$
is the standard Marcenko-Pastur law, with explicit density function with respect
to Lebesgue measure
\[d\rho_\gamma^\MP(\lambda)=\frac{1}{2\pi}\frac{\sqrt{(\lambda_{+}-\lambda)(\lambda-\lambda_{-})}}{\gamma\lambda}\cdot\bI_{\lambda
\in [\lambda_{-},\lambda_+]}\,d\lambda, \qquad \lambda_{\pm}:=(1\pm\sqrt{\gamma})^2\]
for $\gamma \leq 1$, and an additional point mass $(1-1/\gamma)$ at 0
when $\gamma>1$.

In general, $\mu$ and $\tilde \mu$ do not have analytically
explicit densities. However, $\supp{\tilde \mu}$ is explicitly
characterized in \cite{silverstein1995analysis}, and we review this
characterization here: Define
\begin{equation}\label{eq:zdomain}
\cT=\{0\} \cup \{-1/\lambda:\lambda \in \supp{\nu}\}.
\end{equation}
For $\tilde m \in \C \setminus \cT$, define
\begin{equation}\label{eq:inv_z}
z(\tilde m)={-}\frac{1}{\tilde m}+\gamma \int\frac{\lambda}{1+\lambda \tilde m}d\nu(\lambda).
\end{equation}	
In light of the second equation of (\ref{eq:MPeq}), this may be understood as
a formal inverse of $\tilde m(z)$.
From~\cite[Theorems 4.1 and 4.2]{silverstein1995analysis},
we have the following properties.

\begin{proposition}\label{prop:inv_z}
$\tilde m(\cdot)$ defines a bijection from $\{z \in \R \setminus \supp{\tilde
\mu}\}$ to $\{\tilde m \in \R \setminus \cT:z'(\tilde m)>0\}$, whose inverse
function is $z(\cdot)$. In particular, $x \in \R$ does not belong to
$\supp{\tilde \mu}$ if and only if there exists
$\tilde m \in \R \setminus \cT$
such that $z'(\tilde m)>0$ and $z(\tilde m)=x$.
\end{proposition}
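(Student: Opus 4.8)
The plan is to follow the classical argument of Silverstein--Choi, taking for granted three soft inputs: (i) $\tilde m(z)=\int(x-z)^{-1}\,d\tilde\mu(x)$ is analytic on $\C\setminus\supp{\tilde\mu}$ with $\Im\tilde m(z)$ sharing the sign of $\Im z$, so $\tilde m$ is real on each component of $\R\setminus\supp{\tilde\mu}$ with $\tilde m'(z)=\int(x-z)^{-2}\,d\tilde\mu(x)>0$ there; (ii) for $z\in\C^+$, $\tilde m(z)$ is the unique root in $\C^+$ of the Marcenko--Pastur equation $z=z(\tilde m)$ with $z(\cdot)$ as in \eqref{eq:inv_z} (the second relation of \eqref{eq:MPeq}); and (iii) the Stieltjes inversion formula. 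By (i), $\tilde m$ is already a strictly increasing diffeomorphism of each component of $\R\setminus\supp{\tilde\mu}$ onto its image.

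\textbf{Forward inclusion.} Fix $z_0\in\R\setminus\supp{\tilde\mu}$ and put $\tilde m_0=\tilde m(z_0)$. The key point is that $\tilde m_0\in\R\setminus\cT$, i.e.\ $\tilde m_0\neq 0$ and $\tilde m_0\neq -1/\lambda$ for every $\lambda\in\supp{\nu}$; equivalently, the integral in \eqref{eq:inv_z} is finite at $\tilde m=\tilde m_0$. Granting this, both $z\mapsto z$ and $z\mapsto z(\tilde m(z))$ are analytic on $\C\setminus\supp{\tilde\mu}$ (for the latter, $\tilde m(z)\in\cT$ is impossible: for $z\notin\R$ because $\tilde m(z)\notin\R$, and for $z\in\R\setminus\supp{\tilde\mu}$ by the claim), they agree on $\C^+$ by (ii), and $\C\setminus\supp{\tilde\mu}$ is connected, so they agree there; hence $z(\tilde m_0)=z_0$. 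Differentiating $z(\tilde m(z))=z$ at $z_0$ gives $z'(\tilde m_0)\,\tilde m'(z_0)=1$, and $\tilde m'(z_0)>0$ forces $z'(\tilde m_0)>0$. Thus $\tilde m$ maps $\R\setminus\supp{\tilde\mu}$ into $\{\tilde m\in\R\setminus\cT:z'(\tilde m)>0\}$ with $z(\cdot)$ as a left inverse; in particular $\tilde m$ is injective.

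\textbf{Surjectivity and two-sided inverse.} Conversely fix $\tilde m_0\in\R\setminus\cT$ with $z'(\tilde m_0)>0$ and set $x_0=z(\tilde m_0)$. Since $z(\cdot)$ is real-analytic near $\tilde m_0$ with positive derivative, the inverse function theorem gives a real-analytic local inverse $\zeta$ on a complex neighbourhood $U\ni x_0$ with $\zeta(x_0)=\tilde m_0$ and $\zeta'(x_0)=1/z'(\tilde m_0)>0$. As $\zeta'(x_0)$ is a positive real and $x_0$ is real, $\Im\zeta(x)=\Im x\cdot(\zeta'(x_0)+O(|x-x_0|))$, so after shrinking $U$ the map $\zeta$ sends $U\cap\C^+$ into $\C^+$. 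For $x\in U\cap\C^+$, $\zeta(x)\in\C^+$ solves $z(\zeta(x))=x$, so by the uniqueness in (ii), $\zeta(x)=\tilde m(x)$. Hence $\zeta$ is an analytic continuation of $\tilde m$ across $x_0$ that is real on $U\cap\R$; by Stieltjes inversion $\tilde\mu$ charges no subinterval of $U\cap\R$, so $x_0\notin\supp{\tilde\mu}$, and $\tilde m(x_0)=\zeta(x_0)=\tilde m_0$ by continuity. Therefore $\tilde m$ is onto $\{\tilde m\in\R\setminus\cT:z'(\tilde m)>0\}$ and $z(\cdot)$ is its two-sided inverse, giving the asserted bijection; the ``in particular'' claim is this bijection read through the relation $x=z(\tilde m)$.

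\textbf{Main obstacle.} The only substantive step is the forward-direction claim that $\tilde m(z)\in\R\setminus\cT$ for all $z\in\R\setminus\supp{\tilde\mu}$; everything else is soft. Establishing it requires controlling $\tilde m$ near the boundary of $\supp{\tilde\mu}$ and relating $\supp{\tilde\mu}$ to $\supp{\nu}$: one argues that as $z$ traverses a spectral gap, $\tilde m(z)$ moves monotonically between the two critical values of $z(\cdot)$ that bound the gap (using that $\tilde\mu$ has square-root edges, so $\tilde m$ stays finite at $\partial\supp{\tilde\mu}$), and that this interval of values avoids both $0$ and $\{-1/\lambda:\lambda\in\supp{\nu}\}$. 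This is exactly the characterization of $\supp{\tilde\mu}$ in terms of the critical points of $z(\cdot)$ developed in \cite[Sections 3--4]{silverstein1995analysis}.
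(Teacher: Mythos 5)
The paper does not prove this proposition --- it is stated as a direct citation of \cite[Theorems~4.1 and~4.2]{silverstein1995analysis}, and your argument is a faithful reconstruction of the Silverstein--Choi proof cited there. Your ``main obstacle'' paragraph correctly isolates the one substantive fact (that $\tilde m$ maps each spectral gap of $\supp{\tilde\mu}$ into $\R\setminus\cT$) as precisely the content of those cited theorems; the surrounding steps --- the identity $z(\tilde m(z))=z$ by analytic continuation from $\C^+$, the chain-rule conclusion $z'(\tilde m_0)\,\tilde m'(z_0)=1>0$, and the surjectivity argument via the local inverse function theorem plus uniqueness of the $\C^+$ root of the Marcenko--Pastur equation plus Stieltjes inversion --- are all sound.
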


\subsection{Additional notation}

For a probability measure $\mu$, its support is the closed set
\[\supp{\mu}=\{x \in \R:\mu(O)>0 \text{ for any open neighborhood } O \ni x\}.\]
We write $\dist{x,A}=\inf\{|x-y|:y \in A\}$ and define the $\eps$-neighborhood
\[\supp{\mu}+(-\eps,\eps)=\{x \in \R:\dist{x,\supp{\mu}}<\eps\}.\]
We write $\delta_x$ for the probability measure given by a point mass at
$x \in \R$, $a\mu_0+(1-a)\mu_1$ for the convex combination of $\mu_0,\mu_1$,
and $a \otimes \mu \oplus b$ for
the law of $a\mathsf{x}+b$ when $\mathsf{x} \sim \mu$.

For vectors, $\|\vv\| \equiv \|\vv\|_2$ is the Euclidean norm. For matrices,
$\|\vM\|$ is the
operator norm $\sup_{\vv:\|\vv\|=1} \|\vM\vv\|$, $\|\vM\|_F$ is the
Frobenius norm $(\Tr \vM^\top\vM)^{1/2}$, $\Tr$ is the (unnormalized)
matrix trace, and $\vA \odot \vB$ is the entrywise (Hadamard) product.
We write $\diag(\vv)$ for the diagonal matrix with vector $\vv$ along the
main diagonal, and $\vI_n$ for the $n \times n$ identity matrix.

%%%%%%%%%%%%%%%%%%%%%%%%%%%%%%%%%%%%%%%%%%%%%%%%%%%%%%%%%%%%%%%%%%%%%%%%%%%%
%%%%%%%%%%%%%%%%%%%%%%%%%%%%%%%%%%%%%%%%%%%%%%%%%%%%%%%%%%%%%%%%%%%%%%%%%%%%

\section{Results for the nonlinear spiked covariance model}\label{sec:spiked_sample_covariance}

\subsection{Deterministic equivalent for the resolvent}\label{subsec:nonasymptotic}

We consider the sample covariance and Gram matrix
\[\vK=\vG^\top\vG \in \R^{n \times n}, \quad
\widetilde \vK=\vG\vG^\top \in \R^{N \times N}, \quad \text{ where } \quad
\vG=\frac{1}{\sqrt{N}}[\vg_1,\ldots,\vg_N] \in \R^{N \times n}.\]
The following are our basic assumptions, where we
recall that $\bI\{\cE\} \prec 0$ means $\P[\cE] \leq N^{-D}$ for any fixed $D>0$
and all large $N$.
\begin{assumption}\label{assump:G}
\phantom{}
The rows of $\vG$ are independent and satisfy
$\E[\vg_i]=\mathbf{0}$ and $\E[\vg_i\vg_i^\top]=\vSigma$ for all $i\in[N]$,
such that:
\begin{enumerate}[label=(\alph*)]
\item There exist constants $C,c>0$ such that $c<n/N<C$ and
$\|\vSigma\|<C$.
\item There exists a constant $B>0$ such that
$\bI\{\|\vK\|>B\} \prec 0$.
\item Uniformly over deterministic matrices $\vA \in \C^{n \times n}$ and over
$i \neq j \in [N]$,
\[\vg_i^\top \vA \vg_i - \E[\vg_i^\top \vA \vg_i]
\prec \|\vA\|_F, \qquad \vg_i^\top \vA \vg_j \prec \|\vA\|_F.\]
\item For any integer $\alpha>0$, there exists a constant $C=C(\alpha)>0$
such that $\E[\|\vg_i\|^\alpha] \leq N^C$.
\end{enumerate}
\end{assumption}

Denote the finite-$N$
dimension ratio and empirical eigenvalue distribution of $\vSigma$ by
\begin{equation}\label{eq:gammanuN}
\gamma_N=\frac{n}{N}, \qquad \nu_N=\frac{1}{n}\sum_{i=1}^n
\delta_{\lambda_i(\vSigma)}.
\end{equation}
Let
\[\mu_N=\rho_{\gamma_N}^\MP \boxtimes \nu_N, 
\qquad \tilde \mu_N=\gamma_N\mu_N+(1-\gamma_N)\delta_0.\]
Denote the Stieltjes transforms of $\mu_N,\tilde \mu_N$ by
$m_N(z),\tilde m_N(z)$. These are characterized exactly as in \eqref{eq:MPeq}
with $(\gamma_N,\nu_N)$ in place of $(\gamma,\nu)$.

We first establish that with high probability,
$\vK$ and $\widetilde \vK$ have no outlier eigenvalues far from the support set
\begin{equation}\label{eq:support}
\cS_N=\supp{\mu_N} \cup \{0\}=\supp{\tilde \mu_N} \cup \{0\}.
\end{equation}
\begin{theorem}\label{thm:no_outlier}
Suppose Assumption \ref{assump:G} holds. Then for any fixed $\eps>0$,
\begin{align*}
\bI\Big\{\vK \text{ has an eigenvalue outside }
\cS_N+(-\eps,\eps)\Big\} \prec 0.
\end{align*}
\end{theorem}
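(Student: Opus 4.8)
The plan is to reduce the statement to a union bound over a polynomially fine net of candidate spectral locations and, at each location, to play the fluctuation-averaged local law for the Stieltjes transform of $\vK$ against the crude lower bound on $\Im\tfrac1n\Tr(\vK-z)^{-1}$ contributed by any nearby eigenvalue; this is a streamlining of the strategy of \cite{bai1998no}. Concretely: any zero eigenvalue of $\vK=\vG^\top\vG$ already lies in $\cS_N=\supp{\mu_N}\cup\{0\}$, and by Assumption~\ref{assump:G}(b) we have $\bI\{\|\vK\|>B\}\prec 0$, so with overwhelming probability every nonzero eigenvalue of $\vK$ lies in the compact set $[0,B]$. Fixing $\eps>0$, I would cover $[0,B]\setminus(\cS_N+(-\eps,\eps))$ by $M=O(N)$ intervals of half-length $\delta_N:=N^{-1+c_0}$, for a small constant $c_0\in(0,1/3)$, centered at points $E_1,\dots,E_M$ lying in $[0,B]\setminus(\cS_N+(-\eps,\eps))$; by Proposition~\ref{prop:domination}(a) it then suffices to show, for each fixed $E$ with $\dist{E,\cS_N}\ge\eps$, that $\bI\big\{\vK\text{ has an eigenvalue in }(E-\delta_N,E+\delta_N)\big\}\prec 0$.

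Set $z=E+\mathrm{i}\delta_N$. The analytic input I would use is the averaged deterministic equivalent $\tfrac1n\Tr(\vK-z)^{-1}\approx m_N(z)$ — the trace-normalized case of \eqref{eq:detequivinformal} with $\vGamma=z\vI$ — obtained from the fluctuation averaging lemma of Appendix~\ref{appendix:resolvent} and pushed to small spectral scales by a standard continuity bootstrap in $\Im z$ that descends from $\Im z$ of order one down to $\Im z\asymp\delta_N$, in the sharpened form
\[\Big|\tfrac1n\Tr(\vK-z)^{-1}-m_N(z)\Big|\prec\frac{\Im m_N(z)}{N\,\Im z}+\frac{1}{(N\,\Im z)^2},\]
uniformly over $z$ with $\Re z$ as above and $\Im z\in[\delta_N,1]$. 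Because $\dist{\Re z,\supp{\mu_N}}\ge\eps$ and $\supp{\mu_N}$ is uniformly bounded (Assumption~\ref{assump:G}(a)), $m_N$ extends analytically to a fixed neighborhood of $\Re z$ with $|m_N(z)|\le C_\eps$ and $\Im m_N(z)=\int\frac{\Im z}{|\lambda-z|^2}\,d\mu_N(\lambda)\le 4\,\Im z/\eps^2$; at $\Im z=\delta_N=N^{-1+c_0}$ this makes the right-hand side of the local law $\prec N^{-1}+N^{-2c_0}\prec N^{-2c_0}$ and $\Im m_N(z)\le 4N^{-1+c_0}/\eps^2$.

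On the event that $\vK$ has an eigenvalue $\lambda_{i_0}\in(E-\delta_N,E+\delta_N)$, keeping only the $i_0$ term of the sum,
\[\Im\tfrac1n\Tr(\vK-z)^{-1}=\frac1n\sum_{i=1}^n\frac{\delta_N}{(\lambda_i(\vK)-E)^2+\delta_N^2}\ \ge\ \frac1n\cdot\frac{\delta_N}{(2\delta_N)^2}=\frac{1}{4n\delta_N}\ \asymp\ N^{-c_0}.\]
On the other hand, the local law together with the bound on $\Im m_N$ gives $\Im\tfrac1n\Tr(\vK-z)^{-1}\le\Im m_N(z)+|\tfrac1n\Tr(\vK-z)^{-1}-m_N(z)|\prec N^{-1+c_0}+N^{-2c_0}\prec N^{-2c_0}$. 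Since $N^{-c_0}$ exceeds $N^{\eps'}N^{-2c_0}$ for every $\eps'<c_0$, these two bounds are incompatible, so the event has probability $\prec 0$. A union bound over $E_1,\dots,E_M$ via Proposition~\ref{prop:domination}(a) gives the claim for $\vK$; the analogue for $\widetilde\vK=\vG\vG^\top$ follows since $\vK$ and $\widetilde\vK$ share nonzero eigenvalues and $0\in\cS_N$.

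The reduction and the final contradiction are routine; the work is in the local law displayed above — the averaged law for $\vK=\vG^\top\vG$ with the sharpened fluctuation scale in the spectral gap, valid down to polynomially small $\Im z$, under only the concentration-of-quadratic-forms hypothesis (Assumption~\ref{assump:G}(c)) with arbitrary dependence among the coordinates of each row $\vg_i$. This is precisely what the bespoke fluctuation averaging lemma is built for: starting from the leave-one-out (Sherman--Morrison) expansion of the self-consistent error as in \eqref{eq:prefluctavg}, one must show that the mean-zero terms $(1-\E_{\vg_i})T_i$ are only weakly correlated across $i\in[N]$, so that their average concentrates on scale $N^{-1/2}$, and then feed this into a stability analysis of the Marcenko--Pastur self-consistent equation \eqref{eq:MPeq} that stays uniform up to the edge of $\supp{\mu_N}$. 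Carrying out the continuity bootstrap in $\Im z$ under these weak hypotheses — in particular controlling $\|(\vK-z)^{-1}\|$ and the leave-one-out resolvents at each step of the descent — is the main obstacle; everything else is bookkeeping with stochastic domination.
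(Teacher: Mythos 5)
Your reduction to a union bound over a polynomially fine net and the final imaginary-part contradiction are fine, but the central object you lean on --- a local law of the form $\lvert m_{\vK}(z)-m_N(z)\rvert \prec \frac{\Im m_N(z)}{N\Im z}+\frac{1}{(N\Im z)^2}$ valid down to $\Im z \asymp N^{-1+c_0}$ --- is not something the paper proves, and establishing it is the lion's share of the work you are attributing to ``a standard continuity bootstrap.'' Everything the paper actually establishes is at the fixed intermediate scale $\Im z \ge N^{-1/11}$: Lemma~\ref{lemma:prelimestimate} gives the crude $\prec (\sqrt{N}\eta^4)^{-1}$, Corollary~\ref{coro:Fnormbound} converts this to $\|\vR\|_F \prec \sqrt{N}$ (this step already forces $\eta \gtrsim N^{-1/10}$, since it needs $(\sqrt N\eta^4)^{-1} \ll \eta$), and Lemma~\ref{lemma:improvedestimate} then applies the fluctuation averaging lemma with $\Phi_N=\Psi_N=C\sqrt N$ to upgrade the error to $\prec 1/N$. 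To run your argument you would have to re-prove all of the verification inputs (the domination conditions in $\cE(S)$ of \eqref{eq:ESdef}, the Frobenius-norm control of $\vR^{(S)}$, and the stability of \eqref{eq:MPeq}) uniformly down to $\eta\asymp N^{-1+c_0}$, with $\Psi_N$ now $\eta$-dependent via a Ward identity, iterating a multi-scale descent. That is a genuinely different and considerably heavier piece of machinery than what is in Appendix~\ref{appendix:resolvent}, and you flag it yourself as ``the main obstacle'' without carrying it out, so the proposal as written has a real gap.

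The paper's proof is designed precisely to avoid pushing the local law to small scales. It stays at $\eta = N^{-1/11}$, where $\prec 1/N$ is available, and then applies the finite-difference device of \cite[Section~6]{bai1998no}: writing the imaginary-part bound at $z=x+i\sqrt p\,\eta$ for $p=1,\dots,P$ and taking successive finite differences yields the kernel $\eta^{2P}\big/\prod_{p=1}^P[(\lambda-x)^2+p\eta^2]$, which has mass $\lesssim\eta^{2P}$ under $\tilde\mu_N$ when $\dist{x,\cS_N}\ge\eps$, but contributes $\Theta(1/N)$ from any eigenvalue within $\eta$ of $x$. Choosing $P\ge 6$ makes $\eta/N+\eta^{2P}\ll 1/N$, so the count of eigenvalues in $(x-\eta,x+\eta)$ is forced to zero with overwhelming probability; a union bound over an $\eta$-net finishes. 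In effect, the finite-difference trick trades the need for a small-$\eta$ local law for a higher-degree rational kernel, so the mid-scale estimate suffices. Your approach with $P=1$ at $\eta=N^{-1+c_0}$ would indeed work if the optimal-scale local law were in hand, and it is the more familiar route in the modern local-law literature --- but under the paper's weak quadratic-form hypotheses on each row $\vg_i$, that law would have to be built essentially from scratch.
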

In asymptotic settings where $\nu_N \to \nu$ and $\mu_N \to \mu$ weakly and
$\vSigma$ has no spike eigenvalues, this set $\cS_N$ will converge to
$\cS:=\supp{\mu} \cup \{0\}$. In general, $\cS_N$ may contain intervals around
spike eigenvalues of $\vK$ that are separated from $\supp{\mu} \cup \{0\}$
if $\vSigma$ has a spiked structure, and this will be clarified in the
subsequent section.

Next, we establish a deterministic equivalent approximation for the resolvent of
$\vK$, for spectral arguments separated from this support set $\cS_N$.
Let us denote by
\[\vR(z)=(\vK-z\vI)^{-1}, \qquad
m_{\vK}(z)=\frac{1}{n} \Tr \vR(z)\]
the resolvent and Stieltjes transform of $\vK$ for $z\not\in\supp{\mu_N}$.
For any $\eps>0$, define the domain
\begin{equation}\label{eq:U_e}
U_N(\eps)=\Big\{z \in \C:\,|z| \leq \eps^{-1},\,
\dist{z,\cS_N} \geq \eps\Big\}.
\end{equation}

\begin{theorem}\label{thm:deterministic_equ}
Suppose Assumption \ref{assump:G} holds. Then for any fixed $\eps>0$,
uniformly over $z \in U_N(\eps)$ and over deterministic matrices
$\vA \in \C^{n \times n}$, we have
\[m_{\vK}(z)-m_N(z) \prec \frac{1}{N},
\qquad \Tr \Big[\vR(z)\vA-(-z\tilde m_N(z)\vSigma-z\vI)^{-1}\vA\Big]
\prec \frac{1}{\sqrt{N}}\,\|\vA\|_F.\]
\end{theorem}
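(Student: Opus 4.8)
The plan is to run the standard leave-one-out (Sherman--Morrison) derivation of the self-consistent equation for the resolvent $\vR(z)=(\vK-z\vI)^{-1}$, but carried out directly on the real-line domain $U_N(\eps)$ rather than only in the upper half-plane, with the error rates sharpened by a fluctuation-averaging argument. Write $\vK=\frac1N\sum_{i=1}^N\vg_i\vg_i^\top$ and let $\vR^{(i)}(z)=\big(\vK-\tfrac1N\vg_i\vg_i^\top-z\vI\big)^{-1}$ be the resolvent with the $i$-th rank-one term deleted; this is independent of $\vg_i$, and the matrix obtained by deleting row $i$ again satisfies Assumption~\ref{assump:G}. The two analytic inputs are: (i) the a priori bounds $\|\vR(z)\|=\SDcE{1}$ and $\|\vR^{(i)}(z)\|=\SDcE{1}$ uniformly on $U_N(\eps)$, where $\cE$ is the overwhelmingly likely event from Theorem~\ref{thm:no_outlier} that neither $\vK$ nor any $\vK^{(i)}$ has an eigenvalue more than $\eps/2$ from its support set (so that, on $\cE$, $z\in U_N(\eps)$ is at distance $\ge\eps/2$ from the spectrum); and (ii) the quadratic-form concentration of Assumption~\ref{assump:G}(c), giving $\vg_i^\top\vR^{(i)}\vB\vg_i-\Tr(\vSigma\vR^{(i)}\vB)=\SDcE{\|\vR^{(i)}\vB\|_F}$ and $\tfrac1N\vg_i^\top\vR^{(i)}\vg_i-\tfrac1N\Tr(\vSigma\vR^{(i)})=\SDcE{N^{-1/2}}$ for any deterministic or $\vg_i$-independent $\vB$. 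Uniformity over the continuum $U_N(\eps)$ is obtained at the end from a polynomial-size net in $z$ together with the Lipschitz bound on $\vR$, via Proposition~\ref{prop:domination}(a).

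\textbf{Key steps.} Starting from $\vI+z\vR=\vK\vR=\frac1N\sum_i\vg_i\vg_i^\top\vR$ and the Sherman--Morrison identity $\vg_i^\top\vR=\vg_i^\top\vR^{(i)}\big/\big(1+\tfrac1N\vg_i^\top\vR^{(i)}\vg_i\big)$, I would test against an arbitrary deterministic matrix $\vB$, replace the quadratic forms by their deterministic counterparts using (ii), and control the leave-one-out discrepancies $\vR-\vR^{(i)}$ using (i) and a rank-one bound. This yields the approximate self-consistent equation
\[
\vR(z)\approx -\Big(z\vI-\frac{1}{1+\omega_N(z)}\vSigma\Big)^{-1},\qquad \omega_N(z):=\frac1N\Tr\big(\vSigma\vR(z)\big),
\]
and specializing $\vB=\vSigma$ closes it into a scalar equation for $\omega_N$ that is a small perturbation of the Marcenko--Pastur equation~\eqref{eq:MPeq}. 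On $U_N(\eps)$ the point $z$ is separated from $\supp{\tilde\mu_N}$, so by Proposition~\ref{prop:inv_z} one has $z'(\tilde m_N(z))>0$; this quantitative stability of the inverse map lets me conclude that $\omega_N(z)$ is pinned to the deterministic value $-1-1/(z\tilde m_N(z))$ with an error of the same order as the perturbation, whence $-\big(z\vI-\tfrac{1}{1+\omega_N}\vSigma\big)^{-1}=(-z\tilde m_N\vSigma-z\vI)^{-1}$. Tracking the errors through this argument with the fluctuation-averaging lemma gives $m_{\vK}(z)-m_N(z)\prec1/N$ (the $\vB=\vI$ case, using the identity $m_N(z)=\tfrac1n\Tr[(-z\tilde m_N\vSigma-z\vI)^{-1}]$), and expanding $\Tr[(\vR-\vM)\vA]$ with $\vM=(-z\tilde m_N\vSigma-z\vI)^{-1}$ along the same resolvent identity and Sherman--Morrison expansion gives the matrix bound $\Tr[\vR(z)\vA-\vM\vA]\prec\tfrac{1}{\sqrt N}\|\vA\|_F$.

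\textbf{Main obstacle.} The naive use of (ii) on each of the $N$ rank-one contributions leaves an average $\frac1N\sum_i(1-\E_{\vg_i})T_i$ of centered terms each of size $\SDcE{1}$ (respectively $\SDcE{\|\vA\|_F}$ in the matrix case), which by itself is only $\SDcE{1}$ — a full power of $N$ short of the claimed rates. The heart of the proof is therefore the fluctuation-averaging lemma of Appendix~\ref{appendix:resolvent}: by recursively applying Sherman--Morrison to strip the dependence of $\vR^{(i)}$ and $m_{\widetilde\vK}^{(i)}$ on any fixed finite collection of the remaining rows, one shows the errors $(1-\E_{\vg_i})T_i$ are only weakly correlated across $i\in[N]$, so their average concentrates on scale $N^{-1/2}$ (and the normalized trace on scale $N^{-1}$). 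A second, more structural difficulty — and the reason the classical $\C^+$ analysis does not suffice — is that $z\in U_N(\eps)$ may lie on the positive real axis, where the trivial bound $\|\vR(z)\|\le1/|\mathrm{Im}\,z|$ is unavailable; this is exactly why Theorem~\ref{thm:no_outlier} is needed as an a priori input, and since its own proof (adapting \cite{bai1998no}) relies on the same fluctuation-averaging estimate, the two results must be established in tandem.
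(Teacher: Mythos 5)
Your proposal is correct and follows essentially the same route as the paper: a Sherman--Morrison leave-one-out expansion of the resolvent identity $\Tr\vB=\Tr(\vK-z\vI)\vR\vB$, a decomposition of the per-row error into a centered quadratic-form piece and smaller leave-one-out discrepancies, the fluctuation-averaging lemma to gain the extra $N^{-1/2}$ on the average of the centered pieces, the a priori ``no outliers'' bound from Theorem~\ref{thm:no_outlier} to replace $1/|\Im z|$ bounds on the real line, and a stability argument via $z_N$ and Proposition~\ref{prop:inv_z} to pin $m_{\tilde\vK}$ (equivalently your $\omega_N$, since $-1/(1+\omega_N)\approx z\tilde m_N$) to its deterministic value. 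The only cosmetic difference is parameterizing the self-consistent equation through $\omega_N=\tfrac1N\Tr(\vSigma\vR)$ rather than $m_{\tilde\vK}=\tfrac1N\Tr\vR+(1-\gamma_N)(-1/z)$; the paper works with the latter but also introduces $1+N^{-1}\Tr\vSigma\vR$ as the denominator, so the two bookkeeping choices are equivalent.
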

For spectral arguments $z \in \C \setminus \R_+$ separated from the positive
real line, such a result has been shown recently in
\cite{chouard2022quantitative,schroder2023deterministic} (using different proof
techniques). We use Theorem \ref{thm:no_outlier} as an input to establish this
approximation also for spectral arguments in $\R_+ \setminus \cS_N$, as such a
result (and its extension to a generalized resolvent) is needed
for our analysis of spiked eigenstructure to follow.

\subsection{Spike eigenvalues and eigenvectors}\label{subsec:spikes}

Now we consider an asymptotic setting with a specific spiked structure for the
population covariance matrix $\vSigma$, having a fixed number of spikes outside the
support of the weak limit of its spectral law.
This assumption is summarized as follows.

\begin{assumption}\label{assum:spike}
$\vSigma$ has eigenvalues $\lambda_1(\vSigma),\ldots,\lambda_n(\vSigma)$ (not
necessarily ordered by magnitude) where,
for a fixed integer $r \geq 0$, as $N \to \infty$:
\begin{enumerate}[label=(\alph*)]
\item $n/N \to \gamma \in (0,\infty)$.
\item There exists a probability measure $\nu$ with compact support in
$(0,\infty)$, such that
\[\frac{1}{n-r}\sum_{i=r+1}^n \delta_{\lambda_i(\vSigma)} \to \nu \text{
weakly.}\]
Furthermore, for any fixed $\eps>0$ and all large $N$,
\[\lambda_i(\vSigma) \in \supp{\nu}+(-\eps,\eps) \text{ for all }
i \geq r+1.\]
\item There exist distinct values $\lambda_1,\ldots,\lambda_r>0$ with
$\lambda_1,\ldots,\lambda_r \not\in\supp{\nu}$ such that
\[\lambda_i(\vSigma) \to \lambda_i \text{ for all } i=1,\ldots,r.\]
\end{enumerate}
\end{assumption}

Under this assumption, we analyze the outlier singular values of $\vG$ and their
corresponding singular vectors. Let
\[\gamma_{N,0}=\frac{n-r}{N}, \qquad \nu_{N,0}=\frac{1}{n-r}\sum_{i=r+1}^n
\delta_{\lambda_i(\vSigma)}\]
be the finite-$N$ aspect ratio and population spectral measure corresponding to
the bulk component of $\vSigma$. Define the laws
\[\mu_{N,0}=\rho_{\gamma_{N,0}}^{\MP} \boxtimes\nu_{N,0}, \qquad
\tilde \mu_{N,0}=\gamma_{N,0}\mu_{N,0}+(1-\gamma_{N,0})\delta_0\]
and let $m_{N,0}(z),\tilde m_{N,0}(z)$ be their Stieltjes transforms.
In the setting of Assumption \ref{assum:spike}, we note that
$\mu_{N,0} \to \mu=\rho_\gamma^{\MP} \boxtimes \nu$ and
$\tilde \mu_{N,0} \to \tilde \mu=\gamma \mu+(1-\gamma)\delta_0$ weakly as
$N \to \infty$, where the Stieltjes transforms $m(z),\tilde m(z)$ of these
limits $\mu,\tilde \mu$ are characterized by \eqref{eq:MPeq}.

Denote the limit support set
\begin{equation}\label{eq:limitsupport}
\cS=\supp{\mu} \cup \{0\}=\supp{\tilde \mu} \cup \{0\}.
\end{equation}
Under Assumption \ref{assum:spike} when $r=0$, i.e.\ $\vSigma$ does
not have spike eigenvalues, the following is a
corollary of Theorem \ref{thm:no_outlier}. A similar ``no outlier'' statement
has been shown for linearly defined sample covariance models in
\citep{bai1998no}.

\begin{coro}\label{cor:no_outlier}
Suppose Assumptions \ref{assump:G} and \ref{assum:spike} hold, where $r=0$.
Then for any fixed $\eps>0$,
\begin{align*}
\bI\Big\{\vK \text{ has an eigenvalue outside }
\cS+(-\eps,\eps)\Big\} \prec 0.
\end{align*}
\end{coro}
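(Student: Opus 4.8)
\medskip
\noindent\textbf{Proof proposal.}
The plan is to deduce Corollary~\ref{cor:no_outlier} from Theorem~\ref{thm:no_outlier} together with a purely deterministic fact on the convergence of supports. Theorem~\ref{thm:no_outlier} already gives, for every fixed $\eps'>0$, that $\bI\{\vK \text{ has an eigenvalue outside } \cS_N+(-\eps',\eps')\}\prec 0$, where $\cS_N=\supp{\mu_N}\cup\{0\}$ is built from the \emph{finite-$N$} deformed Marcenko--Pastur law $\mu_N=\rho^{\MP}_{\gamma_N}\boxtimes\nu_N$. Since under Assumption~\ref{assum:spike} with $r=0$ we have $\gamma_N\to\gamma$ and $\nu_N\to\nu$ weakly, it therefore suffices to prove the deterministic inclusion $\cS_N\subseteq\cS+(-\eps/2,\eps/2)$ for all large $N$; then $\cS_N+(-\eps/2,\eps/2)\subseteq\cS+(-\eps,\eps)$, and applying Theorem~\ref{thm:no_outlier} with $\eps'=\eps/2$ together with monotonicity of the indicator completes the proof.

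To establish this inclusion I would invoke the support characterization of Proposition~\ref{prop:inv_z}, applied to both $\tilde\mu_N$ and $\tilde\mu$ (recall $\cS_N=\supp{\tilde\mu_N}\cup\{0\}$ and $\cS=\supp{\tilde\mu}\cup\{0\}$). When $r=0$, Assumption~\ref{assum:spike}(b) forces every eigenvalue of $\vSigma$ into $\supp{\nu}+(-\eps'',\eps'')$ for any fixed $\eps''>0$ and all large $N$; since $\supp{\nu}$ is a compact subset of $(0,\infty)$, this makes $\supp{\nu_N}$ uniformly bounded, uniformly bounded away from $0$, and contained in an arbitrarily small neighbourhood of $\supp{\nu}$. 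Hence the singular sets $\cT_N=\{0\}\cup\{-1/\lambda:\lambda\in\supp{\nu_N}\}$ shrink into any fixed neighbourhood of $\cT=\{0\}\cup\{-1/\lambda:\lambda\in\supp{\nu}\}$, and the functions
\[
z_N(\tilde m)=-\frac{1}{\tilde m}+\gamma_N\int\frac{\lambda}{1+\lambda\tilde m}\,d\nu_N(\lambda)
\]
converge, together with their derivatives, uniformly to $z(\cdot)$ (defined with $\gamma,\nu$) on any compact set that stays bounded away from $\cT$: on such a set the integrands $\lambda/(1+\lambda\tilde m)$ and $-\lambda^2/(1+\lambda\tilde m)^2$ are bounded and continuous on the relevant neighbourhood of $\supp{\nu}$, so the weak convergence $\nu_N\to\nu$ and $\gamma_N\to\gamma$ transfer to these integrals.

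With this uniform convergence in hand, fix $M$ larger than the uniform-in-$N$ upper bound on $\supp{\mu_N}$ (which exists since $\gamma_N$ and $\supp{\nu_N}$ are uniformly bounded), and set $K=\{x\in\R:\dist{x,\cS}\ge\eps/2,\ |x|\le M\}$, a compact set; since any point of $\cS_N\setminus(\cS+(-\eps/2,\eps/2))$ would lie in $K$, it is enough to show $x\notin\supp{\tilde\mu_N}$ for every $x\in K$ and all large $N$. For $x\in K\subseteq\R\setminus\supp{\tilde\mu}$, Proposition~\ref{prop:inv_z} furnishes $\tilde m_x:=\tilde m(x)\in\R\setminus\cT$ with $z'(\tilde m_x)>0$ and $z(\tilde m_x)=x$; moreover $x\mapsto\tilde m_x$ is continuous on $K$ (it is the Stieltjes transform of $\tilde\mu$, analytic off its support), so $\{\tilde m_x:x\in K\}$ is a compact subset of $\R\setminus\cT$ on which $z'\ge c_0>0$. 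Choosing a small closed neighbourhood of this set still bounded away from $\cT$, the uniform convergence gives, for all large $N$, $z_N'\ge c_0/2$ throughout the neighbourhood while $z_N(\tilde m_x)\to x$ for each $x\in K$; an intermediate-value argument then yields $\tilde m_{x,N}$ in the neighbourhood with $z_N(\tilde m_{x,N})=x$ and $z_N'(\tilde m_{x,N})>0$, so $x\notin\supp{\tilde\mu_N}$ by Proposition~\ref{prop:inv_z} applied to $\tilde\mu_N$. The genuine content is precisely this deterministic statement $\cS_N\to\cS$, and its only delicate aspect is keeping the relevant compact sets of $\tilde m$-values away from the moving singular sets $\cT_N$; all the probabilistic work is supplied by Theorem~\ref{thm:no_outlier}.
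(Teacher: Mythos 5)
Your proposal is correct and follows essentially the same route as the paper. The paper's own proof simply cites Lemma~\ref{lemma:supportcompareb}(b) (the deterministic support inclusion $\supp{\tilde\mu_N}\subseteq\supp{\tilde\mu}+(-\eps,\eps)$) and combines it with Theorem~\ref{thm:no_outlier}; your proposal re-derives that lemma (and its part (a), the uniform convergence of $z_N,z_N'$) in the course of the argument, using the same ingredients: weak convergence $\nu_N\to\nu$, $\gamma_N\to\gamma$, Proposition~\ref{prop:inv_z} applied to both $\tilde\mu$ and $\tilde\mu_N$, and an intermediate-value argument for $z_N$ on an interval of $\tilde m$-values bounded away from the singular sets. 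Your extra care in arranging the $\tilde m$-values over all $x\in K$ into a single compact set bounded away from $\cT$ is exactly the point that makes the "all large $N$" threshold uniform in $x$, which the paper handles a bit more tersely in the proof of Lemma~\ref{lemma:supportcompareb}(b); this is a minor refinement of the same argument, not a different one.
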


We now give a more quantitative version of Theorem \ref{thm:informal} stated
informally in Section \ref{sec:proof_overview}, which describes the
spike eigenvalues of $\vK=\vG^\top \vG$ and corresponding singular vectors of $\vG$ when
there are possibly spike eigenvalues in $\vSigma$. Define the domain
\[\cT_{N,0}=\{0\} \cup \{-1/\lambda:\lambda \in \supp{\nu_{N,0}}\}.\]
For $\tilde m \in \C \setminus \cT_{N,0}$, define the functions
\begin{equation}\label{eq:zN0}
z_{N,0}(\tilde m)=-\frac{1}{\tilde m}+\gamma_{N,0}
\int \frac{\lambda}{1+\lambda \tilde m}d\nu_{N,0}(\lambda), \qquad
\varphi_{N,0}(\tilde m)={-}\frac{\tilde m z_{N,0}'(\tilde m)}{z_{N,0}(\tilde m)}.
\end{equation}
We note that under Assumption \ref{assum:spike}, the domain $\cT_{N,0}$
converges in Hausdorff distance to $\cT$ as defined in \eqref{eq:zdomain}.
We will verify in the proof (c.f.\ Lemma \ref{lemma:supportcompareb})
that $z_{N,0}(\tilde m) \to z(\tilde m)$ 
and $z_{N,0}'(\tilde m) \to z'(\tilde m)$ for each fixed
$\tilde m \in \C \setminus \cT$,
where $z(\cdot)$ is as defined in \eqref{eq:inv_z}. Then also
$\varphi_{N,0}(\tilde m) \to \varphi(\tilde m)$ for the limiting function
\begin{equation}\label{eq:phi_limit}
\varphi(\tilde m)={-}\frac{\tilde m z'(\tilde m)}{z(\tilde m)}.
\end{equation}

\begin{theorem}\label{thm:spiked}
Suppose Assumptions \ref{assump:G} and \ref{assum:spike} hold. Let
\[\cI=\big\{i \in \{1,\ldots,r\}:z'(-1/\lambda_i)>0\big\}.\]
\begin{enumerate}[label=(\alph*)]
\item For any sufficiently small constant $\eps>0$ and all large $N$,
on an event $\cE \equiv \cE_N$ satisfying $\bI\{\cE^c\} \prec 0$,
there is a 1-to-1 correspondence between the eigenvalues of
$\vK$ outside $\cS+(-\eps,\eps)$ and $\{\lambda_i:i \in \cI\}$.
Denoting these eigenvalues of $\vK$ by
$\{\widehat \lambda_i:i \in \cI\}$, we have
\[\widehat \lambda_i-z_{N,0}(-1/\lambda_i(\vSigma))=\SDcE{\frac{1}{\sqrt{N}}}\]
for each $i \in \cI$, where
$z_{N,0}(-1/\lambda_i(\vSigma)) \to z(-1/\lambda_i)>0$ as $N \to \infty$.
\item On this event $\cE$, for each $i \in \cI$, let $\widehat \vv_i \in \R^n$
be a unit-norm eigenvector of $\vK$ (i.e.\ right singular vector of $\vG$)
corresponding to its eigenvalue $\widehat \lambda_i$, and let $\vv_i$ be a unit-norm
eigenvector of $\vSigma$ corresponding to $\lambda_i(\vSigma)$.
Then, uniformly over (deterministic) unit vectors $\vv\in\R^n$,
\begin{align}
|\vv^\top \widehat\vv_i|-\sqrt{\varphi_{N,0}
({-}1/\lambda_i(\vSigma))} \cdot |\vv^\top\vv_i|=\SDcE{\frac{1}{\sqrt{N}}}
\end{align}
where $\varphi_{N,0}({-}1/\lambda_i(\vSigma)) \to \varphi({-}1/\lambda_i)>0$
as $N \to \infty$. In particular, for each $i \in \cI$,
$|\vv_i^\top\widehat{\vv}_i|^2 \to \varphi({-}1/\lambda_i)$
and $\sup_{j \in [n]:j \neq i} |\vv_j^\top\widehat{\vv}_i|^2 \to 0$
almost surely as $N \to \infty$.
\item Let $\vu=\frac{1}{\sqrt{N}}(u_1,\ldots,u_N)^\top \in\R^N$ be a random
vector such that $[\vu,\vG] \in \R^{N \times (n+1)}$ has independent rows
also satisfying Assumption~\ref{assump:G}. Denote by $\E[u\vg] \in \R^n$
the common value of $\E[u_j\vg_j]$ for all $j \in [N]$.

On this event $\cE$, for each $i \in \cI$, let $\widehat \vu_i \in \R^N$ be a unit-norm
eigenvector of $\widetilde\vK$ (i.e.\ left singular vector of $\vG$)
corresponding to its eigenvalue $\widehat \lambda_i$,
and let $\vv_i$ be the eigenvector of $\vSigma$ as in part (b). Then
\begin{align}
|\vu^\top\widehat{\vu}_i|-
\frac{\sqrt{z_{N,0}(-1/\lambda_i(\vSigma))\varphi_{N,0}({-}1/\lambda_i(\vSigma))}}{\lambda_i(\vSigma)}\cdot\left|\E[u\vg]^\top\vv_i\right|=\SDcE{\frac{1}{\sqrt{N}}}.
\end{align}
\end{enumerate} 
\end{theorem}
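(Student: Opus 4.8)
The plan is to pin down the outlier eigenvalues of $\vK=\vG^\top\vG$ and the corresponding singular vectors of $\vG$ through the ``master equation'' attached to the generalized resolvent $\vcR(z,\alpha)$ from \eqref{eq:eigenvector_master}, written through its Schur complement $\vR_\alpha(z):=(\vK-z\vI-\alpha\vV_r\vV_r^\top)^{-1}$, where $\vV_r=[\vv_1,\dots,\vv_r]$ collects the spike eigenvectors of $\vSigma$ and $\alpha>0$ is a large fixed constant depending only on $B,\eps$. By the matrix determinant lemma, $\det(\vI_r+\alpha\vV_r^\top\vR_\alpha(z)\vV_r)=\det(\vK-z\vI)/\det(\vK-z\vI-\alpha\vV_r\vV_r^\top)$, so once $\vR_\alpha(z)$ is known to exist with $\|\vR_\alpha(z)\|=O(1)$ for every $z$ at distance $\geq\eps$ from $\cS$ (so the denominator does not vanish), the outlier eigenvalues of $\vK$ are \emph{exactly} the roots in that region of $\det(\vI_r+\alpha\vV_r^\top\vR_\alpha(z)\vV_r)=0$, while the top and bottom blocks of \eqref{eq:eigenvector_master} read off $\widehat\vv=-\alpha\,\vR_\alpha(\widehat\lambda)\vV_r\vc$ and $\widehat\lambda^{1/2}\widehat\vu=-\alpha\,\vG\vR_\alpha(\widehat\lambda)\vV_r\vc$ with $\vc=\vV_r^\top\widehat\vv$. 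The whole argument is then driven by one analytic input, obtained by running the fluctuation-averaging proof of Theorem~\ref{thm:deterministic_equ} with $\vGamma=z\vI+\alpha\vV_r\vV_r^\top$ in place of $z\vI$: uniformly over unit $\vv_1,\vv_2\in\R^n$ and over $z$ separated from $\cS$, on an event $\cE$ with $\bI\{\cE^c\}\prec0$,
\[\vv_1^\top\vR_\alpha(z)\vv_2=-\vv_1^\top\big(z\vI+\alpha\vV_r\vV_r^\top+z\tilde m_{N,0}(z)\vSigma\big)^{-1}\vv_2+\SDcE{N^{-1/2}},\]
together with the trace version $\tfrac1N\Tr\vSigma\vR_\alpha(z)=-\tfrac1N\Tr\vSigma(z\vI+\alpha\vV_r\vV_r^\top+z\tilde m_{N,0}(z)\vSigma)^{-1}+\SDcE{N^{-1/2}}$; here $\tilde m_{N,0}$ is the companion Stieltjes transform of the bulk part of $\vSigma$, and the $r$ spikes perturb every trace by only $O(r/N)$.

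The operator-norm bound on $\vR_\alpha(z)$ — which both makes the master equation meaningful and licenses the fluctuation-averaging argument at spectral arguments $z\in\R_+$ — follows from a no-outlier statement for a \emph{restricted} matrix. Let $\vV_{r,\perp}\in\R^{n\times(n-r)}$ be an orthonormal basis of $\col(\vV_r)^\perp$. Then $\vV_{r,\perp}^\top\vK\vV_{r,\perp}=(\vG\vV_{r,\perp})^\top(\vG\vV_{r,\perp})$ is a sample covariance matrix whose rows $\vV_{r,\perp}^\top\vg_i$ satisfy Assumption~\ref{assump:G} with population covariance $\vV_{r,\perp}^\top\vSigma\vV_{r,\perp}=\diag(\lambda_{r+1}(\vSigma),\dots,\lambda_n(\vSigma))$, which has no spikes; by Corollary~\ref{cor:no_outlier} its spectrum lies in $\cS+(-\eps/2,\eps/2)$ on $\cE$. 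A Schur-complement computation of $\vK-z\vI-\alpha\vV_r\vV_r^\top$ against $\R^n=\col(\vV_r)\oplus\col(\vV_r)^\perp$: the $\col(\vV_r)$-block is $\vV_r^\top\vK\vV_r-(z+\alpha)\vI_r$, invertible with $O(1/\alpha)$ inverse once $\alpha>B+\eps^{-1}$, so the Schur complement on $\col(\vV_r)^\perp$ equals $\vV_{r,\perp}^\top\vK\vV_{r,\perp}-z\vI+O(1/\alpha)$, invertible with $O(1/\eps)$ inverse once $\alpha$ is a large enough constant; hence $\|\vR_\alpha(z)\|=O(1)$ and $\|\vcR(z,\alpha)\|=O(1)$ uniformly on $\cE$. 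The same estimate forces any eigenvector $\widehat\vv$ of $\vK$ whose eigenvalue is separated from $\cS$ to have $\|\vc\|=\|\vV_r^\top\widehat\vv\|$ bounded below (otherwise $\widehat\vv$ would be an approximate eigenvector of $\vV_{r,\perp}^\top\vK\vV_{r,\perp}$, contradicting its no-outlier property), which validates the master-equation reduction.

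To solve the master equation, note that $z\vI+\alpha\vV_r\vV_r^\top+z\tilde m_{N,0}(z)\vSigma$ leaves $\col(\vV_r)$ invariant and acts on $\vv_i$ by the scalar $z+\alpha+z\tilde m_{N,0}(z)\lambda_i(\vSigma)$, so $\vV_r^\top\vR_\alpha(z)\vV_r$ is, up to $\SDcE{N^{-1/2}}$, diagonal with $i$-th entry $-(z+\alpha+z\tilde m_{N,0}(z)\lambda_i(\vSigma))^{-1}$, and
\[\det\big(\vI_r+\alpha\vV_r^\top\vR_\alpha(z)\vV_r\big)=\prod_{i=1}^r\frac{z\big(1+\tilde m_{N,0}(z)\lambda_i(\vSigma)\big)}{z+\alpha+z\tilde m_{N,0}(z)\lambda_i(\vSigma)}+\SDcE{N^{-1/2}}.\]
The denominators are bounded away from $0$ (each is $\approx\alpha$), so for $z$ separated from $\cS$ the roots are, to $\SD{N^{-1/2}}$, the solutions of $\tilde m_{N,0}(z)=-1/\lambda_i(\vSigma)$. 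Since $\tilde m_{N,0}$ is strictly increasing on each component of $\R\setminus\supp{\tilde\mu_{N,0}}$ with formal inverse $z_{N,0}(\cdot)$ from \eqref{eq:zN0}, and $z_{N,0}(\tilde m)\to z(\tilde m)$, $z_{N,0}'(\tilde m)\to z'(\tilde m)$ off $\cT$ (from $\nu_{N,0}\to\nu$ weakly with compact support), Proposition~\ref{prop:inv_z} shows such a root sits at distance $\geq\eps$ from $\cS$ iff $z_{N,0}'(-1/\lambda_i(\vSigma))>0$, equivalently — for all large $N$ and a suitably small fixed $\eps$ — $z'(-1/\lambda_i)>0$, i.e.\ $i\in\cI$; at such a root $z_{N,0}(-1/\lambda_i(\vSigma))\to z(-1/\lambda_i)>0$, the map $z\mapsto z(1+\tilde m_{N,0}(z)\lambda_i(\vSigma))$ has derivative of order $1$ (so the root is simple), and $\varphi_{N,0}(-1/\lambda_i(\vSigma))\to\varphi(-1/\lambda_i)>0$. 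A quantitative sign-change/Rouché comparison between the true master function $\det(\vK-z\vI)/\det(\vK-z\vI-\alpha\vV_r\vV_r^\top)$ (whose zeros in the region are exactly the outlier eigenvalues of $\vK$, the $\lambda_i$ being distinct) and its deterministic approximation then produces exactly $|\cI|$ roots at distance $\geq\eps$ from $\cS$, one within $\SD{N^{-1/2}}$ of each $z_{N,0}(-1/\lambda_i(\vSigma))$, $i\in\cI$; this gives the $1$-to-$1$ correspondence and $\widehat\lambda_i-z_{N,0}(-1/\lambda_i(\vSigma))=\SDcE{N^{-1/2}}$ of part~(a), while for $i\notin\cI$ the corresponding $z_{N,0}(-1/\lambda_i(\vSigma))$ falls into $\cS+(-\eps,\eps)$ by the same support characterization and contributes no outlier.

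For the singular vectors, fix $i\in\cI$ and put $\widehat\lambda=\widehat\lambda_i$, $\widehat\vv=\widehat\vv_i$, $\widehat\vu=\widehat\vu_i$; at this spike the approximate master matrix $\vI_r+\alpha\vV_r^\top\vR_\alpha(\widehat\lambda)\vV_r$ has a one-dimensional kernel along $\ve_i$, so $\vc=c_i\ve_i+\SDcE{N^{-1/2}}$. For part~(b), $\vv^\top\widehat\vv=-\alpha c_i\,\vv^\top\vR_\alpha(\widehat\lambda)\vv_i=c_i\,\vv^\top\vv_i+\SDcE{N^{-1/2}}$ uniformly over unit $\vv$, using that the deterministic equivalent gives $\vv^\top\vR_\alpha(\widehat\lambda)\vv_i\approx-\vv^\top\vv_i/\alpha$ at the spike (where $\widehat\lambda+\alpha+\widehat\lambda\tilde m_{N,0}(\widehat\lambda)\lambda_i(\vSigma)\approx\alpha$); and from $1=\|\widehat\vv\|^2=\alpha^2\vc^\top\vV_r^\top\vR_\alpha(\widehat\lambda)^2\vV_r\vc$ with $\vR_\alpha(z)^2=\partial_z\vR_\alpha(z)$, differentiating the deterministic equivalent (legitimate since it holds on a complex neighborhood of $\widehat\lambda$) gives $[\vV_r^\top\vR_\alpha(\widehat\lambda)^2\vV_r]_{ii}\approx(\alpha^2\varphi_{N,0}(-1/\lambda_i(\vSigma)))^{-1}$, hence $c_i^2=\varphi_{N,0}(-1/\lambda_i(\vSigma))+\SDcE{N^{-1/2}}$ and thus the stated $|\vv^\top\widehat\vv_i|$ asymptotics (specializing $\vv=\vv_i$ and $\vv=\vv_j$, $j\neq i$, yields the ``in particular'' claims). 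For part~(c), $\vu^\top\vG=(\vG^\top\vu)^\top$ with $\vG^\top\vu=\tfrac1N\sum_j u_j\vg_j$; Sherman--Morrison gives $\vg_j^\top\vR_\alpha(\widehat\lambda)\vv_i=\vg_j^\top\vR_\alpha^{(j)}(\widehat\lambda)\vv_i/\big(1+\tfrac1N\vg_j^\top\vR_\alpha^{(j)}(\widehat\lambda)\vg_j\big)$, and the correction factor equals $1/\big(1+\tfrac1N\Tr\vSigma\vR_\alpha(\widehat\lambda)\big)+o(1)=-\widehat\lambda\,\tilde m_{N,0}(\widehat\lambda)+o(1)$ — the identity $1+\tfrac1N\Tr\vSigma\vR_\alpha(\widehat\lambda)=-1/(\widehat\lambda\,\tilde m_{N,0}(\widehat\lambda))+o(1)$ being exactly the trace deterministic equivalent combined with $z_{N,0}(\tilde m_{N,0}(z))=z$ — after which $u_j\vg_j^\top\vR_\alpha^{(j)}(\widehat\lambda)\vv_i$ concentrates onto $\E[u\vg]^\top\vR_\alpha^{(j)}(\widehat\lambda)\vv_i\approx-\E[u\vg]^\top\vv_i/\alpha$; combining, $\widehat\lambda^{1/2}\vu^\top\widehat\vu=-c_i\,\widehat\lambda\,\tilde m_{N,0}(\widehat\lambda)\,\E[u\vg]^\top\vv_i+\SDcE{N^{-1/2}}$, and substituting $\tilde m_{N,0}(\widehat\lambda)=-1/\lambda_i(\vSigma)$, $c_i^2=\varphi_{N,0}(-1/\lambda_i(\vSigma))$ and $\widehat\lambda=z_{N,0}(-1/\lambda_i(\vSigma))$ collapses this to the stated formula. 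The main obstacle is precisely the analytic input of the first paragraph: extending the fluctuation-averaging deterministic equivalent of Theorem~\ref{thm:deterministic_equ} to the generalized resolvent $\vR_\alpha(z)$ at spectral arguments $z\in\R_+$ near the deformed spikes — a region excluded from $U_N(\eps)$ — where the naive Woodbury reduction to $(\vK-z\vI)^{-1}$ fails because that ordinary resolvent diverges at the outliers; the a priori no-outlier bound for $\vV_{r,\perp}^\top\vK\vV_{r,\perp}$ (itself resting on Corollary~\ref{cor:no_outlier}, hence on Theorem~\ref{thm:no_outlier}) is the structural fact that keeps $\vR_\alpha(z)$ bounded and so unlocks both this extension and the master-equation reduction.
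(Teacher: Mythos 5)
Your argument reproduces the paper's proof architecture almost exactly: the master equation through the generalized resolvent $\vcR(z,\alpha)$ (the paper's $\vM_{\vK}(z,\alpha)=\vI_r+\alpha\vV_r^\top\vcR(z,\alpha)_{11}\vV_r$ in Proposition~\ref{prop:master} is the same object you reach via the matrix determinant lemma), the operator-norm bound on $\vcR(z,\alpha)$ from a Schur complement against $\col(\vV_r)\oplus\col(\vV_r)^\perp$ backed by the no-outlier property of the bulk submatrix (the paper's $\vK_0=\vG_0^\top\vG_0$, which under its diagonalizing choice of basis is exactly your $\vV_{r,\perp}^\top\vK\vV_{r,\perp}$, handled by Lemma~\ref{lemm:support_G0}), the deterministic equivalent for $\vR(\vGamma)$ with $\vGamma=z\vI+\alpha\vV_r\vV_r^\top$ via fluctuation averaging (Lemmas~\ref{lemm:apply_fluctuation}, \ref{lemm:bounds_R(z)}), the Rouché/sign-change comparison to pin the root within $\SD{N^{-1/2}}$ of $z_{N,0}(-1/\lambda_i(\vSigma))$, and the normalization $\|\widehat\vv_i\|=1$ computed through $\partial_z\vR(\vGamma)$ using Cauchy-integral differentiation of the equivalent (Lemma~\ref{lemm:R(z,alph)^2}), giving $c_i^2\to\varphi_{N,0}$. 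The one place you genuinely diverge is part (c): you expand $\vu^\top\vG\vR_\alpha(\widehat\lambda)\vV_r$ directly by Sherman--Morrison and claim a fresh fluctuation-averaging bound for $\frac1N\sum_j(1-\E_{u_j,\vg_j})[u_j\vg_j^\top\vR_\alpha^{(j)}\vv_i]$, which is a \emph{linear} rather than quadratic form in the leave-one-out data and is not literally covered by Lemma~\ref{lemma:fluctuationavg}; the paper instead folds $\vu$ into the data by forming the augmented resolvent $\widetilde\vcR(z,\alpha)$ of $[\vu,\vG]$ (Lemma~\ref{lemm:apply_fluctuation2}, Proposition~\ref{prop:master}(c)), so that part (c) reduces to the exact same quadratic-form deterministic equivalent already established, with no new averaging lemma needed. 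Your route is workable — the missing averaging bound can be derived by the same recursion — but the paper's augmentation trick is the cleaner and more economical step, and you should either adopt it or explicitly state and prove the linear-form fluctuation-averaging estimate your version relies on.
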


%%%%%%%%%%%%%%%%%%%%%%%%%%%%%%%%%%%%%%%%%%%%%%%%%%%%%%%%%%%%%%%%%%%%%%%%%%%%
%%%%%%%%%%%%%%%%%%%%%%%%%%%%%%%%%%%%%%%%%%%%%%%%%%%%%%%%%%%%%%%%%%%%%%%%%%%%

\section{Analysis of the resolvent}\label{appendix:resolvent}

We prove the results of Appendix \ref{subsec:nonasymptotic}. Appendix
\ref{sec:fluctuation} first develops a fluctuation averaging lemma for
the sample covariance model. Appendix \ref{sec:nooutliers} applies this lemma
within the arguments of \cite{bai1998no}, to prove the ``no outliers'' result
of Theorem \ref{thm:no_outlier}. Appendix \ref{sec:deterministicequiv} uses
Theorem \ref{thm:no_outlier} and a second application of the fluctuation
averaging lemma to prove the deterministic equivalent
approximation of Theorem \ref{thm:deterministic_equ}.

\subsection{Fluctuation averaging lemma}\label{sec:fluctuation}

Recall the definitions
\[\vK=\vG^\top \vG, \qquad \widetilde \vK=\vG\vG^\top.\]
For $S \subset [N]$, let $\vG^{(S)} \in \R^{(N-|S|) \times n}$ be the matrix
obtained by removing the rows of $\vG$ corresponding to $i \in S$, and define
\[\vK^{(S)}={\vG^{(S)}}^\top \vG^{(S)}
=\frac{1}{N}\sum_{i \in [N] \setminus S} \vg_i\vg_i^\top \in \R^{n \times n}.\]
Then, for $\vGamma \in \C^{n \times n}$, define
\begin{equation}\label{eq:leaveoneout}
\begin{gathered}
\qquad \vR^{(S)}(\vGamma)=(\vK^{(S)}-\vGamma)^{-1},
\qquad m_{\vK}^{(S)}(\vGamma)=\frac{1}{n}\Tr \vR^{(S)}(\vGamma),\\
\tilde m_{\vK}^{(S)}(\vGamma)=\gamma_N m_{\vK}^{(S)}(\vGamma)
+(1-\gamma_N)\left({-}\frac{1}{z}\right)
=\frac{1}{N}\,\Tr\vR^{(S)}(\vGamma)+\left(1-\frac{n}{N}\right)\left({-}\frac{1}{z}\right).
\end{gathered}
\end{equation}
Importantly, these quantities are independent of $\{\vg_i:i \in S\}$.
We say that $\vR^{(S)}(\vGamma)$ exists (and hence also $m_{\vK}^{(S)},\tilde m_{\vK}^{(S)}$ exist) when $\vK^{(S)}-\vGamma$ is invertible.
For simplicity, we write $\vR=\vR^\emptyset$, $\vR^{(i)}=\vR^{(\{i\})}$,
$\vR^{(Si)}=\vR^{(S \cup \{i\})}$, and similarly for $m_{\vK}$ and $\tilde
m_{\vK}$.

\begin{lemma}\label{lemma:fluctuationavg}
Suppose Assumption \ref{assump:G} holds. Suppose also that there are constants
$C_0,c_0,\delta,\upsilon>0$, $N$-dependent
domains $U \subset \C \setminus \{0\}$ and
$\cD_\Gamma,\cD_A \subseteq \C^{n \times n}$, and $N$-dependent maps
$\Phi_N:\cD_\Gamma \times \cD_A \to (N^{-\upsilon},N^\upsilon)$ and
$\Psi_N:\cD_\Gamma \to (N^{-\upsilon},N^{1-\delta})$,
such that for any fixed $L \geq 1$, the events
\begin{align}
\cE(z,\vGamma,\vA,S)&=\Big\{\vR^{(S)}(\vGamma) \text{ exists, }
\|\vR^{(S)}(\vGamma)\vA\|_F \leq \Phi_N(\vGamma,\vA),\;
\|\vR^{(S)}(\vGamma)\|_F \leq \Psi_N(\vGamma),\nonumber\\
&\quad
\|(z^{-1}\vGamma+\tilde m_{\vK}^{(S)}(\vGamma) \vSigma)^{-1}\| \leq C_0, \text{ and }
|1+N^{-1} \vg_j^\top \vR^{(S)}(\vGamma)\vg_j| \geq c_0 \text{ for all } j \in S \Big\}\label{eq:ESdef}
\end{align}
satisfy $\bI\{\cE(z,\vGamma,\vA,S)^c\} \prec 0$
uniformly over $z \in U$, $\vGamma \in \cD_\Gamma$, $\vA \in \cD_A$,
and $S \subset [N]$ with $|S| \leq L$.

Then, denoting by $\E_{\vg_i}$ the partial expectation over only $\vg_i$
(i.e.\ conditional on $\{\vg_j\}_{j \neq i}$),
also uniformly over $z \in U$, $\vGamma \in \cD_\Gamma$, and $\vA \in \cD_A$,
\begin{equation}\label{eq:FAexpression}
\frac{1}{N}\sum_{i=1}^N (1-\E_{\vg_i})\big[\vg_i^\top \vR^{(i)}(\vGamma)\vA
(z^{-1}\vGamma+\tilde m_{\vK}^{(i)}(\vGamma)\vSigma)^{-1}\vg_i\big]
\prec \max\left(\frac{\Psi_N(\vGamma)}{N},\frac{1}{\sqrt{N}}\right)
\cdot \Phi_N(\vGamma,\vA).
\end{equation}
\end{lemma}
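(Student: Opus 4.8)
The plan is to bound the $p$-th moment of the average in \eqref{eq:FAexpression} for each fixed even integer $p$, via the standard fluctuation-averaging strategy of \cite{erdHos2013averaging}: expanding $\E\big|N^{-1}\sum_i X_i\big|^p$ with $X_i = (1-\E_{\vg_i})T_i$ and $T_i = \vg_i^\top \vR^{(i)}(\vGamma)\vA(z^{-1}\vGamma+\tilde m_{\vK}^{(i)}(\vGamma)\vSigma)^{-1}\vg_i$, then exploiting the approximate independence of the $X_i$ to show that only $O(N^{p/2})$ of the $N^p$ index tuples contribute at leading order. First I would record the single-index bound: on the event $\cE(z,\vGamma,\vA,\{i\})$, Assumption \ref{assump:G}(c) applied with $\vB = \vR^{(i)}(\vGamma)\vA(z^{-1}\vGamma+\tilde m_{\vK}^{(i)}(\vGamma)\vSigma)^{-1}$ gives $|X_i| \prec \|\vB\|_F \leq C_0 \Phi_N(\vGamma,\vA)$, using $\|(z^{-1}\vGamma+\tilde m_{\vK}^{(i)}\vSigma)^{-1}\|\leq C_0$ and submultiplicativity of $\|\cdot\|_F$ under multiplication by a bounded-operator-norm matrix; combined with the $L^\alpha$ moment bounds of Assumption \ref{assump:G}(d) and the polynomial a priori bound $\bI\{\cE^c\}\prec 0$, one upgrades this to a bound on $\E[|X_i|^p]$ of order $(\Phi_N)^p$ up to $N^\epsilon$. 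This already yields the trivial bound $N^{-1}\sum_i X_i \prec \Phi_N$; the content of the lemma is gaining the extra factor $\max(\Psi_N/N, N^{-1/2})$.

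The key mechanism is the following: when we expand $\E\prod_{k=1}^p X_{i_k}$, any index $i$ that appears exactly once among $i_1,\dots,i_p$ can be ``integrated out'' — because $\E_{\vg_i} X_i = 0$ and $X_i$ is the only factor in the product depending on $\vg_i$ — giving zero, \emph{provided} the other factors $X_{i_k}$ ($i_k \neq i$) did not depend on $\vg_i$. They do depend on $\vg_i$, through $\vR^{(i_k)}(\vGamma)$ and $\tilde m_{\vK}^{(i_k)}(\vGamma)$, but only weakly: using the Sherman–Morrison identity recursively to pass from $\vR^{(i_k)}$ to $\vR^{(i_k i)}$ (which is independent of $\vg_i$), each such replacement costs a factor controlled on the event $\cE$ by $\|\vR^{(S)}(\vGamma)\vg_i\|/\sqrt N$ and the denominator lower bound $|1+N^{-1}\vg_j^\top \vR^{(S)}\vg_j|\geq c_0$. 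Here is where $\Psi_N(\vGamma)$ enters: $\|\vR^{(S)}(\vGamma)\vg_i\| \leq \|\vR^{(S)}(\vGamma)\|_F \cdot$ (spectral factor), and more precisely the relevant quantity $N^{-1}\vg_i^\top \vR^{(S)}(\vGamma)\vg_i$ concentrates around $N^{-1}\Tr \vSigma \vR^{(S)}(\vGamma)$, of size $\Psi_N(\vGamma)/N \cdot O(1)$ in Frobenius-to-trace comparison. Each singly-occurring index thus contributes a gain of order $\max(\Psi_N/N, N^{-1/2})$ (the $N^{-1/2}$ term coming from the off-diagonal concentration $\vg_i^\top \vA\vg_j \prec \|\vA\|_F$ when two distinct singly-occurring indices are resolved against each other). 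A counting argument over partitions of $\{1,\dots,p\}$ — there are at most $N^{\lceil p/2\rceil}$ tuples with no singleton block, and each singleton block in a general tuple supplies one gain factor — then gives $\E\big|N^{-1}\sum_i X_i\big|^p \prec \big(\max(\Psi_N/N,N^{-1/2})\cdot \Phi_N\big)^p$, and Markov's inequality across all fixed $p$ yields the stochastic-domination bound \eqref{eq:FAexpression}.

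The main obstacle I anticipate is the bookkeeping in the recursive Sherman–Morrison expansion: when we strip the $\vg_i$-dependence out of \emph{all} the remaining factors simultaneously, the resolvents become $\vR^{(S)}(\vGamma)$ for growing sets $S$ (up to size $p$), and we must verify at each stage that the hypotheses of the lemma — existence of $\vR^{(S)}$, the Frobenius bounds $\Phi_N,\Psi_N$, the spectral bound $C_0$, and the denominator bound $c_0$ — continue to hold on the event $\cE(z,\vGamma,\vA,S)$ with $|S|\leq L := p$; this is exactly why the hypothesis is stated uniformly over all $S$ with $|S|\leq L$ for every fixed $L$. A secondary technical point is that $\tilde m_{\vK}^{(i_k)}(\vGamma)$ also shifts when we remove row $i$, changing the matrix $(z^{-1}\vGamma + \tilde m_{\vK}^{(S)}(\vGamma)\vSigma)^{-1}$ inside each $T_{i_k}$; one must check this shift is $O(\Psi_N/N)$ in the appropriate norm and that it does not destroy the $C_0$ operator-norm bound, which again is guaranteed on the larger event $\cE(\cdot,\cdot,\cdot,S)$. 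Apart from this, the argument is a direct adaptation of the moment-method fluctuation averaging lemmas in the Wigner/sample-covariance literature to the present generalized-resolvent, arbitrary-deterministic-$\vA$ setting.
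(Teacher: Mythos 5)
Your proposal follows essentially the same route as the paper: a high-moment/Markov argument, exploiting $\E_{\vg_i}X_i = 0$ for singly-occurring indices, resolving the residual $\vg_i$-dependence of the other factors by iterated Sherman--Morrison expansion into resolvents $\vR^{(S)}(\vGamma)$ for growing $S$, and invoking the uniform-over-$S$ hypothesis precisely to control those resolvents. The paper formalizes the bookkeeping via an explicit family of auxiliary quantities ($Y_i^{(S)}[d]$, $Z_{ijk}^{(S)}[d]$, $B_{jk}^{(S)}$, $C_{jk}^{(S)}$, $Q_j^{(S)}$) and a recursion relating them when one index is adjoined to $S$, but the monomial structure you would arrive at is the same.

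One inaccuracy in your accounting is worth flagging so it does not trip you up when filling in details. The $N^{-1/2}$ branch of $\max(\Psi_N/N, N^{-1/2})$ does \emph{not} come from off-diagonal concentration $\vg_i^\top \vA\vg_j \prec \|\vA\|_F$; that mechanism, applied with $\vA = N^{-1}\vR^{(S)}$, already gives the $\Psi_N/N$ branch (this is the bound on $B_{jk}^{(S)}$ with $j \neq k$). The $N^{-1/2}$ branch is purely combinatorial: tuples $(i_1,\dots,i_L)$ with no singleton index number at most $O(N^{L/2})$, and dividing by the normalization $N^{-L}$ yields the factor $N^{-L/2}$ regardless of $\Psi_N$. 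The paper's sharper per-tuple bound is $(\Psi_N/N)^K$ where $K$ is the number of singletons (each singleton, if not integrated to zero, must reappear as a lower index elsewhere and hence forces an extra $B$-, $C$-, or $Z$-type factor of size $N^{-1}\Psi_N$); your weaker per-tuple bound $\max(\Psi_N/N,N^{-1/2})^K$ still gives the correct final estimate after summing over $(J,K)$ with $2J + K \leq L$, so your plan goes through. The two branches coincide in the paper's downstream applications where $\Psi_N \asymp \sqrt{N}$, which is likely the source of the conflation, but they are distinct mechanisms in the general statement.
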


We remark that applying Assumption~\ref{assump:G}(c) and the conditions
of $\cE(z,\vGamma,\vA,i)$ separately to each summand of the left side of
~\eqref{eq:FAexpression} gives the naive bound
\begin{align*}
&\frac{1}{N}\sum_{i=1}^N (1-\E_{\vg_i})[\vg_i^\top \vR^{(i)}(\vGamma)\vA
(z^{-1}\vGamma+\tilde m_{\vK}^{(i)}(\vGamma)\vSigma)^{-1}\vg_i]\\
&\prec \max_{i=1}^N \|\vR^{(i)}(\vGamma)\vA\|_F
\cdot \|(z^{-1}\vGamma+\tilde m_{\vK}^{(i)}(\vGamma)\vSigma)^{-1}\|
\prec \Phi_N(\vGamma,\vA).
\end{align*}
The content of the lemma is to improve this by the additional
factor of $\max(\frac{\Psi_N(\vGamma)}{N},\frac{1}{\sqrt{N}})  \ll 1$.

In this work, we will apply Lemma \ref{lemma:fluctuationavg} only to
spectral arguments $z$ with $O(1)$-separation from $\supp{\mu_N}$
(and matrices $\vGamma=z\vI$ or a finite-rank perturbation thereof), in which
case we will take $\Psi_N(\vGamma)=C/\sqrt{N}$ for a constant $C>0$.
For full-rank matrices $\vA$ 
having bounded operator norm, we will take also
$\Phi_N(\vGamma,\vA)=C/\sqrt{N}$, whereas for finite-rank matrices $\vA$
we will take $\Phi_N(\vGamma,\vA)=C$. We state the result
here more abstractly, as it may be of independent interest to prove local
laws in this nonlinear sample covariance model for spectral arguments $z$
that approach $\supp{\mu_N}$.

In the remainder of this section, we prove Lemma \ref{lemma:fluctuationavg}.
Fix $z \in U$, $\vGamma \in \cD_\Gamma$, and $\vA \in \cD_A$, and
write as shorthand
\[\vR^{(S)}=\vR^{(S)}(\vGamma),\qquad
\tilde m^{(S)}=\tilde m_{\vK}^{(S)}(\vGamma), \qquad
\vOmega^{(S)}=(z^{-1}\vGamma+\tilde m_{\vK}^{(S)}(\vGamma)\vSigma)^{-1},\]
\[\Phi_N=\Phi_N(\vGamma,\vA), \qquad
\Psi_N=\Psi_N(\vGamma),\qquad
\cE(S)=\cE(z,\vGamma,\vA,S).\]
All subsequent instances of $\prec$ will be
implicitly uniform over $z \in U$,
$\vGamma \in \cD_\Gamma$, and $\vA \in \cD_A$.
Define the quantities, for $i \in S$, $j,k \in S \setminus \{i\}$, and
$d \geq 0$,
\begin{align*}
Y_i^{(S)}[d]&=\Tr (\vg_i\vg_i^\top-\vSigma)\vR^{(S)}\vA\vOmega^{(S)}
[\vSigma\vOmega^{(S)}]^d,\\
Z_{ijk}^{(S)}[d]&=N^{-1}\Tr (\vg_i\vg_i^\top-\vSigma)\vR^{(S)}\vg_j\vg_k^\top
\vR^{(S)}\vA\vOmega^{(S)}[\vSigma\vOmega^{(S)}]^d,\\
B_{jk}^{(S)}&=N^{-1}\vg_j^\top\vR^{(S)}\vg_k,\\
C_{jk}^{(S)}&=N^{-2}\vg_j^\top(\vR^{(S)})^2\vg_k,\\
Q_j^{(S)}&=(1+N^{-1} \vg_j^\top \vR^{(S)} \vg_j)^{-1}.
\end{align*}
For each $L \geq 1$, define also the event
\begin{equation}\label{eq:EL}
\cE_L=\bigcap_{S \subset [N]:|S| \leq L} \cE(S).
\end{equation}

\begin{lemma}\label{lemm:bounds}
For any fixed $L,D \geq 1$, uniformly over $S \subset [N]$
with $|S| \leq L$, and over $i \in S$ and
$j,k \in S \setminus \{i\}$ and $d \leq D$,
\begin{equation}\label{eq:FAbounds}
\begin{gathered}
Y_i^{(S)}[d]=\SDcES{\Phi_N},
\quad Z_{ijk}^{(S)}[d]=\SDcES{N^{-1}\Psi_N\Phi_N},\\
\quad B_{jk}^{(S)}=\SDcES{N^{-1}\Psi_N} \text{ for } j \neq k,
\quad C_{jk}^{(S)}=\SDcES{N^{-2}\Psi_N^2},
\quad Q_j^{(S)}=\SDcES{1}.
\end{gathered}
\end{equation}
Furthermore, for any $\alpha>0$, there exists a constant
$C=C(\alpha,L,D)>0$ such that
\begin{equation}\label{eq:FAEbounds}
\begin{gathered}
\E\big[|Y_i^{(S)}[d]|^\alpha\bI\{\cE(S)\}\big]<N^C, \quad
\E\big[|Z_{ijk}^{(S)}[d]|^\alpha\bI\{\cE(S)\}\big]<N^C,\\
\E\big[|B_{jk}^{(S)}|^\alpha\bI\{\cE(S)\}\big]<N^C, \quad
\E\big[|C_{jk}^{(S)}|^\alpha\bI\{\cE(S)\}\big]<N^C, \quad
\E\big[|Q_j^{(S)}|^\alpha\bI\{\cE(S)\}\big]<N^C.
\end{gathered}
\end{equation}
\end{lemma}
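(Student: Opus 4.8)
The plan is to recognize all five families $Y_i^{(S)}[d]$, $Z_{ijk}^{(S)}[d]$, $B_{jk}^{(S)}$, $C_{jk}^{(S)}$, $Q_j^{(S)}$ as bilinear or quadratic forms in the independent rows $\vg_i$ whose kernels become deterministic after conditioning on a suitable subset of the rows, and then to apply the quadratic-form concentration of Assumption~\ref{assump:G}(c). The only inputs available on $\cE(S)$ are the norm bounds $\|\vR^{(S)}\vA\|_F\leq\Phi_N$, $\|\vR^{(S)}\|_F\leq\Psi_N$, $\|\vOmega^{(S)}\|\leq C_0$, the scalar bound $|1+N^{-1}\vg_j^\top\vR^{(S)}\vg_j|\geq c_0$ for $j\in S$, and $\|\vSigma\|\leq C$ from Assumption~\ref{assump:G}(a). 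A first bookkeeping point: since the condition $|1+N^{-1}\vg_j^\top\vR^{(S)}\vg_j|\geq c_0$ involves $\vg_i$ when $j=i\in S$, the event $\cE(S)$ is not independent of $\vg_i$; I will therefore work on the larger event $\cE'(S)\supseteq\cE(S)$ keeping only the three $\{\vg_l:l\notin S\}$-measurable norm bounds on $\vR^{(S)}$, $\vR^{(S)}\vA$ and $\vOmega^{(S)}$, prove the $\prec$-estimates there, and conclude on $\cE(S)$ via $\bI\{\cE(S)\}\leq\bI\{\cE'(S)\}$. The bound $Q_j^{(S)}=\SDcES{1}$ is the one exception that genuinely uses $\cE(S)$, and there it is immediate since $|Q_j^{(S)}|\leq 1/c_0$ on $\cE(S)$.

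For the four remaining lines of \eqref{eq:FAbounds} the plan is as follows. Write $\vM=\vR^{(S)}\vA\vOmega^{(S)}[\vSigma\vOmega^{(S)}]^d$, so that on $\cE'(S)$ one has $\|\vM\|_F\leq\|\vR^{(S)}\vA\|_F\,\|\vOmega^{(S)}\|^{d+1}\|\vSigma\|^d\leq C'\Phi_N$ for a constant $C'$ depending only on $D,C_0,C$; since $i\in S$, $\vM$ is independent of $\vg_i$, and $Y_i^{(S)}[d]=\vg_i^\top\vM\vg_i-\E_{\vg_i}[\vg_i^\top\vM\vg_i]$, so Assumption~\ref{assump:G}(c) applied conditionally on $\{\vg_l:l\notin S\}$ gives $Y_i^{(S)}[d]=\SDcES{\Phi_N}$. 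For $B_{jk}^{(S)}$ and $C_{jk}^{(S)}$ with $j\neq k$, conditioning on $\{\vg_l:l\notin S\}$ freezes $\vR^{(S)}$ and the bilinear bound of Assumption~\ref{assump:G}(c) applies with kernel $\vR^{(S)}$, resp. $(\vR^{(S)})^2$, using $\|(\vR^{(S)})^2\|_F\leq\|\vR^{(S)}\|_F^2\leq\Psi_N^2$; for $C_{jj}^{(S)}$ the quadratic-form bound yields $\vg_j^\top(\vR^{(S)})^2\vg_j=\Tr\vR^{(S)}\vSigma\vR^{(S)}+\SDcES{\Psi_N^2}$ with $|\Tr\vR^{(S)}\vSigma\vR^{(S)}|\leq\|\vSigma\|\,\|\vR^{(S)}\|_F^2\leq C\Psi_N^2$. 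For $Z_{ijk}^{(S)}[d]$, conditioning on $\{\vg_l:l\neq i\}$ rewrites it as $N^{-1}\big(\vg_i^\top\vW\vg_i-\E_{\vg_i}[\vg_i^\top\vW\vg_i]\big)$ with the rank-one matrix $\vW=(\vR^{(S)}\vg_j)(\vg_k^\top\vM)$ obeying $\|\vW\|_F=\|\vR^{(S)}\vg_j\|\,\|\vM^\top\vg_k\|$; a preliminary pass of Assumption~\ref{assump:G}(c) on the quadratic forms $\|\vR^{(S)}\vg_j\|^2$ and $\|\vM^\top\vg_k\|^2$ (kernels of Frobenius norm $\leq\|\vR^{(S)}\|_F^2\leq\Psi_N^2$ and $\leq\|\vM\|_F^2\leq (C')^2\Phi_N^2$, with mean terms bounded likewise by $\|\vSigma\|$ times the same) shows $\|\vR^{(S)}\vg_j\|=\SDcES{\Psi_N}$ and $\|\vM^\top\vg_k\|=\SDcES{\Phi_N}$, hence $\|\vW\|_F=\SDcES{\Psi_N\Phi_N}$ and finally $Z_{ijk}^{(S)}[d]=\SDcES{N^{-1}\Psi_N\Phi_N}$. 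At each place where a $\prec$-bound on the (random but $\vg_i$-independent) kernel is combined with a conditional quadratic-form bound in $\vg_i$, the chaining is justified by the routine two-event split bounding $\P[|X|>N^\eps Y]$ by $\P[\text{kernel too large}]+\P[|X|>N^\eps Y,\text{ kernel small}]$.

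For the moment bounds \eqref{eq:FAEbounds} only crude deterministic estimates on $\cE(S)$ together with the polynomial moment bound $\E[\|\vg_i\|^\alpha]\leq N^{C(\alpha)}$ of Assumption~\ref{assump:G}(d) are needed: on $\cE(S)$ one has $\|\vR^{(S)}\|\leq\|\vR^{(S)}\|_F\leq\Psi_N<N^{1-\delta}$ and $\|\vM\|\leq\|\vM\|_F\leq C'\Phi_N<C'N^\upsilon$, so each of $Y_i^{(S)}[d]$, $Z_{ijk}^{(S)}[d]$, $B_{jk}^{(S)}$, $C_{jk}^{(S)}$ is bounded on $\cE(S)$ by $N^{O(1)}$ times a product of at most four of $\|\vg_i\|$, $\|\vg_j\|$, $\|\vg_k\|$ (with, for the $Y$- and $C_{jj}$-terms, an extra deterministic $\Tr\vSigma(\cdots)$ contribution bounded by $\|\vSigma\|\sqrt n$ times a Frobenius norm, hence by $N^{O(1)}$); raising to the power $\alpha$, taking expectation, and splitting the product of norms by H\"older's inequality and Assumption~\ref{assump:G}(d) gives the claimed $<N^C$, while the $Q_j^{(S)}$ moment bound is trivial. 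Finally, for fixed $L,D$ the set of tuples $(S,i,j,k,d)$ has polynomially bounded cardinality, so uniformity over it follows from the union-bound property of Proposition~\ref{prop:domination}(a), while uniformity over $(z,\vGamma,\vA)$ is inherited from the ``uniformly over deterministic matrices'' form of Assumption~\ref{assump:G}(c).

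The main obstacle is bookkeeping rather than any individual estimate: one must isolate the $\vg_i$-independent sub-event $\cE'(S)$ of $\cE(S)$ so that quadratic-form concentration can be applied legitimately (conditionally), and one must correctly chain the nested stochastic-domination bounds in the $Z$-term, whose kernel $\vW$ already contains two of the random rows $\vg_j,\vg_k$. Everything else reduces to the submultiplicativity facts $\|\vA\vB\|_F\leq\|\vA\|\,\|\vB\|_F$, $\|\vA\|\leq\|\vA\|_F$, $\|\vA^*\vA\|_F\leq\|\vA\|_F^2$, and $|\Tr\vA\vB|\leq\|\vA\|_F\|\vB\|_F$, together with repeated use of Assumption~\ref{assump:G}(c),(d).
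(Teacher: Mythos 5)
Your proof is correct and follows essentially the same route as the paper's: $Q_j^{(S)}$ is bounded deterministically on $\cE(S)$; $Y_i^{(S)}[d]$ is a centered quadratic form in $\vg_i$ with kernel of Frobenius norm $\leq C'\Phi_N$ on the event; $B_{jk}^{(S)}$, $C_{jk}^{(S)}$ use the bilinear/quadratic bound of Assumption~\ref{assump:G}(c) with kernels $\vR^{(S)}$ and $(\vR^{(S)})^2$; and $Z_{ijk}^{(S)}[d]$ is handled via the rank-one factorization and a preliminary pass of~(c) to control $\|\vR^{(S)}\vg_j\|\prec\Psi_N$ and $\|\vM^\top\vg_k\|\prec\Phi_N$; the moment bounds in~\eqref{eq:FAEbounds} follow from H\"older and Assumption~\ref{assump:G}(d). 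Your extra bookkeeping about the $\vg_i$-dependence of $\cE(S)$ (passing to the larger $\{\vg_l:l\notin S\}$-measurable event $\cE'(S)$) is a legitimate refinement that the paper leaves implicit, and the chaining you describe is the standard argument: since $\cE(S)$ is contained in the $\vg_i$-independent event $\{\|\vR^{(S)}\vA\vOmega^{(S)}[\vSigma\vOmega^{(S)}]^d\|_F\leq C'\Phi_N\}$, the conditional application of Assumption~\ref{assump:G}(c) composes directly.
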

\begin{proof}
On the event $\cE(S)$, we have by definition $Q_j^{(S)} \leq 1/c_0$, so the
two statements for $Q_j^{(S)}$ hold immediately. The remaining statements of
\eqref{eq:FAEbounds} follow easily from Holder's inequality, the moment bounds
for $\|\vg_i\|$ in Assumption \ref{assump:G}(d),
the bound $\|\vSigma\|<C$ in Assumption \ref{assump:G}(a),
and the conditions $\|\vR^{(S)}\vA\| \leq \Phi_N \leq N^\upsilon$,
$\|\vR^{(S)}\|_F \leq \Psi_N \leq N$, and $\|\vOmega^{(S)}\| \leq C_0$
defining $\cE(S)$.

For the bounds for $B_{jk}^{(S)}$ and $C_{jk}^{(S)}$ in \eqref{eq:FAbounds},
note that when $j \neq k$, Assumption \ref{assump:G}(c) implies $B_{jk}^{(S)}
\prec N^{-1}\|\vR^{(S)}\|_F$ and $C_{jk}^{(S)} \prec N^{-2}\|(\vR^{(S)})^2\|_F
\leq N^{-2}\|\vR^{(S)}\|_F^2$.
When $j=k$, Assumption \ref{assump:G}(c) implies also
\begin{align*}
C_{jj}^{(S)} &\prec N^{-2}|\Tr \vSigma(\vR^{(S)})^2|+N^{-2}\|(\vR^{(S)})^2\|_F\\
&\leq N^{-2}\|\vSigma \vR^{(S)}\|_F \|\vR^{(S)}\|_F+N^{-2}\|\vR^{(S)}\|_F^2
\leq N^{-2}(\|\vSigma\|+1)\|\vR^{(S)}\|_F^2.
\end{align*}
Then these bounds in \eqref{eq:FAbounds} follow from the condition
$\|\vR^{(S)}\|_F \leq \Psi_N$ defining $\cE(S)$.

Finally, for the bounds for $Y_i^{(S)}[d]$ and $Z_{ijk}^{(S)}[d]$ in
\eqref{eq:FAbounds}, observe that for any matrix $\vA \in \C^{n \times n}$ 
independent of $\vg_i$, we have
$\Tr(\vg_i\vg_i^\top-\vSigma)\vA \prec \|\vA\|_F$
by Assumption \ref{assump:G}(c). Then
$Y_i^{(S)}[d] \prec \|\vR^{(S)}\vA \vOmega^{(S)}[\vSigma\vOmega^{(S)}]^d\|_F
\leq \|\vR^{(S)}\vA\|_F \cdot
\|\vOmega^{(S)}\|^{d+1}\|\vSigma\|^d$, so the bound for $Y_i^{(S)}[d]$ in
\eqref{eq:FAbounds} follows
from the conditions $\|\vR^{(S)}\vA\|_F \leq \Phi_N$ and $\|\vOmega^{(S)}\| \leq
C_0$ defining $\cE(S)$.
For $Z_{ijk}^{(S)}[d]$, similarly by Assumption \ref{assump:G}(c),
\[Z_{ijk}^{(S)} \prec N^{-1}\|\vR^{(S)}\vg_j\vg_k^\top
\vR^{(S)}\vA \vOmega^{(S)}[\vSigma\vOmega^{(S)}]^d\|_F
\leq N^{-1}\|\vR^{(S)}\vg_j\| \cdot \|\vg_k^\top
\vR^{(S)}\vA\| \cdot \|\vOmega^{(S)}\|^{d+1}\|\vSigma\|^d.\]
Applying again Assumption \ref{assump:G}(c), we have
\begin{align*}
\|\vR^{(S)}\vg_j\|_2^2
&=\vg_j^\top (\vR^{(S)})^*\vR^{(S)}\vg_j
\prec |\Tr \vSigma(\vR^{(S)})^*\vR^{(S)}|
+\|(\vR^{(S)})^*\vR^{(S)}\|_F \prec \|\vR^{(S)}\|_F^2
\end{align*}
and similarly $\|\vg_k^\top \vR^{(S)}\vA\|_2^2 \prec \|\vR^{(S)}\vA\|_F^2$.
Then the bound for $Z_{ijk}^{(S)}[d]$ in \eqref{eq:FAbounds}
follows from the conditions
$\|\vR^{(S)}\vA\|_F \leq \Phi_N$, $\|\vR^{(S)}\|_F \leq \Psi_N$,
and $\|\vOmega^{(S)}\| \leq C_0$ defining $\cE(S)$.
\end{proof}

\begin{lemma}
Fix any $L,D \geq 1$. Then there exist coefficients
$\alpha(d,d',D) \in \R$ such that the following holds:
Uniformly over $S \subset [N]$
with $|S| \leq L-1$, and over $i \in S$, $j,k \in S \setminus \{i\}$,
$l \in [N] \setminus S$, and $d \leq D$,
\begin{align}
Y_i^{(S)}[d]&=\sum_{d'=d}^{d+\lceil D/2 \rceil}
\alpha(d,d',D)\Big[C_{ll}^{(Sl)}Q_l^{(Sl)}
\Big]^{d'-d}\Big(Y_i^{(Sl)}[d']-Z_{ill}^{(Sl)}[d']Q_l^{(Sl)}\Big)
+\SDcEL{N^{-D}\Psi_N^D\Phi_N}\label{eq:FAYexpand}\\
Z_{ijk}^{(S)}[d]&=\sum_{d'=d}^{d+\lceil D/2 \rceil}
\alpha(d,d',D)\Big[C_{ll}^{(Sl)}Q_l^{(Sl)}\Big]^{d'-d}
\Big(Z_{ijk}^{(Sl)}[d']-Z_{ilk}^{(Sl)}[d']B_{lj}^{(Sl)}Q_l^{(Sl)}\nonumber\\
&\quad
-Z_{ijl}^{(Sl)}[d']B_{kl}^{(Sl)}Q_l^{(Sl)}+Z_{ill}^{(Sl)}[d']
B_{lj}^{(Sl)}B_{kl}^{(Sl)}(Q_l^{(Sl)})^2\Big)
+\SDcEL{N^{-D}\Psi_N^D\Phi_N},\label{eq:FAZexpand}\\
B_{jk}^{(S)}&=B_{jk}^{(Sl)}-B_{jl}^{(Sl)}B_{lk}^{(Sl)}Q_l^{(Sl)},\label{eq:FABexpand}\\
C_{jk}^{(S)}&=C_{jk}^{(Sl)}-B_{jl}^{(Sl)}C_{lk}^{(Sl)}Q_l^{(Sl)}
-C_{jl}^{(Sl)}B_{lk}^{(Sl)}Q_l^{(Sl)}
+B_{jl}^{(Sl)}C_{ll}^{(Sl)}B_{lk}^{(Sl)}(Q_l^{(Sl)})^2,\label{eq:FACexpand}\\
Q_j^{(S)}&=\sum_{d=1}^{\lceil D/2 \rceil}
\left(Q_j^{(Sl)}\right)^d\left[(B_{jl}^{(Sl)})^2Q_l^{(Sl)}\right]^{d-1}+\SDcEL{N^{-D}\Psi_N^D}.\label{eq:FAQexpand}
\end{align}
\end{lemma}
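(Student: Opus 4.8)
The plan is to obtain all five identities from a single source: the Sherman--Morrison (rank-one minor) identity, applied on the event $\cE_L$ of \eqref{eq:EL} where every resolvent $\vR^{(S')}$ with $|S'|\le L$ exists. Writing $\vK^{(S)}=\vK^{(Sl)}+\tfrac1N\vg_l\vg_l^\top$ for $l\in[N]\setminus S$, Sherman--Morrison gives
\[
\vR^{(S)}=\vR^{(Sl)}-N^{-1}Q_l^{(Sl)}\,\vR^{(Sl)}\vg_l\vg_l^\top\vR^{(Sl)}.
\]
Taking the normalized trace and invoking the definition of $\tilde m_{\vK}^{(S)}(\vGamma)$ in \eqref{eq:leaveoneout} yields the companion scalar identity $\tilde m^{(S)}=\tilde m^{(Sl)}-C_{ll}^{(Sl)}Q_l^{(Sl)}$, since $\vg_l^\top(\vR^{(Sl)})^2\vg_l=N^2C_{ll}^{(Sl)}$. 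Consequently $(\vOmega^{(S)})^{-1}=(\vOmega^{(Sl)})^{-1}-C_{ll}^{(Sl)}Q_l^{(Sl)}\vSigma$, so
\[
\vOmega^{(S)}=\vOmega^{(Sl)}\big(\vI-C_{ll}^{(Sl)}Q_l^{(Sl)}\vSigma\vOmega^{(Sl)}\big)^{-1}.
\]
These two displays are the only structural ingredients; everything else is substitution and power counting.

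Next I would record the key simplification of the $\vOmega$-chain. Since $\vSigma\vOmega^{(Sl)}$ commutes with any rational function of itself, one gets the clean identity
\[
\vOmega^{(S)}[\vSigma\vOmega^{(S)}]^d=\vOmega^{(Sl)}[\vSigma\vOmega^{(Sl)}]^d\big(\vI-C_{ll}^{(Sl)}Q_l^{(Sl)}\vSigma\vOmega^{(Sl)}\big)^{-(d+1)},
\]
and expanding the last factor in its Neumann series produces
\[
\vOmega^{(S)}[\vSigma\vOmega^{(S)}]^d=\sum_{d'\ge d}\binom{d'}{d}\big[C_{ll}^{(Sl)}Q_l^{(Sl)}\big]^{d'-d}\vOmega^{(Sl)}[\vSigma\vOmega^{(Sl)}]^{d'}.
\]
This identifies $\alpha(d,d',D)=\binom{d'}{d}$, the same coefficients appearing in both \eqref{eq:FAYexpand} and \eqref{eq:FAZexpand}. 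On $\cE_L$ the step $C_{ll}^{(Sl)}Q_l^{(Sl)}=\SDcEL{N^{-2}\Psi_N^2}$ with $\Psi_N\le N^{1-\delta}$ guarantees geometric decay, so truncating at $d'=d+\lceil D/2\rceil$ leaves a remainder of size $\SDcEL{(N^{-2}\Psi_N^2)^{\lceil D/2\rceil+1}}$, which is $\SDcEL{N^{-D}\Psi_N^D}$ after using $\Psi_N\le N^{1-\delta}$; once multiplied by the $\|\vR^{(S)}\vA\|_F$-type factor present in $Y$ and $Z$ this is $\SDcEL{N^{-D}\Psi_N^D\Phi_N}$, matching the stated error.

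With these in hand, each of \eqref{eq:FAYexpand}--\eqref{eq:FAQexpand} follows by direct substitution. For $B_{jk}^{(S)}=N^{-1}\vg_j^\top\vR^{(S)}\vg_k$ and $C_{jk}^{(S)}=N^{-2}\vg_j^\top(\vR^{(S)})^2\vg_k$, one plugs in the Sherman--Morrison expression for $\vR^{(S)}$ (for $C$, expand $(\vR^{(S)})^2$ by multiplying out the two rank-one corrections and use $\vg_l^\top(\vR^{(Sl)})^2\vg_l=N^2C_{ll}^{(Sl)}$); every resulting scalar $N$-power-matches into one of $B_{\cdot\cdot}^{(Sl)},C_{\cdot\cdot}^{(Sl)},Q_l^{(Sl)}$, giving the stated \emph{exact} (error-free) identities \eqref{eq:FABexpand}--\eqref{eq:FACexpand}. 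For $Y_i^{(S)}[d]$ and $Z_{ijk}^{(S)}[d]$, first replace $\vOmega^{(S)}[\vSigma\vOmega^{(S)}]^d$ by its truncated Neumann expansion above, then expand each remaining factor $\vR^{(S)}$ (one for $Y$, two for $Z$) by Sherman--Morrison and match the one (resp.\ four) product terms against the definitions of $Y_i^{(Sl)}[d'],Z_{\cdot\cdot\cdot}^{(Sl)}[d'],B_{\cdot\cdot}^{(Sl)},Q_l^{(Sl)}$; this yields \eqref{eq:FAYexpand}--\eqref{eq:FAZexpand} exactly, the only error being the Neumann truncation above. Finally, feeding $B_{jj}^{(S)}=B_{jj}^{(Sl)}-(B_{jl}^{(Sl)})^2Q_l^{(Sl)}$ from \eqref{eq:FABexpand} into $Q_j^{(S)}=(1+B_{jj}^{(S)})^{-1}=Q_j^{(Sl)}\big(1-(B_{jl}^{(Sl)})^2Q_l^{(Sl)}Q_j^{(Sl)}\big)^{-1}$ and expanding the geometric series to order $\lceil D/2\rceil$ gives \eqref{eq:FAQexpand}, with remainder $Q_j^{(S)}\cdot\big[(B_{jl}^{(Sl)})^2Q_l^{(Sl)}Q_j^{(Sl)}\big]^{\lceil D/2\rceil}=\SDcEL{(N^{-2}\Psi_N^2)^{\lceil D/2\rceil}}=\SDcEL{N^{-D}\Psi_N^D}$ by Lemma~\ref{lemm:bounds} and $\Psi_N\le N^{1-\delta}$.

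The only probabilistic content is Lemma~\ref{lemm:bounds}: the bounds $B_{jl}^{(Sl)}\prec N^{-1}\Psi_N$, $C_{ll}^{(Sl)}\prec N^{-2}\Psi_N^2$, $Q_l^{(Sl)}\prec 1$ on $\cE_L$ are precisely what makes the two series converge (as $N^{-2}\Psi_N^2\le N^{-2\delta}\ll1$) and forces the truncation remainders to decay at the claimed scale; the Sherman--Morrison algebra itself is purely deterministic and valid on $\cE_L$ because all resolvents there exist. I expect the main obstacle to be the bookkeeping of the double expansion for $Z_{ijk}^{(S)}[d]$: one must simultaneously expand the two resolvent factors and the $\vOmega$-chain, track which powers of $N$ recombine to turn raw quadratic forms $N^{-1}\vg_\cdot^\top\vR^{(Sl)}\vg_\cdot$ into the normalized quantities $B_{\cdot\cdot}^{(Sl)}$, and verify that the binomial coefficients from the $\vOmega$-expansion are genuinely common to the $Y$ and $Z$ recursions. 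Once the power-counting conventions are fixed, each individual step is a routine rearrangement.
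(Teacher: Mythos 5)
Your proposal is correct and follows essentially the same route as the paper: Sherman--Morrison for $\vR^{(S)}$, the induced scalar update $\tilde m^{(S)}=\tilde m^{(Sl)}-C_{ll}^{(Sl)}Q_l^{(Sl)}$ (equivalently $1+B_{jj}^{(S)}=1+B_{jj}^{(Sl)}-(B_{jl}^{(Sl)})^2Q_l^{(Sl)}$), a truncated geometric/Neumann expansion for $\vOmega^{(S)}$ and $Q_j^{(S)}$, and Lemma~\ref{lemm:bounds} to absorb the truncation tail into the $\SDcEL{N^{-D}\Psi_N^D\Phi_N}$ remainder. The one genuine (and welcome) refinement over the paper's argument is that by writing $\vOmega^{(S)}[\vSigma\vOmega^{(S)}]^d=\vOmega^{(Sl)}\vB^d(\vI-c\vB)^{-(d+1)}$ with $\vB=\vSigma\vOmega^{(Sl)}$ and $c=C_{ll}^{(Sl)}Q_l^{(Sl)}$, you collapse the paper's product of $d+1$ separate expansions into a single binomial series and thereby identify the coefficients explicitly as $\alpha(d,d',D)=\binom{d'}{d}$, which the paper leaves unspecified.
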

\begin{proof}
By the Sherman-Morrison formula, on the event $\cE_L$ where $\vR^{(S)}$ and
$\vR^{(Sl)}$ both exist, we have
\begin{equation}\label{eq:FAShermanMorrison}
\vR^{(S)}=\vR^{(Sl)}-N^{-1} \vR^{(Sl)}\vg_l\vg_l^\top \vR^{(Sl)}
\cdot Q_l^{(Sl)}.
\end{equation}
Applying this to each copy of $\vR^{(S)}$ defining $B_{jk}^{(S)}$ and
$C_{jk}^{(S)}$ yields immediately \eqref{eq:FABexpand} and \eqref{eq:FACexpand},
as well as the identities
\begin{align*}
z^{-1}\vGamma+\tilde m^{(S)}\vSigma&=z^{-1}\vGamma+\Big(N^{-1}\Tr
\vR^{(S)}+(1-\gamma_N)(-1/z)\Big)\vSigma\\
&=(z^{-1}\vGamma+\tilde m^{(Sl)}\vSigma)-C_{ll}^{(Sl)}Q_l^{(Sl)}\vSigma,\\
1+B_{jj}^{(S)}&=1+B_{jj}^{(Sl)}-(B_{jl}^{(Sl)})^2Q_l^{(Sl)}.
\end{align*}
Taking inverses and applying the expansion
\[(\vA-\vDelta)^{-1}=\sum_{d=1}^{\lceil D/2 \rceil}
\vA^{-1}(\vDelta\vA^{-1})^{d-1}
+(\vA-\vDelta)^{-1}(\vDelta\vA^{-1})^{\lceil D/2 \rceil},\]
we obtain
\begin{align}
\vOmega^{(S)}&=\sum_{d=1}^{\lceil D/2 \rceil}
\vOmega^{(Sl)}[C_{ll}^{(Sl)}Q_l^{(Sl)}
\vSigma\vOmega^{(Sl)}]^{d-1}+\vE,\label{eq:Omegaexpand}\\
Q_j^{(S)}&=\sum_{d=1}^{\lceil D/2 \rceil}
Q_j^{(Sl)}[(B_{jl}^{(Sl)})^2Q_l^{(Sl)}Q_j^{(Sl)}]^{d-1}+e,\label{eq:Qexpand}
\end{align}
for remainder terms $\vE \in \C^{n \times n}$ and $e \in \C$ satisfying,
by the bounds of Lemma \ref{lemm:bounds},
\[\|\vE\|=\SDcEL{|C_{ll}^{(Sl)}|^{D/2}}
=\SDcEL{(N^{-1}\Psi)^D}, \quad |e|=\SDcEL{|(B_{jl}^{(Sl)})^2|^{D/2}}
=\SDcEL{(N^{-1}\Psi)^D}.\]
In particular,
\eqref{eq:Qexpand} shows \eqref{eq:FAQexpand}. Applying \eqref{eq:Omegaexpand}
to the definitions of $Y_i^{(S)}[d]$ and $Z_{ijk}^{(S)}[d]$, we get
\begin{align*}
Y_i^{(S)}[d]&=\Tr (\vg_i\vg_i^\top-\vSigma)\vR^{(S)}\vA
\left(\sum_{d'=1}^{\lceil D/2 \rceil}
\vOmega^{(Sl)}[C_{ll}^{(Sl)}Q_l^{(Sl)}
\vSigma\vOmega^{(Sl)}]^{d'-1}+\vE\right)\\
&\qquad\qquad \cdot \left(\vSigma\left[\sum_{d'=1}^{\lceil D/2 \rceil}
\vOmega^{(Sl)}[C_{ll}^{(Sl)}Q_l^{(Sl)}
\vSigma\vOmega^{(Sl)}]^{d'-1}+\vE\right]\right)^d,\\
Z_{ijk}^{(S)}[d]&=\frac{1}{N}\Tr (\vg_i\vg_i^\top-\vSigma)\vR^{(S)}
\vg_j\vg_k^\top \vR^{(S)}\vA
\left(\sum_{d'=1}^{\lceil D/2 \rceil}
\vOmega^{(Sl)}[C_{ll}^{(Sl)}Q_l^{(Sl)}
\vSigma\vOmega^{(Sl)}]^{d'-1}+\vE\right)\\
&\qquad\qquad\cdot
\left(\vSigma\left[\sum_{d'=1}^{\lceil D/2 \rceil}
\vOmega^{(Sl)}[C_{ll}^{(Sl)}Q_l^{(Sl)}
\vSigma\vOmega^{(Sl)}]^{d'-1}+\vE\right]\right)^d.
\end{align*}
For any matrix $\vB \in \C^{n \times n}$ independent of $\vg_i$, observe that
$\Tr (\vg_i\vg_i^\top-\vSigma)\vR^{(S)}\vA\vB=\SDcES{\Phi_N\|\vB\|}$
and $\Tr (\vg_i\vg_i^\top-\vSigma)\vR^{(S)}\vg_j\vg_k^\top \vR^{(S)}\vA\vB
=\SDcES{\Psi_N\Phi_N\|\vB\|}$
by the same arguments as those bounding $Y_i^{(S)}[d]$ and $Z_{ijk}^{(S)}[d]$
in the proof of Lemma \ref{lemm:bounds}. Then,
expanding the above and absorbing all terms containing $\vE$ 
and all terms with combined power of $C_{ll}^{(Sl)}$ larger than $D/2$
into $\SDcES{N^{-D}\Psi_N^D \Phi_N}$ remainders, we obtain for some coefficients
$\alpha(d,d',D) \in \R$ that
\begin{align*}
Y_i^{(S)}[d]&=\Tr (\vg_i\vg_i^\top-\vSigma)\vR^{(S)}\vA
\sum_{d'=0}^{\lceil D/2 \rceil} \alpha(d,d',D)[C_{ll}^{(Sl)}Q_l^{(Sl)}]^{d'}
\vOmega^{(Sl)}[\vSigma\vOmega^{(Sl)}]^{d+d'}\\
&\hspace{1in}+\SDcES{N^{-D}\Psi_N^D\Phi_N},\\
Z_{ijk}^{(S)}[d]&=\frac{1}{N}\Tr (\vg_i\vg_i^\top-\vSigma)\vR^{(S)}
\vg_j\vg_k^\top \vR^{(S)}\vA
\sum_{d'=0}^{\lceil D/2 \rceil} \alpha(d,d',D)[C_{ll}^{(Sl)}Q_l^{(Sl)}]^{d'}
\vOmega^{(Sl)}[\vSigma\vOmega^{(Sl)}]^{d+d'}\\
&\hspace{1in}+\SDcES{N^{-D}\Psi_N^D\Phi_N}.
\end{align*}
Finally, applying the Sherman-Morrison formula \eqref{eq:FAShermanMorrison}
to expand each copy of $\vR^{(S)}$, and re-indexing the summations by
$d+d' \mapsto d'$, we get \eqref{eq:FAYexpand} and \eqref{eq:FAZexpand}.
\end{proof}

\begin{lemma}\label{lemm:FAexpansion}
Fix any $L,D \geq 1$. Uniformly over $S \subset [N]$ with $|S| \leq L$ and over
$i \in S$, the following holds: Denote $\bar S=S \setminus \{i\}$. Then there exists
a collection of monomials $\cM_{i,S}$ such that
$Y_i^{(i)}[0]$ can be approximated as
\begin{align}
Y_{i}^{(i)}[0]&=\sum_{q \in \cM_{i,S}}
q\bigg(\{Y_i^{(S)}[d]\}_{d \leq \lfloor D/2 \rfloor},
\{Z_{ijk}^{(S)}[d]\}_{j,k \in \bar S, d \leq \lfloor D/2 \rfloor},
\{B_{jk}^{(S)}\}_{j \neq k \in \bar S},\nonumber\\
&\hspace{2in}\{C_{jk}^{(S)}\}_{j,k\in \bar S},\{Q_j^{(S)}\}_{j \in \bar S}\bigg)
+\SDcEL{N^{-D}\Psi_N^D\Phi_N}.\label{eq:FAexpansionrepr}
\end{align}
Each monomial $q \in \cM_{i,S}$ is a product of a real-valued scalar coefficient
and one or more factors of the form
$Y_i^{(S)}[d]$, $Z_{ijk}^{(S)}[d]$, $B_{jk}^{(S)}$ with $j \neq k$,
$C_{jk}^{(S)}$, $Q_j^{(S)}$ for $j,k \in \bar S$ and $d \leq \lfloor D/2 \rfloor$.
We have $q=\SDcEL{\Phi_N}$ uniformly over $q \in \cM_{i,S}$,
and the number of monomials $|\cM_{i,S}|$ is most a constant depending on
$L,D$. Furthermore:
\begin{enumerate}[label=(\alph*)]
\item There is exactly one factor of the form $Y_i^{(S)}[d]$ or
$Z_{ijk}^{(S)}[d]$ appearing in $q$.
\item The number of factors $Z_{ijk}^{(S)}[d]$,
$B_{jk}^{(S)}$, and $C_{jk}^{(S)}$ appearing in $q$ is no less than the number
of distinct indices of $\bar S$ (not including $i$) that appear as lower
indices across all factors of $q$.
\end{enumerate}
\end{lemma}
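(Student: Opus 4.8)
The plan is to establish the representation \eqref{eq:FAexpansionrepr} by induction on $m=|\bar S|=|S|-1$, removing one more row at a time with the single-step Sherman--Morrison expansions \eqref{eq:FAYexpand}--\eqref{eq:FAQexpand} together with the size bounds of Lemma~\ref{lemm:bounds}. Throughout, every estimate is understood on $\cE_L$ and uniform over $z,\vGamma,\vA,S,i$ (the bounds of Lemma~\ref{lemm:bounds} hold on $\cE(S)\supseteq\cE_L$, and the remainders in \eqref{eq:FAYexpand}--\eqref{eq:FAQexpand} are already of the form $\SDcEL{\cdot}$). The base case $m=0$ is immediate: $\cM_{i,\{i\}}=\{Y_i^{(i)}[0]\}$, properties (a)--(b) hold vacuously because $\bar S=\emptyset$, there are no remainders, and $Y_i^{(i)}[0]=\SDcES{\Phi_N}$ by Lemma~\ref{lemm:bounds}.

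For the inductive step I would assume \eqref{eq:FAexpansionrepr} for all sets of size $m$ and take $|S|=m+1\le L$ with $i\in S$. Fix any $l\in\bar S$ and set $S'=S\setminus\{l\}$, so $|S'|=m\le L-1$ and the preceding lemma applies at level $S'$; by hypothesis $Y_i^{(i)}[0]=\sum_{q\in\cM_{i,S'}}q+\SDcEL{N^{-D}\Psi_N^D\Phi_N}$, each $q$ being a real scalar times a product of finitely many level-$S'$ factors $Y_i^{(S')}[d]$, $Z_{ijk}^{(S')}[d]$, $B_{jk}^{(S')}\,(j\ne k)$, $C_{jk}^{(S')}$, $Q_j^{(S')}$ (with $j,k\in\bar{S'}$, $d\le\lfloor D/2\rfloor$), containing exactly one $Y$- or $Z$-factor and satisfying (number of $Z/B/C$-factors of $q$) $\ge$ (number of distinct $\bar{S'}$-indices appearing in $q$). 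Into each such $q$ I would substitute the level-$S$ expansions \eqref{eq:FAYexpand}--\eqref{eq:FAQexpand} (with $l$ the removed row), one per factor, and multiply out, producing a finite sum of real scalars times products of level-$S$ quantities, plus remainders. I then discard two kinds of contributions. First, every explicit remainder $\SDcEL{N^{-D}\Psi_N^D\Phi_N}$ or $\SDcEL{N^{-D}\Psi_N^D}$ coming from \eqref{eq:FAYexpand}--\eqref{eq:FAQexpand}: multiplying such a remainder by (the expansions of) the other factors of $q$, each $\SDcEL{1}$ except the unique $Y/Z$-factor which is $\SDcEL{\Phi_N}$, keeps it $\SDcEL{N^{-D}\Psi_N^D\Phi_N}$. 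Second, assigning weight $2$ to each $C$-factor, weight $1$ to each off-diagonal $B$-factor, and weight $1$ to a $Z$-factor, any resulting product of total weight $\ge D$: by Lemma~\ref{lemm:bounds} it is $\SDcEL{(N^{-1}\Psi_N)^D\Phi_N}=\SDcEL{N^{-D}\Psi_N^D\Phi_N}$, using $N^{-1}\Psi_N\le N^{-\delta}<1$. The surviving products, with their scalars, form $\cM_{i,S}$; since each single-factor expansion is a sum of at most $C(D)$ terms each a product of at most $C(D)$ factors and the induction runs $\le L-1$ steps, both $|\cM_{i,S}|$ and the number of factors per monomial are bounded by a constant $C(L,D)$, the sum of all discarded terms is $\SDcEL{N^{-D}\Psi_N^D\Phi_N}$, and $q=\SDcEL{\Phi_N}$ for each $q\in\cM_{i,S}$ by Lemma~\ref{lemm:bounds}.

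It then remains to check that the structural claims persist. Property (a) does, because expanding a $Y$- or $Z$-factor via \eqref{eq:FAYexpand}--\eqref{eq:FAZexpand} yields terms with exactly one $Y/Z$-factor while \eqref{eq:FABexpand}--\eqref{eq:FAQexpand} yield none. For the bound $d\le\lfloor D/2\rfloor$: the index $d$ of the surviving $Y/Z$-factor increases only through the prefactors $[C_{ll}^{(S)}Q_l^{(S)}]^{d'-d}$ in \eqref{eq:FAYexpand}--\eqref{eq:FAZexpand}, each unit of increase creating one $C$-factor, and \eqref{eq:FACexpand} preserves the number of $C$-factors, so the final $d$ equals the number of $C$-factors of the monomial, which is $<D/2$ after the weight truncation. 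The heart of the argument is the invariant in (b). Inspecting \eqref{eq:FAYexpand}--\eqref{eq:FAQexpand} term by term gives the key local fact: in the $l$-expansion of a single factor, every term in which the new index $l$ appears carries at least one \emph{new} $Z$-, $B$- or $C$-factor bearing the index $l$ (one of $Z_{ill},Z_{ilk},Z_{ijl},C_{ll},B_{jl},B_{lk}$), whereas a term not involving $l$ has the same number of $Z/B/C$-factors as the factor it came from. Hence, for each factor $f$ of $q$ and each term $t_f$ of its expansion, (number of $Z/B/C$-factors of $t_f$) $-$ (number of $Z/B/C$-factors of $f$) $\ge\bI\{l\in t_f\}$. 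Summing this over the factors of $q$, using that every $\bar{S'}$-index appearing in a distributed monomial $t=\prod_f t_f$ was already present in $q$, and applying the inductive invariant to $q$, one obtains (number of $Z/B/C$-factors of $t$) $\ge$ (number of distinct $\bar{S'}$-indices of $t$) $+\,\bI\{l\in t\}=$ (number of distinct $\bar S$-indices of $t$), which is (b) at level $S$. This closes the induction, and taking $|S|=L$ gives the assertion.

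I expect the only real difficulty to be the combinatorial bookkeeping in the inductive step --- in particular, verifying the local fact underlying (b) uniformly across all the terms appearing in \eqref{eq:FAYexpand}--\eqref{eq:FAQexpand}, and arranging the weight-based truncation so that, at once, the accumulated error stays $\SDcEL{N^{-D}\Psi_N^D\Phi_N}$, the indices $d$ remain $\le\lfloor D/2\rfloor$, and only $C(L,D)$ monomials survive --- rather than any individual analytic estimate.
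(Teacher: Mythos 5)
Your proposal is correct and takes essentially the same approach as the paper: it iterates the one-step Sherman--Morrison expansions \eqref{eq:FAYexpand}--\eqref{eq:FAQexpand} to peel off the indices of $\bar S$ one at a time, establishes (a) from the fact that each $Y/Z$-factor expansion produces exactly one $Y/Z$-factor, and establishes (b) from the observation that every expansion term involving the new index $l$ carries at least one additional $Z$-, $B$-, or $C$-factor. Your formalization as an induction on $|\bar S|$, the explicit weight-based truncation, and the careful summing argument for invariant (b) are presentational refinements of what the paper does by direct iteration.
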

\begin{proof}
We arbitrarily order the indices of $\bar S=S \setminus \{i\}$ as
$l_1,l_2,\ldots,l_{|S|-1}$. Beginning with the monomial $Y_i^{(i)}[0]$,
iteratively for $j=1,2,\ldots,|S|-1$, we
replace all factors with superscript $(il_1\ldots l_{j-1})$
by a sum of terms with superscript $(il_1\ldots l_j)$,
using the recursions \eqref{eq:FAYexpand}--\eqref{eq:FAQexpand}. It is then
direct to check that
this gives a representation of the form \eqref{eq:FAexpansionrepr}, where:
\begin{itemize}
\item Each application of \eqref{eq:FAYexpand}--\eqref{eq:FAZexpand} replaces a
factor $Y_i^{(\ldots)}[d]$ or $Z_{ijk}^{(\ldots)}[d]$ by terms
having exactly one such factor. Thus, each monomial $q \in \cM_{i,S}$
has exactly one factor $Y_i^{(S)}[d]$ or $Z_{ijk}^{(S)}[d]$.
\item The number of total applications of
\eqref{eq:FAYexpand}--\eqref{eq:FAQexpand} is bounded by a constant depending on
$L,D$, so $|\cM_{i,S}|$ and the scalar coefficient of each $q \in \cM_{i,S}$
are both bounded by constants depending on $L,D$. Then, by the bounds of
\eqref{eq:FAbounds}, each $q \in \cM_{i,S}$ satisfies
$q=\SDcEL{\Phi_N}$, and the remainder in \eqref{eq:FAexpansionrepr} is at most
$\SDcEL{N^{-D}\Psi_N^D\Phi_N}$. If $q$ has the term $Y_i^{(S)}[d]$ or
$Z_i^{(S)}[d]$, then it also has combined power of
$\{C_{jk}^{(S)}\}_{j,k \in \bar S}$ equal to $d$, and hence may be
absorbed into the remainder of \eqref{eq:FAexpansionrepr} if $d>D/2$.
\item Each term on the right side of \eqref{eq:FAYexpand}--\eqref{eq:FAQexpand}
that contains the new lower index $l$ has at least one more factor of the
form $Z_{ijk}^{(\ldots)}[d]$, $B_{jk}^{(\ldots)}$, or $C_{jk}^{(\ldots)}$ than the left side. Thus, each monomial $q \in \cM_{i,S}$ is such
that the number of distinct lower indices of $\bar S$ across all of its factors
is no greater than the number of its factors of the form
$Z_{ijk}^{(\ldots)}[d]$, $B_{jk}^{(\ldots)}$, or $C_{jk}^{(\ldots)}$.
\end{itemize}
Combining these observations yields the lemma.
\end{proof}

\begin{proof-of-lemma}[\ref{lemma:fluctuationavg}]
For each $\eps,D>0$, let us fix an even integer $L=L(\eps,D)>D/\eps$.
The assumption of this lemma guarantees
$\bI\{\cE(S)^c\} \prec 0$ uniformly over $S \subset [N]$ with $|S|
\leq L$. Since the number of such subsets is at most $N^L$, we may take a union
bound (c.f.\ Proposition \ref{prop:domination}(a))
to obtain $\bI\{\cE_L^c\} \prec 0$ for the intersection event
$\cE_L$ of \eqref{eq:EL}. Noting that
$(1-\E_{\vg_i})[\vg_i^\top \vR^{(i)}\vA\vOmega\vg_i]=Y_i^{(i)}[0]$,
to prove the lemma, it suffices to show for any $\eps,D>0$ and
all sufficiently large $N$ that
\begin{equation}\label{eq:FAgoal}
\P\left[\left(\frac{1}{N}\sum_{i=1}^N 
Y_i^{(i)}[0]\right)
\bI\{\cE_L\}>\max\left(\frac{\Psi_N}{N},\frac{1}{\sqrt{N}}\right)\Phi_N
\cdot N^\eps \right]<N^{-D}.
\end{equation}

In anticipation of applying Markov's inequality, we analyze
\begin{equation}\label{eq:EYQL}
\E\left[\left(\sum_{i=1}^{N} Y_i^{(i)}[0]\right)^L
\bI\{\cE_L\}\right]
=\sum_{i_1,\ldots,i_L=1}^{N} \underbrace{\E\left[\prod_{l=1}^L Y_{i_l}^{(i_l)}
[0] \bI\{\cE_L\}\right]}_{:=\E[m(i_1,\ldots,i_L)]}.
\end{equation}
Fix any index tuple $(i_1,\ldots,i_L)$.
Letting $S=\{i_1,\ldots,i_L\}$ be the set of distinct indices in this tuple,
we apply Lemma \ref{lemm:FAexpansion} to each term
$Y_{i_l}^{(i_l)}[0]$, with this set $S$ and with $D=L$. This gives
\begin{equation}\label{eq:mexpand}
m(i_1,\ldots,i_L)=\sum_{q^{(1)} \in \cM(i_1,S)} \ldots
\sum_{q^{(l)} \in \cM(i_l,S)} \prod_{l=1}^L q^{(l)} \cdot \bI\{\cE_L\}
+\SD{(N^{-1}\Psi_N)^L\Phi_N^L},
\end{equation}
where each $\cM(i_l,S)$ is the collection of monomials arising in the
approximation of $Y_{i_l}^{(i_l)}[0]$,
and we have applied $q^{(l)}=\SDcEL{\Phi_N}$ to bound the remainder.
Observe that by \eqref{eq:FAEbounds} and Holder's inequality, we have
$\E[|m(i_1,\ldots,i_L)|^2] \leq N^C$ and
$\E[|\prod_{l=1}^L q^{(l)} \cdot \bI\{\cE_L\}|^2] \leq N^C$
for all $q^{(1)},\ldots,q^{(L)}$ and a constant $C>0$.
By this and the given condition $\Psi_N,\Phi_N \geq N^{-\upsilon}$,
we may take expectations in \eqref{eq:mexpand} using
Proposition \ref{prop:domination}(d) to get
\begin{equation}\label{eq:miLbound}
\E[m(i_1,\ldots,i_L)]=\sum_{q^{(1)} \in \cM(i_1,S)} \ldots
\sum_{q^{(l)} \in \cM(i_l,S)} \E\left[\prod_{l=1}^L q^{(l)} \cdot
\bI\{\cE_L\}\right]+\SD{(N^{-1}\Psi_N)^L\Phi_N^L}.
\end{equation}

Now to bound $\E[\prod_{l=1}^L q^{(l)} \cdot \bI\{\cE_L\}]$,
we consider separately two cases, focusing on those indices $i_l$ which
appear exactly once in $(i_1,\ldots,i_L)$.
In the first case, suppose there is some such index $i_l$ that does not appear
as a lower index of $q^{(l')}$ for any $l' \neq l$. Fixing this set
$S=\{i_1,\ldots,i_L\}$ and index $i_l \in S$, let us introduce
\begin{align*}
\cE'&=\bigg\{\vR^{(S)} \text{ exists, } \|\vR^{(S)}\vA\|_F \leq \Phi_N,\;
\|\vR^{(S)}\|_F \leq \Psi_N,\;
\|(z^{-1}\vGamma+\tilde m^{(S)} \vSigma)^{-1}\| \leq C_0,\\
&\hspace{2in} \text{ and }
|1+N^{-1} \vg_j^\top \vR^{(S)}\vg_j| \geq c_0 \text{ for all } j \in S \setminus
\{i_l\} \bigg\}.
\end{align*}
Comparing
with the definition of $\cE(S)$ from \eqref{eq:ESdef}, observe that
only the last condition defining $\cE'$ is different (where we do not require
the bound for $j=i_l$), so that this event $\cE'$ is independent of $\vg_{i_l}$.
Then $\cE_L \subseteq \cE(S) \subseteq \cE'$, and
\begin{equation}\label{eq:eventdecomp}
\E\left[\prod_{l=1}^L q^{(l)} \cdot \bI\{\cE_L\}\right]
=\E\left[\prod_{l=1}^L q^{(l)} \cdot \bI\{\cE'\}\right]
-\E\left[\prod_{l=1}^L q^{(l)} \cdot \bI\{\cE'\}\bI\{\cE_L^c\}
\right].
\end{equation}
For the first term of \eqref{eq:eventdecomp}, observe that
both $\{q^{(l')}:l' \neq l\}$ and $\cE'$ are independent of
$\vg_{i_l}$, and only the one
factor $Y_{i_l}^{(S)}[d]$ or $Z_{i_ljk}^{(S)}[d]$ in $q^{(l)}$ depends on
$\vg_{i_l}$. Then, noting that $\E_{\vg_i}[Y_i^{(S)}[d]]=0$ and
$\E_{\vg_i}[Z_{ijk}^{(S)}[d]]=0$, the first term of \eqref{eq:eventdecomp}
is 0. For the second term of \eqref{eq:eventdecomp},
observe that all statements of \eqref{eq:FAEbounds} continue to hold with
$\cE(S)$ replaced by $\cE'$, except for the bound on $Q_{i_l}^{(S)}$.
But $Q_{i_l}^{(S)}$ appears neither in
$\{q^{(l')}:l' \neq l\}$ nor in $q^{(l)}$, so we may apply Holder's inequality
to get $\E[|\prod_{l=1}^L q^{(l)}|^2 \bI\{\cE'\}] \leq N^C$ for a constant
$C>0$. Then, applying Cauchy-Schwarz and $\bI\{\cE_L^c\} \prec 0$, the second
term of \eqref{eq:eventdecomp} is bounded by $N^{-D'}$ for any fixed constant
$D'>0$ and all large $N$. Thus,
\begin{equation}\label{eq:FAcase1}
\E\left[\prod_{l=1}^L q^{(l)} \cdot \bI\{\cE_L\}\right] \leq N^{-D'}.
\end{equation}

In the second case,
every index $i_l$ that appears exactly once in $(i_1,\ldots,i_L)$ appears
as a lower index of $q^{(l')}$ for some $l' \neq l$. Call the number
of such indices $K$. Then condition (b) of Lemma \ref{lemm:FAexpansion} implies
that the total number of factors of the forms
$Z_{ijk}^{(S)}[d]$, $B_{jk}^{(S)}$ for $j \neq k$, and $C_{jk}^{(S)}$
across all monomials $q^{(1)},\ldots,q^{(L)}$ is at least $K$. Then, 
by the bounds of Lemma \ref{lemm:bounds} and
Proposition \ref{prop:domination}(d), we have
\begin{equation}\label{eq:FAcase2}
\E\left[\prod_{l=1}^L q^{(l)} \cdot \bI\{\cE_L\}\right]
\prec (N^{-1}\Psi_N)^K\Phi_N^L.
\end{equation}

Under the given condition $\Phi_N,\Psi_N \geq N^{-\upsilon}$, we have
$N^{-D'} \leq (N^{-1}\Psi_N)^K\Phi_N^L$ for large enough $D'$. Then,
combining the two cases \eqref{eq:FAcase1} and \eqref{eq:FAcase2} and applying
this back to \eqref{eq:miLbound}, we get
\begin{equation}\label{eq:miLboundfinal}
\E[m(i_1,\ldots,i_L)] \prec (N^{-1}\Psi_N)^K\Phi_N^L
\end{equation}
where $K$ is the number of indices in $S=\{i_1,\ldots,i_L\}$ that appear exactly
once in $(i_1,\ldots,i_L)$.
Let $J$ be the number of distinct indices in $S=\{i_1,\ldots,i_L\}$ that appear
at least twice in $(i_1,\ldots,i_L)$. Then $2J+K \leq L$, and the number of
index tuples $(i_1,\ldots,i_L) \in [N]^L$ with these values of $(J,K)$ is at
most $CN^{J+K}$, for a constant $C=C(J,K)>0$. Then,
applying \eqref{eq:miLboundfinal} back to \eqref{eq:EYQL} yields
\begin{align*}
\E\left[\left(\sum_{i=1}^{N} Y_i^{(i)}[0]\right)^L
\bI\{\cE_L\}\right] &\prec \max_{J,K \geq 0:\,2J+K \leq L}
N^{J+K} \cdot (N^{-1}\Psi_N)^K\Phi_N^L\\
&=\max_{J,K \geq 0:\,2J+K \leq L}
(\sqrt{N})^{2J}\Psi_N^K\Phi_N^L \leq \max(\Psi_N,\sqrt{N})^L \Phi_N^L.
\end{align*}
Finally, by Markov's inequality, the probability in \eqref{eq:FAgoal} is at most
\begin{align*}
\max(\Psi_N,\sqrt{N})^{-L}\Phi_N^{-L}N^{-\eps L}
\cdot \E\left[\left(\sum_{i=1}^{N} Y_i^{(i)}[0]\right)^L
\bI\{\cE_L\}\right] \prec N^{-\eps L},
\end{align*}
and \eqref{eq:FAgoal} follows as desired under our initial
choice $L=L(\eps,D)>D/\eps$.
\end{proof-of-lemma}

\subsection{No eigenvalues outside the support}\label{sec:nooutliers}

We now prove Theorem \ref{thm:no_outlier}.
Let $m_N(z),\tilde m_N(z)$ be the Stieltjes
transform of the $N$-dependent deterministic measures $\mu_N,\tilde \mu_N$.
For each $z \in \C^+$, $\tilde m_N(z)$ is the unique root in $\C^+$ to the equation
\begin{align}
z&=-\frac{1}{\tilde m_N(z)}+\gamma_N \int \frac{\lambda}{1+\lambda \tilde
m_N(z)} d\nu_N(\lambda),\label{eq:tildemn}
\end{align}
and $m_N(z),\tilde m_N(z)$ are related by
$\tilde m_N(z)=\gamma_N m_N(z)+(1-\gamma_N)({-}1/z)$.
Define the discrete set
\begin{equation}\label{eq:zNdomain}
\cT_N=\{0\} \cup \{-1/\lambda:\lambda \in \supp{\nu_N}\}.
\end{equation}
On the domain $\C \setminus \cT_N$,
we may define the formal inverse of (\ref{eq:tildemn}),
\begin{align}
z_N(\tilde m)&=-\frac{1}{\tilde m}+\gamma_N \int \frac{\lambda}{1+\lambda \tilde
m} d\nu_N(\lambda),\label{eq:z_N_def}
\end{align}
which is a finite-$N$ analogue of (\ref{eq:inv_z}).
%Here $z_N$ is a meromorphic function with poles at the points of $\cT_N$.
Let $\cS_N$ be the deterministic support defined in \eqref{eq:support}, and let
$U_N(\eps)$ be the spectral domain \eqref{eq:U_e}. The following basic
properties of $\cS_N$ and $\tilde m_N(z)$ are known.

\begin{proposition}\label{prop:basic_mn}
Suppose Assumption \ref{assump:G}(a) holds, and fix any $\eps>0$. Then there
exist constants $C_0,c_0>0$, depending only on $\eps$ and the constants $C,c$ of
Assumption \ref{assump:G}(a), such that for all $x \in \cS_N$ we have $|x| \leq C$,
and for all $z=x+i\eta \in U_N(\eps)$ we have
\[c<|\tilde m_N(z)|<C, \qquad c\eta \leq |\Im \tilde m_N(z)| \leq C\eta,
\qquad \min_{\lambda \in \supp{\nu_N}}
|1+\lambda\,\tilde m_N(z)| \geq c\]
\end{proposition}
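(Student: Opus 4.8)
The plan is to derive everything from the Silverstein--Choi description of $\supp\tilde\mu_N$ recalled in Proposition~\ref{prop:inv_z}, applied with $(\gamma_N,\nu_N)$ in place of $(\gamma,\nu)$ — legitimate since $\mu_N,\tilde\mu_N$ are defined by the very same Marcenko--Pastur equations. The only real work is to keep all constants uniform over $\gamma_N\in(c,C)$ and over arbitrary probability measures $\nu_N$ with $\supp\nu_N\subseteq[0,\|\vSigma\|)\subseteq[0,C)$; below, ``constant'' always means a positive number depending only on $\eps,C,c$.

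First I would bound $\cS_N$. On $\tilde m\in(-\delta_0,0)$ with $\delta_0\le1/(2C)$ one has $1+\lambda\tilde m\ge1/2$ on $\supp\nu_N$, hence $z_N'(\tilde m)=\tilde m^{-2}-\gamma_N\int\lambda^2(1+\lambda\tilde m)^{-2}\,d\nu_N(\lambda)\ge\delta_0^{-2}-4C^3>0$ once $\delta_0$ is a small enough constant; since $z_N$ of \eqref{eq:z_N_def} is then increasing on $(-\delta_0,0)$ with $z_N(\tilde m)\to+\infty$ as $\tilde m\to0^-$, Proposition~\ref{prop:inv_z} gives $(z_N(-\delta_0),\infty)\cap\supp\tilde\mu_N=\varnothing$, and as $\tilde\mu_N$ is supported in $[0,\infty)$ with $z_N(-\delta_0)\le\delta_0^{-1}+2C^2$ we get $\cS_N\subseteq[0,C_0]$ for a constant $C_0$. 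The three bounds on $\tilde m_N$ for $z\in U_N(\eps)$ are then short: from $\tilde m_N(z)=\int(s-z)^{-1}d\tilde\mu_N(s)$, together with $\supp\tilde\mu_N\subseteq\cS_N$ and $\dist{z,\cS_N}\ge\eps$, one gets $|\tilde m_N(z)|\le\eps^{-1}$ and $|\Im\tilde m_N(z)|=|\Im z|\int|s-z|^{-2}d\tilde\mu_N(s)\le|\Im z|\eps^{-2}$, while $|s-z|\le C_0+\eps^{-1}$ gives $|\Im\tilde m_N(z)|\ge|\Im z|(C_0+\eps^{-1})^{-2}$; and $|\tilde m_N(z)|\ge c_1$ follows from \eqref{eq:tildemn} by a dichotomy: if $|\tilde m_N(z)|>1/(2C)$ we are done, and otherwise $1+\lambda\tilde m_N(z)\ge1/2$ on $\supp\nu_N$, so the integral term in \eqref{eq:tildemn} is $O(1)$ and $|\tilde m_N(z)|^{-1}\le|z|+2C^2\le\eps^{-1}+2C^2$.

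\emph{The main obstacle is the last bound,} $\min_{\lambda\in\supp\nu_N}|1+\lambda\tilde m_N(z)|\ge c$. When $\lambda|\tilde m_N(z)|\le1/2$ (in particular for all $\lambda\le\eps/2$, using $|\tilde m_N(z)|\le\eps^{-1}$) the bound $|1+\lambda\tilde m_N(z)|\ge1/2$ is trivial, so one is left with $\lambda=\lambda_{i_0}(\vSigma)\ge\eps/2$. I would first reduce to real spectral arguments: if $|\Im z|\ge\eta_0$ for a small constant $\eta_0$ then $|1+\lambda_{i_0}(\vSigma)\tilde m_N(z)|\ge\lambda_{i_0}(\vSigma)|\Im\tilde m_N(z)|$ is already bounded below by the previous paragraph, while if $|\Im z|<\eta_0$ then $x=\Re z$ lies in $U_N(\eps/2)$ and $\tilde m_N$ is holomorphic with bounded derivative on a fixed-radius disk about $x$, so $|1+\lambda_{i_0}(\vSigma)\tilde m_N(z)|$ and $|1+\lambda_{i_0}(\vSigma)\tilde m_N(x)|$ differ by $O(\eta_0)$; taking $\eta_0$ small reduces to real $z\in U_N(\eps/2)$. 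For real $z$, Proposition~\ref{prop:inv_z} gives $\tilde m:=\tilde m_N(z)\in\R\setminus\cT_N$ with $z_N'(\tilde m)>0$ and $z_N(\tilde m)=z$; assuming $|1+\lambda_{i_0}(\vSigma)\tilde m|$ is below a small constant $c_4$ (else nothing to prove), $\tilde m$ sits in a tiny neighborhood of $-1/\lambda_{i_0}(\vSigma)$. Since the $\lambda_{i_0}(\vSigma)$-summand of $\gamma_N\int\lambda^2(1+\lambda\tilde m)^{-2}d\nu_N(\lambda)$ blows up at $\tilde m=-1/\lambda_{i_0}(\vSigma)$, we have $z_N'<0$ on a punctured neighborhood of that point, so the maximal $\{z_N'>0\}$-interval $J$ through $\tilde m$ avoids $-1/\lambda_{i_0}(\vSigma)$ and has an endpoint $\tilde m_\ast$ strictly between $\tilde m$ and $-1/\lambda_{i_0}(\vSigma)$; as $\tilde m,-1/\lambda_{i_0}(\vSigma)$ are bounded away from $0$ and $\infty$, $\tilde m_\ast$ is neither $0$ nor a pole of $z_N$, hence $z_N'(\tilde m_\ast)=0$ and $z_\ast:=z_N(\tilde m_\ast)\in\partial\supp\tilde\mu_N\subseteq\cS_N$. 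On the short sub-interval from $\tilde m$ to $\tilde m_\ast$, where $|s|\ge1/(2C)$ for $c_4$ small and hence $0\le z_N'(s)\le4C^2$, monotonicity of $z_N$ on $J$ gives
\[\dist{z,\cS_N}\le|z-z_\ast|\le 4C^2|\tilde m-\tilde m_\ast|\le 4C^2\,|\tilde m+\lambda_{i_0}(\vSigma)^{-1}|=\frac{4C^2}{\lambda_{i_0}(\vSigma)}\,|1+\lambda_{i_0}(\vSigma)\,\tilde m|,\]
so $|1+\lambda_{i_0}(\vSigma)\tilde m|\ge\lambda_{i_0}(\vSigma)(4C^2)^{-1}\dist{z,\cS_N}\ge\eps(8C^3)^{-1}$, and the bound follows on collecting the constants from all cases.

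The reason this is the delicate step is that a priori $\tilde m_N(z)$ can approach the pole set $\cT_N$ at rate $N^{-1/2}$ — exactly what happens when $\vSigma$ has an eigenvalue near the BBP threshold, which produces a tiny isolated spectral component of $\vK$ — so no estimate on $\tilde m_N(z)$ in isolation can yield a $\Theta(1)$ lower bound; one must exploit that whenever $\tilde m_N(z)$ is that close to $\cT_N$, the corresponding $z=z_N(\tilde m_N(z))$ is equally close to that isolated component, hence to $\cS_N$, which is excluded once $z\in U_N(\eps)$. Monotonicity of $z_N$ on each $\{z_N'>0\}$-interval together with the boundedness of $z_N'$ there (this last using the already-established $|\tilde m_N(z)|\ge c_1$) is precisely the mechanism transferring $\eps$-separation in the $z$-plane to $\Theta(1)$-separation in the $\tilde m$-plane. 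I would expect the only genuine bookkeeping to be verifying that the endpoint $\tilde m_\ast$ of $J$ towards $-1/\lambda_{i_0}(\vSigma)$ is always a critical point of $z_N$ rather than a pole, even when several eigenvalues of $\vSigma$ cluster near $\lambda_{i_0}(\vSigma)$ — this holds for the same reason (each pole forces $z_N'\to-\infty$), but requires treating the cluster as a whole.
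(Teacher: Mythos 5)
Your proof is correct in all essentials, and it is a genuinely self-contained derivation where the paper simply defers to \cite[Propositions A.3, B.1, B.2]{fan2022tracy}. You organize the argument around exactly the right mechanism: the Silverstein--Choi characterization (Proposition~\ref{prop:inv_z} applied at finite $N$), with the monotonicity and bounded slope of $z_N$ on each $\{z_N'>0\}$--interval translating the $\eps$--gap $\dist{z,\cS_N}\ge\eps$ in the spectral plane into a $\Theta(1)$ gap between $\tilde m_N(z)$ and $\cT_N$. The first two displayed bounds, the containment $\cS_N\subseteq[0,C_0]$, and the reduction from complex $z$ to real $x\in U_N(\eps/2)$ by a Lipschitz bound on $\tilde m_N$ are all routine and correctly executed.

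On the one point you flag as ``genuine bookkeeping'' --- whether the endpoint $\tilde m_\ast$ of $J$ toward $-1/\lambda_{i_0}(\vSigma)$ could be a pole of $z_N$ rather than a critical point --- your own remark already contains the resolution and there is no cluster-dependent subtlety: since $\nu_N=\frac1n\sum_i\delta_{\lambda_i(\vSigma)}$ is purely atomic, at any pole $-1/\lambda_j(\vSigma)$ the corresponding atom forces $z_N'\to-\infty$ from both sides, while the other terms (including any nearby but distinct atoms) stay bounded; so $z_N'<0$ on a punctured neighborhood of every pole, and the closure of any $\{z_N'>0\}$--interval cannot touch a pole. Hence $\tilde m_\ast$ is a zero of $z_N'$, $z_\ast=z_N(\tilde m_\ast)\in\supp\tilde\mu_N$ (by the contrapositive of Proposition~\ref{prop:inv_z}, using continuity of $\tilde m_N$ at $z_\ast$ if $z_\ast$ were outside the support), and the chain $\eps\le\dist{z,\cS_N}\le|z-z_\ast|\le\sup_{\bar J}|z_N'|\cdot|\tilde m-\tilde m_\ast|$ does the rest. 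The fact that the punctured neighborhood where $z_N'<0$ may be $O(1/N)$--small is immaterial, as you use it only to show $J$ avoids the pole, not to get the final lower bound. The one small slip is the final constant: with $\lambda_{i_0}(\vSigma)\gtrsim\eps$, $|z_N'|\le4C^2$ on the segment, and $\dist{z,\cS_N}\gtrsim\eps$, you should land on something of order $\eps^2/C^2$ rather than $\eps/(8C^3)$; this is cosmetic.
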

\begin{proof}
See \cite[Propositions A.3, B.1, B.2]{fan2022tracy}.
\end{proof}

Let $m_{\tilde \vK}(z)=N^{-1}\Tr (\widetilde \vK-z\vI)^{-1}$ be the Stieltjes
transform of the empirical eigenvalue distribution of $\widetilde \vK=\vG\vG^\top$.
Since $\widetilde \vK$ and $\vK=\vG^\top \vG$ have the same
eigenvalues up to $|N-n|$ 0's, we have
\begin{equation}\label{eq:m_tildeK}
m_{\tilde \vK}(z)=\gamma_N m_{\vK}(z)+(1-\gamma_N)(-1/z),
\end{equation}
so in particular $m_{\tilde \vK}$ coincides with $\tilde m_{\vK}^{(\emptyset)}$
from \eqref{eq:leaveoneout}.
We begin with a preliminary estimate for the Stieltjes transform
$m_{\tilde \vK}(z)$ when $\Im z \geq N^{-1/11}$. Similar statements have been shown
in \citep{silverstein1995strong,bai1998no}, and we provide an argument
here following ideas of \cite[Section 3]{bai1998no} for later reference.

\begin{lemma}\label{lemma:prelimestimate}
Fix any $\eps>0$, and suppose Assumption \ref{assump:G} holds.
Then, uniformly over $z=x+i\eta \in U_N(\eps)$ with $\Im z \geq N^{-1/11}$,
\[m_{\tilde \vK}(z)-\tilde m_N(z) \prec \frac{1}{\sqrt{N}\eta^4}.\]
\end{lemma}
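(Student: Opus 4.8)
Below is how I would prove Lemma~\ref{lemma:prelimestimate}.

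The plan is to run the classical Bai--Silverstein self-consistent-equation argument at finite $N$, carefully tracking the dependence on $\eta=\Im z$. First I would pass to the companion transform $m_{\tilde\vK}(z)=N^{-1}\Tr(\widetilde\vK-z\vI_N)^{-1}$. Writing $\vK^{(i)}=\tfrac1N\sum_{j\ne i}\vg_j\vg_j^\top$ and $\vR^{(i)}(z)=(\vK^{(i)}-z\vI_n)^{-1}$, the Schur complement for the $(i,i)$ entry of $(\widetilde\vK-z\vI_N)^{-1}$, together with the identity $\vM(\vM^\top\vM-z)^{-1}\vM^\top=\vI+z(\vM\vM^\top-z)^{-1}$, gives
\[
\big[(\widetilde\vK-z\vI_N)^{-1}\big]_{ii}=\frac{-1}{z\,(1+\beta_i(z))},\qquad \beta_i(z):=\tfrac1N\vg_i^\top\vR^{(i)}(z)\vg_i,
\]
so $m_{\tilde\vK}(z)=-\tfrac1{zN}\sum_{i=1}^N(1+\beta_i(z))^{-1}$. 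From $\big|[(\widetilde\vK-z)^{-1}]_{ii}\big|\le\|(\widetilde\vK-z)^{-1}\|\le\eta^{-1}$ and $|z|\le\eps^{-1}$ on $U_N(\eps)$ one gets the a priori lower bound $|1+\beta_i(z)|\ge\eps\eta$ for all $i$.

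The second step is concentration of the $\beta_i$. Using Assumption~\ref{assump:G}(c) for the quadratic form $\vg_i^\top\vR^{(i)}\vg_i$, the deterministic bounds $\|\vR^{(i)}\|\le\eta^{-1}$, $\|\vR^{(i)}\|_F\le\sqrt n\,\eta^{-1}$, $\|\vg_i\|\prec\sqrt N$, and a Sherman--Morrison step replacing $\vR^{(i)}$ by $\vR(z)$, one obtains, uniformly over $i$ and over $z$ with $\eta\ge N^{-1/11}$,
\[
\beta_i(z)=b(z)+\SD{\tfrac1{\sqrt N\,\eta}},\qquad b(z):=\tfrac1N\Tr\big(\vSigma\vR(z)\big),
\]
the $N^{-1/2}\eta^{-1}$ error dominating the Sherman--Morrison remainder in this range of $\eta$. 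Since $N^{-1/2}\eta^{-1}\ll\eta$ here, also $|1+b(z)|\gtrsim\eta$, and averaging the $(1+\beta_i)^{-1}$ over $i$ yields
\[
m_{\tilde\vK}(z)=\frac{-1}{z\,(1+b(z))}+\SD{\tfrac1{\sqrt N\,\eta^{3}}}.
\]

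It remains to identify $b(z)$ in terms of $m_{\tilde\vK}(z)$, which is the crux. Set $\vR^{\mathrm{emp}}(z):=-(z\vI_n+z\,m_{\tilde\vK}(z)\vSigma)^{-1}$ and use the resolvent identity $\vR-\vR^{\mathrm{emp}}=-\vR\big(z\,m_{\tilde\vK}\vSigma+\vK\big)\vR^{\mathrm{emp}}$; expanding $\vK=\tfrac1N\sum_j\vg_j\vg_j^\top$ and using $\vR\vg_j=(1+\beta_j)^{-1}\vR^{(j)}\vg_j$ one arrives at
\[
\tfrac1N\Tr\vSigma\big(\vR-\vR^{\mathrm{emp}}\big)=-\tfrac{z\,m_{\tilde\vK}}{N}\Tr\vSigma\vR\vSigma\vR^{\mathrm{emp}}-\tfrac1{N^2}\sum_{j=1}^N\frac{\vg_j^\top\vR^{\mathrm{emp}}\vSigma\vR^{(j)}\vg_j}{1+\beta_j}.
\]
I would replace $\vR^{\mathrm{emp}}$ by its leave-one-out version $-(z\vI_n+z\,m_{\tilde\vK}^{(j)}\vSigma)^{-1}$ (independent of $\vg_j$) and each $\beta_j$ by $b$, linearize the quadratic forms via $\E_{\vg_j}[\vg_j^\top\vA\vg_j]=\Tr\vSigma\vA$, and bound the residual fluctuation $\tfrac1{N^2}\sum_j(1-\E_{\vg_j})[\,\cdots]$ term-by-term (which loses a factor $\sqrt N$ relative to Lemma~\ref{lemma:fluctuationavg} but is affordable at the crude target rate). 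Using $1+b\approx-1/(z\,m_{\tilde\vK})$ from the previous display, the two surviving leading terms approximately cancel, leaving a self-consistent relation for $\tfrac1N\Tr\vSigma(\vR-\vR^{\mathrm{emp}})$ that is solvable because the Marcenko--Pastur map is stable on $U_N(\eps)$ (Propositions~\ref{prop:basic_mn} and~\ref{prop:inv_z}); careful tracking of the $\eta^{-1}$ factors then gives $b(z)=\tfrac1N\Tr\vSigma\vR^{\mathrm{emp}}(z)+\SD{N^{-1/2}\eta^{-4}}$. Since $\tfrac1N\Tr\vSigma\vR^{\mathrm{emp}}(z)=-\tfrac1N\Tr\vSigma(z\vI_n+z\,m_{\tilde\vK}(z)\vSigma)^{-1}$, eliminating $b$ between this and the previous display shows $z=z_N\!\big(m_{\tilde\vK}(z)\big)+\SD{N^{-1/2}\eta^{-4}}$, with $z_N$ the map of \eqref{eq:z_N_def}, whereas $z=z_N(\tilde m_N(z))$ holds exactly. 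Since $m_{\tilde\vK}(z),\tilde m_N(z)\in\C^+$, $\tilde m_N(z)$ is the unique $\C^+$-root, and $|z_N'(\tilde m_N(z))|=1/|\tilde m_N'(z)|\ge\eps^2$ on $U_N(\eps)$, a short stability estimate upgrades this to $m_{\tilde\vK}(z)-\tilde m_N(z)\prec N^{-1/2}\eta^{-4}$ uniformly over the stated $z$.

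I expect the main obstacle to be the third step. Identifying $b(z)$ is a cruder, $\eta$-dependent instance of the deterministic-equivalent estimate of Theorem~\ref{thm:deterministic_equ}, which cannot be quoted here because its proof rests --- through the no-outliers argument of Appendix~\ref{sec:nooutliers}, which follows \cite{bai1998no} --- on the present lemma; so the needed piece must be re-derived by hand, and the power $\eta^{-4}$ is precisely the accumulation of $\eta^{-1}$ factors from the quadratic-form concentration, the Sherman--Morrison denominators $|1+\beta_i|\gtrsim\eta$, and the resolvent products in the resolvent-difference identity. The resulting bound is intentionally lossy; it is used only as a bootstrap input to the argument of Appendix~\ref{sec:nooutliers}, after which the sharp $\prec N^{-1}$ bound of Theorem~\ref{thm:deterministic_equ} becomes available.
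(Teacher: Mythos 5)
The proposal is correct and follows the same Bai--Silverstein leave-one-out framework as the paper's proof: the same a priori bounds $|z(1+\beta_i)|\ge\eta$ and $\|(\vI+m_{\tilde\vK}\vSigma)^{-1}\|\prec\eta^{-1}$, the same quadratic-form concentration, and the same final stability argument via the map $z_N$. The one organizational difference is how the approximate Marcenko--Pastur relation is extracted. The paper substitutes the clever test matrix $\vB=\vA(\vI+m_{\tilde\vK}\vSigma)^{-1}$ with $\vA=\vI$ into \eqref{eq:fundamentalidentity}, which in a single stroke rewrites $z_N(m_{\tilde\vK})-z$ as a weighted average of the fluctuation terms $d_i$ (see \eqref{eq:Deltaidentity}). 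You instead proceed in two steps: first $m_{\tilde\vK}\approx -1/(z(1+b))$ with $b=N^{-1}\Tr\vSigma\vR$, then a separate resolvent-difference argument for $\vR-\vR^{\mathrm{emp}}$ that closes the loop by exploiting the near-cancellation of $-zm_{\tilde\vK}$ and $(1+b)^{-1}$. Both routes land on $z\approx z_N(m_{\tilde\vK})$; the intermediate quadratic-form quantities differ only by extra $\vSigma$ insertions. Two small points. First, your $\eta^{-1}$ accounting in the second step is a touch optimistic: the factor $N^{-1}\Tr\vSigma\vR\vSigma\vR^{\mathrm{emp}}$ is $O(\eta^{-2})$ and its coefficient after cancellation is $O(N^{-1/2}\eta^{-3})$, so a naive count gives $\eta^{-5}$ rather than $\eta^{-4}$; but, as you correctly note, any fixed polynomial power of $\eta^{-1}$ suffices for the downstream use in Theorem~\ref{thm:no_outlier} at $\eta=N^{-1/11}$. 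Second, be sure the stability step records that $z_N(m_{\tilde\vK})\in\C^+$ with high probability before applying $\tilde m_N$ as an inverse, as the paper does in \eqref{eq:stabilityevent} --- this is what justifies $\tilde m_N(z_N(m_{\tilde\vK}))=m_{\tilde\vK}$.
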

\begin{proof}
Let $\vR^{(i)}$ and $\tilde m_{\vK}^{(i)}$ be as defined in
\eqref{eq:leaveoneout} with $\vGamma=z\vI$.
Applying the Sherman-Morrison formula
\begin{equation}\label{eq:ShermanMorrison}
\vR=\vR^{(i)}-\frac{N^{-1} \vR^{(i)}\vg_i\vg_i^\top \vR^{(i)}}
{1+N^{-1}\vg_i^\top \vR^{(i)}\vg_i},
\end{equation}
for any matrix $\vB \in \C^{n \times n}$ we have
\begin{align}
\Tr \vB=\Tr (\vK-z\vI)\vR\vB
&=-z\Tr \vR\vB+\frac{1}{N}\sum_{i=1}^N \vg_i^\top \vR\vB\vg_i\nonumber\\
&=-z\Tr \vR\vB+\frac{1}{N}\sum_{i=1}^N \frac{\vg_i^\top \vR^{(i)}\vB\vg_i}
{1+N^{-1}\vg_i^\top \vR^{(i)}\vg_i}.\label{eq:fundamentalidentity}
\end{align}

Choosing $\vB=\vI$ in \eqref{eq:fundamentalidentity},
applying $\Tr \vR=n\,m_{\vK} =N\,m_{\tilde \vK}+(n-N)(-1/z)$, and rearranging,
we obtain the identity
\begin{equation}\label{eq:mnidentity}
m_{\tilde \vK}=-\frac{1}{Nz}\sum_{i=1}^N \frac{1}{1+N^{-1} \vg_i^\top
\vR^{(i)}\vg_i}.
\end{equation}
Now fix any deterministic matrix $\vA \in \C^{n \times n}$, define
\[d_i=\frac{1}{N}\,\vg_i^\top \vR^{(i)}\vA
(\vI+m_{\tilde \vK}\vSigma)^{-1}\vg_i
-\frac{1}{N}\Tr \vR\vA(\vI+m_{\tilde \vK}\vSigma)^{-1}\vSigma,\]
and choose $\vB=\vA(\vI+m_{\tilde \vK}\vSigma)^{-1}$
in \eqref{eq:fundamentalidentity}. Then, applying also the identity
\eqref{eq:mnidentity}, we get
\begin{align}
&\Tr \vA(\vI+m_{\tilde \vK}\vSigma)^{-1}\nonumber\\
&=-z\Tr \vR\vA(\vI+m_{\tilde \vK}\vSigma)^{-1}
-zm_{\tilde \vK} \Tr \vR\vA(\vI+m_{\tilde \vK}\vSigma)^{-1}\vSigma
+\sum_{i=1}^N \frac{d_i}{1+N^{-1}\vg_i^\top \vR^{(i)}\vg_i}\nonumber\\
&=-z \Tr \vR\vA+\sum_{i=1}^N \frac{d_i}{1+N^{-1}\vg_i^\top \vR^{(i)}\vg_i}.
\label{eq:detequividentity}
\end{align}

We proceed to bound $d_i$, where (for later purposes)
we derive estimates in terms of the Frobenius norms of
$\vR,\vR\vA,\vR^{(i)},\vR^{(i)}\vA$ rather than their operator norms.
Note that Assumption \ref{assump:G}(c) implies, for any matrix
$\vB \in \C^{n \times n}$ independent of $\vg_i$,
\begin{equation}\label{eq:Agbound}
\|\vB\vg_i\|^2=\vg_i^\top \vB^*\vB \vg_i \prec
\Tr \vSigma\vB^*\vB+\|\vB^*\vB\|_F \prec \|\vB\|_F^2.
\end{equation}
We have also, by Assumption \ref{assump:G}(c) and the Sherman-Morrison
formula \eqref{eq:ShermanMorrison},
\begin{align}
N^{-1}|\Tr \vR\vB-\Tr \vR^{(i)}\vB|
&=N^{-2}|1+N^{-1}\vg_i^\top \vR^{(i)}\vg_i|^{-1}
|\vg_i^\top \vR^{(i)}\vB\vR^{(i)}\vg_i|\nonumber\\
&\prec N^{-2}|1+N^{-1}\vg_i^\top \vR^{(i)}\vg_i|^{-1}
\Big(|\Tr \vSigma\vR^{(i)}\vB\vR^{(i)}|+\|\vR^{(i)}\vB\vR^{(i)}\|_F\Big)\nonumber\\
&\prec N^{-2}|1+N^{-1}\vg_i^\top \vR^{(i)}\vg_i|^{-1}\|\vR^{(i)}\vB\|_F 
\|\vR^{(i)}\|_F.
\label{eq:mdiffestimate}
\end{align}

Define
$d_i=d_{i,1}+d_{i,2}+d_{i,3}+d_{i,4}$ where
\begin{equation}\label{eq:di14}
\begin{aligned}
d_{i,1}&=N^{-1}\vg_i^\top \vR^{(i)}\vA(\vI+m_{\tilde \vK}\vSigma)^{-1}
\vg_i-N^{-1}\vg_i^\top \vR^{(i)}\vA(\vI+\tilde m_{\vK}^{(i)}\vSigma)^{-1}\vg_i,\\
d_{i,2}&=N^{-1}\vg_i^\top \vR^{(i)}\vA(\vI+\tilde m_{\vK}^{(i)}\vSigma)^{-1}
\vg_i-N^{-1}\Tr \vSigma \vR^{(i)}\vA(\vI+\tilde m_{\vK}^{(i)}\vSigma)^{-1},\\
d_{i,3}&=N^{-1}\Tr \vSigma \vR^{(i)}\vA(\vI+\tilde m_{\vK}^{(i)}\vSigma)^{-1}
-N^{-1}\Tr \vSigma \vR\vA(\vI+\tilde m_{\vK}^{(i)}\vSigma)^{-1},\\
d_{i,4}&=N^{-1}\Tr \vSigma \vR\vA(\vI+\tilde m_{\vK}^{(i)}\vSigma)^{-1}
-N^{-1}\Tr \vSigma \vR\vA(\vI+m_{\tilde \vK}\vSigma)^{-1}.
\end{aligned}
\end{equation}
Applying the identity $\vA^{-1}-\vB^{-1}=\vA^{-1}(\vB-\vA)\vB^{-1}$,
the definition of $\tilde m_{\vK}^{(i)}$ in \eqref{eq:leaveoneout},
and the bounds \eqref{eq:Agbound} and \eqref{eq:mdiffestimate} (the latter with
$\vB=\vI$),
\begin{align}
|d_{i,1}| &\leq N^{-1} \|\vg_i^\top \vR^{(i)}\vA\|
\|(\vI+m_{\tilde \vK}\vSigma)^{-1}\| \|(\tilde m_{\vK}^{(i)}-m_{\tilde
\vK})\vSigma\| \|(\vI+\tilde m_{\vK}^{(i)}\vSigma)^{-1}\| \|\vg_i\|\nonumber\\
&\prec N^{-5/2} |1+N^{-1}\vg_i^\top \vR^{(i)}\vg_i|^{-1} 
\|\vR^{(i)}\vA\|_F \|\vR^{(i)}\|_F^2 \|(\vI+m_{\tilde \vK}\vSigma)^{-1}\|
\|(\vI+\tilde m_{\vK}^{(i)}\vSigma)^{-1}\|.\label{eq:di1}
\end{align}
Applying Assumption \ref{assump:G}(c),
\begin{align}
|d_{i,2}| &\prec N^{-1} \|\vR^{(i)}\vA(\vI+\tilde m_{\vK}^{(i)}\vSigma)^{-1}\|_F
\leq N^{-1} \|\vR^{(i)}\vA\|_F \|(\vI+\tilde m_{\vK}^{(i)}\vSigma)^{-1}\|.\label{eq:di2}
\end{align}
Applying the Sherman-Morrison identity \eqref{eq:ShermanMorrison},
$|\Tr \vu\vv^\top| \leq \|\vu\| \|\vv\|$, and \eqref{eq:Agbound},
\begin{align}
|d_{i,3}| & \leq N^{-2} |1+N^{-1}\vg_i^\top \vR^{(i)}\vg_i|^{-1}
\|\vSigma \vR^{(i)}\vg_i\| \|\vg_i^\top \vA
\vR^{(i)}(\vI+\tilde m_{\vK}^{(i)}\vSigma)^{-1}\|\nonumber\\
&\prec N^{-2} |1+N^{-1}\vg_i^\top \vR^{(i)} \vg_i|^{-1}
\|\vR^{(i)}\vA\|_F \|\vR^{(i)}\|_F \|(\vI+\tilde m_{\vK}^{(i)}\vSigma)^{-1}\|.\label{eq:di3}
\end{align}
Finally, applying $\vA^{-1}-\vB^{-1}=\vA^{-1}(\vB-\vA)\vB^{-1}$,
\eqref{eq:mdiffestimate} (with $\vB=\vI$), and 
$|\Tr \vA\vB| \leq \|\vA\|_F\|\vB\|_F \leq \sqrt{N} \|\vA\|_F\|\vB\|$,
\begin{align}
|d_{i,4}| &=N^{-1} \Big|\Tr \vSigma \vR\vA
(\vI+\tilde m_{\vK}^{(i)}\vSigma)^{-1} (\tilde m_{\vK}^{(i)}-m_{\tilde \vK})\vSigma
(\vI+m_{\tilde \vK}\vSigma)^{-1}\Big|\nonumber\\
&\prec N^{-5/2} |1+N^{-1}\vg_i^\top \vR^{(i)}\vg_i|^{-1} 
\|\vR\vA\|_F \|\vR\|_F^2
\|(\vI+\tilde m_{\vK}^{(i)}\vSigma)^{-1}\| 
\|(\vI+m_{\tilde \vK}\vSigma)^{-1}\|.\label{eq:di4}
\end{align}

For the current proof, we apply \eqref{eq:detequividentity} and the definitions
\eqref{eq:di14} with $\vA=\vI$.
Recalling $\Tr \vR=n\,m_{\vK}=N\,m_{\tilde \vK}+(n-N)(-1/z)$ and
rearranging \eqref{eq:detequividentity} with $\vA=\vI$, we get the identity
\begin{equation}\label{eq:Deltaidentity}
z_N(m_{\tilde \vK})-z={-}\frac{1}{m_{\tilde \vK}}
\cdot \frac{1}{N}\sum_{i=1}^N \frac{d_i}{1+N^{-1}\vg_i^\top
\vR^{(i)}\vg_i}
\end{equation}
where $z_N(m)=-(1/m)+N^{-1}\Tr \vSigma(\vI+m\vSigma)^{-1}$ is the function
defined in \eqref{eq:z_N_def}. For any $z=x+i\eta$ with $\eta>0$, we have
\begin{align}
|z(1+N^{-1}\vg_i^\top \vR^{(i)}\vg_i)|
&\geq \Im [z(1+N^{-1}\vg_i^\top \vR^{(i)}\vg_i)] \geq \Im z=\eta,
\label{eq:prelimbound1}\\
\max(\|\vR\|_F,\|\vR^{(i)}\|_F) &\leq N^{1/2}\max(\|\vR\|,\|\vR^{(i)}\|)
\leq N^{1/2}\eta^{-1}.\label{eq:prelimbound2}
\end{align}
Here, the second inequalities of both \eqref{eq:prelimbound1} and
\eqref{eq:prelimbound2} follow from the spectral representations of
$\vR,\vR^{(i)}$, i.e.\ writing $(\lambda_j,\vv_j)_{j=1}^n$ for the
eigenvalues and unit eigenvectors of $\vK^{(i)}$, we have
\begin{align*}
\Im [z\vg_i^\top \vR^{(i)}\vg_i]
&=\Im \left[z \vg_i^\top\left(\sum_{j=1}^n
\frac{1}{\lambda_j-z}\vv_j\vv_j^\top\right)
\vg_i\right]
=\sum_{j=1}^n \Im \frac{z}{\lambda_j-z} \cdot (\vg_i^\top \vv_j)^2\\
&=\sum_{j=1}^n \frac{\lambda_j \Im z}{|\lambda_j-z|^2} \cdot (\vg_i^\top \vv_j)^2
\geq 0,\\
\|\vR^{(i)}\|&=\left\|\sum_{j=1}^n
\frac{1}{\lambda_j-z}\vv_j\vv_j^\top\right\|=\max_{j=1}^n |\lambda_j-z|^{-1}
\leq \eta^{-1},
\end{align*}
and similarly for $\|\vR\|$. In particular, \eqref{eq:prelimbound1} and
\eqref{eq:prelimbound2} imply
\begin{equation}\label{eq:prelimbounds12}
(1+N^{-1}\vg_i^\top \vR^{(i)}\vg_i)^{-1} \prec \eta^{-1},
\qquad \|\vR\|_F,\|\vR^{(i)}\|_F \prec N^{1/2}\eta^{-1}.
\end{equation}
Next, observe that if $m(z)=\int
\frac{1}{\lambda-z}d\mu(\lambda)$ is the Stieltjes transform of any probability
measure $\mu$ supported on $[-B,B]$, then for $z=x+i\eta$ with $\eta>0$ and
$|z| \leq \eps^{-1}$, we have
\[\Im m(z)=\int \frac{\eta}{|\lambda-z|^2}d\mu(\lambda) \geq c\eta, \quad
|\Re m(z)| \leq \int \frac{|\lambda-x|}{|\lambda-z|^2}d\mu(\lambda)
\leq (C/\eta)\Im m(z)\]
for some constants $C,c>0$ depending on $\eps,B$.
Consequently, for any $\lambda \geq 0$, either
$\lambda \cdot |\Re m(z)|<1/2$ or $\lambda \cdot \Im m(z) \geq 2\eta/C$,
so $|1+\lambda m(z)| \geq \max(2,2\eta/C)$. By Assumption \ref{assump:G}(b)
and Weyl's inequality, we have $\bI\{\|\vK\|>B\} \prec 0$ and
$\bI\{\|\vK^{(i)}\|>B\} \prec 0$, and on the event where $\|\vK\|,\|\vK^{(i)}\|
\leq B$, we have that $m_{\tilde \vK},\tilde m_{\vK}^{(i)}$ are Stieltjes
transforms of probability measures supported on $[-B,B]$. Thus, this implies
\begin{equation}\label{eq:prelimbound3}
|m_{\tilde \vK}|^{-1} \leq |\Im m_{\tilde \vK}|^{-1} \prec \eta^{-1},
\quad \max(\|(\vI+m_{\tilde \vK}\vSigma)^{-1}\|,
\|(\vI+\tilde m_{\vK}^{(i)}\vSigma)^{-1}\|) \prec \eta^{-1}.
\end{equation}
Applying these bounds \eqref{eq:prelimbounds12} and
\eqref{eq:prelimbound3} to \eqref{eq:di1}--\eqref{eq:di4},
we get $d_i \prec N^{-1}\eta^{-6}+N^{-1/2}\eta^{-2} \leq 2N^{-1/2}\eta^{-2}$ for
$\eta \geq N^{-1/11}$. Then, applying these bounds 
\eqref{eq:prelimbounds12} and \eqref{eq:prelimbound3} also to
\eqref{eq:Deltaidentity}, we get
\begin{equation}\label{eq:zNestimateprelim}
z_N(m_{\tilde \vK})-z \prec \frac{1}{\sqrt{N}\eta^4}.
\end{equation}

The proof is completed by the following stability argument: When
$\eta \geq N^{-1/11}$, we have $1/(\sqrt{N} \eta^4) \ll \eta=\Im z$,
so \eqref{eq:zNestimateprelim} implies in particular that
\begin{equation}\label{eq:stabilityevent}
\bI\{z_N(m_{\tilde \vK}) \notin \C^+\} \prec 0.
\end{equation}
On the event $z_N(m_{\tilde \vK}) \in \C^+$, recalling the implicit definition
of $\tilde m_N:\C^+ \to \C^+$ by \eqref{eq:tildemn}, 
the value $\tilde m_N(z_N(m_{\tilde \vK}))$ must be the unique root
$u \in \C^+$ to the equation
\[z_N(m_{\tilde \vK})=-\frac{1}{u}+\gamma_N \int \frac{\lambda}{1+\lambda
u}d\nu_N(\lambda),\]
i.e.\ to the equation $z_N(m_{\tilde \vK})=z_N(u)$. This equation is
satisfied by $u=m_{\tilde \vK} \in \C^+$, so we deduce that
$\tilde m_N(z_N(m_{\tilde \vK}))=m_{\tilde \vK}$.
Then, applying that $z \in U_N(\eps)$ and that $\tilde m_N:\C^+ \to \C^+$
is $(4/\eps^2)$-Lipschitz over the domain $U_N(\eps/2)$,
we obtain from \eqref{eq:zNestimateprelim} that
\[\bI\{z_N(m_{\tilde \vK}) \in \C^+\}\Big(m_{\tilde \vK}-\tilde m_N(z)\Big)
=\bI\{z_N(m_{\tilde \vK}) \in \C^+\}\Big(\tilde m_N(z_N(m_{\tilde \vK}))-\tilde
m_N(z)\Big) \prec \frac{1}{\sqrt{N} \eta^4}.\]
Together with \eqref{eq:stabilityevent}, this yields the lemma.
\end{proof}

\begin{coro}\label{coro:Fnormbound}
Fix any $\eps>0$, and suppose Assumption \ref{assump:G} holds.
Then there is a constant $C>0$ such that uniformly over $z \in U_N(\eps)$
with $\Im z \geq N^{-1/11}$,
\[\bI\{\|\vR(z)\|_F>C\sqrt{N}\} \prec 0.\]
\end{coro}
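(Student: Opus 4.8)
The plan is to express the squared Frobenius norm of the resolvent through the imaginary part of the Stieltjes transform of $\vK$, and then control that quantity using the preliminary estimate already established in Lemma \ref{lemma:prelimestimate}. Writing $z = x + i\eta$ with $\eta = \Im z \geq N^{-1/11}$ and using the spectral decomposition $\vK = \sum_{j=1}^n \lambda_j(\vK)\vv_j\vv_j^\top$, one has the elementary identity
\[\|\vR(z)\|_F^2 = \sum_{j=1}^n \frac{1}{|\lambda_j(\vK) - z|^2}
= \frac{1}{\eta}\sum_{j=1}^n \Im\frac{1}{\lambda_j(\vK) - z}
= \frac{n}{\eta}\,\Im m_{\vK}(z).\]
So it suffices to show that, on an event whose complement is $\prec 0$ uniformly over $z \in U_N(\eps)$ with $\Im z \geq N^{-1/11}$, one has $\Im m_{\vK}(z) \leq C\eta$; combined with $n < CN$ from Assumption \ref{assump:G}(a) this gives $\|\vR(z)\|_F^2 \leq C'N$ and hence the claim.

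To bound $\Im m_{\vK}(z)$, I would pass to $m_{\tilde \vK}(z) = N^{-1}\Tr(\widetilde \vK - z\vI)^{-1}$ via the exact relation $m_{\tilde \vK}(z) = \gamma_N m_{\vK}(z) + (1-\gamma_N)(-1/z)$ from \eqref{eq:m_tildeK}, so that $\Im m_{\vK}(z) = \gamma_N^{-1}\big(\Im m_{\tilde \vK}(z) + (\gamma_N - 1)\,\eta/|z|^2\big)$. Since $c < \gamma_N < C$ and $|z| \geq \dist{z,\cS_N} \geq \eps$ (because $0 \in \cS_N$), the last term is bounded by a constant multiple of $\eta$, so it remains to bound $\Im m_{\tilde \vK}(z)$ by a constant times $\eta$. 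For this I would invoke Lemma \ref{lemma:prelimestimate}, which gives $m_{\tilde \vK}(z) - \tilde m_N(z) \prec 1/(\sqrt N\,\eta^4)$ on the relevant range, together with Proposition \ref{prop:basic_mn}, which gives $|\Im \tilde m_N(z)| \leq C\eta$.

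The one point that needs a moment's care — and the reason the hypothesis carries the threshold $N^{-1/11}$ — is checking that the approximation error is itself $O(\eta)$: when $\eta \geq N^{-1/11}$ we have $\eta^5 \geq N^{-5/11} \geq N^{-1/2}$, hence $1/(\sqrt N\,\eta^4) \leq \eta/(\sqrt N\,\eta^5) \leq \eta\,N^{-1/22}$, which comfortably absorbs the $N^\eps$ slack implicit in $\prec$. Thus $\bI\{|\Im m_{\tilde \vK}(z) - \Im \tilde m_N(z)| > \eta\} \prec 0$ uniformly over this range of $z$, and on the complementary event $\Im m_{\tilde \vK}(z) \leq (C+1)\eta$; chaining the estimates above then yields $\Im m_{\vK}(z) \leq C_1\eta$ and hence $\|\vR(z)\|_F \leq C_2\sqrt N$, as desired. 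There is no substantive obstacle beyond this bookkeeping of powers of $\eta$; the real work has already been packaged into Lemma \ref{lemma:prelimestimate}, and this corollary is the short consequence needed so that the Frobenius-norm conditions in the fluctuation averaging lemma (with $\Psi_N(\vGamma) = C/\sqrt N$) can later be verified.
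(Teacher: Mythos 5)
Your proof is correct and follows essentially the same path as the paper's: both hinge on the spectral identity $\|\vR(z)\|_F^2=(n/\eta)\,\Im m_{\vK}(z)$ and then invoke Lemma \ref{lemma:prelimestimate} to reduce matters to a deterministic $O(\eta)$ bound (the paper bounds $\Im m_N(z)$ directly via $\int \eta/|\lambda-z|^2\,d\mu_N \le \eta\eps^{-2}$, whereas you pass through $\tilde m_N$ and cite Proposition \ref{prop:basic_mn}, but these are the same estimate). Your bookkeeping $\eta^5\ge N^{-5/11}\ge N^{-1/2}$, giving $1/(\sqrt N\,\eta^4)\le \eta\,N^{-1/22}$, matches the paper's $1/(\sqrt N\,\eta^4)\ll\eta$.
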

\begin{proof}
Since $m_{\tilde \vK}(z)=\gamma_N m_{\vK}(z)+(1-\gamma_N)(-1/z)$ and
$\tilde m_N(z)=\gamma_N m_N(z)+(1-\gamma_N)(-1/z)$, Lemma
\ref{lemma:prelimestimate} implies also
\[m_{\vK}(z)-m_N(z) \prec \frac{1}{\sqrt{N}\eta^4} \ll \eta.\]
Observe that
$\Im m_N(z)=\int \eta/|\lambda-z|^2 d\mu_N(\lambda) \leq \eta \eps^{-2}$
for $z \in U_N(\eps)$, so
$\bI\{\Im m_{\vK}(z)>(1+\eps^{-2})\eta\} \prec 0$.
Then by the identity $\|\vR(z)\|_F^2=
\sum_i 1/|z-\lambda_i(\vK)|^2=(n/\eta)\Im m_{\vK}(z)$, we get
$\bI\{\|\vR(z)\|_F>C\sqrt{N}\} \prec 0$ for a constant $C=C(\eps)>0$, as desired.
\end{proof}

We may now apply Corollary \ref{coro:Fnormbound} and the fluctuation averaging
result of Lemma \ref{lemma:fluctuationavg} 
to improve the estimate of Lemma \ref{lemma:prelimestimate} to the following
result.

\begin{lemma}\label{lemma:improvedestimate}
Fix any $\eps>0$, and suppose Assumption \ref{assump:G} holds.
Then, uniformly over $z=x+i\eta \in U_N(\eps)$ with $\Im z \geq N^{-1/11}$,
\[m_{\tilde \vK}(z)-\tilde m_N(z) \prec \frac{1}{N}.\]
\end{lemma}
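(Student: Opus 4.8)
The plan is to upgrade the ``master identity'' \eqref{eq:Deltaidentity}, $z_N(m_{\tilde\vK})-z=-m_{\tilde\vK}^{-1}\cdot N^{-1}\sum_i d_i/\beta_i$ (with $\beta_i=1+N^{-1}\vg_i^\top\vR^{(i)}\vg_i$ and $d_i=\sum_{k=1}^4 d_{i,k}$ as in \eqref{eq:di14} with $\vA=\vI$), from the preliminary scale $N^{-1/2}\eta^{-4}$ down to $N^{-1}$, and then transfer this to $m_{\tilde\vK}-\tilde m_N$ by the stability argument at the end of the proof of Lemma \ref{lemma:prelimestimate}. Since Lemma \ref{lemma:prelimestimate} gives $|m_{\tilde\vK}-\tilde m_N|\prec N^{-1/2}\eta^{-4}\leq N^{-3/22}\ll\eta$ for $\eta\geq N^{-1/11}$, the point $z_N(m_{\tilde\vK})$ lies in $\C^+\cap U_N(\eps/2)$ with high probability, where $\tilde m_N(z_N(m_{\tilde\vK}))=m_{\tilde\vK}$ and $\tilde m_N$ is $(4/\eps^2)$-Lipschitz; hence it suffices to show $z_N(m_{\tilde\vK})-z\prec N^{-1}$ uniformly over $z=x+i\eta\in U_N(\eps)$ with $\eta\geq N^{-1/11}$. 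The new ingredients are Corollary \ref{coro:Fnormbound}, giving $\|\vR\|_F,\|\vR^{(i)}\|_F\prec\sqrt N$ (the leave-one-out version following from the same proof applied to $\vG^{(i)}$ or from Sherman--Morrison), and the fluctuation averaging Lemma \ref{lemma:fluctuationavg}.

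First I would establish that, on a high-probability event, each of $|m_{\tilde\vK}|^{-1}$, $\|(\vI+m_{\tilde\vK}\vSigma)^{-1}\|$, $\|(\vI+\tilde m_{\vK}^{(i)}\vSigma)^{-1}\|$ and $|\beta_i|^{-1}$ is bounded by a constant uniformly in $i$. For the first three this follows from $m_{\tilde\vK}\approx\tilde m_N$ (Lemma \ref{lemma:prelimestimate}), the bound $\tilde m_{\vK}^{(i)}-m_{\tilde\vK}=o(1)$ (Sherman--Morrison, Assumption \ref{assump:G}(c), $\|\vR^{(i)}\|_F\prec\sqrt N$), and the estimates $|\tilde m_N(z)|\geq c$, $\min_{\lambda\in\supp{\nu_N}}|1+\lambda\tilde m_N(z)|\geq c$ of Proposition \ref{prop:basic_mn} together with $\lambda_i(\vSigma)\in\supp{\nu_N}$. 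The bound $|\beta_i|^{-1}\prec1$ is the one point that needs a short bootstrap: \eqref{eq:prelimbound1} gives the crude $|\beta_i|\geq\eta/|z|\geq\eps\eta$, so $|\beta_i|^{-1}\prec N^{1/11}$; combining this with $\beta_i-\bar\beta_i\prec N^{-1/2}$ for the $\vg_i$-independent $\bar\beta_i=1+N^{-1}\Tr\vSigma\vR^{(i)}$, the Sherman--Morrison bound $\bar\beta_i-\bar\beta_j\prec N^{-10/11}$, and the averaged identity \eqref{eq:mnidentity}, $N^{-1}\sum_j\beta_j^{-1}=-zm_{\tilde\vK}=-z\tilde m_N+o(1)=:\omega+o(1)$ with $|\omega|\asymp1$, one obtains $\beta_i^{-1}=\omega+o(1)$, hence $|\beta_i|\asymp1$; more careful bookkeeping gives the quantitative $\beta_i^{-1}-\omega\prec\Lambda+N^{-1/2}$, writing $\Lambda:=|m_{\tilde\vK}-\tilde m_N|$.

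With these $O(1)$ bounds and $\|\vR\|_F,\|\vR^{(i)}\|_F\prec\sqrt N$, the estimates \eqref{eq:di1}, \eqref{eq:di3}, \eqref{eq:di4} give $d_{i,1}\prec N^{-5/2}\|\vR^{(i)}\|_F^3\prec N^{-1}$, $d_{i,3}\prec N^{-2}\|\vR^{(i)}\|_F^2\prec N^{-1}$ and $d_{i,4}\prec N^{-5/2}\|\vR\|_F^3\prec N^{-1}$, so these terms contribute $\prec N^{-1}$ to $N^{-1}\sum_i d_i/\beta_i$. The remaining term $d_{i,2}=N^{-1}(1-\E_{\vg_i})[\vg_i^\top\vR^{(i)}(\vI+\tilde m_{\vK}^{(i)}\vSigma)^{-1}\vg_i]$ has only the naive bound $d_{i,2}\prec N^{-1/2}$, and here fluctuation averaging is the crux. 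Writing $\beta_i^{-1}=\omega+(\beta_i^{-1}-\omega)$ with $\omega=-z\tilde m_N$ deterministic, the correction contributes $N^{-1}\sum_i d_{i,2}(\beta_i^{-1}-\omega)\prec N^{-1/2}(\Lambda+N^{-1/2})$ by Step 1, while the main term equals $\omega N^{-1}$ times the left side of \eqref{eq:FAexpression} with $\vGamma=z\vI$ and $\vA=\vI$. Lemma \ref{lemma:fluctuationavg} applies here with $\Phi_N=\Psi_N=C\sqrt N$ and the constants $C_0,c_0$ of Step 1 — its hypotheses, the bounds $\|\vR^{(S)}\|_F\leq C\sqrt N$, $\|(\vI+\tilde m_{\vK}^{(S)}\vSigma)^{-1}\|\leq C_0$ and $|1+N^{-1}\vg_j^\top\vR^{(S)}\vg_j|\geq c_0$ for $|S|\leq L$, follow from the Step-1 reasoning for leave-$|S|$-out subsystems, which differ from the $S=\emptyset$ case by $O(N^{-1})$ Sherman--Morrison corrections — and bounds the bracket by $\prec\max(\Psi_N/N,N^{-1/2})\Phi_N\prec1$, so the main term is $\prec N^{-1}$. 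Collecting, $z_N(m_{\tilde\vK})-z\prec N^{-1}+N^{-1/2}\Lambda$, whence $\Lambda\prec N^{-1}+N^{-1/2}\Lambda$; iterating this twice from $\Lambda\prec N^{-3/22}$ gives $\Lambda\prec N^{-1}$.

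I expect the main obstacle to be Step 1 — upgrading the preliminary estimate to uniform $O(1)$ control of the auxiliary scalar and matrix inverses, in particular $|\beta_i|^{-1}\prec1$, which is genuinely a bootstrap rather than a one-line estimate — together with checking the hypotheses of Lemma \ref{lemma:fluctuationavg} uniformly over all leave-$O(1)$-out subsystems. Given these, the bounds on $d_{i,1},d_{i,3},d_{i,4}$ are routine, and the gain on $d_{i,2}$ is exactly what the fluctuation averaging lemma provides.
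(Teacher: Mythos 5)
Your proposal is correct and follows essentially the same route as the paper: the key upgrades are $\|\vR\|_F,\|\vR^{(S)}\|_F\prec\sqrt N$ from Corollary~\ref{coro:Fnormbound}, the $O(1)$ control on $|m_{\tilde\vK}|^{\pm1}$, $\|(\vI+\tilde m_{\vK}^{(S)}\vSigma)^{-1}\|$, and $|1+N^{-1}\vg_j^\top\vR^{(S)}\vg_j|^{-1}$ (verifying the hypotheses of Lemma~\ref{lemma:fluctuationavg}), which render $d_{i,1},d_{i,3},d_{i,4}\prec N^{-1}$ and reduce the problem to the $d_{i,2}$ sum, handled by fluctuation averaging. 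The one organizational difference is that you center $\beta_i^{-1}$ at the deterministic $\omega=-z\tilde m_N(z)$, producing a self-referential error $N^{-1/2}\Lambda$ and forcing a two-step iteration of the stability argument; the paper instead centers at the random but $i$-independent $(1+N^{-1}\Tr\vSigma\vR)^{-1}$, which differs from each $\beta_i^{-1}$ by $O_\prec(N^{-1/2})$, so the $d_{i,2}$ correction is $\prec N^{-1}$ outright and a single stability pass suffices. Both are sound.
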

\begin{proof}
We derive an improved estimate for \eqref{eq:Deltaidentity}. First,
combining Lemma \ref{lemma:prelimestimate} with the bounds for $\tilde m_N(z)$
in Proposition \ref{prop:basic_mn}, there are constants $C_0,c_0>0$ for which
\begin{equation}\label{eq:tildembounds}
\bI\{|m_{\tilde \vK}|>C_0\} \prec 0,
\qquad \bI\{|m_{\tilde \vK}|<c_0\} \prec 0,
\qquad \bI\{\|(\vI+m_{\tilde \vK}\vSigma)^{-1}\|>C_0\} \prec 0
\end{equation}
uniformly over $z \in U_N(\eps)$ with $\Im z \geq N^{-1/11}$. Next, applying
Assumption \ref{assump:G}(c), we have also uniformly over $i \in [N]$,
\begin{align*}
N^{-1}\vg_i^\top \vR^{(i)}\vg_i
&=N^{-1}\Tr \vSigma \vR^{(i)}+\SD{N^{-1}\|\vR^{(i)}\|_F}\\
&=N^{-1}\Tr \vSigma \vR+\SD{N^{-2}|1+N^{-1}\vg_i^\top \vR^{(i)}\vg_i|^{-1}
\|\vR^{(i)}\|_F^2}+\SD{N^{-1}\|\vR^{(i)}\|_F}
\end{align*}
where the second line follows from \eqref{eq:mdiffestimate} applied with
$\vB=\vSigma$. Applying $\|\vR^{(i)}\|_F \prec N^{1/2}$ by Corollary
\ref{coro:Fnormbound} and the estimate $|1+N^{-1}\vg_i^\top \vR^{(i)}\vg_i|^{-1}
\prec \eta^{-1}$ from \eqref{eq:prelimbounds12}, this gives
\begin{equation}\label{eq:1plusgRgbound}
1+N^{-1}\vg_i^\top \vR^{(i)}\vg_i=1+N^{-1}\Tr \vSigma\vR+\SD{N^{-1/2}}.
\end{equation}
Then, applying this and 
$|1+N^{-1}\vg_i^\top \vR^{(i)}\vg_i|^{-1} \prec \eta^{-1}$ 
to \eqref{eq:mnidentity},
\[m_{\tilde \vK}=-\frac{1}{z} \cdot \frac{1}{1+N^{-1}\Tr \vSigma\vR}
+\SD{N^{-1/2}\eta^{-2}}.\]
Together with the first bound of \eqref{eq:tildembounds} and the bound $|z| \leq
\eps^{-1}$ for $z \in U_N(\eps)$, this implies
for a constant $c_0>0$ that $\bI\{|1+N^{-1}\Tr \vSigma\vR|<c_0\} \prec 0$,
and thus $\bI\{|1+N^{-1}\vg_i^\top \vR^{(i)}\vg_i|<c_0\} \prec 0$.

Applying Corollary \ref{coro:Fnormbound} and the above arguments now
for $\vK^{(S)}$ and $\vR^{(S)}$ in place of $\vK$
and $\vR$, we obtain for any fixed $L \geq 1$ and
some constants $C_0,c_0>0$, uniformly over $S \subset [N]$ with $|S| \leq L$,
over $i \in S$, and over $z \in U_N(\eps)$ with $\Im z \geq N^{-1/11}$,
\begin{equation}\label{eq:checkFAconditions}
\begin{gathered}
\bI\{|\tilde m_{\vK}^{(S)}|>C_0\} \prec 0,
\quad \bI\{|\tilde m_{\vK}^{(S)}|<c_0\} \prec 0,
\quad \bI\{\|(\vI+\tilde m_{\vK}^{(S)}\vSigma)^{-1}\|>C_0\} \prec 0,\\
\bI\{\|\vR^{(S)}\|_F>C\sqrt{N}\} \prec 0,
\quad \bI\{|1+N^{-1}\Tr \vSigma\vR^{(S)}|<c_0\} \prec 0,\\
\bI\{|1+N^{-1}\vg_i^\top \vR^{(S)}\vg_i|<c_0\} \prec 0.
\end{gathered}
\end{equation}
(We remark that a direct application of the above arguments for $\vK^{(S)}$
yields the first three estimates of \eqref{eq:checkFAconditions} for the
quantity $\frac{N}{N-|S|} \tilde m_{\vK}^{(S)}=\frac{1}{N-|S|}\Tr
\vR^{(S)}+\frac{n}{N-|S|}(-1/z)$ in place of $\tilde m_{\vK}^{(S)}$,
and the estimates for $\tilde m_{\vK}^{(S)}$ then follow for slightly modified
constants $C_0,c_0>0$ because $|S| \leq L$.)

Finally, applying \eqref{eq:1plusgRgbound} and \eqref{eq:checkFAconditions}
back to \eqref{eq:Deltaidentity} and \eqref{eq:di1}--\eqref{eq:di4} with
$\vA=\vI$, we get $|d_{i,1}|,|d_{i,3}|,|d_{i,4}| \prec N^{-1}$,
$|d_{i,2}| \prec N^{-1/2}$, and
\begin{align*}
|z_N(m_{\tilde \vK})-z| &\prec \left|\frac{1}{N}\sum_{i=1}^N \frac{d_{i,2}}
{1+N^{-1}\vg_i^\top \vR^{(i)}\vg_i}\right|+\SD{N^{-1}}\\
&=\frac{1}{N} \cdot \frac{1}{1+N^{-1}\Tr \vSigma\vR}
\cdot \left|\sum_{i=1}^N d_{i,2}\right|+\SD{N^{-1}}.
\end{align*}
The statements of \eqref{eq:checkFAconditions} verify the needed assumptions
of Lemma \ref{lemma:fluctuationavg} with $\vA=\vI$, $\vGamma=z\vI$, and
$\Phi_N=\Psi_N=C\sqrt{N}$. Then Lemma \ref{lemma:fluctuationavg}
gives $\sum_{i=1}^N d_{i,2} \prec 1$, and hence
\[|z_N(m_{\tilde \vK})-z| \prec N^{-1}.\]
The proof is then completed by the same stability argument as in the conclusion
of the proof of Lemma \ref{lemma:prelimestimate}.
\end{proof}

\begin{proof-of-theorem}[\ref{thm:no_outlier}]
We apply the idea of \cite[Section 6]{bai1998no}. Let $z=x+i\eta$, where
$\dist{x,\cS_N} \geq \eps$ and $\eta=N^{-1/11}$.
Taking imaginary part in the estimate $m_{\tilde \vK}(z)-\tilde m_N(z) \prec
N^{-1}$ of Lemma \ref{lemma:improvedestimate} and multiplying by
$\eta$ gives
\[\frac{1}{N}\sum_{j=1}^N \frac{\eta^2}{(\lambda_j(\widetilde \vK)-x)^2+\eta^2}
-\int \frac{\eta^2}{(\lambda-x)^2+\eta^2} d\tilde\mu_N(\lambda) \prec
\frac{\eta}{N}.\]
Fix any integer $P \geq 1$, and apply this instead
at the point $z=x+i\sqrt{p}\,\eta$ for each $p=1,\ldots,P$. Then
\[\frac{1}{N}\sum_{j=1}^N \frac{\eta^2}{(\lambda_j(\widetilde \vK)-x)^2+p\eta^2}
-\int \frac{\eta^2}{(\lambda-x)^2+p\eta^2} d\tilde\mu_N(\lambda) \prec
\frac{\eta}{N} \text{ for all } p=1,\ldots,P.\]
Taking successive finite differences using
\[\frac{1}{r-q+1}\left(\frac{1}{\prod_{p=q}^r (\lambda-x)^2+p\eta^2}
-\frac{1}{\prod_{p=q+1}^{r+1} (\lambda-x)^2+p\eta^2}\right)
=\frac{\eta^2}{\prod_{p=q}^{r+1} (\lambda-x)^2+p\eta^2},\]
we then obtain
\begin{equation}\label{eq:finitediffbound}
\frac{1}{N}\sum_{j=1}^N \frac{\eta^{2P}}{\prod_{p=1}^P
[(\lambda_j(\widetilde \vK)-x)^2+p\eta^2]}
-\int \frac{\eta^{2P}}{\prod_{p=1}^P[(\lambda-x)^2+p\eta^2]}
d\tilde\mu_N(\lambda) \prec \frac{\eta}{N}.
\end{equation}
Since $\dist{x,\cS_N} \geq \eps$, the second integral term of
\eqref{eq:finitediffbound} is bounded by $C\eta^{2P}$ for a
constant $C:=C(\eps,P)>0$. Thus, we get
\[\frac{1}{N}\sum_{j=1}^N \bI\{\lambda_j(\widetilde \vK) \in (x-\eta,x+\eta)\}
\leq \frac{C}{N}\sum_{j=1}^N \frac{\eta^{2P}}{\prod_{p=1}^P
[(\lambda_j(\widetilde \vK)-x)^2+p\eta^2]} \prec \frac{\eta}{N}+\eta^{2P}\]
where the first inequality holds
for a constant $C:=C(P)>0$. Finally, recalling $\eta=N^{-1/11}$ and taking
any $P \geq 6$, we get $\eta/N+\eta^{2P} \ll 1/N$, hence
\[\bI\big\{\text{there exists an eigenvalue of }\widetilde \vK
\text{ in } (x-\eta,x+\eta)\big\} \prec 0.\]
Recalling Assumption \ref{assump:G}(b) and
taking a union bound over $x$ belonging to a $\eta$-net of
$[-B,B] \setminus (\cS_N+(-\eps,\eps))$ (with cardinality at most $CN^{1/11}$),
we obtain
\[\bI\big\{\text{there exists an eigenvalue of }\widetilde \vK
\text{ in } \cS_N+(-\eps,\eps)\big\} \prec 0.\]
The theorem follows from the observation that $\vK$ has the same non-zero
eigenvalues as $\widetilde \vK$, and all 0 eigenvalues belong by definition to
$\cS_N$.
\end{proof-of-theorem}

\subsection{Deterministic equivalent for the
resolvent}\label{sec:deterministicequiv}

In this section, we prove Theorem \ref{thm:deterministic_equ}. 

\begin{lemma}\label{lemma:supportcomparea}
Suppose Assumption \ref{assump:G} holds. Let
\[\gamma_N^{(S)}=\frac{n}{N-|S|}, \qquad
\mu_N^{(S)}=\rho_{\gamma_N^{(S)}}^\MP \boxtimes \nu_N, \qquad
\tilde \mu_N^{(S)}=\gamma_N^{(S)} \mu_N^{(S)}+(1-\gamma_N^{(S)})\delta_0\]
be the analogues of $\gamma_N,\mu_N,\tilde \mu_N$ defined with the dimension
$N-|S|$ in place of $N$.
Then for any fixed $\eps>0$ and $L \geq 1$, all large $N$, and all
$S \subset [N]$ with $|S| \leq L$,
\[\supp{\tilde \mu_N^{(S)}} \subseteq \supp{\tilde \mu_N}+(-\eps,\eps)\]
\end{lemma}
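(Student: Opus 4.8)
The plan is to read the inclusion off the finite-$N$ Silverstein--Choi description of the support (the finite-$N$ analogue of Proposition~\ref{prop:inv_z}), treating the passage from $N$ to $N-|S|$ as a vanishing perturbation of the aspect ratio. The key point is that $\nu_N$, hence the excluded set $\cT_N$ of~\eqref{eq:zNdomain}, does not change: setting $z_N^{(S)}(\tilde m)={-}1/\tilde m+\gamma_N^{(S)}\int\tfrac{\lambda}{1+\lambda\tilde m}\,d\nu_N(\lambda)$ for the corresponding formal inverse and using $\gamma_N\int\tfrac{\lambda}{1+\lambda\tilde m}\,d\nu_N(\lambda)=z_N(\tilde m)+1/\tilde m$, one obtains on $\R\setminus\cT_N$ the exact identities
\[
z_N^{(S)}(\tilde m)=(1+\delta_N)z_N(\tilde m)+\frac{\delta_N}{\tilde m},
\qquad
(z_N^{(S)})'(\tilde m)=(1+\delta_N)z_N'(\tilde m)-\frac{\delta_N}{\tilde m^2},
\]
with $\delta_N:=(\gamma_N^{(S)}-\gamma_N)/\gamma_N=|S|/(N-|S|)\le L/(N-L)\to0$. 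Thus $z_N^{(S)}$ and $(z_N^{(S)})'$ are $O(\delta_N)$-perturbations of $z_N$ and $z_N'$ in any region where $|\tilde m|$ and $|z_N(\tilde m)|$ stay bounded and $\tilde m$ stays bounded away from $0$.

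First I would dispose of the trivial ranges of $x$. By the analogue of Proposition~\ref{prop:basic_mn} for the aspect ratio $\gamma_N^{(S)}$ (which also lies in a fixed compact subset of $(0,\infty)$ for large $N$), there is a constant $K=K(c,C)$ with $\supp{\tilde\mu_N^{(S)}}\subseteq[0,K]$; since the asserted inclusion for a small $\eps$ implies it for all larger $\eps$, I may assume $\eps<1/(K+1)$. Let $x\in\supp{\tilde\mu_N^{(S)}}$ with $\dist{x,\supp{\tilde\mu_N}}\ge\eps$; then $x\in[0,K]$, so $|x|\le\eps^{-1}$. If $|x|\ge\eps$, then $\dist{x,\cS_N}=\min(\dist{x,\supp{\tilde\mu_N}},|x|)\ge\eps$, so $x\in U_N(\eps)$ by~\eqref{eq:U_e}. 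If instead $x\in[0,\eps)$, then $0\notin\supp{\tilde\mu_N}$ (otherwise $\dist{x,\supp{\tilde\mu_N}}\le x<\eps$), which forces $\gamma_N>1$ and puts $x$ strictly below $\ell_N:=\inf\supp{\tilde\mu_N}>0$; here I would invoke monotonicity of the left edge: writing $\ell_N=z_N(\tilde m_1)$ for the smallest $\tilde m_1>0$ with $z_N'(\tilde m_1)=0$, the envelope identity $\frac{d}{d\gamma}\big(z_\gamma(\tilde m_1(\gamma))\big)=\int\frac{\lambda}{1+\lambda\tilde m_1(\gamma)}\,d\nu_N(\lambda)>0$ (the term $z_\gamma'(\tilde m_1)\,\tfrac{d\tilde m_1}{d\gamma}$ vanishes since $\tilde m_1$ is a critical point) gives $\inf\supp{\tilde\mu_N^{(S)}}\ge\ell_N>x$, so $x\notin\supp{\tilde\mu_N^{(S)}}$, a contradiction. (If $\vSigma$ is singular, so that $\nu_N$ carries an atom at $0$, this sub-case needs a minor variant.) It then remains to exclude every $x\in U_N(\eps)$ from $\supp{\tilde\mu_N^{(S)}}$.

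For such an $x$ I would set $\tilde m_x:=\tilde m_N(x)$ and harvest \emph{uniform} bounds. By the finite-$N$ analogue of Proposition~\ref{prop:inv_z}, $z_N(\tilde m_x)=x$ and $z_N'(\tilde m_x)>0$; by Proposition~\ref{prop:basic_mn}, $c<|\tilde m_x|<C$ and $\min_{\lambda\in\supp{\nu_N}}|1+\lambda\tilde m_x|\ge c$, so $\tilde m_x$ lies at distance at least $\rho_0=\rho_0(c,C)>0$ from $\cT_N$; and since $z_N$ and $\tilde m_N$ are mutual inverses, $z_N'(\tilde m_x)=1/\tilde m_N'(x)\ge\eps^2$ because $\tilde m_N'(x)=\int(\lambda-x)^{-2}\,d\tilde\mu_N(\lambda)\le\eps^{-2}$. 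On the real interval of radius $\rho_0/2$ about $\tilde m_x$ one has $|z_N''|\le M(c,C)$ (all of $|\tilde m|$, $|1+\lambda\tilde m|$, $\gamma_N$ being controlled), hence $z_N'\ge\eps^2/2$ on an interval of some fixed radius $r_0=r_0(\eps,c,C)>0$ about $\tilde m_x$; then for $N$ large the identities above give $(z_N^{(S)})'\ge\eps^2/4$ there together with $|z_N^{(S)}(\tilde m_x)-x|=\delta_N|x+1/\tilde m_x|\to0$, so the intermediate value theorem produces $\tilde m^\ast$ near $\tilde m_x$ with $z_N^{(S)}(\tilde m^\ast)=x$ and $(z_N^{(S)})'(\tilde m^\ast)>0$. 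The finite-$N$ analogue of Proposition~\ref{prop:inv_z} for the parameters $(\gamma_N^{(S)},\nu_N)$ then yields $x\notin\supp{\tilde\mu_N^{(S)}}$ --- the desired contradiction.

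The step I expect to be the main obstacle is exactly this last one: arranging that all of $K$, $\rho_0$, $M$, $r_0$, and every ``for all large $N$'' threshold depend only on $\eps,L,c,C$ and not on $N$ or $S$. This is what forces the route through Proposition~\ref{prop:basic_mn} for uniform control of $\tilde m_x$ and its separation from $\cT_N$, and through the elementary bound $\tilde m_N'(x)\le\eps^{-2}$ off the spectrum for the uniform lower bound on $z_N'(\tilde m_x)$; a soft continuity-of-supports argument would not be quantitative enough, since the support of $\rho_\gamma^\MP\boxtimes\nu$ need not depend continuously on $(\gamma,\nu)$.
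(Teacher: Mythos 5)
Your main argument is correct and takes a genuinely different route for the key positivity step. The paper works on the full interval $[\tilde m_-,\tilde m_+]=[\tilde m_N(x-\eps/2),\tilde m_N(x+\eps/2)]$, shows $z_N^{(S)}(\tilde m_-)<x<z_N^{(S)}(\tilde m_+)$ via the $\delta_N$-perturbation estimate, and then invokes \cite[Theorem~4.3]{silverstein1995analysis} (that $(z_N^{(S)})'(m_1)\geq0$ and $(z_N^{(S)})'(m_2)\geq0$ forces $(z_N^{(S)})'>0$ strictly on $[m_1,m_2]$) to conclude the root of $z_N^{(S)}=x$ has strictly positive derivative. You instead establish a quantitative lower bound $z_N'(\tilde m_x)=1/\tilde m_N'(x)\geq\eps^2$, bound $|z_N''|\leq M$ uniformly to propagate $z_N'\geq\eps^2/2$ to a fixed neighborhood of $\tilde m_x$, pass to $(z_N^{(S)})'\geq\eps^2/4$ via $\delta_N\to0$, and conclude by the intermediate value theorem. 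This is more elementary and self-contained, at the cost of explicit derivative calculations; both are valid.

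However, your restriction to $x\in U_N(\eps)$ introduces an unnecessary case distinction, and the separate argument for $x\in[0,\eps)$ (via the envelope identity for the left edge) is not needed. The bounds you are importing from Proposition~\ref{prop:basic_mn} for $z=x$ real with $\dist{x,\supp{\tilde\mu_N}}\geq\eps$ hold directly even when $|x|<\eps$: if $0\notin\supp{\tilde\mu_N}$ and $0\leq x<\eps$, then $x$ lies below the spectrum, so $\tilde m_N(x)=\int(\lambda-x)^{-1}\,d\tilde\mu_N(\lambda)$ is positive, bounded in $[1/(C_0+1),\,1/\eps]$, and satisfies $\min_{\lambda\in\supp{\nu_N}}|1+\lambda\tilde m_N(x)|\geq1$ trivially; the bound $\tilde m_N'(x)\leq\eps^{-2}$ also holds unchanged. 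So your main argument already applies for every $x\in[0,K]\setminus(\supp{\tilde\mu_N}+(-\eps,\eps))$ without carving out $[0,\eps)$. This is exactly what the paper does, considering directly $x\in[-C_0,C_0]\setminus(\supp{\tilde\mu_N}+(-\eps,\eps))$. Dropping the $U_N(\eps)$ detour would also eliminate the gap you flagged yourself (the left-edge monotonicity step needing a ``minor variant'' when $\nu_N$ has an atom at~$0$), and would shorten the proof.
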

\begin{proof}
Let $\cT_N$ and $z_N:\C \setminus \cT_N \to \C$ be as defined by
\eqref{eq:zNdomain} and \eqref{eq:z_N_def}. Define similarly
\[z_N^{(S)}(\tilde m)=-\frac{1}{\tilde m}+\gamma_N^{(S)}
\int \frac{\lambda}{1+\lambda \tilde m}d\nu_N(\lambda), \qquad
z_N^{(S)}:\C \setminus \cT_N \to \C.\]
We recall from Proposition \ref{prop:inv_z} that
$x \in \R \setminus \supp{\tilde\mu_N}$ if and only if there exists
$\tilde m \in \R \setminus \cT_N$ where $z_N(\tilde m)=x$ and
$z_N'(\tilde m)>0$; the analogous characterization holds for
$\R \setminus \supp{\tilde \mu_N^{(S)}}$ and $z_N^{(S)}(\tilde m)$.

Now fix any $\eps,L>0$. By Proposition \ref{prop:basic_mn}, there is a constant
$C_0>0$ such that $\supp{\tilde \mu_N^{(S)}} \subseteq [-C_0,C_0]$ for all $|S|
\leq L$ and all large $N$. Consider any $x \in [-C_0,C_0] \setminus
(\supp{\tilde \mu_N}+(-\eps,\eps))$. Then $[x-\eps/2,x+\eps/2] \subset \R
\setminus \supp{\tilde \mu_N}$, so $\tilde m_N$ is well-defined and increasing on
$[x-\eps/2,x+\eps/2]$.
Define $[\tilde m_-,\tilde m_+]=[\tilde m_N(x-\eps/2), \tilde m_N(x+\eps/2)]$.
Then Proposition \ref{prop:inv_z} implies that $z_N$ is increasing on
$[\tilde m_-,\tilde m_+]$, and $z_N([\tilde m_-,\tilde m_+])=[x-\eps/2,x+\eps/2]$.
Again by Proposition \ref{prop:basic_mn}, there is a constant $c>0$ such that,
for any such $x \in [-C_0,C_0] \setminus (\supp{\tilde \mu_N}+(-\eps,\eps))$, we
have
\[\min_{y \in [x-\eps/2,x+\eps/2]} \min_{\lambda \in
\supp{\nu_N}} |1+\lambda \tilde m_N(y)|>c.\]
This then implies that there is a constant $C>0$ for which
\[|z_N^{(S)}(\tilde m)-z_N(\tilde m)|
=|\gamma_N^{(S)}-\gamma_N| \cdot \left|\int \frac{\lambda}{1+\lambda \tilde
m}d\nu_N(\lambda)\right| \leq \frac{C}{N}<\eps/2\]
for all $\tilde m \in [\tilde m_-,\tilde m_+]$,
$|S| \leq L$, and large $N$. Then $z_N^{(S)}(\tilde m_-)<z_N(\tilde m_-)+\eps/2=x$
and $z_N^{(S)}(\tilde m_+)>z_N(\tilde m_+)-\eps/2=x$.
\cite[Theorem 4.3]{silverstein1995analysis} shows that if
$m_1,m_2 \in [\tilde m_-,\tilde m_+]$ satisfy ${z_N^{(S)}}'(m_1) \geq 0$ and
${z_N^{(S)}}'(m_2) \geq 0$, then ${z_N^{(S)}}'(m)>0$ strictly for all $m \in
[m_1,m_2]$. By this and the continuity and differentiability of
$z_N^{(S)}$ on $[\tilde m_-,\tilde m_+]$, there must be a point $\tilde m \in (\tilde
m_-,\tilde m_+)$ where $z_N^{(S)}(\tilde m)=x$ and ${z_N^{(S)}}'(\tilde m)>0$
strictly. Then Proposition \ref{prop:inv_z} implies that
$x \notin \supp{\tilde \mu_N^{(S)}}$. This holds for all
$x \in [-C_0,C_0] \setminus (\supp{\tilde \mu_N}+(-\eps,\eps))$, implying
$\supp{\tilde \mu_N^{(S)}} \subseteq \supp{\tilde \mu_N}+(-\eps,\eps)$ as desired.
\end{proof}

The following now applies Lemma \ref{lemma:supportcomparea} and
Theorem \ref{thm:no_outlier} to extend the
estimates (\ref{eq:checkFAconditions}) previously obtained over
$\{z \in U_N(\eps):\Im z \geq N^{-1/11}\}$ to all of $U_N(\eps)$.
\begin{lemma}\label{lemma:FAconditions}
Fix any $\eps>0$ and $L \geq 1$. Then for some constants $C_0,c_0>0$, uniformly
over $z \in U_N(\eps)$, $S \subset [N]$ with $|S| \leq L$, and $i \in S$,
we have
\[\begin{gathered}
\bI\{|\tilde m_{\vK}^{(S)}(z)|>C_0\} \prec 0,
\quad \bI\{|\tilde m_{\vK}^{(S)}(z)|<c_0\} \prec 0,
\quad \bI\{\|(\vI+\tilde m_{\vK}^{(S)}(z)\vSigma)^{-1}\|>C_0\} \prec 0,\\
\bI\{\|\vR^{(S)}(z)\|>C_0\} \prec 0,
\quad \bI\{|1+N^{-1}\Tr \vSigma\vR^{(S)}(z)|<c_0\} \prec 0,\\
\bI\{|1+N^{-1}\vg_i^\top \vR^{(S)}(z)\vg_i|<c_0\} \prec 0.
\end{gathered}\]
\end{lemma}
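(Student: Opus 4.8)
The plan is to isolate the operator-norm bound $\|\vR^{(S)}(z)\|=O(1)$ as the one genuinely new ingredient — it is exactly this bound that is unavailable in the proof of Lemma~\ref{lemma:improvedestimate} when $\Im z$ is small, where one has only $\|\vR^{(S)}(z)\|_F\le N^{1/2}\eta^{-1}$ — and then to recover the remaining estimates of \eqref{eq:checkFAconditions} on all of $U_N(\eps)$ by transporting, along short vertical segments, the versions already established in the proof of Lemma~\ref{lemma:improvedestimate} for $\Im z\ge N^{-1/11}$. Since the target bounds are all bounds above or below by constants, losing an $O(N^{-1/11})$ between a point $z$ and a nearby point $z'$ with $\Im z'\ge N^{-1/11}$ is harmless.

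\emph{The resolvent bound.} This is where Theorem~\ref{thm:no_outlier} and Lemma~\ref{lemma:supportcomparea} enter. Fix $L\ge1$ and $S\subseteq[N]$ with $|S|\le L$, put $N'=N-|S|$, and set $\hat\vG^{(S)}=\tfrac{1}{\sqrt{N'}}[\vg_i]_{i\notin S}$, so $\hat\vG^{(S)\top}\hat\vG^{(S)}=\tfrac{N}{N'}\vK^{(S)}$. Since $\vK^{(S)}\psdle\vK$ (dropping positive-semidefinite rank-one terms), Assumption~\ref{assump:G}(b) passes to $\hat\vG^{(S)}$ with constant $2B$ for large $N$, and Assumption~\ref{assump:G}(a,c,d) pass over verbatim with $N'$ in place of $N$ because $|S|\le L$ is fixed; so $\hat\vG^{(S)}$ satisfies Assumption~\ref{assump:G}. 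Applying Theorem~\ref{thm:no_outlier} to it with a small fixed $\eps'>0$, and Lemma~\ref{lemma:supportcomparea} to compare $\supp{\tilde\mu_N^{(S)}}$ with $\supp{\tilde\mu_N}$, one finds that on a single event $\cE$ with $\bI\{\cE^c\}\prec0$ (a union bound over the at most $N^L$ subsets $S$ via Proposition~\ref{prop:domination}(a), intersected with $\{\|\vK\|\le B\}$) every $\vK^{(S)}$ with $|S|\le L$ has spectrum in $\cS_N+(-\eps/2,\eps/2)$, provided $\eps'$ is small enough and using $N'/N=1-O(1/N)$ in the rescaling. Hence $\dist{z,\Sp(\vK^{(S)})}\ge\eps/2$ for $z\in U_N(\eps)$, so on $\cE$ we have $\|\vR^{(S)}(z)\|\le2/\eps$, and therefore $\|\vR^{(S)}(z)\|_F\le(2/\eps)\sqrt{n}$ — the latter a replacement of Corollary~\ref{coro:Fnormbound} now valid on all of $U_N(\eps)$.

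\emph{Propagation to the real axis.} On $\cE$, each of $z\mapsto\tilde m_{\vK}^{(S)}(z)$, $z\mapsto N^{-1}\Tr\vSigma\vR^{(S)}(z)$, and (for $i\in S$) $z\mapsto N^{-1}\vg_i^\top\vR^{(S)}(z)\vg_i$ is holomorphic on $\{w:\dist{w,\cS_N}>\eps/2,\ |w|\le\eps^{-1}+1\}$ with derivative of order $\|\vR^{(S)}(z)\|^2=O(1)$ (the last also using $\max_i\|\vg_i\|^2\prec n$ from Assumption~\ref{assump:G}(a,c)). Given $z=x+i\eta\in U_N(\eps)$ with $0\le\eta<N^{-1/11}$, both $z'=x+iN^{-1/11}$ and the segment $[z,z']$ lie in $U_N(\eps/2)$ (as $\dist{x,\cS_N}^2\ge\eps^2-N^{-2/11}\ge\eps^2/2$ for large $N$), so each of these quantities changes by $O(N^{-1/11})$ between $z$ and $z'$; the case $\eta<0$ is conjugate-symmetric. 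At $z'$ the estimates \eqref{eq:checkFAconditions} and $m_{\tilde\vK}(z')-\tilde m_N(z')\prec N^{-1}$ (Lemma~\ref{lemma:improvedestimate}, valid for any fixed separation parameter) hold. Combining these with Proposition~\ref{prop:basic_mn} (giving $|\tilde m_N(z)|\asymp1$ and $\min_{\lambda\in\supp{\nu_N}}|1+\lambda\tilde m_N(z)|\ge c$ throughout $U_N(\eps)$) and with $\tilde m_{\vK}^{(S)}(z)-m_{\tilde\vK}(z)=\tfrac1N\Tr(\vR^{(S)}(z)-\vR(z))=O(1/N)$ on $\cE$ (a difference of rank $\le L$ and bounded operator norm), one obtains, uniformly over $z\in U_N(\eps)$, $|S|\le L$, and $i\in S$: $|\tilde m_{\vK}^{(S)}(z)|\asymp1$ (the upper bound is anyway immediate from the resolvent bound and $|z|\ge\eps$), $\|(\vI+\tilde m_{\vK}^{(S)}(z)\vSigma)^{-1}\|=O(1)$ (using $\Sp(\vSigma)=\supp{\nu_N}$), and $|1+N^{-1}\Tr\vSigma\vR^{(S)}(z)|\ge c_0$; finally $|1+N^{-1}\vg_i^\top\vR^{(S)}(z)\vg_i|\ge c_0/2$ follows from the previous line together with $N^{-1}(\vg_i^\top\vR^{(S)}(z)\vg_i-\Tr\vSigma\vR^{(S)}(z))=\SD{N^{-1}\|\vR^{(S)}(z)\|_F}=\SD{N^{-1/2}}$ on $\cE$ — Assumption~\ref{assump:G}(c) applied conditionally on $\{\vg_j\}_{j\ne i}$, legitimate since $i\in S$.

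\emph{Main obstacle.} The chief nuisance is reconciling uniformity over the continuum $z\in U_N(\eps)$ with stochastic domination: the ``no outliers'' event is genuinely global, but for the Stieltjes-type quantities one must either discretize $U_N(\eps/2)$ over a net of polynomial cardinality — using the $O(1)$ Lipschitz constant that the resolvent bound supplies — before taking union bounds, or argue directly on $\cE$. One must also verify that the leave-$S$-out matrices satisfy Assumption~\ref{assump:G} with constants uniform in $S$ (they do, since $|S|\le L$ and $\vK^{(S)}\psdle\vK$) and that Lemma~\ref{lemma:supportcomparea} survives the $N'/N$ rescaling (it does, up to $O(1/N)$). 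The rest is routine.
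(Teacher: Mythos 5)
Your proposal is correct and follows essentially the same route as the paper's own proof: obtain the operator-norm bound $\|\vR^{(S)}(z)\| = O(1)$ uniformly over $U_N(\eps)$ by combining Theorem~\ref{thm:no_outlier} applied to $\vK^{(S)}$ (union-bounded over the polynomially many $S$ with $|S|\le L$) with Lemma~\ref{lemma:supportcomparea}, then transport the remaining estimates of \eqref{eq:checkFAconditions} from $\Im z\ge N^{-1/11}$ down to the real axis along short vertical segments using the $O(1)$-Lipschitz continuity that the resolvent bound provides. The elaborations you add (the explicit $N'/N$ rescaling, the invocation of Proposition~\ref{prop:basic_mn}, and the $O(1/N)$ comparison of $\tilde m_{\vK}^{(S)}$ with $m_{\tilde\vK}$) are consistent with, and essentially paraphrase, what the paper does.
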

\begin{proof}
By conjugation symmetry, it suffices to show the statements for
$z \in U_N(\eps)$ with $\Im z \geq 0$. Denote for simplicity
$\vR^{(S)}=\vR^{(S)}(z)$ and $\tilde m_{\vK}^{(S)}=\tilde m_{\vK}^{(S)}(z)$.
Let $\cS_N^{(S)}=\supp{\mu_N^{(S)}} \cup \{0\}
=\supp{\tilde \mu_N^{(S)}} \cup \{0\}$ where $\mu_N^{(S)},\tilde \mu_N^{(S)}$
are as defined in Lemma \ref{lemma:supportcomparea}.
Then Theorem~\ref{thm:no_outlier} applied to $\vK^{(S)}$ guarantees that
\[\bI\{\vK^{(S)} \text{ has an eigenvalue outside } \cS_N^{(S)}+(-\eps/4,\eps/4)\} \prec 0,\]
uniformly over all $S \subset [N]$ with $|S| \leq L$. Note that
$\cS_N^{(S)}+(-\eps/4,\eps/4) \subseteq \cS_N+(-\eps/2,\eps/2)$ by
Lemma~\ref{lemma:supportcomparea}. Then, applying the bound $\|\vR^{(S)}\|
\leq 1/\dist{z,\cS_N^{(S)}}$ and the condition $z \in U_N(\eps)$, we get
\begin{equation}\label{eq:operatornormbound}
\bI\{\|\vR^{(S)}\|>2/\eps\} \prec 0.
\end{equation}

The remaining statements have already been shown for $z \in U_N(\eps)$ with
$\Im z \geq N^{-1/11}$ in \eqref{eq:checkFAconditions}. 
For $z=x+i\eta$ where $\eta \in [0,N^{-1/11}]$, define $z'=x+i N^{-1/11}$.
On the event that $\vK^{(S)}$ has no eigenvalues
outside $\cS_N+(-\eps/2,\eps/2)$, both $N^{-1}\Tr \vSigma \vR^{(S)}(z)$ and
$\tilde m_{\vK}^{(S)}(z)=N^{-1}\Tr \vR^{(S)}(z)+\gamma_N(-1/z)$ are
$C$-Lipschitz over $z \in U_N(\eps)$ for a constant $C=C(\eps)>0$,
and $N^{-1}\vg_i^\top \vR^{(S)}(z)\vg_i$ is $CN^{-1}\|\vg_i\|^2$-Lipschitz where
$N^{-1}\|\vg_i\|^2 \prec 1$ by Assumption \ref{assump:G}. Then
\[N^{-1}\Tr \vSigma \vR^{(S)}(z)-N^{-1}\Tr \vSigma \vR^{(S)}(z')
\prec N^{-1/11}, \quad
\tilde m_{\vK}^{(S)}(z)-\tilde m_{\vK}^{(S)}(z')
\prec N^{-1/11},\]
\[N^{-1}\vg_i^\top \vR^{(S)}(z)\vg_i-N^{-1}\vg_i^\top \vR^{(S)}(z')\vg_i
\prec N^{-1/11},\]
so the remaining statements of the lemma hold also for $z \in U_N(\eps)$
with $\Im z \in [0,N^{-1/11}]$.
\end{proof}

\begin{proof-of-theorem}[\ref{thm:deterministic_equ}]
Again by conjugation symmetry, it suffices to show the result for
$z \in U_N(\eps)$ with $\Im z \geq 0$. Denote for simplicity
$\vR^{(S)}=\vR^{(S)}(z)$ and $\tilde m_{\vK}^{(S)}=\tilde m_{\vK}^{(S)}(z)$.
The first estimate of Lemma \ref{lemma:FAconditions} implies
\begin{equation}\label{eq:FAconditions1}
\bI\{\|\vR^{(S)}\|_F>C\sqrt{N}\} \prec 0,
\qquad \bI\{\|\vR^{(S)}\vA\|_F>C\|\vA\|_F\} \prec 0
\end{equation}
uniformly over $z \in U_N(\eps)$ and $\vA \in \C^{n \times n}$. Then also, by
Assumption \ref{assump:G}(c) and \eqref{eq:mdiffestimate} applied with
$\vB=\vSigma$,
\begin{align}
1+N^{-1}\vg_i^\top \vR^{(i)}\vg_i
&=1+N^{-1}\Tr \vSigma \vR+\SD{N^{-2}|1+N^{-1}\vg_i^\top
\vR^{(i)}\vg_i|^{-1}\|\vR^{(i)}\|_F^2+\|\vR^{(i)}\|_F}\nonumber\\
&=1+N^{-1}\Tr \vSigma \vR+\SD{N^{-1/2}}.\label{eq:1plusgRgfull}
\end{align}

Let $d_i=d_{i,1}+d_{i,2}+d_{i,3}+d_{i,4}$ be as defined in \eqref{eq:di14} with
$\vA=\vI$. Then, applying \eqref{eq:FAconditions1}, \eqref{eq:1plusgRgfull},
and the bounds of Lemma \ref{lemma:FAconditions}, we obtain exactly as in the
proof of Lemma \ref{lemma:improvedestimate} (using again the fluctuation
averaging result of Lemma \ref{lemma:fluctuationavg}) that,
uniformly over $z \in
U_N(\eps)$, we have $|d_{i,1}|,|d_{i,3}|,|d_{i,4}| \prec N^{-1}$,
$|d_{i,2}| \prec N^{-1/2}$, and
\[|z_N(m_{\tilde \vK})-z| \prec
\frac{1}{N} \cdot \frac{1}{1+N^{-1}\Tr \vSigma\vR}
\cdot \left|\sum_{i=1}^N d_{i,2}\right|+\SD{N^{-1}}=\SD{N^{-1}}.\]
Fix any $\iota>0$. If $\Im z \geq N^{-1+\iota}$, then this
implies $\bI\{z_N(m_{\tilde \vK}) \notin \C^+\} \prec 0$. By the same
stability argument as in Lemma \ref{lemma:prelimestimate}, we get
$m_{\tilde \vK}(z)-\tilde m_N(z) \prec N^{-1}$
uniformly over $z \in U_N(\eps)$ with $\Im z \geq N^{-1+\iota}$. For
$\Im z \in [0,N^{-1+\iota}]$, on the event that all eigenvalues of $\vK$ belong
to $\cS_N+(-\eps/2,\eps/2)$, we may apply that both $m_{\tilde \vK}(z)$
and $\tilde m_N(z)$ are $C(\eps)$-Lipschitz over $z \in U_N(\eps)$ to
compare values at $z=x+i\eta$ and $z'=x+iN^{-1+\iota}$.
Applying $m_{\tilde \vK}(z')-\tilde m_N(z') \prec N^{-1}$, we then get for any
$D>0$, all $z \in U_N(\eps)$, some constant $C>0$, and all large $N$,
\[\P[|m_{\tilde \vK}(z)-\tilde m_N(z)|>CN^{-1+\iota}] \leq N^{-D}.\]
Since $\iota>0$ is arbitrary, this shows
$m_{\tilde \vK}(z)-\tilde m_N(z) \prec N^{-1}$ uniformly over $z \in U_N(\eps)$.
The bound $m_{\vK}(z)-m_N(z) \prec N^{-1}$ then follows from
$m_{\tilde \vK}(z)=\gamma_N m_{\vK}(z)+(1-\gamma_N)(-1/z)$ and
$\tilde m_N(z)=\gamma_N m_N(z)+(1-\gamma_N)(-1/z)$.

For the estimate of $\Tr \vR\vA$, we apply the definition of
$d_i=d_{i,1}+d_{i,2}+d_{i,3}+d_{i,4}$ from \eqref{eq:di14}
and the identity \eqref{eq:detequividentity} now with this matrix $\vA$. Then
\eqref{eq:detequividentity} gives
\[\Tr \Big[\vR\vA-({-}z\vI-zm_{\tilde \vK}\vSigma)^{-1}\vA\Big]
=\frac{1}{z}\sum_{i=1}^N \frac{d_i}{1+N^{-1}\vg_i^\top \vR^{(i)}\vg_i}.\]
Applying \eqref{eq:FAconditions1}, \eqref{eq:1plusgRgfull},
and the bounds of Lemma \ref{lemma:FAconditions} to
\eqref{eq:di1}--\eqref{eq:di4}, uniformly over $z \in U_N(\eps)$ and $\vA \in
\C^{n \times n}$, we have $|d_{i,1}|,|d_{i,3}|,|d_{i,4}|
\prec N^{-3/2}\|\vA\|_F$, $|d_{i,2}| \prec N^{-1}\|\vA\|_F$, and hence
\[\left|\sum_{i=1}^N \frac{d_i}{1+N^{-1}\vg_i^\top \vR^{(i)}\vg_i}\right|
\prec \frac{1}{1+N^{-1}\Tr \vSigma\vR}\left|\sum_{i=1}^N d_{i,2}\right|
+\SD{N^{-1/2}\|\vA\|_F}.\]
Finally, applying Lemma \ref{lemma:fluctuationavg} with $\vGamma=z\vI$,
$\Psi_N(\vGamma)=C\sqrt{N}$, and $\Phi_N(\vGamma,\vA)=C\|\vA\|_F$ (where we may
assume without loss of generality $\|\vA\|_F \in (N^{-\upsilon},N^\upsilon)$
by scale invariance of the desired estimate with respect to $\vA$),
we get $|\sum_i d_{i,2}| \prec N^{-1/2}\|\vA\|_F$. Thus,
\[\Tr \Big[\vR\vA-({-}z\vI-zm_{\tilde \vK}\vSigma)^{-1}\vA\Big]
\prec \frac{1}{\sqrt{N}}\,\|\vA\|_F.\]
\end{proof-of-theorem}

%%%%%%%%%%%%%%%%%%%%%%%%%%%%%%%%%%%%%%%%%%%%%%%%%%%%%%%%%%%%%%%%%%%%%%%%%%%%%%%%%%
%%%%%%%%%%%%%%%%%%%%%%%%%%%%%%%%%%%%%%%%%%%%%%%%%%%%%%%%%%%%%%%%%%%%%%%%%%%%%%%%%%

\section{Analysis of spiked eigenstructure}\label{appendix:spike}

We now consider the asymptotic setup of Appendix \ref{subsec:spikes} and
prove Corollary \ref{cor:no_outlier} and Theorem \ref{thm:spiked}.
As all the desired
statements are invariant under conjugation of $\vSigma$ by an orthogonal matrix,
we may assume without loss of generality that $\vSigma$ is diagonal and of the
form
\[\vSigma=\begin{pmatrix}
	\vSigma_r & \mathbf{0}\\
	\mathbf{0} & \vSigma_0
\end{pmatrix}, \qquad
\vSigma_r=\diag(\lambda_1(\vSigma),\ldots,\lambda_r(\vSigma)),
\qquad \vSigma_0=\diag(\lambda_{r+1}(\vSigma),\ldots,\lambda_n(\vSigma)).\]
Denote the block decomposition of $\vG$ corresponding to
$\vSigma_r,\vSigma_0$ as
\[\vG=[\vG_r,\vG_0], \qquad \vG_r \in \R^{N \times r},\qquad
\vG_0 \in \R^{N \times (n-r)}.\]
We remind the reader that $\vG_r$ and $\vG_0$ need not be independent.

\subsection{No outliers outside the limit support}

We consider first the setting of $r=0$, and prove Corollary~\ref{cor:no_outlier}
together with some uniform convergence properties of
$\tilde m_N$ and $z_N$ that will be used in the later analysis.

Recall the domain $\cT_N$ and function $z_N:\C \setminus \cT_N \to \C$ from
\eqref{eq:zNdomain} and \eqref{eq:z_N_def}, and their asymptotic
analogues $\cT$ and $z:\C
\setminus \cT \to \C$ from \eqref{eq:zdomain} and \eqref{eq:inv_z}.

\begin{lemma}\label{lemma:supportcompareb}
Suppose Assumption \ref{assump:G} holds, and Assumption \ref{assum:spike} holds
with $r=0$. Then, as $N \to \infty$,
\begin{enumerate}[label=(\alph*)]
\item $z_N(\tilde m)$ and its derivative $z_N'(\tilde m)$ converge uniformly
over compact subsets of $\C \setminus \cT$ to $z(\tilde m)$ and $z'(\tilde m)$.
\item For any $\eps>0$ and all large $N$,
\[\supp{\tilde \mu_N} \subseteq \supp{\tilde \mu}+(-\eps,\eps).\]
\item $\tilde m_N(z)$ and its derivative $\tilde m_N'(z)$ converge uniformly
over compact subsets of $\C \setminus \supp{\tilde \mu}$ to $\tilde m(z)$ and
$\tilde m'(z)$.
\end{enumerate}
\end{lemma}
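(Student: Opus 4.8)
The plan is to prove parts (a)--(c) in that order, since (b) will use (a) and (c) will use (b). Two standard tools do most of the work: Vitali's theorem on normal families (a sequence of holomorphic functions on a connected open set that is locally uniformly bounded and converges pointwise on a subset with a limit point converges locally uniformly, and then so do all its derivatives, by the Cauchy estimates), and the support characterization of Proposition~\ref{prop:inv_z}, applied both in its finite-$N$ form (for $z_N$, $\cT_N$, $\tilde\mu_N$) and its limiting form (for $z$, $\cT$, $\tilde\mu$). As a preliminary I would record that, under Assumption~\ref{assum:spike} with $r=0$, the condition $\lambda_i(\vSigma)\in\supp{\nu}+(-\eps,\eps)$ for all large $N$ together with $\nu_N\to\nu$ weakly forces $\supp{\nu_N}\to\supp{\nu}$, and hence $\cT_N\to\cT$, in Hausdorff distance; in particular every compact $K\subset\C\setminus\cT$ satisfies $\dist{K,\cT_N}\geq c>0$ for all large $N$, so each $z_N$ is holomorphic on a fixed neighborhood of $K$.

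For (a): fix a compact $K\subset\C\setminus\cT$. Choosing a fixed compact neighborhood $K'$ of $\supp{\nu}$ that contains all $\supp{\nu_N}$ (possible by the preliminary) and is small enough that $1+\lambda\tilde m\neq 0$ for $\tilde m\in K$, $\lambda\in K'$, the map $\lambda\mapsto\lambda/(1+\lambda\tilde m)$ is continuous and uniformly bounded on $K'$ over $\tilde m\in K$; this yields a uniform bound on $|z_N(\tilde m)|$ over $\tilde m\in K$ and large $N$, and, via weak convergence $\nu_N\to\nu$ and $\gamma_N\to\gamma$, pointwise convergence $z_N(\tilde m)\to z(\tilde m)$ on $\C\setminus\cT$. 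Since $\cT$ is a compact subset of $(-\infty,0]$, the domain $\C\setminus\cT$ is connected, so Vitali's theorem upgrades this to local uniform convergence $z_N\to z$, and the Weierstrass theorem gives $z_N'\to z'$ locally uniformly.

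For (b): I would follow the scheme of the proof of Lemma~\ref{lemma:supportcomparea}, comparing $z_N$ to the limit $z$ rather than $z_N^{(S)}$ to $z_N$. Fix $\eps>0$; by Proposition~\ref{prop:basic_mn} there is $C_0$ with $\supp{\tilde\mu_N}\subseteq[-C_0,C_0]$ for large $N$, so it suffices to show each $x\in[-C_0,C_0]\setminus(\supp{\tilde\mu}+(-\eps,\eps))$ lies outside $\supp{\tilde\mu_N}$ for large $N$. Then $[x-\eps/2,x+\eps/2]\subset\R\setminus\supp{\tilde\mu}$, and by Proposition~\ref{prop:inv_z} the (strictly increasing, real-analytic) map $\tilde m$ carries it onto a compact interval $[\tilde m_-,\tilde m_+]\subset\R\setminus\cT$ on which $z$ is increasing, $z(\tilde m_\mp)=x\mp\eps/2$, and $z'\geq c>0$. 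Since $[\tilde m_-,\tilde m_+]$ is a compact subset of $\C\setminus\cT$, part (a) applies on it: for large $N$, $|z_N-z|<\eps/2$ and $z_N'>0$ there, so $z_N$ is strictly increasing on $[\tilde m_-,\tilde m_+]$ with $z_N(\tilde m_-)<x<z_N(\tilde m_+)$; there is thus $\tilde m^*\in(\tilde m_-,\tilde m_+)\subset\R\setminus\cT_N$ with $z_N(\tilde m^*)=x$ and $z_N'(\tilde m^*)>0$, and the finite-$N$ form of Proposition~\ref{prop:inv_z} gives $x\notin\supp{\tilde\mu_N}$.

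For (c): by (b), any compact $K\subset\C\setminus\supp{\tilde\mu}$ lies in $\C\setminus\supp{\tilde\mu_N}$ for all large $N$, and then $|\tilde m_N(z)|\leq\dist{z,\supp{\tilde\mu_N}}^{-1}\leq 2\,\dist{z,\supp{\tilde\mu}}^{-1}$ on $K$, so $\{\tilde m_N\}$ is locally uniformly bounded on the connected set $\C\setminus\supp{\tilde\mu}$. Since $\mu_N\to\mu$ and hence $\tilde\mu_N\to\tilde\mu$ weakly (continuity of the deformed Marcenko--Pastur map under $(\gamma_N,\nu_N)\to(\gamma,\nu)$, as used in Section~\ref{subsec:spikes}; if preferred, this also follows from pointwise convergence of the Stieltjes transforms via stability of the fixed-point equation \eqref{eq:MPeq}), and since all $\supp{\tilde\mu_N}$ lie in a fixed compact set by Proposition~\ref{prop:basic_mn}, for each fixed $z\in\C^+$ the function $\lambda\mapsto(\lambda-z)^{-1}$ is bounded continuous on that compact set and so $\tilde m_N(z)\to\tilde m(z)$. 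Vitali's theorem (with $\C^+$ supplying the limit points) then gives $\tilde m_N\to\tilde m$ locally uniformly on $\C\setminus\supp{\tilde\mu}$, and Weierstrass gives $\tilde m_N'\to\tilde m'$ locally uniformly. The one place demanding care is part (b): I must make sure the interval $[\tilde m_-,\tilde m_+]$ produced from the limiting support structure is genuinely a compact subset of $\C\setminus\cT$, so that part (a)---including the convergence $z_N'\to z'$---is available on it, and that $z'$, hence $z_N'$ for large $N$, is strictly positive on all of it, endpoints included; everything else is weak-convergence bookkeeping and standard normal-families arguments.
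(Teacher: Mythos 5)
Your proof is correct and follows essentially the same structure as the paper's: establish uniform convergence of $z_N \to z$ on compacts away from $\cT$, use Proposition~\ref{prop:inv_z} (in both its finite-$N$ and limiting forms) to transfer this into an inclusion of supports, and then use that inclusion to get uniform convergence of $\tilde m_N \to \tilde m$ away from $\supp{\tilde\mu}$. The one genuine methodological difference is that you upgrade pointwise to locally uniform convergence (and convergence of derivatives) via Vitali's theorem on normal families plus Weierstrass, whereas the paper argues directly through equicontinuity of $\{z_N\},\{z_N'\}$ and $\{\tilde m_N\},\{\tilde m_N'\}$; these are interchangeable standard devices, and Vitali's does let you get the derivative convergence for free rather than checking equicontinuity of the derivative family separately. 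Two small cautions: (i) your preliminary claim that $\cT_N\to\cT$ in Hausdorff distance is stronger than what follows from the assumptions---weak convergence $\nu_N\to\nu$ plus the condition $\lambda_i(\vSigma)\in\supp{\nu}+(-\eps,\eps)$ only yields the one-sided inclusion $\cT_N\subseteq\cT+(-\eps,\eps)$ for large $N$; fortunately, that one-sided inclusion is all you actually use. (ii) In part (c), the chain ``weak convergence $\tilde\mu_N\to\tilde\mu$ implies pointwise Stieltjes convergence'' followed by the parenthetical ``which itself follows from pointwise Stieltjes convergence'' is circular as written; the clean route, which you gesture at, is to derive pointwise convergence $\tilde m_N(z)\to\tilde m(z)$ on $\C^+$ directly from the fixed-point stability of \eqref{eq:MPeq} using the part (a) uniform convergence of $z_N$ on $\C^+\subset\C\setminus\cT$, and then Vitali finishes the job without ever needing to independently establish weak convergence of $\tilde\mu_N$.
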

\begin{proof}
For part (a), let $K \subset \C \setminus \cT$ be any fixed compact set.
Then $K$ does not intersect some sufficiently small open neighborhood of the compact
domain $\cT$. If Assumption \ref{assum:spike} holds with $r=0$, then $\cT_N$ is
contained in this open neighborhood of $\cT$ for all large $N$, so $K
\subset \C \setminus \cT_N$, and both $z_N$ and
$z$ are well-defined on $K$. The pointwise convergences $z_N(\tilde m) \to z(\tilde
m)$ and $z_N'(\tilde m) \to z'(\tilde m)$ on $K$ then follow from $\gamma_N \to
\gamma$, the weak convergence $\nu_N \to \nu$, and the uniform boundedness of the
functions $\lambda \mapsto \lambda/(1+\lambda \tilde m)$
and $\lambda \mapsto \lambda^2/(1+\lambda \tilde m)^2$ on an open
neighborhood of $\supp{\nu}$, for $\tilde m \in K$. This convergence is
furthermore uniform because $\{z_N\}$ and $\{z_N'\}$ are both equicontinuous
over $K$.

For part (b), consider any $x \notin \supp{\tilde \mu}+(-\eps,\eps)$.
Then $[x-\eps/2,x+\eps/2] \subset \R \setminus \supp{\tilde \mu}$, so $\tilde m$ is
well-defined and increasing on $[x-\eps/2,x+\eps/2]$. Let $[\tilde m_-,\tilde m_+]
=[\tilde m(x-\eps/2),\tilde m(x+\eps/2)]$. Then by Proposition~\ref{prop:inv_z},
$z'(\tilde m)>0$ for all $\tilde m \in [\tilde m_-,\tilde m_+]$, and
$z([\tilde m_-,\tilde m_+])=[x-\eps/2,x+\eps/2]$. The uniform convergence in
part (a) implies for all large $N$ that $z_N(\tilde m_-)<x$,
$z_N(\tilde m_+)>x$, and $z_N'(\tilde m)>0$ for all $\tilde m \in [\tilde
m_-,\tilde m_+]$. Then there exists $\tilde m \in [\tilde m_-,\tilde m_+]$
where $z_N(\tilde m)=x$ and $z_N'(\tilde m)>0$, implying by
Proposition \ref{prop:inv_z} that $x \notin \supp{\tilde \mu_N}$. So
$\supp{\tilde \mu_N} \subseteq \supp{\tilde \mu}+(-\eps,\eps)$ as desired.

For part (c), let $K \subset \C \setminus \supp{\tilde \mu}$ be any
fixed compact set. Then $K$ does not intersect some sufficiently small open
neighborhood of the compact set $\supp{\tilde \mu}$, so the inclusion of part
(b) implies $K \subset \C \setminus \supp{\tilde \mu_N}$ for all large $N$, and
both $\tilde m_N$ and $\tilde m$ are well-defined on $K$. The
uniform convergence $\tilde m_N(z) \to \tilde m(z)$ and $\tilde m_N'(z) \to
\tilde m'(z)$ on $K$ then follow from the weak convergence $\tilde \mu_N \to
\tilde \mu$, the uniform boundedness of the functions $\lambda \mapsto
1/(\lambda-z)$ and $\lambda \mapsto 1/(\lambda-z)^2$ on an open neighborhood of
$\supp{\tilde \mu}$ for $z \in K$,
and the equicontinuity of $\{\tilde m_N\}$ and $\{\tilde m_N'\}$ on $K$.
\end{proof}

\begin{proof-of-coro}[\ref{cor:no_outlier}]
By Lemma \ref{lemma:supportcompareb}(b), for any fixed $\eps>0$, we have
$\cS_N+(-\eps/2,\eps/2) \subseteq \cS+(-\eps,\eps)$
for all large $N$. Then by Theorem \ref{thm:no_outlier},
\begin{align*}
&\bI\{\vK \text{ has an eigenvalue in } \R \setminus (\cS+(-\eps,\eps))\\
&\leq \bI\{\vK \text{ has an eigenvalue in } \R \setminus
(\cS_N+(-\eps/2,\eps/2))\} \prec 0.
\end{align*}
\end{proof-of-coro}

\subsection{Deterministic equivalents for generalized resolvents}

We next introduce two generalized resolvents for the matrix $\vK$, and extend
Theorem \ref{thm:deterministic_equ} to establish
deterministic equivalents for these generalized resolvents.

Define the spectral domain
\[U(\eps)=\Big\{z \in \C:|z| \leq \eps^{-1},\;\dist{z,\cS} \geq \eps\Big\}\]
where $\cS$ is the limit support set defined in (\ref{eq:limitsupport}).
Given $z \in U(\eps)$ and $\alpha \in \C$, define a diagonal matrix
\begin{equation}\label{eq:Gamma}
\vGamma:=\vGamma(z,\alpha)=z\vI_n+\alpha\vV_r\vV_r^\top
=\begin{pmatrix} (z+\alpha)\vI_r & \mathbf{0} \\
\mathbf{0} & z \vI_{n-r} \end{pmatrix} \in \C^{n \times n},
\quad \vV_r=\begin{pmatrix} \vI_r \\ \mathbf{0} \end{pmatrix}
\in \R^{n \times r}.
\end{equation}
Define the first generalized resolvent
\begin{equation}\label{eq:matrix_block}
\vcR(z,\alpha)=
\begin{pmatrix}
	-\vGamma & \vG^\top\\ \vG & -\vI_N
\end{pmatrix}^{-1} \in \C^{(n+N)\times (n+N)}.
\end{equation}
This matrix inverse exists if and only if
the Schur complement $\vG^\top \vG-\vGamma=\vK-\vGamma$
for its lower right block is invertible, in which case the upper-left block of
$\vcR(z,\alpha)$ is $\vR(\vGamma)=(\vK-\vGamma)^{-1}$. The following
provides a deterministic equivalent for this block of $\vcR(z,\alpha)$.

\begin{lemma}\label{lemm:apply_fluctuation}
Under the assumptions of Theorem \ref{thm:spiked}, for any fixed $\eps>0$,
there exist $C_0,\alpha_0>0$ (depending on $\eps$) such that fixing any $\alpha
\in \C$ with $|\alpha|>\alpha_0$, the following hold:
\begin{enumerate}[label=(\alph*)]
\item The event
\[\cE=\Big\{\vcR(z,\alpha) \text{ exists and } \|\vcR(z,\alpha)\| \leq C_0
\text{ for all } z \in U(\eps)\Big\}\]
satisfies $\bI\{\cE^c\} \prec 0$.
\item Uniformly over $z \in U(\eps)$
and deterministic unit vectors $\vv_1,\vv_2 \in \R^n$,
\begin{align}
\norm{   \begin{pmatrix}
    \vv_1^\top&\mathbf{0}
    \end{pmatrix}\vcR(z,\alpha)\begin{pmatrix}
    \vv_2\\
        \mathbf{0}
    \end{pmatrix}+  \vv_1^\top \left(\vGamma+z\cdot\tilde{m}_{N,0}(z)\vSigma
\right)^{-1}\vv_2 } \prec \frac{1}{\sqrt{N}}.
\end{align} 
\end{enumerate}
\end{lemma}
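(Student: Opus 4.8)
The plan is to reduce everything to the resolvent deterministic equivalent already established in Theorem \ref{thm:deterministic_equ}, by exploiting that $\vGamma = z\vI_n + \alpha\vV_r\vV_r^\top$ is a finite-rank perturbation of $z\vI_n$. For part (a), I would first use Corollary \ref{cor:no_outlier} together with the block-matrix inversion identity: the $(n+N)\times(n+N)$ matrix in \eqref{eq:matrix_block} is invertible iff the Schur complement $\vK - \vGamma$ is invertible, and then $\|\vcR(z,\alpha)\|$ is controlled by $\|(\vK-\vGamma)^{-1}\|$ and $\|\vG\|$. Since $\vG$ has bounded operator norm on an event of overwhelming probability (Assumption \ref{assump:G}(b) plus $\|\vK\|$ controls $\|\vG\|$), the crux is to show $\|(\vK-\vGamma)^{-1}\|$ is bounded. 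Writing $\vK - \vGamma = (\vK - z\vI) - \alpha\vV_r\vV_r^\top$ and using the Woodbury identity, $(\vK-\vGamma)^{-1} = \vR(z) + \vR(z)\vV_r\big(\alpha^{-1}\vI_r - \vV_r^\top\vR(z)\vV_r\big)^{-1}\vV_r^\top\vR(z)$; by Corollary \ref{cor:no_outlier} $\|\vR(z)\| \le 2/\eps$ on an overwhelming-probability event for $z \in U(\eps)$, so choosing $|\alpha| > \alpha_0 := 4/\eps$ makes $\|\alpha^{-1}\vI_r - \vV_r^\top\vR(z)\vV_r\| \ge |\alpha|^{-1}\cdot$(something)$\ge$ bounded below, giving $\|(\vK-\vGamma)^{-1}\| \le C_0(\eps)$. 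A net argument over $U(\eps)$ (using Lipschitz continuity of $\vcR(z,\alpha)$ in $z$ off the spectrum, which follows from the bound just derived) upgrades "for each fixed $z$" to "for all $z \in U(\eps)$ simultaneously", yielding $\bI\{\cE^c\} \prec 0$.

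For part (b), I would again use Woodbury to express the $(1,1)$ block $\vR(\vGamma) = (\vK-\vGamma)^{-1}$ in terms of $\vR(z) = (\vK-z\vI)^{-1}$ and the $r\times r$ correction involving $\vV_r^\top\vR(z)\vV_r$. Applying Theorem \ref{thm:deterministic_equ} with the rank-one matrices $\vA = \vv_1\vv_2^\top$ (and also to the entries of $\vV_r^\top\vR(z)\vV_r$, i.e.\ $\vA = \ve_i\ve_j^\top$) gives $\vv_1^\top\vR(z)\vv_2 = -\vv_1^\top(z\vI + z\tilde m_{N,0}(z)\vSigma)^{-1}\vv_2 + O_\prec(N^{-1/2})$ and similarly $\vV_r^\top\vR(z)\vV_r = -\vV_r^\top(z\vI + z\tilde m_{N,0}(z)\vSigma)^{-1}\vV_r + O_\prec(N^{-1/2})$. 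Here I should note a subtlety: Theorem \ref{thm:deterministic_equ} is stated with $\tilde m_N(z)$ (corresponding to all of $\vSigma$), whereas the target involves $\tilde m_{N,0}(z)$ (corresponding to the bulk part $\vSigma_0$); since $\vSigma$ and $\vSigma_0$ differ in only $r = O(1)$ eigenvalues, Lemma \ref{lemma:supportcomparea}-type reasoning and a direct estimate show $\tilde m_N(z) - \tilde m_{N,0}(z) = O(1/N)$ uniformly on $U(\eps)$, which is absorbable. Then I would substitute these approximations into the Woodbury formula: the deterministic main terms reassemble, via the matrix identity for $(\vGamma + z\tilde m_{N,0}(z)\vSigma)^{-1}$ written as a Woodbury correction of $(z\vI + z\tilde m_{N,0}(z)\vSigma)^{-1}$ with the same rank-$r$ perturbation $\alpha\vV_r\vV_r^\top$, into exactly $-\vv_1^\top(\vGamma + z\tilde m_{N,0}(z)\vSigma)^{-1}\vv_2$, while all the error terms remain $O_\prec(N^{-1/2})$ after being multiplied by the bounded factors (using part (a) to control the inverse $(\alpha^{-1}\vI_r - \vV_r^\top\vR(z)\vV_r)^{-1}$ and that $\|\vGamma + z\tilde m_{N,0}(z)\vSigma)^{-1}\|$ is bounded, the latter from Proposition \ref{prop:basic_mn} plus $|\alpha|$ large).

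The main obstacle I anticipate is the bookkeeping in the last step: one must verify that the deterministic Woodbury correction for $\vR(z) \mapsto \vR(\vGamma)$ and the deterministic Woodbury correction for $(z\vI+z\tilde m_{N,0}(z)\vSigma)^{-1} \mapsto (\vGamma + z\tilde m_{N,0}(z)\vSigma)^{-1}$ match up to $O_\prec(N^{-1/2})$, which requires that the $r\times r$ "denominators" $\alpha^{-1}\vI_r - \vV_r^\top\vR(z)\vV_r$ and $\alpha^{-1}\vI_r + \vV_r^\top(z\vI + z\tilde m_{N,0}(z)\vSigma)^{-1}\vV_r$ agree up to $O_\prec(N^{-1/2})$ and are both invertible with bounded inverse — the invertibility being exactly where the choice of large $|\alpha| > \alpha_0$ is essential. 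A secondary technical point is making all estimates uniform over the unit vectors $\vv_1,\vv_2$ and over $z \in U(\eps)$; uniformity over $z$ is handled by the net argument from part (a), and uniformity over $\vv_1,\vv_2$ is already built into the statement of Theorem \ref{thm:deterministic_equ} since the bound there is uniform over deterministic $\vA$.
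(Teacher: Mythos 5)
Your proposal contains a genuine gap that would break both parts. The key error is the claim that ``by Corollary \ref{cor:no_outlier} $\|\vR(z)\| \le 2/\eps$ on an overwhelming-probability event for $z \in U(\eps)$.'' Corollary \ref{cor:no_outlier} is stated and proved only under the hypothesis $r=0$, while Lemma \ref{lemm:apply_fluctuation} must hold for general $r \ge 0$. When $r \ge 1$, the full matrix $\vK = \vG^\top\vG$ \emph{does} have (or may have) eigenvalues inside $U(\eps)$---these are precisely the spike eigenvalues being characterized in Theorem \ref{thm:spiked}. Consequently $\vR(z)=(\vK-z\vI)^{-1}$ has unbounded operator norm for $z$ near those spike locations, and your Woodbury split $\vK-\vGamma=(\vK-z\vI)-\alpha\vV_r\vV_r^\top$ uses the wrong pivot: it requires the bulk-plus-spike resolvent $\vR(z)$ to be bounded, which it is not. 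The same problem infects your plan for part (b): Theorem \ref{thm:deterministic_equ} gives approximations over $U_N(\eps)$, where $\cS_N$ is built from the full $\vSigma$ (including the spike eigenvalues), so $\cS_N$ may contain small islands near the spike locations that are excluded from $U_N(\eps)$ but \emph{included} in $U(\eps)$. Thus the deterministic equivalent you invoke is not available exactly where you need it.

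The paper avoids this by choosing a different decomposition. Rather than perturbing off $(\vK - z\vI)^{-1}$, it splits $\vG = [\vG_r, \vG_0]$ into the columns associated with the $r$ spike eigenvectors of $\vSigma$ and the remaining bulk columns, and introduces the bulk generalized resolvent $\vcR_0(z)$ of \eqref{eq:block_G0}, built from $\vG_0$ only. Since $\vK_0=\vG_0^\top\vG_0$ has a spikeless population covariance $\vSigma_0$, Lemma \ref{lemm:support_G0} (i.e.\ Corollary \ref{cor:no_outlier} applied to $\vK_0$) gives $\|\vcR_0(z)\|\le C$ on $U(\eps)$. The $(n+N)\times(n+N)$ generalized resolvent $\vcR(z,\alpha)$ is then controlled via the $r\times r$ Schur complement $\vS$ of \eqref{eq:schur_com} with respect to the $r$ coordinates of $\vG_r$; the role of large $|\alpha|$ is precisely to make $\vS$ uniformly invertible by dominating the bounded term $\begin{pmatrix}\mathbf{0}&\vG_r^\top\end{pmatrix}\vcR_0(z)\begin{pmatrix}\mathbf{0}\\\vG_r\end{pmatrix}$. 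For part (b), the paper does not reduce to Theorem \ref{thm:deterministic_equ} via Woodbury at all: it reruns the Sherman--Morrison leave-one-out expansion and the fluctuation-averaging Lemma \ref{lemma:fluctuationavg} directly for the generalized resolvent $\vR(\vGamma)=(\vK-\vGamma)^{-1}$, with the preliminary a priori bounds coming from Lemma \ref{lemm:bounds_R(z)}, which itself rests on part (a) and on comparisons between $\tilde m_{\vK}^{(S)}(\vGamma)$ and the bulk-only quantities $\tilde m_{\vK_0}^{(S)}$ via low-rank trace estimates. Your bookkeeping concern about matching the two Woodbury corrections is real, but it is downstream of the more basic issue that the Woodbury identity you start from is not justified. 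To repair your plan you would need to replace the split $(\vK-z\vI)-\alpha\vV_r\vV_r^\top$ by a split that isolates $\vK_0$ as the bounded pivot, at which point you essentially recover the paper's block-Schur argument.
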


In the setting of Theorem \ref{thm:spiked}(c), let
$\vu=\frac{1}{\sqrt{N}}(u_1,\ldots,u_N) \in \R^N$ be the
additional given vector for which $\{(u_j,\vg_j^\top)\}_{j=1}^N$ are
independent vectors in $\R^{n+1}$. For $z \in U(\eps)$ and $\alpha \in \C$, define
\begin{align}
\widetilde{\vSigma}&=\begin{pmatrix}
		\E[u^2] & \E[u\vg]^\top\\
		\E[u\vg] & \vSigma
	\end{pmatrix}	\in\R^{(n+1)\times (n+1)}\label{eq:tilde_Sigma},\\
\widetilde \vGamma&=\widetilde \vGamma(z,\alpha)
=\begin{pmatrix} z+\alpha & \mathbf{0} \\
\mathbf{0} & \vGamma \end{pmatrix} \in \C^{(n+1) \times (n+1)}
\nonumber
\end{align}
where $\E[u^2]$ and $\E[u\vg]$ denote the common values of $\E[u_j^2]$ and
$\E[u_j\vg_j]$ for $j=1,\ldots,N$. Define the second generalized resolvent
\begin{equation}\label{eq:tildeR}
\widetilde \vcR(z,\alpha)=
\begin{pmatrix} {-}\widetilde \vGamma & [\vu,\vG]^\top \\
[\vu,\vG] & -\vI \end{pmatrix}^{-1}
=\begin{pmatrix} {-}(z+\alpha) & \mathbf{0} & \vu^\top \\
\mathbf{0} & -\vGamma & \vG^\top \\
\vu & \vG & -\vI_N \end{pmatrix}^{-1}
\in \C^{(n+1+N)\times (n+1+N)}.
\end{equation}
We have the following
deterministic equivalent for the upper-left block of $\widetilde \vcR(z,\alpha)$,
which is analogous to Lemma \ref{lemm:apply_fluctuation}.

\begin{lemma}\label{lemm:apply_fluctuation2}
Under the assumptions of Theorem \ref{thm:spiked}(c), for any fixed $\eps>0$, there
exist $C_0,\alpha_0>0$ (depending on $\eps$) such that fixing any $\alpha \in
\C$ with $|\alpha|>\alpha_0$, the following hold:
\begin{enumerate}[label=(\alph*)]
\item The event
\[\widetilde \cE=\Big\{\widetilde \vcR(z,\alpha) \text{ exists and } \|\widetilde \vcR(z,\alpha)\| \leq C_0
\text{ for all } z \in U(\eps)\Big\}\]
satisfies $\bI\{\widetilde \cE^c\} \prec 0$.
\item Uniformly over $z \in U(\eps)$ and
deterministic unit vectors $\vv_1,\vv_2\in\R^{n+1}$,
\begin{align}
		\norm{   \begin{pmatrix}
			\vv_1^\top&\mathbf{0}
		\end{pmatrix}\widetilde \vcR(z,\alpha)\begin{pmatrix}
			\vv_2\\
			\mathbf{0}
		\end{pmatrix}+
\vv_1^\top \left(\widetilde\vGamma+z\cdot\tilde{m}_{N,0}(z)\widetilde\vSigma
\right)^{-1}\vv_2 } \prec \frac{1}{\sqrt{N}}.
\end{align}
\end{enumerate}
\end{lemma}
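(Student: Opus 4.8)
The plan is to run the proof of Lemma~\ref{lemm:apply_fluctuation} essentially verbatim, with the augmented data matrix $[\vu,\vG]\in\R^{N\times(n+1)}$ in place of $\vG$, population covariance $\widetilde\vSigma$ in place of $\vSigma$, and the rank-$(r+1)$ shift $\widetilde\vGamma=z\vI_{n+1}+\alpha\widetilde\vV\widetilde\vV^\top$ with $\widetilde\vV=\binom{\vI_{r+1}}{\mathbf{0}}\in\R^{(n+1)\times(r+1)}$ in place of $\vGamma=z\vI_n+\alpha\vV_r\vV_r^\top$; indeed the $\widetilde\vGamma$ defined in the text equals $z\vI_{n+1}+\alpha\widetilde\vV\widetilde\vV^\top$. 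By the hypothesis of Theorem~\ref{thm:spiked}(c) the rows $\tfrac{1}{\sqrt N}(u_j,\vg_j^\top)^\top$ of $[\vu,\vG]$ are i.i.d.\ and satisfy Assumption~\ref{assump:G}, so every ingredient behind Lemma~\ref{lemm:apply_fluctuation} transfers directly: Theorem~\ref{thm:no_outlier} for $\vK':=[\vu,\vG]^\top[\vu,\vG]$, the deterministic equivalent of Theorem~\ref{thm:deterministic_equ}, the fluctuation averaging Lemma~\ref{lemma:fluctuationavg} (stated for an arbitrary shift $\vGamma$), and the master-equation analysis of the generalized resolvent. As there, $\widetilde\vcR(z,\alpha)$ exists iff the Schur complement $\vK'-\widetilde\vGamma$ is invertible, in which case its upper-left $(n+1)\times(n+1)$ block is $(\vK'-\widetilde\vGamma)^{-1}$; hence part~(b) reduces to the deterministic equivalent $\vv_1^\top(\vK'-\widetilde\vGamma)^{-1}\vv_2=-\vv_1^\top(\widetilde\vGamma+z\tilde m_{N,0}(z)\widetilde\vSigma)^{-1}\vv_2+\SD{1/\sqrt N}$, uniformly over $z\in U(\eps)$ and unit $\vv_1,\vv_2\in\R^{n+1}$, and part~(a) to a uniform operator-norm bound on $(\vK'-\widetilde\vGamma)^{-1}$ together with the block-inversion identities expressing the three remaining blocks of $\widetilde\vcR$ through $(\vK'-\widetilde\vGamma)^{-1}$, $[\vu,\vG]$ (bounded since $\|[\vu,\vG]\|^2=\|\vK'\|\le B$ with high probability), and their products.

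The only genuinely new bookkeeping is the identification of the bulk companion transform. Since we may take $\vSigma=\diag(\vSigma_r,\vSigma_0)$ diagonal, the augmented covariance has the bordered, block form $\widetilde\vSigma=\begin{pmatrix}\E[u^2]&\vb_r^\top&\vb_0^\top\\\vb_r&\vSigma_r&\mathbf{0}\\\vb_0&\mathbf{0}&\vSigma_0\end{pmatrix}$, where $\vb_r,\vb_0$ are the sub-vectors of $\E[u\vg]$ on the spike, resp.\ bulk, coordinates of $\vSigma$. The shift $\alpha\widetilde\vV\widetilde\vV^\top$ acts precisely on the $r+1$ coordinates carrying the $\vu$-direction and the spike directions of $\vSigma$, and the restriction of $\widetilde\vSigma$ to the complementary $n-r$ coordinates is exactly $\vSigma_0$, whose empirical spectral measure is $\nu_{N,0}$ and whose aspect ratio relative to $N$ is $\gamma_{N,0}$. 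Consequently the self-consistent scalar produced by the master equation --- which in Lemma~\ref{lemm:apply_fluctuation} is the companion Stieltjes transform attached to the covariance left after deleting the shifted block --- is here the companion transform of $\rho^{\MP}_{\gamma_{N,0}}\boxtimes\nu_{N,0}$, namely $\tilde m_{N,0}(z)$; the large-$\alpha$ shift is what makes this identification valid uniformly over $z\in U(\eps)$, including near the outlier eigenvalues of $\vK'$, using that $\min_{\lambda\in\supp\nu_{N,0}}|1+\lambda\tilde m_{N,0}(z)|$ and $|z|$ are bounded below there ($U(\eps)$ is separated from $\cS\supseteq\supp\tilde\mu_{N,0}$, cf.\ Lemma~\ref{lemma:supportcompareb}(b) applied to the bulk). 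Note that the \emph{full} matrix $\widetilde\vSigma$ --- in particular the coupling $\vb_0$, which carries the $\E[u\vg]$ information later used in Theorem~\ref{thm:spiked}(c) --- persists inside $(\widetilde\vGamma+z\tilde m_{N,0}(z)\widetilde\vSigma)^{-1}$ only as a finite-rank modification, without altering the scalar $\tilde m_{N,0}$. Finally, one must pick $\alpha_0(\eps)$ large enough that $\widetilde\vGamma+z\tilde m_{N,0}(z)\widetilde\vSigma$ is invertible with uniformly bounded inverse on $U(\eps)$: deleting the shifted block leaves $z(\vI_{n-r}+\tilde m_{N,0}(z)\vSigma_0)$, which is uniformly well-conditioned by the same two bounds, so for $|\alpha|>\alpha_0$ the full matrix is too.

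The hard part is the one inherited from Lemma~\ref{lemm:apply_fluctuation}: carrying out the master-equation argument \emph{uniformly over all of $U(\eps)$, including arbitrarily close to the outlier eigenvalues of $\vK'$}, where $(\vK'-z\vI)^{-1}$ itself diverges and only the large-$\alpha$ shift keeps $(\vK'-\widetilde\vGamma)^{-1}$ bounded. The single new point to verify is that $\Im\widetilde\vV$ captures \emph{every} direction along which $\vK'$ can develop an outlier eigenvector, so that the cancellation of the associated near-singularity is complete. The only possible obstruction is a ``new'' spike of $\widetilde\vSigma$ created by the $\vu$-coordinate through the coupling $\E[u\vg]$, whose eigenvector could leak into the bulk coordinates; but for any eigenvalue $\mu$ of $\widetilde\vSigma$ outside $\supp\nu$, the eigenvector's bulk component equals $x_0(\mu\vI-\vSigma_0)^{-1}\vb_0$ where $x_0$ is its $\vu$-component, which forces $x_0\neq0$ (else the whole eigenvector vanishes) and hence a nonzero overlap with $\Im\widetilde\vV=\operatorname{span}(e_0,\dots,e_r)$; so the cancellation goes through exactly as in Lemma~\ref{lemm:apply_fluctuation}. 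The borderline coincidences, e.g.\ $\mu$ equal to a spike of $\vSigma$ or a spike of $\widetilde\vSigma$ touching $\partial\supp\nu$, are dealt with as the analogous marginal cases in that proof.
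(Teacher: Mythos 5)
Your high-level strategy matches the paper's exactly: re-run the proof of Lemma~\ref{lemm:apply_fluctuation} with $r,n$ replaced by $r+1,n+1$, the spike block $\vG_r^{(S)}$ replaced by $[\vu^{(S)},\vG_r^{(S)}]$, $\vSigma$ replaced by $\widetilde\vSigma$, and $\vGamma$ by $\widetilde\vGamma$; and you correctly locate the sole genuinely new feature, namely that $\widetilde\vSigma$ is no longer block diagonal because of the bordering vector $\E[u\vg]$. Two of your steps, however, need adjustment.

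First, the paper isolates exactly two places where this matters and verifies each explicitly, whereas you gesture at them without carrying out the estimates. The analogue of the comparison~\eqref{eq:SigmaRcompare} still gives $N^{-1}\Tr(\widetilde\vSigma - \diag(\mathbf{0},\mathbf{0},\vSigma_0))\vR^{(S)}(\widetilde\vGamma)\prec 1/N$ because this difference has rank at most $r+2$ (the bordered row and column together contribute a rank-$2$ correction on top of the rank-$r$ block $\vSigma_r$). And the uniform bound $\bI\{\|(z^{-1}\widetilde\vGamma+\tilde m_{\vK}^{(S)}\widetilde\vSigma)^{-1}\|>C\}\prec 0$ is proved by computing the Schur complement of the lower-right $(n-r)\times(n-r)$ block $\tilde m_{\vK}^{(S)}\vSigma_0+\vI$ and verifying it is dominated by $\alpha/z$ for $|\alpha|$ large uniformly over $U(\eps)$. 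Your claim ``deleting the shifted block leaves $z(\vI_{n-r}+\tilde m_{N,0}(z)\vSigma_0)\ldots$ so for $|\alpha|>\alpha_0$ the full matrix is too'' asserts the conclusion but skips the Schur-complement computation that controls the off-diagonal coupling $\vb_0$; it is correct, but you should write it out.

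Second, your last paragraph misidentifies what makes the uniformity over $U(\eps)$ work. The argument in Lemma~\ref{lemm:apply_fluctuation}(a) never reasons about eigenvectors of $\widetilde\vSigma$ (nor did it about $\vSigma$). The bulk sub-block of $[\vu,\vG]$ is exactly $\vG_0$, untouched by the augmentation; Theorem~\ref{thm:no_outlier} applied to $\vK_0=\vG_0^\top\vG_0$ together with $\|[\vu,\vG]\|\le\sqrt{B}$ gives the event $\widetilde\cE'$ on which the bulk generalized resolvent $\vcR_0(z)$ is uniformly bounded on $U(\eps)$, and the Schur complement for the $(r+1)\times(r+1)$ shifted block is then dominated by $\alpha$. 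Whether a hypothetical new spike of $\widetilde\vSigma$ ``leaks into the bulk coordinates'' is simply not a question the proof needs to answer, since it never decomposes the population covariance spectrally; your verification via eigenvectors of $\widetilde\vSigma$ with $x_0\neq 0$ is a detour that can be dropped.
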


In the remainder of this section, we prove Lemmas \ref{lemm:apply_fluctuation}
and \ref{lemm:apply_fluctuation2}. Recall 
\[\mu_{N,0}=\rho_{\gamma_{N,0}}^{\MP} \boxtimes\nu_{N,0}, \qquad
\tilde \mu_{N,0}=\gamma_{N,0}\mu_{N,0}+(1-\gamma_{N,0})\delta_0.\]
Define the bulk components of the sample covariance and Gram matrices
\begin{equation}\label{eq:K0def}
\vK_0=\vG_0^\top \vG_0 \in \R^{(n-r) \times (n-r)},
\qquad \widetilde \vK_0=\vG_0\vG_0^\top \in \R^{N \times N}.
\end{equation}
Define also the $N$-dependent bulk spectral support and spectral domain
\begin{align}
\cS_{N,0}&=\supp{\mu_{N,0}} \cup \{0\}
=\supp{\tilde\mu_{N,0}} \cup \{0\},\nonumber\\
U_{N,0}(\eps)&=\{z \in \C:|z| \leq \eps^{-1},\dist{z,\cS_{N,0}} \geq \eps\}.\label{eq:UN0}
\end{align}
Lemma \ref{lemma:supportcompareb}(b) shows
$\cS_{N,0} \subseteq \cS+(-\eps/2,\eps/2)$ for any fixed $\eps>0$ and all large $N$,
so also $U(\eps) \subseteq U_{N,0}(\eps/2)$ for all large $N$. Thus, the results of
Appendix \ref{appendix:resolvent} applied to $\vK_0$, which hold uniformly over
$z \in U_{N,0}(\eps/2)$ for any fixed $\eps>0$,
also hold uniformly over $z \in U(\eps)$. In particular,
the following is an immediate consequence of Corollary \ref{cor:no_outlier} and
Theorem \ref{thm:deterministic_equ},
which we record here for future reference.

\begin{lemma}\label{lemm:support_G0}
Suppose Assumptions~\ref{assump:G} and \ref{assum:spike} hold.
Then for any fixed $\eps>0$,
\begin{align*}
\bI\{\vK_0 \text{ has an eigenvalue outside }
\cS+(-\eps,\eps)\} \prec 0.
\end{align*}
Furthermore, uniformly over $z \in U(\eps)$,
\[m_{\vK_0}-m_{N,0}(z)\prec 1/N, \quad  m_{\tilde\vK_0}-\tilde m_{N,0}(z)
\prec 1/N.\]
\end{lemma}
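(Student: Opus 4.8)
The plan is to obtain this lemma as a direct consequence of Corollary~\ref{cor:no_outlier} and Theorem~\ref{thm:deterministic_equ}, applied to the bulk Gram matrix $\vK_0=\vG_0^\top\vG_0$ of \eqref{eq:K0def} in place of $\vK=\vG^\top\vG$. Having reduced to $\vSigma=\diag(\vSigma_r,\vSigma_0)$ and $\vG=[\vG_r,\vG_0]$, the rows of $\vG_0$ are the restrictions of the rows of $\vG$ to the coordinate block on which $\vSigma$ acts as $\vSigma_0$; hence they are independent, mean $\mathbf{0}$, with common covariance $\vSigma_0$. Under the natural relabeling, the quantities $(\gamma_N,\nu_N,\mu_N,\tilde\mu_N,\cS_N,U_N(\eps))$ attached to $\vG$ are replaced by $(\gamma_{N,0},\nu_{N,0},\mu_{N,0},\tilde\mu_{N,0},\cS_{N,0},U_{N,0}(\eps))$ attached to $\vG_0$, while the weak limit $\nu$ and hence the limit support $\cS=\supp\mu\cup\{0\}$ are unchanged (since $\nu_{N,0}\to\nu$ and $(n-r)/N\to\gamma$, so $\mu_{N,0}\to\rho_\gamma^\MP\boxtimes\nu=\mu$). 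So the only real task is to check that $(\vG_0,\vSigma_0)$ satisfies Assumption~\ref{assump:G} and Assumption~\ref{assum:spike} with $r=0$.

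For Assumption~\ref{assump:G}: part (a) holds because $(n-r)/N$ has the same positive limit $\gamma$ as $n/N$, so it lies between positive constants for large $N$, and $\|\vSigma_0\|\le\|\vSigma\|<C$; part (b) holds because $\vK_0$ is a principal submatrix of $\vK$, so $\|\vK_0\|\le\|\vK\|$ by Cauchy interlacing and thus $\bI\{\|\vK_0\|>B\}\le\bI\{\|\vK\|>B\}\prec 0$; part (c) holds by applying the quadratic-form bounds of Assumption~\ref{assump:G}(c) for $\vg_i$ to the $n\times n$ matrix embedding $\vA$ into the block on which $\vSigma$ acts as $\vSigma_0$, which has Frobenius norm $\|\vA\|_F$ and reproduces the forms $\vg_{0,i}^\top\vA\vg_{0,j}$; and part (d) holds since $\|\vg_{0,i}\|\le\|\vg_i\|$. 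For Assumption~\ref{assum:spike} with $r=0$: part (a) is again $(n-r)/N\to\gamma$; part (b) is precisely the weak convergence $\nu_{N,0}\to\nu$ together with the inclusion $\lambda_i(\vSigma)\in\supp\nu+(-\eps,\eps)$ for $i\ge r+1$, both of which are part of Assumption~\ref{assum:spike} for $\vSigma$; and part (c) is vacuous.

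Given these verifications, Corollary~\ref{cor:no_outlier} applied to $\vK_0$ gives $\bI\{\vK_0\text{ has an eigenvalue outside }\cS+(-\eps,\eps)\}\prec 0$, and Theorem~\ref{thm:deterministic_equ} applied to $\vK_0$ gives $m_{\vK_0}(z)-m_{N,0}(z)\prec 1/N$ uniformly over $z\in U_{N,0}(\eps)$. By Lemma~\ref{lemma:supportcompareb}(b) we have $\cS_{N,0}\subseteq\cS+(-\eps/2,\eps/2)$ for all large $N$, hence $U(\eps)\subseteq U_{N,0}(\eps/2)$, so this estimate in fact holds uniformly over $z\in U(\eps)$. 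The companion estimate then follows from the identities $m_{\widetilde\vK_0}(z)=\gamma_{N,0}m_{\vK_0}(z)+(1-\gamma_{N,0})(-1/z)$ and $\tilde m_{N,0}(z)=\gamma_{N,0}m_{N,0}(z)+(1-\gamma_{N,0})(-1/z)$ (valid because $\widetilde\vK_0$ and $\vK_0$ share nonzero eigenvalues up to $|N-(n-r)|$ zeros), by subtraction. There is no genuine obstacle here; the content is entirely in Corollary~\ref{cor:no_outlier} and Theorem~\ref{thm:deterministic_equ}, and the only points needing care are the bookkeeping of the relabeled asymptotic parameters and the passage from uniformity over $U_{N,0}(\eps)$ to uniformity over $U(\eps)$ via the support inclusion of Lemma~\ref{lemma:supportcompareb}(b).
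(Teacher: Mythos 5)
Your proposal is correct and follows the same approach as the paper: the paper also obtains Lemma~\ref{lemm:support_G0} as an immediate consequence of Corollary~\ref{cor:no_outlier} and Theorem~\ref{thm:deterministic_equ} applied to $\vK_0$, using the inclusion $\cS_{N,0}\subseteq\cS+(-\eps/2,\eps/2)$ from Lemma~\ref{lemma:supportcompareb}(b) to pass from uniformity over $U_{N,0}(\eps/2)$ to uniformity over $U(\eps)$. The explicit verification that $(\vG_0,\vSigma_0)$ satisfies Assumptions~\ref{assump:G} and~\ref{assum:spike} with $r=0$, which the paper leaves implicit, is carried out correctly (in particular the interlacing argument for part (b) and the block-embedding argument for part (c) are exactly the right observations).
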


We now check that for sufficiently large $|\alpha|$, the generalized resolvent
$\vcR(z,\alpha)$ exists and has bounded operator norm with high probability.\\

\begin{proof-of-lemma}[\ref{lemm:apply_fluctuation}(a)]
Let
\[\cE'=\left\{\text{ all eigenvalues of } \vK_0 \text{ belong to }
\cS+(-\eps/2,\eps/2),\text{ and }\norm{\vG}<\sqrt{B}\right\}.\]
By Assumption \ref{assump:G}(b) and Lemma \ref{lemm:support_G0},
$\bI\{{\cE'}^c\} \prec 0$, so it suffices to show $\cE' \subseteq \cE$. 
On this event $\cE'$, for any $z \in U(\eps)$, we have that each
eigenvalue of $\vK_0$ is separated by at least $\eps/2$ from $z$.
Then,
	\begin{equation}\label{eq:block_G0}
		\vcR_0(z):=\begin{pmatrix}
		-z\vI_{n-r} & \vG_0^{\top}\\
			\vG_0 & -\vI_N
		\end{pmatrix}^{-1} \in \C^{(n-r+N) \times (n-r+N)}
	\end{equation}
exists for all $z \in U(\eps)$
because the Schur complement $\vK_0-z\vI_{n-r}$ of its lower-right block
is invertible.
Furthermore, denoting $\vR_0=(\vK_0-z\vI_{n-r})^{-1}$, we have
$\|\vR_0\| \leq 2/\eps$ and $\|\vG_0\| \leq \|\vG\|<\sqrt{B}$, so
	\begin{align}
\|\vcR_0(z)\|=\left\|\begin{pmatrix} \vR_0 & \vR_0\vG_0^\top
\\ \vG_0\vR_0 &
\vG_0\vR_0\vG_0^\top-\vI_N\end{pmatrix}\right\|
	\leq C_1\label{eq:R(z)_bound}
	\end{align}
for some constant $C_1$ depending only on $\eps,B$.

Now write $\vcR(z,\alpha)$ as defined in \eqref{eq:matrix_block} in its block
decomposition with blocks of sizes $r$ and $n-r+N$. Then the Schur complement
of the upper left block of size $r \times r$ is given by
	\begin{equation}\label{eq:schur_com}
	\vS=-\begin{pmatrix}
		\mathbf{0}&
		\vG_r^\top
	\end{pmatrix}\vcR_0(z)
	\begin{pmatrix}
		\mathbf{0}\\
		\vG_r
	\end{pmatrix}	-(\alpha+z)\vI_{r}.
	\end{equation}
	Notice that \begin{align}
	\vS\vS^*=~& |\alpha+z|^2\vI_{r}+\begin{pmatrix}
		\mathbf{0}&
		\vG_r^\top
	\end{pmatrix}\vcR_0(z)
	\begin{pmatrix}
		\mathbf{0}\\
		\vG_r
	\end{pmatrix}\begin{pmatrix}
		\mathbf{0}&
	\vG_r^\top
	\end{pmatrix}\overline{\vcR_0(z)}
	\begin{pmatrix}
		\mathbf{0}\\
		\vG_r
	\end{pmatrix}\\
	~&+(\bar \alpha+\bar z)\begin{pmatrix}
		\mathbf{0}&
		\vG_r^\top
	\end{pmatrix}\vcR_0(z)
	\begin{pmatrix}
		\mathbf{0}\\
		\vG_r
	\end{pmatrix}+ (\alpha+z)\begin{pmatrix}
		\mathbf{0}&
		\vG_r^\top
	\end{pmatrix}\overline{\vcR_0(z)}
	\begin{pmatrix}
		\mathbf{0}\\
		\vG_r
	\end{pmatrix}
	\end{align}
where the first two terms are positive semi-definite.
	Therefore, applying \eqref{eq:R(z)_bound} and $\|\vG_r\| \leq
\|\vG\|<\sqrt{B}$ on the event $\cE'$,
there exist $\alpha_0,c_0>0$ depending only on $\eps,B$, such that 
	\begin{align}
	\lambda_{\min}(\vS\vS^*) \ge
|\alpha+z|^2-2(|\alpha|+|z|)\norm{\vG_r}^2\norm{\vcR_0( z)}>c_0
	\end{align}
for any $z\in U(\eps)$ and $|\alpha|>\alpha_0$. Consequently, under the event
$\cE'$, the Schur complement $\vS$ in \eqref{eq:schur_com} is
invertible with $\|\vS^{-1}\|<c_0^{-1/2}$.
Then $\vcR(z,\alpha)$ exists, and
	\begin{align}
	\|\vcR(z,\alpha)\|
	&=\left\|\begin{pmatrix}
		\vS^{-1}& -\vS^{-1}\begin{pmatrix}
			\mathbf{0}&
			\vG_r^\top
		\end{pmatrix} \vcR_0(z)\\
		-\vcR_0(z)\begin{pmatrix}
			\mathbf{0}\\
			\vG_r
		\end{pmatrix} \vS^{-1} & \vcR_0(z)+\vcR_0(z)\begin{pmatrix}
			\mathbf{0}\\
			\vG_r
		\end{pmatrix}\vS^{-1}\begin{pmatrix}
			\mathbf{0}&
			\vG_r^\top
		\end{pmatrix}\vcR_0(z)
	\end{pmatrix}\right\| \leq C_0
	\end{align}
for a constant $C_0>0$ depending only on $\eps,B$.
This shows $\cE' \subseteq \cE$ as desired.
        \end{proof-of-lemma}

For the matrix $\vGamma=\vGamma(z,\alpha)$ in \eqref{eq:Gamma}, recall the definitions of
$\vR^{(S)}(\vGamma)$ and $\tilde m_{\vK}^{(S)}(\vGamma)$ from
\eqref{eq:leaveoneout}. The following provides an analogue of
Lemma \ref{lemma:FAconditions} for these quantities.

\begin{lemma}\label{lemm:bounds_R(z)}
Fix any $\eps>0$ and $L \geq 1$. Then there exist $C_0,c_0,\alpha_0>0$ such that
for any fixed $\alpha \in \C$ with $|\alpha|>\alpha_0$,
uniformly over $S \subset [N]$ with $|S| \leq L$, over $j \in S$, and over
$z \in U(\eps)$,
\[\begin{gathered}
\bI\{|\tilde m_{\vK}^{(S)}(\vGamma)|>C_0\} \prec 0,
\quad \bI\{|\tilde m_{\vK}^{(S)}(\vGamma)|<c_0\} \prec 0,
\quad \bI\{\|(z^{-1}\vGamma+\tilde m_{\vK}^{(S)}(\vGamma)\vSigma)^{-1}\|>C_0\} \prec 0,\\
\bI\{\|\vR^{(S)}(\vGamma)\|>C_0\} \prec 0,
\quad \bI\{|1+N^{-1}\Tr \vSigma\vR^{(S)} (\vGamma)|<c_0\} \prec 0,\\
\bI\{|1+N^{-1}\vg_j^\top \vR^{(S)}(\vGamma) \vg_j|<c_0\} \prec 0.
\end{gathered}\]
\end{lemma}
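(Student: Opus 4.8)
The plan is to reduce every quantity appearing in the statement to the \emph{bulk} resolvent $\vR_0^{(S)}(z)=(\vK_0^{(S)}-z\vI_{n-r})^{-1}$, where $\vK_0^{(S)}={\vG_0^{(S)}}^\top\vG_0^{(S)}$ as in \eqref{eq:K0def}, for which all the required estimates are already available from Appendix~\ref{appendix:resolvent}: Theorem~\ref{thm:no_outlier} and Theorem~\ref{thm:deterministic_equ} apply to $\vK_0$ (as in Lemma~\ref{lemm:support_G0}), and to the leave-out matrices $\vK_0^{(S)}$ exactly as in the proof of Lemma~\ref{lemma:FAconditions}, uniformly over $z\in U(\eps)$ because $U(\eps)\subseteq U_{N,0}(\eps/2)$ for large $N$ by Lemmas~\ref{lemma:supportcomparea} and~\ref{lemma:supportcompareb}. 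The structural point is that in the block coordinates $\vSigma=\diag(\vSigma_r,\vSigma_0)$ of \eqref{eq:Gamma}, the matrix $\vGamma=\vGamma(z,\alpha)$ equals $z\vI_n$ except in its finite $r\times r$ top-left block, where it is $(z+\alpha)\vI_r$; hence the Schur complement of $\vK^{(S)}-\vGamma$ with respect to its bottom-right $(n-r)\times(n-r)$ block is precisely $\vK_0^{(S)}-z\vI_{n-r}$, invertible with $\|\vR_0^{(S)}(z)\|\le 2/\eps$ on the (high-probability) event that $\vK_0^{(S)}$ has no eigenvalue outside $\cS+(-\eps/2,\eps/2)$.

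First I would establish $\bI\{\|\vR^{(S)}(\vGamma)\|>C_0\}\prec 0$. On the above event together with $\{\|\vK\|\le B\}$ (so that $\|\vG^{(S)}\|^2=\|\vK^{(S)}\|\le\|\vK\|\le B$, using $\bI\{\|\vK\|>B\}\prec 0$), the Schur complement of $\vK^{(S)}-\vGamma$ with respect to its $r\times r$ block equals $(z+\alpha)\vI_r$ minus a matrix of operator norm at most $B+(2/\eps)B^2$; for $|\alpha|>\alpha_0(\eps,B)$ this is invertible with inverse $\vS^{-1}$ of norm $\le c_0^{-1/2}$, and the block-inversion formula then yields existence and $\|\vR^{(S)}(\vGamma)\|\le C_0$, exactly as in the proof of Lemma~\ref{lemm:apply_fluctuation}(a). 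Writing $[\vR^{(S)}(\vGamma)]_{00}=\vR_0^{(S)}(z)+(\text{a rank-}{\le}\,r\text{ correction of bounded operator norm})$ and $[\vR^{(S)}(\vGamma)]_{rr}=\vS^{-1}$, one gets $\tfrac1N\Tr\vR^{(S)}(\vGamma)=\tfrac1N\Tr\vR_0^{(S)}(z)+O(1/N)$, whence, after accounting for the $O(1/N)$ differences $\gamma_N-\gamma_{N,0}$ and between $\tfrac1N\Tr\vR_0^{(S)}(z)$ and its deterministic equivalent (Theorem~\ref{thm:deterministic_equ}), $\tilde m_{\vK}^{(S)}(\vGamma)=\tilde m_{N,0}(z)+\SD{1/N}$, which is bounded above and below in modulus by Proposition~\ref{prop:basic_mn} (applied with $\nu_{N,0}$). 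For $\|(z^{-1}\vGamma+\tilde m_{\vK}^{(S)}(\vGamma)\vSigma)^{-1}\|$, observe that $z^{-1}\vGamma+\tilde m\vSigma=\diag\!\big((1+\alpha/z)\vI_r+\tilde m\vSigma_r,\ \vI_{n-r}+\tilde m\vSigma_0\big)$ is block-diagonal: the bottom block has bounded inverse since $\min_{\lambda\in\spec(\vSigma_0)}|1+\tilde m\lambda|\ge c$ (Proposition~\ref{prop:basic_mn}, plus $\tilde m=\tilde m_{N,0}(z)+\SD{1/N}$ and $\|\vSigma_0\|\le C$), and the top block has bounded inverse since $|1+\alpha/z|\ge|\alpha|\eps-1$ dominates $\|\tilde m\vSigma_r\|$ once $|\alpha|>\alpha_0$.

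It remains to treat $1+N^{-1}\Tr\vSigma\vR^{(S)}(\vGamma)$ and $1+N^{-1}\vg_j^\top\vR^{(S)}(\vGamma)\vg_j$. For the first, the $r\times r$ block contributes $N^{-1}\Tr\vSigma_r\vS^{-1}=O(1/N)$ and the finite-rank correction in $[\vR^{(S)}(\vGamma)]_{00}$ contributes $O(1/N)$, so $1+N^{-1}\Tr\vSigma\vR^{(S)}(\vGamma)=1+N^{-1}\Tr\vSigma_0\vR_0^{(S)}(z)+\SD{1/N}$; applying Theorem~\ref{thm:deterministic_equ} with $\vA=\vSigma_0$ (note $\|\vSigma_0\|_F\le C\sqrt N$, so the error divided by $N$ is $\SD{1/N}$) together with the Marcenko--Pastur relation \eqref{eq:MPeq} for $\tilde m_{N,0}^{(S)}$ gives $1+N^{-1}\Tr\vSigma\vR^{(S)}(\vGamma)=-1/(z\,\tilde m_{N,0}(z))+\SD{1/N}$, which is bounded away from $0$ because $|z|\le\eps^{-1}$ and $|\tilde m_{N,0}(z)|\le C$. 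For the second, since $\vR^{(S)}(\vGamma)$ is independent of $\{\vg_i:i\in S\}$, Assumption~\ref{assump:G}(c) gives $N^{-1}\vg_j^\top\vR^{(S)}(\vGamma)\vg_j=N^{-1}\Tr\vSigma\vR^{(S)}(\vGamma)+\SD{N^{-1}\|\vR^{(S)}(\vGamma)\|_F}$, and $\|\vR^{(S)}(\vGamma)\|_F\le\sqrt n\,C_0\prec\sqrt N$ on the event of the first bound, so the correction is $\SD{N^{-1/2}}$ and the claim follows. All estimates are made uniform in $z$ by the usual Lipschitz-in-$z$ argument on a polynomial net of $U(\eps)$, and uniform in $S,j$ by a union bound over the $\le N^L$ sets $S$ and $\le N$ indices $j$ (Proposition~\ref{prop:domination}(a)).

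I expect the main obstacle to be essentially organizational: carefully propagating the many $O(1/N)$-order discrepancies (between $\gamma_N$, $\gamma_{N,0}$ and their $(S)$-analogues, between $\tilde m_{N,0}$ and $\tilde m_{N,0}^{(S)}$, and the finite-rank block corrections) through the trace identities, and verifying that Theorems~\ref{thm:no_outlier} and~\ref{thm:deterministic_equ} genuinely transfer to $\vK_0^{(S)}$ uniformly over $z\in U(\eps)$. The one genuinely substantive ingredient, already isolated in Lemma~\ref{lemm:apply_fluctuation}(a), is that the rank-$r$ shift $\alpha\vV_r\vV_r^\top$ is exactly what prevents $\vK^{(S)}-\vGamma$ from being nearly singular at spectral arguments $z$ coinciding with spike eigenvalue locations of $\vK^{(S)}$, which is what makes the reduction to the spike-free bulk matrix $\vK_0^{(S)}$ legitimate.
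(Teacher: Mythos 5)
Your proposal is correct and follows essentially the same route as the paper's proof: establish $\|\vR^{(S)}(\vGamma)\|\prec 1$ via the Schur-complement argument of Lemma~\ref{lemm:apply_fluctuation}(a), reduce $\tilde m_{\vK}^{(S)}(\vGamma)$ and $N^{-1}\Tr\vSigma\vR^{(S)}(\vGamma)$ to their bulk counterparts built from $\vR_0^{(S)}(z)$ using rank-$r$ block-inversion corrections of size $O(1/N)$, import the bulk estimates from Lemma~\ref{lemm:support_G0} and Lemma~\ref{lemma:FAconditions}, exploit the block-diagonal form of $z^{-1}\vGamma+\tilde m\vSigma$ (choosing $|\alpha|$ large enough that the top block is invertible), and close with Assumption~\ref{assump:G}(c) for the quadratic-form bound. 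The only cosmetic difference is that for $1+N^{-1}\Tr\vSigma\vR^{(S)}(\vGamma)$ you compute the deterministic limit $-1/(z\tilde m_{N,0}(z))$ explicitly via Theorem~\ref{thm:deterministic_equ} and the Marcenko--Pastur equation, whereas the paper invokes the abstract lower bound already packaged in Lemma~\ref{lemma:FAconditions}; these are equivalent.
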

\begin{proof}
Suppose $|\alpha|$ is large enough so that Lemma \ref{lemm:apply_fluctuation}(a)
holds. Since $\vR(\vGamma)$ is the upper-left block of $\vcR(z,\alpha)$,
Lemma~\ref{lemm:apply_fluctuation}(a) applied with $\vG^{(S)}$ in place of $\vG$
shows that $\bI\{\|\vR^{(S)}(\vGamma)\|>C_0\} \prec 0$ for a constant $C_0>0$,
uniformly over $S \subset [N]$ with $|S| \leq L$ and over $z \in U(\eps)$.
For the remaining statements,
let $\vG_0^{(S)} \in \R^{(N-|S|) \times (n-r)}$ be the submatrix of $\vG_0$ with
the rows of $S$ removed, and define
\[\vK_0^{(S)}={\vG_0^{(S)}}^\top\vG_0^{(S)},
\quad \widetilde \vK_0^{(S)}={\vG_0^{(S)}}{\vG_0^{(S)}}^\top,\]
\[\vR_0^{(S)}=(\vK_0^{(S)}-z\vI_{n-r})^{-1},
\quad m_{\vK_0}^{(S)}=\frac{1}{n-r}\Tr \vR_0^{(S)},
\quad \tilde
m_{\vK_0}^{(S)}=\gamma_{N,0}m_{\vK_0}^{(S)}+(1-\gamma_{N,0})\Big({-}\frac{1}{z}\Big).\]
Then by Lemma \ref{lemm:support_G0} applied to $\vK_0^{(S)}$, also
$\bI\{\|\vR_0^{(S)}\|>C_0\} \prec 0$ for a constant $C_0>0$.

Using these bounds, we first show the comparisons
\begin{equation}\label{eq:diff_m}
	|\tilde m_{\vK}^{(S)}(\vGamma)-\tilde m_{\vK_0}^{(S)}| \prec
1/N, \quad
	\left| N^{-1}\Tr \vSigma\vR^{(S)}(\vGamma)-N^{-1}
\Tr \vSigma_0\vR_0^{(S)}\right| \prec 1/N.
\end{equation}
	For the first comparison, notice that in the decompositions into blocks
of sizes $r$ and $n-r$,
\[\frac{n-r}{n}\,m_{ \vK_0}^{(S)}= \frac{1}{n}\Tr \begin{pmatrix}
		\mathbf{0} & \mathbf{0}\\
		\mathbf{0} & \vR_0^{(S)}
	\end{pmatrix}\]
	and 
	\begin{align*}
		m_{\vK}^{(S)}(\vGamma)=~&\frac{1}{n}\Tr \vR^{(S)}(\vGamma)\\
=~&\frac{1}{n}\Tr \begin{pmatrix}
		\vG_r^{(S)\top}\vG_r^{(S)}-(\alpha+z)\vI_r & \vG_r^{(S)\top}\vG_0^{(S)}\\
		\vG_0^{(S)\top}\vG_r^{(S)} & \vK_0^{(S)}-z\vI_{N-|S|}
	\end{pmatrix}^{-1}\\=
	~&\frac{1}{n}\Tr \begin{pmatrix}
		(\vS_r^{(S)})^{-1} & -(\vS_r^{(S)})^{-1}\vG_r^{(S)\top}\vG_0^{(S)}\vR_0^{(S)}\\
		-\vR_0^{(S)}\vG_0^{(S)\top}\vG_r^{(S)}(\vS_r^{(S)})^{-1} & \vR_0^{(S)}+\vR_0^{(S)}\vG_0^{(S)\top}\vG_r^{(S)}(\vS_r^{(S)})^{-1}\vG_r^{(S)\top}\vG_0^{(S)}\vR_0^{(S)}
	\end{pmatrix}, 
	\end{align*}
	where 
	\begin{equation}\label{eq:Schur_r}
		\vS_r^{(S)}:= \vG_r^{(S)\top}\vG_r^{(S)}-(\alpha+z)\vI_r-  \vG_r^{(S)\top}\vG_0^{(S)}\vR_0^{(S)}\vG_0^{(S)\top}\vG_r^{(S)}
	\end{equation}
	is the Schur complement of the lower-right block. We have
$\snorm{(\vS_r^{(S)})^{-1}}\le \snorm{\vR^{(S)}(\vGamma)}\prec 1$,
$\snorm{\vR_0^{(S)}}\prec 1$,
and by Assumption~\ref{assump:G}, $\snorm{\vG_0^{(S)}}\prec 1$ and
$\snorm{\vG_r^{(S)}}\prec 1$. Combining these bounds and using
$|\Tr \vA| \leq r\|\vA\|$ when $\vA$ has rank $r$ (as follows from the
von Neumann trace inequality),
	\begin{align*}
		&\left|m_{ \vK}^{(S)}(\vGamma)-\frac{n-r}{n}\,m_{
\vK_0}^{(S)}\right|\\
&=\left|\frac{1}{n}\Tr\begin{pmatrix}
			(\vS_r^{(S)})^{-1} & -(\vS_r^{(S)})^{-1}\vG_r^{(S)\top}\vG_0^{(S)}\vR_0^{(S)}\\
			-\vR_0^{(S)}\vG_0^{(S)\top}\vG_r^{(S)}(\vS_r^{(S)})^{-1} & \vR_0^{(S)}\vG_0^{(S)\top}\vG_r^{(S)}(\vS_r^{(S)})^{-1}\vG_r^{(S)\top}\vG_0^{(S)}\vR_0^{(S)}
		\end{pmatrix}\right|\\
		&\le \frac{1}{n}|\Tr (\vS_r^{(S)})^{-1}|+\frac{1}{n}|\Tr \vR_0^{(S)}\vG_0^{(S)\top}\vG_r^{(S)}(\vS_r^{(S)})^{-1}\vG_r^{(S)\top}\vG_0^{(S)}\vR_0^{(S)}|\\
		&\le \frac{r}{n}\snorm{(\vS_r^{(S)})^{-1}} +
\frac{r}{n}\snorm{\vG_r^{(S)}}^2
\snorm{\vG_0^{(S)}}^2\snorm{\vR_0^{(S)}}^2\snorm{(\vS_r^{(S)})^{-1}}
\prec 1/N.
	\end{align*}
Then also $|m_{\vK}^{(S)}(\vGamma)-m_{\vK_0}^{(S)}| \prec 1/N$ since
$|m_{\vK_0}^{(S)}|\leq \|\vR_0^{(S)}\| \prec 1$ and $(n-r)/n=1+\SD{1/N}$.
Hence $|\tilde m_{\vK}^{(S)}(\vGamma)-\tilde m_{\vK_0}^{(S)}| \prec 1/N$
from the definitions $\tilde m_{\vK}^{(S)}(\vGamma)=\gamma_N
m_{\vK}^{(S)}(\vGamma)+(1-\gamma_N)(-1/z)$ and
$\tilde m_{\vK_0}^{(S)}=\gamma_{N,0}m_{\vK_0}^{(S)}+(1-\gamma_{N,0})(-1/z)$,
as $|1/z| \leq \eps$ for $z \in U(\eps)$ and $\gamma_{N,0}=\gamma_N+\SD{1/N}$.
The proof of the second comparison of \eqref{eq:diff_m} is analogous,
considering in addition
	\begin{equation}\label{eq:SigmaRcompare}
\frac{1}{n}\Tr \left(\vSigma-\begin{pmatrix}
		\mathbf{0} & \mathbf{0}\\
		\mathbf{0} & \vSigma_0
	\end{pmatrix}\right)\vR^{(S)}(\vGamma)
=\frac{1}{n}\Tr \begin{pmatrix}
		\vSigma_r & \mathbf{0}\\
		\mathbf{0} & \mathbf{0}
	\end{pmatrix}\vR^{(S)}(\vGamma)
\leq \frac{r}{n}\|\vSigma_r\|
\|\vR^{(S)}(\vGamma)\| \prec \frac{1}{N}.
\end{equation}

Now, Lemma~\ref{lemma:FAconditions} applied with $\vK_0$ shows, uniformly over $S
\subset [N]$ with $|S| \leq L$ and over $z \in U(\eps)$,
\[\bI\{|\tilde m_{\vK_0}^{(S)}|>C_0\} \prec 0, \quad
	\bI\{|\tilde m_{\vK_0}^{(S)}|<c_0\} \prec 0, \quad
\bI\{|1+N^{-1}\Tr \vSigma_0\vR_0^{(S)}|<c_0\} \prec 0,\]
which together with \eqref{eq:diff_m} implies
	\[\bI\{|\tilde m_{\vK}^{(S)}(\vGamma)|>C_0\} \prec 0, \quad
	\bI\{|\tilde m_{\vK}^{(S)}(\vGamma)|<c_0\} \prec 0, \quad
\bI\{|1+N^{-1}\Tr \vSigma\vR^{(S)}(\vGamma)|<c_0\} \prec 0\]
for adjusted constants $C_0,c_0>0$.
Also by Assumption~\ref{assump:G}, uniformly over $j \in S$,
	\begin{equation}\label{eq:concnetr_g_i}
	N^{-1}\vg_j^\top \vR^{(S)}(\vGamma)\vg_j-N^{-1}\Tr
\vSigma\vR^{(S)}(\vGamma)\prec N^{-1}\|\vR^{(S)}(\vGamma)\|_F \leq
N^{-1/2}\|\vR^{(S)}(\vGamma)\| \prec N^{-1/2},
	\end{equation}
	so $\bI\{|1+N^{-1}\vg_j^\top \vR^{(S)}(\vGamma) \vg_j|<c\} \prec 0$
for a constant $c>0$.
Lastly, from the definition of $\vGamma=\vGamma(z,\alpha)$ in \eqref{eq:Gamma}, we have
	\begin{align}
	    z^{-1}\vGamma+\tilde m_{\vK}^{(S)}(\vGamma)\vSigma&= \begin{pmatrix}
		\tilde m_{\vK}^{(S)}(\vGamma)\vSigma_r+(\frac{\alpha}{z}+1)\vI_r& \mathbf{0}\\
		\mathbf{0} & \tilde m_{\vK}^{(S)}(\vGamma)\vSigma_0+\vI_{n-r}\label{eq:decomp_gamma+sigma}
	\end{pmatrix}.
	\end{align}
By \eqref{eq:diff_m} and Lemma \ref{lemma:FAconditions}, we have
	\begin{equation}\label{eq:lowerblockImSigma}
\bI\left\{\norm{\left(\tilde m_{\vK}^{(S)}(\vGamma)\vSigma_0+\vI_{n-r}\right)^{-1}}>C\right\}\prec 0
\end{equation}
	for some constant $C>0$. We have already proved $\bI\{|\tilde
m_{\vK}^{(S)}(\vGamma)|>C_0\} \prec 0$, and applying
$\|\vSigma_r\|\le C $ under Assumption~\ref{assump:G}, we can deduce for the
smallest singular value that
\begin{equation}\label{eq:sigmamin1plusmSigma}
		\sigma_{\min} \left(\tilde
m_{\vK}^{(S)}(\vGamma)\vSigma_r+(\alpha/z+1)\vI_{r}\right)\ge
\frac{|\alpha|}{|z|}-1-|\tilde m_{\vK}^{(S)}(\vGamma)|\norm{\vSigma_r} \geq
c
\end{equation}
	on the event $\{|\tilde m_{\vK}^{(S)}(\vGamma)|\le C_0\}$,
for any $z \in U(\eps)$, $|\alpha| \geq \alpha_0$, and some $\alpha_0,c>0$
depending on $\eps,C_0$.
Thus also
\begin{equation}\label{eq:extended1plusmSigma}
\bI\{\|(z^{-1}\vGamma+m_{\tilde
\vK}^{(S)}(\vGamma)\vSigma)^{-1}\|>C\} \prec 0
\end{equation}
for a constant $C>0$, showing all statements of the lemma.
\end{proof}

\begin{proof-of-lemma}[\ref{lemm:apply_fluctuation}(b)]
	Recalling the form of $\vcR(z,\alpha)$ in \eqref{eq:matrix_block},
the quantity we wish to approximate is
	\begin{align*}
		\begin{pmatrix}
			\vv_1^\top&\mathbf{0}
		\end{pmatrix}\vcR(z,\alpha)\begin{pmatrix}
			\vv_2\\
			\mathbf{0}
		\end{pmatrix}=\vv_1^\top\vR(\vGamma)\vv_2=\vv^\top_1\left(\vK-\vGamma\right)^{-1}\vv_2.
	\end{align*}
	Analogous to \eqref{eq:fundamentalidentity} in the proof of Lemma~\ref{lemma:prelimestimate}, 
	for any matrix $\vB\in \C^{n\times n}$, we have
	\begin{align}
		\Tr \vB=
\Tr (\vK-\vGamma)\vR(\vGamma)\vB={-}\Tr \vR(\vGamma)\vB\vGamma+\frac{1}{N}\sum_{i=1}^N
\frac{\vg_i^\top\vR^{(i)}(\vGamma)\vB\vg_i}{1+N^{-1}\vg_i^\top\vR^{(i)}(\vGamma)\vg_i}.\label{eq:S-M-v1Rv2}
	\end{align}
Applying the definition
$m_{\tilde \vK}(\vGamma)=N^{-1}\Tr \vR(\vGamma) +(1-\gamma_N)(-1/z)$
and the identity \eqref{eq:S-M-v1Rv2} with $\vB=\vI$, we obtain analogously to
\eqref{eq:mnidentity} that
	\begin{align*}
m_{\tilde \vK}(\vGamma)
&=(1-\gamma_N)(-1/z)+\frac{1}{Nz}\Tr \vR(\vGamma)\vGamma
-\frac{1}{N}\Tr (z^{-1}\vGamma-\vI)\vR(\vGamma)\notag\\
&={-}\frac{1}{Nz}\sum_{i=1}^N\frac{1}{1+N^{-1}\vg_i^\top\vR^{(i)}(\vGamma)\vg_i}
-\frac{1}{N}\Tr (z^{-1}\vGamma-\vI)\vR(\vGamma).
	\end{align*}
Then, noting that $z^{-1}\vGamma-\vI$ has rank $r$ and hence
$|N^{-1}\Tr(z^{-1}\vGamma-\vI)\vR(\vGamma)|
\leq \frac{r}{N}\frac{|\alpha|}{|z|}\|\vR(\vGamma)\| \prec N^{-1}$, this gives
\begin{equation}\label{eq:S-M-TrR}
m_{\tilde \vK}(\vGamma)
={-}\frac{1}{Nz}\sum_{i=1}^N \frac{1}{1+N^{-1}\vg_i^\top\vR^{(i)}(\vGamma)\vg_i}
+\SD{N^{-1}}.
\end{equation}

Fixing the unit vectors $\vv_1,\vv_2 \in \R^n$,
let us now choose $\vA=\vv_2\vv_1^\top$ and
$\vB=\vA(z^{-1}\vGamma+m_{\tilde \vK}(\vGamma)\cdot\vSigma )^{-1}$
in \eqref{eq:S-M-v1Rv2}, and define
	\begin{align*}
d_i&=\frac{1}{N}\,\vg_i^\top \vR^{(i)}(\vGamma)\vB\vg_i
	-\frac{1}{N}\Tr \vR(\vGamma)\vB\vSigma\\
&=\frac{1}{N}\,\vg_i^\top \vR^{(i)}(\vGamma)\vA
	\left(z^{-1}\vGamma+m_{\tilde \vK}(\vGamma)\vSigma\right)^{-1}\vg_i
	-\frac{1}{N}\Tr \vR(\vGamma)\vA\left(z^{-1}\vGamma+m_{\tilde
\vK}(\vGamma)\vSigma\right)^{-1}\vSigma.
\end{align*}
Then, combining \eqref{eq:S-M-v1Rv2} and \eqref{eq:S-M-TrR}, we get
	\begin{align}
&\vv_1^\top (z^{-1}\vGamma+m_{\tilde \vK}(\vGamma)\vSigma)^{-1}\vv_2=\Tr
\vB\notag\\
&= {-}\Tr \vR(\vGamma)\vB\vGamma + \Tr
\vR(\vGamma)\vB\vSigma \cdot \left(-zm_{\tilde
\vK}(\vGamma)+\SD{N^{-1}}\right)
+\sum_{i=1}^N\frac{d_i}{1+N^{-1}\vg_i^\top\vR^{(i)}(\vGamma)\vg_i} \notag\\
&= {-}z\cdot\vv_1^\top \vR(\vGamma)\vv_2 +
\sum_{i=1}^N\frac{d_i}{1+N^{-1}\vg_i^\top\vR^{(i)}(\vGamma)\vg_i}+\SD{N^{-1}},\label{eq:J1+J2}
	\end{align} 
where the last equality applies the definition of $\vB$ to combine the first two
terms, and applies also
$|\Tr \vR(\vGamma)\vB\vSigma|
\leq \|(z^{-1}\vGamma+m_{\tilde \vK}(\vGamma)\vSigma)^{-1}\vSigma\vR(\vGamma)\|
\prec 1$ by Lemma~\ref{lemm:bounds_R(z)} to obtain the
$\SD{N^{-1}}$ remainder.

	Considering a similar decomposition as in Lemma~\ref{lemma:prelimestimate}, we define
	$d_i=d_{i,1}+d_{i,2}+d_{i,3}+d_{i,4}$ where
	\begin{equation}\label{eq:di14_gamma}
	\begin{aligned}
	d_{i,1}&=\frac{1}{N}\vg_i^\top
\vR^{(i)}(\vGamma)\vA(z^{-1}\vGamma+m_{\tilde \vK}(\vGamma)\vSigma)^{-1}
	\vg_i-\frac{1}{N}\vg_i^\top
\vR^{(i)}(\vGamma)\vA(z^{-1}\vGamma+m_{\tilde \vK}^{(i)}(\vGamma)\vSigma)^{-1}\vg_i,\\
	d_{i,2}&=\frac{1}{N}\vg_i^\top
\vR^{(i)}(\vGamma)\vA(z^{-1}\vGamma+m_{\tilde \vK}^{(i)}(\vGamma)\vSigma)^{-1}
	\vg_i-\frac{1}{N}\Tr \vSigma
\vR^{(i)}(\vGamma)\vA(z^{-1}\vGamma+m_{\tilde \vK}^{(i)}(\vGamma)\vSigma)^{-1},\\
	d_{i,3}&=\frac{1}{N}\Tr \vSigma
\vR^{(i)}(\vGamma)\vA(z^{-1}\vGamma+m_{\tilde \vK}^{(i)}(\vGamma)\vSigma)^{-1}
	-\frac{1}{N}\Tr \vSigma \vR(\vGamma)\vA(z^{-1}\vGamma+m_{\tilde
\vK}^{(i)}(\vGamma)\vSigma)^{-1},\\
	d_{i,4}&=\frac{1}{N}\Tr \vSigma \vR(\vGamma)\vA(z^{-1}\vGamma+m_{\tilde
\vK}^{(i)}(\vGamma)\vSigma)^{-1}
	-\frac{1}{N}\Tr \vSigma \vR(\vGamma)\vA(z^{-1}\vGamma+m_{\tilde
\vK}(\vGamma)\vSigma)^{-1}.
	\end{aligned}
	\end{equation}
	For $\vA=\vv_2\vv_1^\top$, by the bound
$\bI\{\|\vR^{(S)}(\vGamma)\|>C_0\} \prec 0$
from Lemma \ref{lemm:bounds_R(z)}, we have for a constant $C>0$ that
	\begin{equation}
	\bI\left\{\norm{\vR^{(S)}(\vGamma)}_F>C\sqrt{N}\right\}\prec 0, \quad \bI\left\{\norm{\vR^{(S)}(\vGamma)\vA}_F>C\right\}\prec 0
	\end{equation} 
	uniformly over $z\in U(\eps)$.
	Then, employing Lemma~\ref{lemm:bounds_R(z)} and the same bounds as
\eqref{eq:di1}--\eqref{eq:di4} from the proof of Lemma \ref{lemma:FAconditions}
(where here, the bounds for $\|\vR^{(i)}\vA\|_F,\|\vR\vA\|_F$ are
improved by a factor of $N^{-1/2}$ because $\vA$ is low-rank),
we conclude that $|d_{i,1}|,|d_{i,3}|,|d_{i,4}|
	\prec N^{-3/2}$ and $|d_{i,2}| \prec N^{-1}$.  
Hence, applying also $1+N^{-1}\vg_i^\top \vR^{(i)}(\vGamma)\vg_i
=1+N^{-1}\Tr \vSigma\vR(\vGamma)+\SD{N^{-1/2}}$ as follows from
\eqref{eq:concnetr_g_i} and the bound \eqref{eq:mdiffestimate},
	\[\sum_{i=1}^N\frac{d_i}{1+N^{-1}\vg_i^\top\vR^{(i)}(\vGamma)\vg_i}
=\frac{1}{1+N^{-1}\Tr \vSigma\vR(\vGamma)}
\cdot \sum_{i=1}^N d_{i,2}+\SD{N^{-1/2}}.\]
	By Lemma~\ref{lemma:fluctuationavg} applied with
$\Psi_N(\vGamma)=C\sqrt{N}$ and $\Phi_N(\vGamma,\vA)=C$ for a constant $C>0$,
we have $|\sum_i d_{i,2}| \prec N^{-1/2} $. Thus the above quantity is of size
$\SD{N^{-1/2}}$, so applying this back to \eqref{eq:J1+J2},
	\begin{align*}
		\vv_1^\top(\vGamma+zm_{\tilde \vK}(\vGamma)\cdot\vSigma
)^{-1}\vv_2+\vv_1^\top\vR(\vGamma)\vv_2 \prec N^{-1/2}.
	\end{align*}
	Finally, from \eqref{eq:diff_m} and Lemma~\ref{lemm:support_G0} 
we have $m_{\tilde \vK}(\vGamma)=\tilde m_{N,0}(z)+\SD{N^{-1}}$, and applying
this above completes the proof.
\end{proof-of-lemma}

\begin{proof-of-lemma}[\ref{lemm:apply_fluctuation2}]
The proof is similar to Lemma \ref{lemm:apply_fluctuation}, replacing
$r$ and $n$ throughout by $r+1$ and $n+1$,
$\vG_r^{(S)}$ by $[\vu^{(S)},\vG_r^{(S)}]$,
$\vSigma$ by $\widetilde\vSigma$, and
$\vR^{(S)}(\vGamma)$ and $\tilde m_{\vK}^{(S)}(\vGamma)$ by
\[\vR^{(S)}(\widetilde \vGamma)
=\big([\vu^{(S)},\vG^{(S)}]^\top [\vu^{(S)},\vG^{(S)}]-\widetilde
\vGamma\big)^{-1},
\quad \tilde m_{\vK}^{(S)}(\widetilde \vGamma)
=\frac{1}{N}\Tr \vR^{(S)}(\widetilde
\vGamma)+\Big(1-\frac{n+1}{N}\Big)\left({-}\frac{1}{z}\right).\]
The only difference here is that $\widetilde \vSigma$ is no longer
diagonal, leading to the following minor modifications of the preceding
proof: The bound 
\[\frac{1}{n+1}\Tr\left(\widetilde\vSigma
-\begin{pmatrix} \mathbf{0} & \mathbf{0} & \mathbf{0} \\
\mathbf{0} & \mathbf{0} & \mathbf{0} \\
\mathbf{0} & \mathbf{0} & \vSigma_0 \end{pmatrix}\right)
\vR^{(S)}(\widetilde\vGamma) \prec \frac{1}{N}\]
analogous to \eqref{eq:SigmaRcompare} follows upon noting that
(with $\E[u\vg]^\top=\begin{pmatrix} \E[u\vg_r]^\top & \E[u\vg_0]^\top
\end{pmatrix}$)
\[\widetilde \vSigma-\begin{pmatrix} \mathbf{0} & \mathbf{0} & \mathbf{0} \\
\mathbf{0} & \mathbf{0} & \mathbf{0} \\
\mathbf{0} & \mathbf{0} & \vSigma_0 \end{pmatrix}
=\begin{pmatrix} \E[u^2] & \E[u\vg_r]^\top & \E[u\vg_0]^\top \\
\E[u\vg_r] & \vSigma_r & \mathbf{0} \\
\E[u\vg_0] & \mathbf{0} & \mathbf{0} \end{pmatrix}\]
still is of low rank, with rank at most $r+2$.
Writing as shorthand $\tilde m_{\vK}^{(S)}=\tilde
m_{\vK}^{(S)}(\widetilde \vGamma)$, the bound
	\[\bI\{\|(z^{-1}\widetilde\vGamma+\tilde m_{\vK}^{(S)}
\widetilde\vSigma)^{-1}\|>C_0\} \prec 0\]
analogous to \eqref{eq:extended1plusmSigma} follows from
	\[(z^{-1}\widetilde\vGamma+\tilde m_{\vK}^{(S)}\widetilde\vSigma)^{-1}=~\begin{pmatrix}
        \tilde m_{\vK}^{(S)}\E[u^2]
+\frac{\alpha}{z}+1 & \tilde m_{\vK}^{(S)}
\E[u\vg_r]^\top & \tilde m_{\vK}^{(S)} \E[u\vg_0]^\top \\
\tilde m_{\vK}^{(S)} \E[u\vg_r] &
\tilde m_{\vK}^{(S)}\vSigma_r
+(\frac{\alpha}{z}+1)\vI_r & \mathbf{0} \\
	\tilde m_{\vK}^{(S)} \E[u\vg_0] & \mathbf{0} &
\tilde m_{\vK}^{(S)}\vSigma_0+\vI
\end{pmatrix}^{-1},\]
the bound $\bI\{\|\tilde m_{\vK}^{(S)}\vSigma_0+\vI\|^{-1}>C\} \prec 0$ for
the lower-right block as follows from
\eqref{eq:lowerblockImSigma}, and the bound for the inverse of
its Schur-complement
\begin{align*}
&\bI\Bigg\{\Bigg\|\Bigg[\begin{pmatrix}
 \tilde m_{\vK}^{(S)}\E[u^2]
+\frac{\alpha}{z}+1 & \tilde m_{\vK}^{(S)}\E[u\vg_r]^\top \\
\tilde m_{\vK}^{(S)}\E[u\vg_r] & \tilde m_{\vK}^{(S)}\vSigma_r
+(\frac{\alpha}{z}+1)\vI_r \end{pmatrix}\\
&\hspace{1in}
-\begin{pmatrix} \tilde m_{\vK}^{(S)}\E[u\vg_0]^\top \\ \mathbf{0} \end{pmatrix}
(\tilde m_{\vK}^{(S)}\vSigma_0+\vI)^{-1}
\begin{pmatrix} \tilde m_{\vK}^{(S)}\E[u\vg_0] & \mathbf{0}
\end{pmatrix}\Bigg]^{-1}
\Bigg\|>C\Bigg\} \prec 0
\end{align*}
which holds uniformly over $z \in U(\eps)$ for any $|\alpha|>\alpha_0$ 
sufficiently large, by an argument analogous to \eqref{eq:sigmamin1plusmSigma}.
The remainder of the proof
is identical to that of Lemma \ref{lemm:apply_fluctuation}, and we omit the
details. 
\end{proof-of-lemma}

\subsection{Analysis of outliers}

Let $\vV_r,\vGamma(z,\alpha),\vcR(z,\alpha),\widetilde
\vcR(z,\alpha)$ be as defined in the preceding section.
Consider the decomposition of
$\widetilde \vcR(z,\alpha)$ as in \eqref{eq:tildeR}
into its blocks of dimensions 1, $n$, and $N$, and define
\begin{align}
\widetilde \cR_{11}(z,\alpha)&:=\begin{pmatrix} 1 \\ 0 \\ 0 \end{pmatrix}^\top
\widetilde \vcR(z,\alpha) \begin{pmatrix} 1 \\ 0 \\ 0 \end{pmatrix}
=\frac{1}{-z-\alpha+\vu^\top \vu-\vu^\top
\vG(\vG^\top\vG-\vGamma(z,\alpha))^{-1} \vG^\top\vu},
\label{eq:R11}\\
\widetilde \vcR_{1V}(z,\alpha)&:=\begin{pmatrix} 1 \\ 0 \\ 0 \end{pmatrix}^\top
\widetilde \vcR(z,\alpha) \begin{pmatrix} 0 \\ \vV_r \\ 0 \end{pmatrix}
={-}\widetilde \cR_{11}(z,\alpha) \cdot
\vu^\top \vG\Big(\vG^\top \vG-\vGamma(z,\alpha)\Big)^{-1}\vV_r,
\label{eq:R1V}
\end{align}
where the second equalities follow from block matrix inversion of
the lower $2 \times 2$ blocks of $\widetilde \vcR(z,\alpha)$,
followed by block matrix inversion of the full matrix $\widetilde
\vcR(z,\alpha)$. Set
	\begin{equation}\label{eq:eigen-vector-iden}
	\vM_{\vK}(z,\alpha) = 
\vI_r+\alpha \begin{pmatrix} \vV_r^\top & \mathbf{0} \end{pmatrix}
\vcR(z,\alpha)\begin{pmatrix} \vV_r \\ \mathbf{0} \end{pmatrix}.
	\end{equation} 

\begin{prop}\label{prop:master}
Fix any $\eps>0$ and any $\alpha \in \R$ sufficiently large that satisfies
Lemmas \ref{lemm:apply_fluctuation} and \ref{lemm:apply_fluctuation2}.
Then on the event $\cE \cap \widetilde \cE$ of
Lemmas \ref{lemm:apply_fluctuation} and \ref{lemm:apply_fluctuation2},
for all sufficiently large $N$,
\begin{enumerate}[label=(\alph*)]
\item $\widehat \lambda
\in U(\eps) \cap \R$ is an eigenvalue of $\vG^\top \vG$ if and only if
$\det \vM_{\vK}(\widehat\lambda,\alpha)=0$, and its multiplicity as an
eigenvalue of $\vG^\top \vG$ equals the dimension of
$\ker\vM_{\vK}(\widehat\lambda,\alpha)$.
\item Let $\widehat\vv\in\R^n$ be a unit eigenvector of $\vG^\top \vG$ (i.e.\
right singular vector of $\vG$)
corresponding to an eigenvalue $\widehat\lambda \in U(\eps) \cap \R$. Then
$\vV_r^\top\widehat\vv$ is a non-zero vector in $\ker
\vM_{\vK}(\widehat\lambda,\alpha)$, and
\begin{align}\label{eq:norm_vvr_v}
\frac{1}{\alpha^2} = \widehat{\vv}^\top \vV_r
\begin{pmatrix} \vV_r \\ \mathbf{0} \end{pmatrix}^\top
\vcR(\widehat\lambda,\alpha) \begin{pmatrix} \vI_n & \mathbf{0} \\
\mathbf{0} & \mathbf{0} \end{pmatrix} 
\vcR(\widehat\lambda,\alpha) \begin{pmatrix} \vV_r \\ \mathbf{0} \end{pmatrix}
\vV_r^\top \widehat{\vv}
\end{align}
For any vector $\vv\in\R^n$, we have
\begin{equation}\label{eq:vvhat}
\vv^\top \widehat\vv+\alpha\begin{pmatrix} \vv^\top &\mathbf{0}
\end{pmatrix}\vcR(\widehat\lambda,\alpha)\begin{pmatrix} \vV_r \\ \mathbf{0} \end{pmatrix}
\vV_r^\top \widehat\vv=0.
\end{equation}
\item Let $\vu$ be as in Theorem \ref{thm:spiked}(c), and let $\hat{\vu} \in \R^N$ be
a unit eigenvector of $\vG\vG^\top$ (i.e.\ left singular vector of $\vG$)
corresponding to the eigenvalue $\widehat\lambda \in U(\eps) \cap \R$. Then
\begin{equation}\label{eq:uuhat}
	\vu^\top \widehat\vu=\frac{\alpha }{\widehat\lambda^{1/2}\,\widetilde
\cR_{11}(\widehat\lambda,\alpha)}\widetilde\vcR_{1V}(\widehat\lambda,\alpha) \vV_r^\top\widehat\vv.
\end{equation}
\end{enumerate}
\end{prop}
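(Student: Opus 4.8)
The idea is to treat $\vM_{\vK}(z,\alpha)$ as a ``master matrix'' detecting the spike eigenvalues, and to extract everything from the block structures of $\vcR(z,\alpha)$ and $\widetilde\vcR(z,\alpha)$ together with elementary Schur-complement and determinant identities; the only substantive inputs are that $\vcR(z,\alpha)$ and $\widetilde\vcR(z,\alpha)$ exist and are $O(1)$ on $U(\eps)$, which are exactly Lemmas~\ref{lemm:apply_fluctuation}(a) and~\ref{lemm:apply_fluctuation2}(a). For part (a): on $\cE$, for each $z\in U(\eps)$ the matrix $\begin{pmatrix}-\vGamma(z,\alpha)&\vG^\top\\\vG&-\vI_N\end{pmatrix}$ is invertible, so its lower-right Schur complement $\vK-\vGamma(z,\alpha)$ is invertible and coincides with the inverse of the upper-left block $\vR(\vGamma(z,\alpha))$ of $\vcR(z,\alpha)$; in particular $\det(\vK-\vGamma(z,\alpha))\neq0$ and $\vM_{\vK}(z,\alpha)=\vI_r+\alpha\vV_r^\top(\vK-\vGamma(z,\alpha))^{-1}\vV_r$. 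Writing $\vK-z\vI_n=(\vK-\vGamma(z,\alpha))\big(\vI_n+\alpha(\vK-\vGamma(z,\alpha))^{-1}\vV_r\vV_r^\top\big)$ and applying the identity $\det(\vI_n+\vA\vB)=\det(\vI_r+\vB\vA)$ with $\vA=\alpha(\vK-\vGamma(z,\alpha))^{-1}\vV_r$, $\vB=\vV_r^\top$, one gets $\det(\vK-z\vI_n)=\det(\vK-\vGamma(z,\alpha))\cdot\det\vM_{\vK}(z,\alpha)$, so $\widehat\lambda\in U(\eps)\cap\R$ is an eigenvalue of $\vK$ iff $\det\vM_{\vK}(\widehat\lambda,\alpha)=0$.

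For the multiplicity claim in (a), I would show that $\vv\mapsto\vV_r^\top\vv$ is a linear isomorphism from $\ker(\vK-\widehat\lambda\vI_n)$ onto $\ker\vM_{\vK}(\widehat\lambda,\alpha)$. If $\vK\vv=\widehat\lambda\vv$ then $(\vK-\vGamma(\widehat\lambda,\alpha))\vv=-\alpha\vV_r\vV_r^\top\vv$, hence $\vv=-\alpha\vR(\vGamma(\widehat\lambda,\alpha))\vV_r\vV_r^\top\vv$ and $\vM_{\vK}(\widehat\lambda,\alpha)(\vV_r^\top\vv)=0$; conversely, for $\vw\in\ker\vM_{\vK}(\widehat\lambda,\alpha)$ the vector $\vv:=-\alpha\vR(\vGamma(\widehat\lambda,\alpha))\vV_r\vw$ satisfies $\vV_r^\top\vv=\vw$ and $(\vK-\widehat\lambda\vI_n)\vv=0$, and injectivity is immediate from $\vv=-\alpha\vR(\vGamma(\widehat\lambda,\alpha))\vV_r(\vV_r^\top\vv)$. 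Since $\vK$ is symmetric, algebraic and geometric multiplicities agree, finishing (a). Part (b) then reuses the relation $\widehat\vv=-\alpha\vR(\vGamma(\widehat\lambda,\alpha))\vV_r\vV_r^\top\widehat\vv$ for a unit eigenvector $\widehat\vv$ (so $\vV_r^\top\widehat\vv\neq0$, else $\widehat\vv=0$): pairing with any $\vv\in\R^n$ and using that $\vR(\vGamma(\widehat\lambda,\alpha))$ is the upper-left block of $\vcR(\widehat\lambda,\alpha)$ yields \eqref{eq:vvhat}; for \eqref{eq:norm_vvr_v} I would use $1=\|\widehat\vv\|^2=\alpha^2(\vV_r^\top\widehat\vv)^\top\vV_r^\top\vR(\vGamma(\widehat\lambda,\alpha))^2\vV_r(\vV_r^\top\widehat\vv)$ — valid because $\widehat\lambda,\alpha\in\R$ make $\vR(\vGamma(\widehat\lambda,\alpha))$ real symmetric — and then recognize $\vV_r^\top\vR(\vGamma(\widehat\lambda,\alpha))^2\vV_r$ as $\begin{pmatrix}\vV_r\\\mathbf0\end{pmatrix}^\top\vcR(\widehat\lambda,\alpha)\begin{pmatrix}\vI_n&\mathbf0\\\mathbf0&\mathbf0\end{pmatrix}\vcR(\widehat\lambda,\alpha)\begin{pmatrix}\vV_r\\\mathbf0\end{pmatrix}$ by multiplying out the block form of $\vcR(\widehat\lambda,\alpha)$ (the middle projector annihilates all but the upper-left blocks), then dividing by $\alpha^2$.

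For part (c): since $\widehat\lambda\in U(\eps)$ is an eigenvalue of the positive semidefinite matrix $\vK$ and is separated from $\cS\ni0$, we have $\widehat\lambda>0$, and I would take $\widehat\vu=\widehat\lambda^{-1/2}\vG\widehat\vv$, which is a unit eigenvector of $\widetilde\vK=\vG\vG^\top$ for $\widehat\lambda$ and is the canonical mate of $\widehat\vv$ (in the application of Theorem~\ref{thm:spiked} the relevant spike is simple, so this determines $\widehat\vu$ up to sign). Then, using (b), $\vu^\top\widehat\vu=\widehat\lambda^{-1/2}\vu^\top\vG\widehat\vv=-\alpha\widehat\lambda^{-1/2}\,\vu^\top\vG(\vG^\top\vG-\vGamma(\widehat\lambda,\alpha))^{-1}\vV_r\cdot\vV_r^\top\widehat\vv$, and substituting the block-inversion identity \eqref{eq:R1V}, i.e. $\vu^\top\vG(\vG^\top\vG-\vGamma(z,\alpha))^{-1}\vV_r=-\widetilde\vcR_{1V}(z,\alpha)/\widetilde\cR_{11}(z,\alpha)$, gives \eqref{eq:uuhat}. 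Here $\widetilde\cR_{11}(\widehat\lambda,\alpha)\neq0$ on $\cE\cap\widetilde\cE$: it is the reciprocal of the Schur complement of the lower-right $(n+N)\times(n+N)$ block $\vcR(\widehat\lambda,\alpha)^{-1}$ inside $\widetilde\vcR(\widehat\lambda,\alpha)^{-1}$, which exists on $\cE$, so invertibility of $\widetilde\vcR(\widehat\lambda,\alpha)$ on $\widetilde\cE$ forces that Schur complement to be nonzero.

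I expect no single step to be a true obstacle: the genuinely hard analytic content (existence/boundedness of the two generalized resolvents, uniformly over $U(\eps)$) has already been done in Lemmas~\ref{lemm:apply_fluctuation} and~\ref{lemm:apply_fluctuation2}, so this proof is deterministic. The only places requiring care, rather than being purely mechanical, are keeping the two- and three-fold block decompositions of $\vcR$ and $\widetilde\vcR$ consistent when verifying \eqref{eq:R11}–\eqref{eq:R1V} by repeated Schur complementation, the isomorphism argument underlying the multiplicity statement in (a), and fixing the canonical correspondence between the left and right singular vectors in (c).
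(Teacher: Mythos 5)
Your proof is correct, and in substance it follows the same outline as the paper's: the crucial identity $\widehat\vv=-\alpha\vR(\vGamma(\widehat\lambda,\alpha))\vV_r\vV_r^\top\widehat\vv$, the observation that $\vV_r^\top\widehat\vv \in \ker\vM_{\vK}(\widehat\lambda,\alpha)$, the quadratic-form computation for \eqref{eq:norm_vvr_v}, and the substitution of the block-inversion identity for $\widetilde\vcR_{1V}/\widetilde\cR_{11}$ to derive \eqref{eq:uuhat}. The one genuine departure is in part (a): you establish the ``iff'' via the Sylvester determinant identity, factoring $\det(\vK-z\vI_n)=\det(\vK-\vGamma(z,\alpha))\det\vM_{\vK}(z,\alpha)$, and you prove the multiplicity statement by constructing an explicit linear isomorphism $\vv\mapsto\vV_r^\top\vv$ from $\ker(\vK-\widehat\lambda\vI_n)$ onto $\ker\vM_{\vK}(\widehat\lambda,\alpha)$. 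The paper instead works on the $(n+N)$-dimensional linearization, writes the singular-vector equation as $\begin{pmatrix}\widehat\vv\\\widehat\lambda^{1/2}\widehat\vu\end{pmatrix}=-\alpha\,\vcR(\widehat\lambda,\alpha)\begin{pmatrix}\vV_r\\\mathbf{0}\end{pmatrix}\vV_r^\top\widehat\vv$, and argues both directions of the multiplicity count from this. Your route is arguably more self-contained for (a) since it avoids passing through the singular vectors, while the paper's linearized equation has the virtue of immediately delivering the identities needed in (b) and (c) by projecting onto the various coordinate blocks; the two are mathematically equivalent and of comparable length. Your verification that the right side of \eqref{eq:norm_vvr_v} equals $\alpha^{-2}\,\widehat\vv^\top\vV_r\,\vV_r^\top\vR(\vGamma)^2\vV_r\,\vV_r^\top\widehat\vv$, and your identification of $\widehat\vu=\widehat\lambda^{-1/2}\vG\widehat\vv$ as the unit left singular vector with the sign compatible with $\widehat\vv$, are both correct and match what the paper does implicitly through its eigenvector equation.
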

\begin{proof}
For part (a), note that if $\widehat\lambda$ is an eigenvalue of $\vG^\top \vG$,
i.e.\ $\widehat\lambda^{1/2}$ is a singular value of $\vG$ with left and
right unit singular vectors $\widehat \vu$ and $\widehat\vv$, then
\[0=\begin{pmatrix} -\widehat\lambda \vI_n & \vG^\top \\ \vG & -\vI_N \end{pmatrix}
\begin{pmatrix} \widehat\vv \\ \widehat\lambda^{1/2}\widehat\vu \end{pmatrix}\]
which implies, for any $\alpha \in \R$,
\[{-}\alpha\begin{pmatrix} \vV_r \\ \mathbf{0} \end{pmatrix}
\cdot \vV_r^\top \widehat\vv=\begin{pmatrix} -\widehat\lambda\vI_n-\alpha\vV_r\vV_r^\top & \vG^\top \\ \vG & -\vI_N \end{pmatrix}
\begin{pmatrix} \widehat\vv \\ \widehat\lambda^{1/2}\widehat\vu \end{pmatrix}.\]
Fixing $\alpha \in \R$ large enough, on the event $\cE$
of Lemma \ref{lemm:apply_fluctuation}, the generalized resolvent
\[\vcR(\widehat\lambda,\alpha)=\begin{pmatrix} -\widehat\lambda\vI_n-\alpha\vV_r\vV_r^\top & \vG^\top \\ \vG & -\vI_N
\end{pmatrix}^{-1}\]
exists, and multiplying both sides by $\vcR(\widehat\lambda,\alpha)$ gives
\begin{equation}\label{eq:eigequation}
\begin{pmatrix} \widehat\vv \\ \widehat\lambda^{1/2}\widehat\vu \end{pmatrix}
=-\alpha \vcR(\widehat\lambda,\alpha)\begin{pmatrix}\vV_r \\ \mathbf{0} \end{pmatrix}
\cdot \vV_r^\top \widehat\vv.
\end{equation}
Then, multiplying by $(\vV_r^\top\;\mathbf{0})$ on both sides and re-arranging, we get
$\vM_{\vK}(\widehat\lambda,\alpha) \cdot \vV_r^\top \widehat \vv=0$.

We remark
that if $\widehat\lambda$ is
an eigenvalue of multiplicity $k$, and $\vG$ has corresponding linearly
independent left singular vectors $\widehat\vu_1,\ldots,\widehat\vu_k$
and right singular vectors $\widehat\vv_1,\ldots,\widehat\vv_k$, then the
vectors $\{(\widehat \vv_j,\widehat\lambda^{1/2}\widehat \vu_j)\}_{j=1}^k$ on the left side
of \eqref{eq:eigequation} are linearly independent, implying that the vectors
$\{\vV_r^\top \widehat\vv_j\}_{j=1}^k$ on the right side must also be
(non-zero and) linearly independent vectors in $\ker \vM_{\vK}(\widehat\lambda,\alpha)$.
Conversely, if $\{\vy_j\}_{j=1}^k$ are linearly independent vectors in
$\ker \vM_{\vK}(\widehat\lambda,\alpha)$, then defining
\[\begin{pmatrix} \widehat\vv_j \\ \widehat\lambda^{1/2}\widehat \vu_j \end{pmatrix}
=-\alpha \vcR(\widehat\lambda,\alpha)\begin{pmatrix} \vV_r \\ \mathbf{0} \end{pmatrix}
\cdot \vy_j\]
and multiplying by $(\vV_r^\top\; \mathbf{0})$, we must have
$\vV_r^\top \widehat \vv_j=({-}\vM_{\vK}(\widehat\lambda,\alpha)+\vI)\vy_j=\vy_j$. Thus the
pairs $(\widehat\vv_j,\widehat\lambda^{1/2}\widehat\vu_j)$ are linearly independent vectors
satisfying \eqref{eq:eigequation}, and multiplying by
$\vcR(\widehat\lambda,\alpha)^{-1}$ and rearranging shows that
$\widehat\lambda^{1/2}$ must be a singular
value of $\vG$ with multiplicity at least $k$, with corresponding
singular vectors $\{(\widehat\vv_j,\widehat\vu_j)\}_{j=1}^k$.
This establishes part (a).

For part (b), the above argument has shown $\vV_r^\top \widehat \vv \in \ker
\vM_{\vK}(\widehat\lambda,\alpha)$. Multiplying
\eqref{eq:eigequation} on the left by
$$\begin{pmatrix} \vI_n & \mathbf{0} \\
	\mathbf{0} & \mathbf{0} \end{pmatrix}$$
and taking the squared norm (noting that $\widehat\lambda,\alpha$
and $\vcR(\widehat\lambda,\alpha)$ here are real) shows \eqref{eq:norm_vvr_v}.
Multiplying \eqref{eq:eigequation} on the left by $(\vv^\top\;\mathbf{0})$ shows
\eqref{eq:vvhat}. For part (c),
multiplying (\ref{eq:eigequation}) by $(\mathbf{0}\;\vu^\top)$, we have
\begin{align*}
\widehat\lambda^{1/2}\,\vu^\top \widehat\vu&=-\alpha\begin{pmatrix} \mathbf{0}\\ \vu \end{pmatrix}^\top
\vcR(\widehat\lambda,\alpha)\begin{pmatrix} \vV_r \\ \mathbf{0} \end{pmatrix} \cdot \vV_r^\top \widehat\vv
=-\alpha \vu^\top \vG \Big(\vG^\top \vG-\vGamma(\widehat\lambda,\alpha)\Big)^{-1}\vV_r
\cdot \vV_r^\top \widehat\vv
\end{align*}
where the second equality follows from the block matrix inversion of
$\vcR(\widehat\lambda,\alpha)$.
Then, recalling the forms of $\widetilde
\cR_{11}$ and $\widetilde \vcR_{1V}$ from \eqref{eq:R11} and \eqref{eq:R1V},
this gives
\[\vu^\top \widehat\vu=\frac{\alpha \widetilde
\vcR_{1V}(\widehat\lambda,\alpha)}{\widehat\lambda^{1/2}\,\widetilde
\cR_{11}(\widehat\lambda,\alpha)} \cdot \vV_r^\top \widehat\vv\]
which is \eqref{eq:uuhat}.
\end{proof}

For notational convenience, let us now introduce the shorthand
\[\psi_{N,0}(z)=z\tilde m_{N,0}(z), \qquad \psi(z)=z\tilde m(z).\]
By Lemma \ref{lemm:apply_fluctuation}(b) applied with $(\vv_1,\vv_2)$ being the
columns of $\vV_r$,
we see that $\vM_{\vK}(z,\alpha)$ is well-approximated by the (deterministic,
$N$-dependent) matrix
\begin{align}\label{eq:M(z)}
\vM_N(z,\alpha):=\vI_r-\alpha\Big((\alpha+z)\vI_r+\psi_{N,0}(z)
\diag(\lambda_1(\vSigma),\ldots,\lambda_r(\vSigma))\Big)^{-1}.
\end{align} 
To show Theorem \ref{thm:spiked}(a), we translate this approximation into a
comparison of the roots of $0=\det \vM_{\vK}(z,\alpha)$ and $0=\det
\vM_N(z,\alpha)$,
where the latter are explicitly given by $z_{N,0}(-1/\lambda_i(\vSigma))$ for
the function $z_{N,0}(\cdot)$ defined in~\eqref{eq:zN0}.\\

\begin{proof-of-theorem}[\ref{thm:spiked}(a)]
Let us fix any $\eps>0$ and $\alpha \in \R$ satisfying Lemmas
\ref{lemm:apply_fluctuation} and \ref{lemm:apply_fluctuation2}, and denote 
\[f_{N,i}(z,\alpha)=1-\frac{\alpha}{\alpha+z+\psi_{N,0}(z)\lambda_i(\vSigma)}\]
for each $i \in [r]$.
Then $\det\vM_N(z,\alpha)=\prod_{i=1}^r f_{N,i}(z,\alpha)$. Define also the
limiting functions
\[f_i(z,\alpha)=1-\frac{\alpha}{\alpha+z+\psi(z)\lambda_i},
\qquad \vM(z,\alpha)=\vI_r-\alpha\Big((\alpha+z)\vI_r+\psi(z)
\diag(\lambda_1,\ldots,\lambda_r)\Big)^{-1}\]
so $\det \vM(z,\alpha)=\prod_{i=1}^r f_i(z,\alpha)$.
We first analyze the roots of $0=\det \vM(z,\alpha)$: By the
definition $\psi(z)=z\tilde m(z)$, observe that $z \in \R
\setminus \supp{\tilde \mu}$ satisfies $0=\det \vM(z,\alpha)$ if and only if
either $z=0$ or 
\[\tilde m(z)=-1/\lambda_i \text{ for some } i \in [r].\]
(This condition is the same for any non-zero $\alpha \in \R$.)
Let $\cT=\{0\} \cup \{-1/\lambda:\lambda \in \supp{\nu}\}$ be as in
\eqref{eq:zdomain} where $\nu$ is the limit spectral law of $\vSigma_0$.
Then $-1/\lambda_i \in \R \setminus \cT$ for all $i \in [r]$ under Assumption \ref{assum:spike}, so
Proposition~\ref{prop:inv_z} implies that $\tilde m(z)=-1/\lambda_i$ holds
for some $z \in \R \setminus \supp{\tilde \mu}$ if and only if
$z'(-1/\lambda_i)>0$, i.e.\ $i \in \cI$. If $i \in \cI$, then
$\tilde m(z)=-1/\lambda_i$ holds for $z=z(-1/\lambda_i)$,
and we must have $z(-1/\lambda_i)>0$ strictly because for any $z \leq 0$,
we have $\tilde m(z)>0$
(and hence $\tilde m(z) \neq -1/\lambda_i$)
by the definition $\tilde m(z)=\int \frac{1}{x-z}d\tilde \mu(x)$.
Thus the roots of $0=\det \vM(z,\alpha)$ in $\R \setminus \cS=\R \setminus
(\supp{\tilde \mu} \cup \{0\})$ --- and hence
in $U(\eps) \cap \R$ for any sufficiently small $\eps>0$ --- are given
precisely by
\[z_i:=z(-1/\lambda_i) \text{ for } i \in \cI.\]
Since $\tilde m'(z)=\int \frac{1}{(x-z)^2}d\tilde \mu(x)>0$
for all $z \in \R \setminus \cS$, and $\{\lambda_i:i \in \cI\}$ are
distinct by assumption, these values $\{z_i:i \in \cI\}$ are simple roots of
$0=\det \vM(z_i,\alpha)$. Then
$(\det \vM)'(z_i,\alpha) \neq 0$ where $(\det \vM)'$ denotes the derivative in
$z$.

Lemma \ref{lemma:supportcompareb}(c) implies
$\tilde m_{N,0}(z) \to \tilde m(z)$ and $\tilde m_{N,0}'(z) \to \tilde m'(z)$
uniformly over $z \in U(\eps)$. Since also $\lambda_i(\vSigma) \to \lambda_i$,
we have $\det \vM_N(z,\alpha) \to \det \vM(z,\alpha)$ and
$(\det \vM_N)'(z,\alpha) \to (\det \vM)'(z,\alpha)$ uniformly over
$z \in U(\eps)$. This, together with the above condition
$(\det \vM)'(z_i,\alpha) \neq 0$, imply
that for all large $N$, the roots $z_{N,i} \in U(\eps) \cap \R$
of $0=\det \vM_N(z,\alpha)$
are in 1-to-1 correspondence with, and converge to, the above roots $z_i \in
U(\eps) \cap \R$ of
$0=\det \vM(z,\alpha)$. We note that $0=\det\vM_N(z,\alpha)$ if and only if
either $z=0$ or
\begin{equation}\label{eq:equation_z_i}
\tilde{m}_{N,0}(z)=-1/\lambda_i(\vSigma) \text{ for some } i \in [r].
\end{equation}
For each $i \in \cI$, we have $\lambda_i(\vSigma) \to \lambda_i$ where
$z'(-1/\lambda_i)>0$. Recall from Lemma \ref{lemma:supportcompareb}(a)
that $z_{N,0}(\tilde m) \to z(\tilde m)$ and $z_{N,0}'(\tilde m) \to z'(\tilde m)$
uniformly over compact subsets of $\R \setminus \cT$.
Then $z_{N,0}'(-1/\lambda_i(\vSigma)) \to z'(-1/\lambda_i)$, so
also $z_{N,0}'(-1/\lambda_i(\vSigma))>0$ for all large $N$. Then
Proposition~\ref{prop:inv_z} implies that \eqref{eq:equation_z_i} holds
for $z_{N,i}:=z_{N,0}(-1/\lambda_i(\vSigma))$. We have
$z_{N,i} \to z_i=z(-1/\lambda_i)$, so these must be the
roots of $\det \vM_N(z,\alpha)$ in $U(\eps) \cap \R$. Thus we have shown that
for any sufficiently small $\eps>0$ and all large $N$,
the roots $z \in U(\eps) \cap \R$ of $0=\det \vM_N(z,\alpha)$ are precisely
the values
\[z_{N,i}:=z_{N,0}(-1/\lambda_i(\vSigma)) \text{ for } i \in \cI,\]
and $z_{N,i} \to z_i>0$ for each $i \in \cI$.

Finally, we apply Lemma \ref{lemm:apply_fluctuation}(b) with $(\vv_1,\vv_2)$
being the columns of $\vV_r$. On the event $\cE$ of
Lemma \ref{lemm:apply_fluctuation}(a), we have
\begin{equation}\label{eq:MKlipschitz}
\norm{\vM_{\vK}(z,\alpha)} \leq C, \qquad
	\norm{\vM_{\vK}(z,\alpha)-\vM_{\vK}(z',\alpha)} \leq C|z-z'|
\end{equation}
for some $C>0$ and all $z,z' \in U(\eps/2)$.
Also $|\tilde{m}_{N,0}(z)|,|\tilde{m}_{N,0}'(z)|<C$
for a constant $C>0$, all $z \in U(\eps)$, and all large $N$, and thus
\begin{equation}\label{eq:MNlipschitz}
\norm{\vM_N(z,\alpha)} \leq C, \qquad
\norm{\vM_N(z,\alpha)-\vM_N(z',\alpha)} \leq C|z-z'|
\end{equation}
for some $C>0$ and all $z,z' \in U(\eps/2)$.
Then, applying Lemma \ref{lemm:apply_fluctuation}(b) and the Lipschitz
bounds of \eqref{eq:MKlipschitz} and \eqref{eq:MNlipschitz} to take
a union bound over a sufficiently fine covering net of $U(\eps/2)$, we get
\begin{equation}\label{eq:MKuniformapprox}
\sup_{z \in U(\eps/2)} \norm{\vM_N(z,\alpha)-\vM_{\vK}(z,\alpha)} \prec 1/\sqrt{N}.
\end{equation}
Applying also the first bounds of \eqref{eq:MKlipschitz}
and \eqref{eq:MNlipschitz}, this gives
\begin{equation}\label{eq:detMcompare}
\sup_{z \in U(\eps/2)} |\det \vM_N(z,\alpha)-\det \vM_{\vK}(z,\alpha)|
\prec 1/\sqrt{N}.
\end{equation}
Since $\det \vM_N(z,\alpha)$ and $\det \vM_{\vK}(z,\alpha)$ are both holomorphic
over $z \in U(\eps/2)$ on this event $\cE$,
the Cauchy integral formula then implies
\[\sup_{z \in U(\eps)} |(\det \vM_N)'(z,\alpha)-(\det \vM_{\vK})'(z,\alpha)|
\prec 1/\sqrt{N}.\]
In particular, combining with the uniform convergence statements
$\det \vM_N(z,\alpha) \to \det \vM(z,\alpha)$ and $(\det \vM_N)'(z,\alpha) \to
(\det \vM)'(z,\alpha)$ over $z \in U(\eps)$ as argued above,
this shows that on an event $\cE$ satisfying $\bI\{\cE^c\} \prec
0$ and for some $\delta_N \to 0$, we have
\[\sup_{z \in U(\eps) \cap \R} |\det \vM(z,\alpha)-\det \vM_{\vK}(z,\alpha)|,
|(\det \vM)'(z,\alpha)-(\det \vM_{\vK})'(z,\alpha)|<\delta_N.\]
Thus, on this event $\cE$ and as $N \to \infty$, the roots $\widehat \lambda_i
\in U(\eps) \cap \R$ of $0=\det
\vM_{\vK}(z,\alpha)$ are also in 1-to-1 correspondence with, and converge to,
the roots $z_i \in U(\eps) \cap \R$ of $0=\det \vM(z,\alpha)$.
Furthermore, the condition $(\det \vM)'(z_i,\alpha) \neq 0$ implies that
$|(\det \vM_N)'(z,\alpha)|$ and $|(\det \vM_{\vK})'(z,\alpha)|$ are bounded away
from 0 in a neighborhood of each such root $z_i$, so \eqref{eq:detMcompare}
then implies that the corresponding roots $\widehat \lambda_i$ and $z_{N,i}$ of
$0=\det \vM_{\vK}(z,\alpha)$ and $0=\det \vM_N(z,\alpha)$ satisfy
\[|\widehat \lambda_i-z_{N,i}| \prec 1/\sqrt{N}.\]

Proposition \ref{prop:master} shows that on this event $\cE$,
these roots $\{\widehat \lambda_i:i \in \cI\}$ are
precisely the eigenvalues of $\vG^\top \vG$ in $U(\eps) \cap \R$.
By the definition of $\vM(z_i,\alpha)$,
each root $z_i$ of $\det \vM(z_i,\alpha)$ is such that $\ker \vM(z_i,\alpha)$
has dimension 1. Since $\bI\{\cE\}(\widehat\lambda_i-z_i) \to 0$, we have
$\bI\{\cE\}\|\vM_{\vK}(\widehat \lambda_i,\alpha)-\vM(z_i,\alpha)\| \to 0$,
so $\ker \vM_{\vK}(\widehat \lambda_i,\alpha)$ also has dimension 1 on this
event $\cE$ for all large $N$. Then Proposition \ref{prop:master} implies that
the eigenvalues $\{\widehat \lambda_i:i \in \cI\}$ of $\vG^\top \vG$ are
simple, and thus in 1-to-1 correspondence with $\{\lambda_i:i \in \cI\}$.
This proves part (a) of the theorem.
 
\end{proof-of-theorem}

\begin{lemma}\label{lemm:R(z,alph)^2}
Under the assumptions of Theorem
\ref{thm:spiked}, for any fixed $\eps>0$, there exists $\alpha_0>0$ such
that fixing any $\alpha \in \C$ with $|\alpha|>\alpha_0$,
uniformly over $z \in U(\eps)$,
\begin{multline*}
		\Bigg\|  \begin{pmatrix} \vV_r \\ \mathbf{0} \end{pmatrix}^\top
		\vcR(z,\alpha) 
		\begin{pmatrix} \vI_n & \mathbf{0}  \\
		\mathbf{0}   & \mathbf{0} 
		\end{pmatrix} 
		\vcR(z,\alpha) \begin{pmatrix} \vV_r \\ \mathbf{0} \end{pmatrix} \\
		 -\left((\alpha+z)\vI_r+\psi_{N,0}(z)\vSigma_r
		\right)^{-2}\left(\vI_r+\psi_{N,0}'(z)\vSigma_r\right) \Bigg\|\prec\frac{1}{\sqrt{N}}.
\end{multline*}

\end{lemma}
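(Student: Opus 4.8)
The plan is to recognize the left-hand quantity as the $z$-derivative of the first-order generalized-resolvent bilinear form already controlled by Lemma~\ref{lemm:apply_fluctuation}(b), and then transfer the deterministic-equivalent estimate from that holomorphic function to its derivative via the Cauchy integral formula. Throughout I would apply Lemmas~\ref{lemm:apply_fluctuation} and~\ref{lemm:bounds_R(z)} with $\eps/2$ in place of $\eps$, so that the various bounds and the existence/boundedness of $\vcR(z,\alpha)$ hold on the strictly larger domain $U(\eps/2) \supseteq U(\eps)+\{|w|<\eps/2\}$ (the inclusion holding for all small $\eps$).

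Write $\vM(z)=\begin{pmatrix}-\vGamma(z,\alpha)&\vG^\top\\\vG&-\vI_N\end{pmatrix}$, so $\vcR(z,\alpha)=\vM(z)^{-1}$ where it exists. Since $\vGamma(z,\alpha)=z\vI_n+\alpha\vV_r\vV_r^\top$ depends on $z$ only through the $z\vI_n$ term, we have $\partial_z\vM(z)=-\begin{pmatrix}\vI_n&\mathbf 0\\\mathbf 0&\mathbf 0\end{pmatrix}$ and hence
\[
\partial_z\vcR(z,\alpha)
=-\vM(z)^{-1}\bigl(\partial_z\vM(z)\bigr)\vM(z)^{-1}
=\vcR(z,\alpha)\begin{pmatrix}\vI_n&\mathbf 0\\\mathbf 0&\mathbf 0\end{pmatrix}\vcR(z,\alpha).
\]
Consequently the matrix on the left side of the lemma is exactly $\partial_z\vT_N(z)$, where $\vT_N(z):=\begin{pmatrix}\vV_r\\\mathbf 0\end{pmatrix}^\top\vcR(z,\alpha)\begin{pmatrix}\vV_r\\\mathbf 0\end{pmatrix}\in\C^{r\times r}$. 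On the other hand, applying Lemma~\ref{lemm:apply_fluctuation}(b) with $\vv_1,\vv_2$ ranging over the $r$ columns of $\vV_r$, together with the block form \eqref{eq:decomp_gamma+sigma} (so that the top-left $r\times r$ block of $(\vGamma+z\tilde m_{N,0}(z)\vSigma)^{-1}$ is $((\alpha+z)\vI_r+\psi_{N,0}(z)\vSigma_r)^{-1}$, where $\psi_{N,0}(z)=z\tilde m_{N,0}(z)$), I obtain entrywise, hence in operator norm,
\[
g_N(z):=\vT_N(z)+\bigl((\alpha+z)\vI_r+\psi_{N,0}(z)\vSigma_r\bigr)^{-1}\prec \frac{1}{\sqrt N},
\]
uniformly over $z\in U(\eps/2)$.

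Next I would upgrade this to a supremum bound over $z\in U(\eps/2)$ and then differentiate. On the event $\cE$ of Lemma~\ref{lemm:apply_fluctuation}(a) (for which $\bI\{\cE^c\}\prec 0$), one has $\|\vcR(z,\alpha)\|\le C_0$ on $U(\eps/2)$, which makes $\vT_N$ $C$-Lipschitz there; and $((\alpha+z)\vI_r+\psi_{N,0}(z)\vSigma_r)^{-1}$ is Lipschitz there because $\tilde m_{N,0},\tilde m_{N,0}'$ are bounded on $U(\eps/2)$ and the inverse is bounded for $|\alpha|>\alpha_0$ (cf.\ \eqref{eq:sigmamin1plusmSigma}). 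A union bound over a sufficiently fine net of $U(\eps/2)$, exactly as in the proof of Theorem~\ref{thm:spiked}(a) around \eqref{eq:MKuniformapprox}, then gives $\sup_{z\in U(\eps/2)}\|g_N(z)\|\prec 1/\sqrt N$. Moreover, on $\cE$ the function $g_N$ is holomorphic on $U(\eps/2)$: $\vM(z)$ is affine and invertible in $z$ there, and $(\alpha+z)\vI_r+\psi_{N,0}(z)\vSigma_r$ is holomorphic (since $\tilde m_{N,0}$ is holomorphic on $U(\eps/2)$ for all large $N$, as $\cS_{N,0}\to\cS$) and, for $|\alpha|>\alpha_0$, invertible. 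Applying the Cauchy integral formula on circles of radius $\eps/2$, $g_N'(z)=\frac1{2\pi i}\oint_{|w-z|=\eps/2}g_N(w)(w-z)^{-2}\dw$, so $\sup_{z\in U(\eps)}\|g_N'(z)\|\le \tfrac{2}{\eps}\sup_{z\in U(\eps/2)}\|g_N(z)\|\prec 1/\sqrt N$.

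Finally I would identify $g_N'(z)$: by the first display its random part $\partial_z\vT_N(z)$ is precisely the left side of the lemma, while $\partial_z\bigl((\alpha+z)\vI_r+\psi_{N,0}(z)\vSigma_r\bigr)^{-1}=-\bigl((\alpha+z)\vI_r+\psi_{N,0}(z)\vSigma_r\bigr)^{-1}\bigl(\vI_r+\psi_{N,0}'(z)\vSigma_r\bigr)\bigl((\alpha+z)\vI_r+\psi_{N,0}(z)\vSigma_r\bigr)^{-1}$, which equals $\bigl((\alpha+z)\vI_r+\psi_{N,0}(z)\vSigma_r\bigr)^{-2}\bigl(\vI_r+\psi_{N,0}'(z)\vSigma_r\bigr)$ because $\vSigma_r$ is diagonal and all factors commute. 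Hence $g_N'(z)\prec 1/\sqrt N$ uniformly over $z\in U(\eps)$ is exactly the assertion. I do not anticipate a serious obstacle here — it is a routine ``differentiate the local law'' step — the only points requiring care being (i) that the deterministic equivalent of Lemma~\ref{lemm:apply_fluctuation}(b) must hold on a neighborhood strictly containing $U(\eps)$ before the Cauchy estimate can be applied (handled by the net/Lipschitz argument), and (ii) the holomorphy and invertibility of both sides on that neighborhood on the good event (handled by Lemmas~\ref{lemm:apply_fluctuation}(a) and~\ref{lemm:bounds_R(z)}).
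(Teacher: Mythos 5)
Your proposal is correct and follows essentially the same route as the paper's own proof: introduce the bilinear form $f_N(z,\alpha)=(\vV_r^\top\;\mathbf 0)\vcR(z,\alpha)(\vV_r^\top\;\mathbf 0)^\top$ and its deterministic equivalent, upgrade the pointwise estimate from Lemma~\ref{lemm:apply_fluctuation}(b) to a supremum over $U(\eps/2)$ by a net/Lipschitz argument, apply the Cauchy integral formula to pass to $z$-derivatives over $U(\eps)$, and identify $f_N'(z,\alpha)$ with the quadratic resolvent expression via $\partial_z\vcR(z,\alpha)=\vcR\begin{pmatrix}\vI_n&\mathbf 0\\\mathbf 0&\mathbf 0\end{pmatrix}\vcR$. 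The only cosmetic difference is that you bundle $f_N$ and its deterministic equivalent into a single remainder $g_N$ before differentiating, whereas the paper keeps them as two holomorphic functions and differentiates each; the substance is the same.
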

\begin{proof}
Fix any $\alpha \in \C$ satisfying Lemma \ref{lemm:apply_fluctuation}, and
denote
\[f_N(z,\alpha):=\begin{pmatrix}
	\vV_r^\top &\mathbf{0}
\end{pmatrix}\vcR(z,\alpha)\begin{pmatrix}
	\vV_r
	\\\mathbf{0}
\end{pmatrix}, \qquad
g_N(z,\alpha):=-\left((\alpha+z)\vI_r+\psi_{N,0}(z)\vSigma_r  \right)^{-1}.\]
Applying Lemma \ref{lemm:apply_fluctuation}(b) and
the Lipschitz continuity statements of
\eqref{eq:MKlipschitz} and \eqref{eq:MNlipschitz} to take a union bound over a
sufficiently fine covering net of $U(\eps/2)$, we have
\[\sup_{z \in U(\eps/2)} \|f_N(z,\alpha)-g_N(z,\alpha)\| \prec 1/\sqrt{N}.\]
Then by the Cauchy integral formula,
$\sup_{z \in U(\eps)} \|f_N'(z,\alpha)-g_N'(z,\alpha)\| \prec 1/\sqrt{N}$
where $f_N'$ and $g_N'$ denote the entrywise derivatives in $z$. 
The lemma follows, since differentiating $\vcR(z,\alpha)$ in
\eqref{eq:matrix_block} shows
\[f_N'(z,\alpha)=\begin{pmatrix}
	\vV_r^\top &\mathbf{0}	
\end{pmatrix}\vcR(z,\alpha)
\begin{pmatrix} \vI & \mathbf{0} \\ \mathbf{0} & \mathbf{0} \end{pmatrix}
\vcR(z,\alpha)
\begin{pmatrix}
	\vV_r \\\mathbf{0},
\end{pmatrix}\]
while $g_N'(z,\alpha) = ((\alpha+z)\vI_r+\psi_{N,0}(z)\vSigma_r)^{-2}
(\vI_r+\psi_{N,0}'(z)\vSigma_r)$.
\end{proof}

\begin{proof-of-theorem}[\ref{thm:spiked}(b)]
Let $\widehat \vv_i$ be the given unit-norm eigenvector of $\vK$ with eigenvalue
$\widehat \lambda_i$. Let $z_{N,i}=z_{N,0}(-1/\lambda_i(\vSigma))$ and
$z_i=z(-1/\lambda_i)$.
Then, fixing any $\alpha \in \R$ large enough to satisfy Lemmas
\ref{lemm:apply_fluctuation} and \ref{lemm:apply_fluctuation2},
Proposition~\ref{prop:master}(b) shows that
	$\vV_r^\top\widehat{\vv}_i \in\ker
\vM_{\vK}(\widehat\lambda_i,\alpha)$.
By \eqref{eq:MKuniformapprox}, \eqref{eq:MNlipschitz}, and the bound $|\widehat
\lambda_i-z_{N,i}| \prec N^{-1/2}$ of part (a) of the theorem already proven,
we have
	\begin{align}
	\norm{\vM_{\vK}(\widehat\lambda_i,\alpha)-\vM_N(z_{N,i},\alpha)} &\le
\norm{\vM_{\vK}(\widehat\lambda_i,\alpha)-\vM_N(\widehat
\lambda_i,\alpha)}+\norm{\vM_N(\widehat\lambda_i,\alpha)-\vM_N(z_{N,i},\alpha)}
\nonumber\\
&\prec N^{-1/2}.\label{eq:MKNlambdacompare}
	\end{align}
Let $\vv_1,\ldots,\vv_r$ denote the columns of $\vV_r$, which are the unit
eigenvectors of $\vSigma$.
	Then, applying $\vV_r^\top\widehat{\vv}_i \in\ker
\vM_{\vK}(\widehat\lambda_i,\alpha)$, \eqref{eq:MKNlambdacompare}, and
the definition of $\vM_N(z,\alpha)$, and noting that
$\psi_{N,0}(z_{N,i})=z_{N,i}\tilde
m_{N,0}(z_{N,i})=-z_{N,i}/\lambda_i(\vSigma)$, we have
\[\norm{\vM_N(z_{N,i},\alpha) \cdot \vV_r^\top\widehat{\vv}_i}^2
=\sum_{j=1}^r
\left(1-\frac{\alpha}{\alpha+z_{N,i}(1-\lambda_j(\vSigma)/\lambda_i(\vSigma))}
\right)^2 (\vv_j^\top \widehat\vv_i)^2 \prec 1/N.\]
For each $j \in [r] \setminus \{i\}$, we have that
$z_{N,i}(1-\lambda_j(\vSigma)/\lambda_i(\vSigma))$ is bounded away from 0
as $N \to \infty$ because $z_{N,i} \to z_i>0$ and
$\lambda_j(\vSigma)/\lambda_i(\vSigma) \to \lambda_j/\lambda_i \neq 1$.
So this implies
	\begin{equation}\label{eq:y_q_j}
	|\vv^\top_j\widehat{\vv}_i|^2\prec 1/N\quad\text{ for all }\quad j\in [r]\setminus \{i\}.
	\end{equation}
	At the same time, applying Lemma~\ref{lemm:R(z,alph)^2} 
and $|\widehat \lambda_i-z_{N,i}| \prec N^{-1/2}$ to
bound~\eqref{eq:norm_vvr_v} in Proposition~\ref{prop:master}(b), we have 
\begin{align}
\frac{1}{\alpha^2}
&=\widehat\vv_i^\top \vV_r
\begin{pmatrix} \vV_r \\ \mathbf{0} \end{pmatrix}^\top
\vcR(z_{N,i},\alpha) \begin{pmatrix} \vI_n & \mathbf{0} \\
\mathbf{0} & \mathbf{0} \end{pmatrix} 
\vcR(z_{N,i},\alpha) \begin{pmatrix} \vV_r \\ \mathbf{0} \end{pmatrix}
\vV_r^\top \widehat\vv_i+\SD{N^{-1/2}}\notag\\
&=\widehat\vv_i^\top \vV_r
\Big((\alpha+z_{N,i})\vI_r+\psi_{N,0}(z_{N,i})\vSigma_r\Big)^{-2}
\Big(\vI_r+\psi_{N,0}'(z_{N,i})\vSigma_r\Big) \vV_r^\top \widehat\vv_i
+\SD{N^{-1/2}}\notag\\
&=|\vv^\top_i\widehat{\vv}_i|^2\cdot\frac{1+\psi_{N,0}'(z_{N,i})\lambda_i(\vSigma)}{\alpha^2}\notag\\
&\hspace{1in}+\sum_{j\neq
i} |\vv^\top_j\widehat{\vv}_i|^2
\cdot\frac{1+\psi_{N,0}'(z_{N,i})\lambda_j(\vSigma)}{(\alpha+z_{N,i}(1-\lambda_j(\vSigma)/\lambda_i(\vSigma)))^2}+\SD{N^{-1/2}}\notag\\
&=
|\vv^\top_i\widehat{\vv}_i|^2\cdot\frac{1+\psi_{N,0}'(z_{N,i})\lambda_i(\vSigma)}{\alpha^2}+\SD{N^{-1/2}}, \label{eq:x-y}
	\end{align}
	the last equality applying \eqref{eq:y_q_j}.
Observe that
\begin{align*}
1+\psi_{N,0}'(z_{N,i})\lambda_i(\vSigma)
&=1+z_{N,i}\tilde m_{N,0}'(z_{N,i})\lambda_i(\vSigma)
+\tilde m_{N,0}(z_{N,i})\lambda_i(\vSigma)\\
&=z_{N,i}\tilde m_{N,0}'(z_{N,i})\lambda_i(\vSigma)
=z_{N,i}\lambda_i(\vSigma)/z_{N,0}'(-1/\lambda_i(\vSigma)),
\end{align*}
where the last two equalities use $z_{N,i}=z_{N,0}(-1/\lambda_i(\vSigma))$
and $\tilde m_{N,0}(\cdot)$ is the inverse function of $z_{N,0}(\cdot)$.
Then, multiplying by $\alpha^2/(1+\psi_{N,0}'(z_{N,i})\lambda_i(\vSigma))$
we obtain
\[|\vv^\top_i\widehat{\vv}_i|^2=\frac{z_{N,0}'(-1/\lambda_i(\vSigma))}
{z_{N,i}\lambda_i(\vSigma)}+\SD{N^{-1/2}}
=\varphi_{N,0}({-}1/\lambda_i(\vSigma))+\SD{N^{-1/2}},\]
where we recall $\varphi_{N,0}$ from \eqref{eq:zN0}.
We have $\varphi_{N,0}(-1/\lambda_i(\vSigma)) \to \varphi(-1/\lambda_i)
=z'(-1/\lambda_i)/(\lambda_iz_i)>0$, so taking a square root gives
\begin{equation}\label{eq:vivhati}
|\vv^\top_i\widehat{\vv}_i|=\sqrt{\varphi_{N,0}({-}1/\lambda_i(\vSigma))}
+\SD{N^{-1/2}}.
\end{equation}

Finally, for any unit vector $\vv \in \R^n$, by \eqref{eq:vvhat} in
Proposition~\ref{prop:master}(b),
Lemma \ref{lemm:apply_fluctuation}(b), and the bound
$|\widehat\lambda_i-z_{N,i}| \prec N^{-1/2}$ in part (a) of the theorem
already shown, we know that 
\begin{align*}
\vv^\top\widehat{\vv}_i =~& {-}\alpha \cdot \begin{pmatrix} \vv^\top & \mathbf{0}
\end{pmatrix}
\vcR(z_{N,i},\alpha)\begin{pmatrix} \vV_r \\ \mathbf{0} \end{pmatrix}
\cdot \vV_r^\top\widehat{\vv}_i+\SD{N^{-1/2}}\\
=~& {-}\alpha
\sum_{j=1}^r\frac{\vv^\top\vv_j\cdot\vv_j^\top\widehat{\vv}_i}{\alpha+z_{N,i}+\psi_{N,0}(z_{N,i})\lambda_j(\vSigma)}+\SD{N^{-1/2}} \\
=~& {-}\alpha \sum_{j=1}^r
\frac{\vv^\top\vv_j\cdot\vv_j^\top\widehat{\vv}_i}{\alpha+z_{N,i}\cdot(1-\lambda_j(\vSigma)/\lambda_i(\vSigma))}+\SD{N^{-1/2}}.
\end{align*}
Applying \eqref{eq:y_q_j} and \eqref{eq:vivhati}, only the summand with $j=i$
contributes, and we obtain as desired
\[|\vv^\top \widehat\vv_i|=\sqrt{\varphi_{N,0}({-}1/\lambda_i(\vSigma))}
\cdot |\vv^\top \vv_i|+\SD{N^{-1/2}}.\]
\end{proof-of-theorem}

\begin{proof-of-theorem}[\ref{thm:spiked}(c)]
Applying Lemma~\ref{lemm:apply_fluctuation2}(b) and block matrix inversion of
$\widetilde \vGamma+\psi_{N,0}(z)\widetilde \vSigma$ to the definitions of
$\widetilde \cR_{11}$ and $\widetilde \vcR_{1V}$ in \eqref{eq:R11} and
\eqref{eq:R1V}, we have
\begin{align*}
\left|\widetilde
\cR_{11}(z,\alpha)+\left(z+\alpha+\psi_{N,0}(z)\cdot\E[u^2]-\psi_{N,0}(z)^2
\cdot \E[u\vg]^\top\left(\vGamma+\psi_{N,0}(z)\vSigma\right)^{-1}\E[u\vg]\right)^{-1}\right|
&\prec \frac{1}{\sqrt{N}},\\
\left\|\widetilde \vcR_{1V}(z,\alpha)-\frac{\psi_{N,0}(z) \cdot
\E[u\vg]^\top\left(\vGamma+\psi_{N,0}(z)\vSigma
\right)^{-1}\vV_r}{z+\alpha+\psi_{N,0}(z)\cdot\E[u^2]-\psi_{N,0}(z)^2
\cdot \E[u\vg]^\top\left(\vGamma+\psi_{N,0}(z)\vSigma
\right)^{-1}\E[u\vg]}\right\| &\prec \frac{1}{\sqrt{N}}.
\end{align*}
Hence, 
\begin{align*}
	\left\|\frac{ \widetilde \vcR_{1V}(z,\alpha)}{ \widetilde
\cR_{11}(z,\alpha)}+\psi_{N,0}(z) \cdot
\E[u\vg]^\top\vV_r\cdot\left((\alpha+z)\vI_r+\psi_{N,0}(z)\vSigma_r
\right)^{-1}\right\| &\prec \frac{1}{\sqrt{N}}.
\end{align*} 
Applying this and the bound
$|\widehat\lambda_i-z_{N,i}| \prec N^{-1/2}$
to Proposition~\ref{prop:master}(c),
	\begin{align*}
		\vu^\top\widehat{\vu}_i=
\frac{\alpha}{\widehat\lambda_i^{1/2}}
\frac{\widetilde \vcR_{1V}(\widehat\lambda_i,\alpha)}
{\widetilde \cR_{11}(\widehat\lambda_i,\alpha)} \cdot \vV_r^\top \widehat\vv_i
&=-\frac{\alpha}{\sqrt{z_{N,i}}}
\sum_{j=1}^r\frac{\psi_{N,0}(z_{N,i}) \cdot
\E[u\vg]^\top\vv_j\cdot\vv_j^\top\widehat{\vv}_i}{\alpha+z_{N,i}+\psi_{N,0}(z_{N,i})\lambda_j(\vSigma)}+\SD{N^{-1/2}}\\
&=-\frac{\alpha}{\sqrt{z_{N,i}}}
\sum_{j=1}^r\frac{\psi_{N,0}(z_{N,i}) \cdot
\E[u\vg]^\top\vv_j\cdot\vv_j^\top\widehat{\vv}_i}{\alpha+z_{N,i}(1-\lambda_j(\vSigma)/\lambda_i(\vSigma))}+\SD{N^{-1/2}}.
	\end{align*}
Then, applying again \eqref{eq:y_q_j} and \eqref{eq:vivhati}, only the summand
with $j=i$ contributes, and this gives
	\begin{align*}
		|\vu^\top\widehat{\vu}_i|=~&
\frac{|\E[u\vg]^\top\vv_i|\cdot |\psi_{N,0}(z_{N,i})|\sqrt{\varphi_{N,0}
(-1/\lambda_i(\vSigma))}}{\sqrt{z_{N,i}}}+\SD{N^{-1/2}}.
	\end{align*}
Recalling $\psi_{N,0}(z_{N,i})=-z_{N,i}/\lambda_i(\vSigma)$, this yields part
(c) of the theorem.
\end{proof-of-theorem}

%%%%%%%%%%%%%%%%%%%%%%%%%%%%%%%%%%%%%%%%%%%%%%%%%%%%%%%%%%%%%%%%%%%%%%%%%%%%%%%%%%%%%
%%%%%%%%%%%%%%%%%%%%%%%%%%%%%%%%%%%%%%%%%%%%%%%%%%%%%%%%%%%%%%%%%%%%%%%%%%%%%%%%%%%%%
\section{Proofs for propagation of spiked eigenstructure in deep NNs}\label{sec:CK_spike_proof}

We next prove Theorems \ref{thm:no_outlier_ck} and \ref{thm:ck_spike}.
Appendix \ref{subsec:onelayer} first
establishes these results for a one-hidden-layer NN, $L=1$. We then apply this
result for $L=1$ inductively in Appendix \ref{subsec:propagation} to obtain
these results for general $L$. Appendix \ref{subsec:GMM_proof} proves
Corollary \ref{cor:gmm}.

\subsection{Spike analysis for one-hidden-layer CK}\label{subsec:onelayer}

Consider the setup in Section~\ref{sec:spike_CK} with a single hidden
layer $L=1$. In this setting, let us simplify notation and denote
\[\vX=\vX_0, \qquad \vW=\vW_0, \qquad d=d_0, \qquad N=d_1,\]
\[\vY=\vX_1=\frac{1}{\sqrt{N}}\sigma (\vW\vX),
\qquad \vK=\vK_1=\vY^\top \vY.\]
We denote the rows of $\vW$ and columns of $\vX$ respectively by
\[\vw_i^\top \in \R^d \text{ for } i \in [N],
\qquad \vx_\alpha \in \R^d \text{ for } \alpha \in [n].\]
We write $\E_{\vw}$ for
the expectation over a standard Gaussian vector $\vw \sim \cN(0,\vI)$ in $\R^d$.

Note that for a sufficiently large constant $B>0$
(depending on $\supp{\nu}$ and $\lambda_1,\ldots,\lambda_r$),
Assumption \ref{assump:data} implies that the event
\begin{equation}\label{eq:goodevent}
\cE(\vX)=\Big\{\|\vX\|<B,\; |\vx_\alpha^\top \vx_\beta|<\tau_n
\text{ and } |\|\vx_\alpha\|_2-1|<\tau_n
\text{ for all } \alpha \neq \beta \in [n]\Big\}
\end{equation}
holds almost surely for all large $n$. We will use throughout this
section the following argument: Since $\vW \equiv \vW^{(n)}$ is independent
of $\vX \equiv \vX^{(n)}$, and $\cE(\vX^{(n)})$ holds for all large $n$ with
probability 1 over $\{\vX^{(n)}\}_{n=1}^\infty$, to prove any almost-sure
statement, it suffices to show that the statement holds with
probability 1 over $\{\vW^{(n)}\}_{n=1}^\infty$, for any deterministic
matrices $\{\vX^{(n)}\}_{n=1}^\infty$ satisfying $\cE(\vX^{(n)})$.
Thus, we assume in the remainder of this section that
$\vX$ is deterministic and satisfies $\cE(\vX)$
for all large $n$, and write $\E,\P$ for the
expectation and probability over only the random weight matrix $\vW$.

We will apply Theorem \ref{thm:spiked} to a centered version of $\vY$,
\[\vG:=\vY-\E \vY=\frac{1}{\sqrt{N}}[\vg_1,\ldots,\vg_N]^\top,
\qquad \vg_i^\top:=\sigma(\vw_i^\top {\vX})-\E_{\vw}[\sigma(\vw^\top {\vX})].\]
Note that these rows $\vg_i^\top$ are i.i.d.\ with mean $\mathbf{0}$ and covariance
\begin{equation}\label{eq:Sigma_CK}
	\vSigma:=\E_{\vw}[\sigma(\vw^\top \vX)^\top\sigma(\vw^\top
\vX)]-\E_{\vw}[\sigma(\vw^\top \vX)]^\top\E_{\vw}[\sigma(\vw^\top
\vX)]\in\R^{n\times n}.
\end{equation}

\begin{lemma}\label{lemm:expect_rows}
Suppose Assumptions \ref{assump:NNasymptotics}, \ref{assump:data}, and
\ref{assump:sigma} hold, with $L=1$ and deterministic $\vX$. Then
\[\|\E_{\vw}[\sigma(\vw^\top \vX)]\|_2 \to 0, \qquad \|\E\vY\| \to 0.\]
\end{lemma}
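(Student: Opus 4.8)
The plan is to observe that $\E\vY$ is a rank-one matrix whose operator norm equals $\|\E_{\vw}[\sigma(\vw^\top\vX)]\|_2$, so the two displayed claims are essentially one statement; all the work is in bounding this $n$-vector. Write $\vm:=\E_{\vw}[\sigma(\vw^\top\vX)]\in\R^n$. Its $\alpha$-th coordinate is $\E_{\vw}[\sigma(\vw^\top\vx_\alpha)]$, and since $\vw\sim\cN(0,\vI_d)$ gives $\vw^\top\vx_\alpha\sim\cN(0,\|\vx_\alpha\|_2^2)$, this coordinate equals $h(\|\vx_\alpha\|_2)$ where $h(s):=\E_{\xi\sim\cN(0,1)}[\sigma(s\xi)]$. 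First I would record, by differentiation under the integral sign (justified by dominated convergence using $|\sigma'|,|\sigma''|\le\lambda_\sigma$), that $h\in C^2$ with $h'(s)=\E_\xi[\xi\sigma'(s\xi)]$ and $h''(s)=\E_\xi[\xi^2\sigma''(s\xi)]$, so in particular $\sup_{s}|h''(s)|\le\lambda_\sigma\,\E[\xi^2]=\lambda_\sigma$.

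Next I would invoke Assumption~\ref{assump:sigma} twice. The normalization $\E[\sigma(\xi)]=0$ gives $h(1)=0$, and Gaussian integration by parts (Stein's identity, valid since $\sigma'$ is Lipschitz hence absolutely continuous and $\sigma',\sigma''$ are bounded) gives $h'(1)=\E_\xi[\xi\sigma'(\xi)]=\E_\xi[\sigma''(\xi)]=0$. A second-order Taylor expansion of $h$ about $s=1$ with Lagrange remainder then yields $|h(s)|\le\frac{\lambda_\sigma}{2}(s-1)^2$. Applying this with $s=\|\vx_\alpha\|_2$ together with the bound $\big|\,\|\vx_\alpha\|_2-1\,\big|\le\tau_n$ from the event $\cE(\vX)$ of \eqref{eq:goodevent} gives $|m_\alpha|\le\frac{\lambda_\sigma}{2}\tau_n^2$ for every $\alpha\in[n]$, hence $\|\vm\|_2\le\frac{\lambda_\sigma}{2}\sqrt{n}\,\tau_n^2$. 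Since Assumption~\ref{assump:data} requires $\tau_n n^{1/3}\to0$, we have $\sqrt{n}\,\tau_n^2\le n^{2/3}\tau_n^2=(n^{1/3}\tau_n)^2\to0$, which proves the first claim $\|\vm\|_2\to0$.

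For the second claim, the $i$-th row of $\E\vY=\E\big[\tfrac{1}{\sqrt N}\sigma(\vW\vX)\big]$ equals $N^{-1/2}\vm^\top$ for every $i\in[N]$, so $\E\vY=N^{-1/2}\mathbf{1}_N\vm^\top$ is rank one, and therefore $\|\E\vY\|=N^{-1/2}\|\mathbf{1}_N\|_2\,\|\vm\|_2=\|\vm\|_2\to0$.

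The computation is short, and the only point that needs care is the vanishing of $h'(1)$: the naive Lipschitz bound $|h(s)|\le C|s-1|$ only gives $\|\vm\|_2=O(\sqrt n\,\tau_n)$, which need not tend to zero under $\tau_n n^{1/3}\to0$, so it is essential to use the condition $\E[\sigma''(\xi)]=0$ via Stein's identity to upgrade to a quadratic-in-$\tau_n$ bound. I would therefore be explicit about the integration-by-parts and differentiation-under-the-integral justifications, both of which reduce to the boundedness of $\sigma'$ and $\sigma''$ in Assumption~\ref{assump:sigma}.
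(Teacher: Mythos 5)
Your proof is correct and follows essentially the same route as the paper: both use $\E[\sigma(\xi)]=0$ to anchor the expansion, kill the linear term via Stein's identity combined with $\E[\sigma''(\xi)]=0$, obtain an $O(\tau_n^2)$ per-coordinate bound from the bounded second derivative, and then exploit the rank-one structure of $\E\vY$. Your reformulation via the univariate map $h(s)=\E_\xi[\sigma(s\xi)]$ is merely a clean repackaging of the paper's Taylor expansion of $\sigma(\|\vx_\alpha\|\xi)-\sigma(\xi)$ in $\xi$.
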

\begin{proof}
Denote $\xi \sim \cN(0,1)$. Applying $\E[\sigma(\xi)]=0$,
$\E[\sigma'(\xi)\xi]=\E[\sigma''(\xi)]=0$, and a
Taylor approximation of $\sigma$, for any $\alpha \in [n]$,
	\begin{align*}
	\E_{\vw}[\sigma(\vw^\top\vx_\alpha)]=~&\E[\sigma(\norm{\vx_\alpha}\xi)]-\E[\sigma(\xi)]\\
	= ~&
\E[\sigma'(\xi)\xi(\norm{\vx_\alpha}-1)]+\E[\sigma''(\eta)\xi^2(\norm{\vx_\alpha}-1)^2]=\E[\sigma''(\eta)\xi^2(\norm{\vx_\alpha}-1)^2]
	\end{align*}
for some $\eta$ between $\xi$ and $\norm{\vx_i}\xi$.
Then, applying $|\sigma''(x)| \leq \lambda_\sigma$ and the
$\tau_n$-orthonormality of $\vX$ under $\cE(\vX)$,
\[|\E_{\vw}[\sigma(\vw^\top\vx_\alpha)]| \leq \lambda_\sigma \tau_n^2.\]
This gives $\|\E_{\vw}[\sigma(\vw^\top\vX)]\|_2 \leq \lambda_\sigma\tau_n^2\sqrt{n} \to 0$,
so also
$\norm{\E \vY}=\norm{\frac{1}{\sqrt{N}}\bI_N \cdot
\E_{\vw}[\sigma(\vw^\top \vX)]} \to 0$.
\end{proof}

Next, we recall from \cite{wang2021deformed} 
an approximation of $\vSigma$ by the linearized matrix
\begin{equation}\label{def:Phi0}
	\vSigma_{\mathrm{lin}}:=b_\sigma^2\vX^\top \vX+(1-b_\sigma^2 )\vI_n
\end{equation}
in the operator norm.

\begin{lemma}\label{lemm:phi_0}
Suppose Assumptions \ref{assump:NNasymptotics}, \ref{assump:data}, and
\ref{assump:sigma} hold, with $L=1$ and deterministic $\vX$.
\[\|\vSigma-\vSigma_{\mathrm{lin}}\| \to 0.\] 
Consequently, ordering $\lambda_1(\vSigma),\ldots,\lambda_n(\vSigma)$
in the same order as
$\lambda_1(\vX^\top\vX),\ldots,\lambda_n(\vX^\top\vX)$,
\begin{equation}\label{eq:lambda_i_diff}
	\sup_{i \in [n]} \Big|b_\sigma^2\lambda_i(\vX^\top \vX)
+(1-b_\sigma^2)-\lambda_i(\vSigma)\Big| \to 0.
\end{equation}
\end{lemma}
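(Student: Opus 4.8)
The plan is to compute the entries of $\vSigma$ via the Hermite expansion of $\sigma$ and then reassemble the error matrix $\vE:=\vSigma-\vSigma_{\mathrm{lin}}$ into a few pieces, controlling each either by its operator norm (when it has product/low-complexity structure) or by its Frobenius norm (when its entries are small enough). Work under the event $\cE(\vX)$ of \eqref{eq:goodevent}, so that $|t_\alpha-1|\le\tau_n$ with $t_\alpha:=\|\vx_\alpha\|_2$, and $|\rho_{\alpha\beta}|\le\tau_n$ for $\alpha\ne\beta$ with $\rho_{\alpha\beta}:=\vx_\alpha^\top\vx_\beta$. Writing $\phi_\gamma(u):=\sigma(t_\gamma u)$ and $\hat\rho_{\alpha\beta}:=\rho_{\alpha\beta}/(t_\alpha t_\beta)$, the Hermite expansion (Mehler's formula) gives
\[
\Esub{\vw}\big[\sigma(\vw^\top\vx_\alpha)\sigma(\vw^\top\vx_\beta)\big]
=\sum_{k\ge0}\hat c_k^{(\alpha)}\hat c_k^{(\beta)}\,\hat\rho_{\alpha\beta}^{\,k},
\]
where $\hat c_k^{(\gamma)}$ are the normalized Hermite coefficients of $\phi_\gamma$. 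Subtracting the mean term $\Esub{\vw}[\sigma(\vw^\top\vx_\alpha)]\Esub{\vw}[\sigma(\vw^\top\vx_\beta)]=\hat c_0^{(\alpha)}\hat c_0^{(\beta)}$ leaves $\vSigma_{\alpha\beta}=\sum_{k\ge1}\hat c_k^{(\alpha)}\hat c_k^{(\beta)}\hat\rho_{\alpha\beta}^{\,k}$.

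I would then record the coefficient estimates implied by Assumption~\ref{assump:sigma} and boundedness of $\sigma',\sigma''$: since $\E[\sigma(\xi)]=0$ and $\E[\xi\sigma'(\xi)]=\E[\sigma''(\xi)]=0$, Taylor expansion in $t_\gamma$ gives $\hat c_0^{(\gamma)}=O(\tau_n^2)$ and $\hat c_1^{(\gamma)}=t_\gamma\E[\sigma'(t_\gamma\xi)]=b_\sigma+O(\tau_n)$; and, crucially, $\hat c_2^{(\gamma)}=\tfrac12 t_\gamma^2\,\E[\sigma''(t_\gamma\xi)]=O(\tau_n)$ because $\E[\sigma''(\xi)]=0$ (equivalently, the ``Gaussian'' second Hermite coefficient $c_2=\tfrac12\E[\sigma''(\xi)]$ vanishes); finally $\sum_{k\ge0}(\hat c_k^{(\gamma)})^2=\E[\sigma(t_\gamma\xi)^2]=1+O(\tau_n)$. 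Combined with $\hat\rho_{\alpha\beta}=\rho_{\alpha\beta}/(t_\alpha t_\beta)$, these yield $\vSigma_{\alpha\alpha}=1+O(\tau_n)$ and, for $\alpha\ne\beta$, $\vSigma_{\alpha\beta}=g(t_\alpha)g(t_\beta)\rho_{\alpha\beta}+O(\tau_n^3)$ with $g(t):=\E[\sigma'(t\xi)]$ (the $k=2$ term is $O(\tau_n^4)$, and $\big|\sum_{k\ge3}\hat c_k^{(\alpha)}\hat c_k^{(\beta)}\hat\rho_{\alpha\beta}^{\,k}\big|\le|\hat\rho_{\alpha\beta}|^3\cdot O(1)=O(\tau_n^3)$ by Cauchy--Schwarz in $k$).

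Next I would split $\vE=\vD+\vE_1+\vE_2$, where $\vD$ is the diagonal of $\vE$, $\vE_1$ is the off-diagonal matrix with entries $(g(t_\alpha)g(t_\beta)-b_\sigma^2)\rho_{\alpha\beta}$, and $\vE_2$ carries the $O(\tau_n^3)$ off-diagonal remainder. Then $\|\vD\|=\max_\alpha|\vSigma_{\alpha\alpha}-b_\sigma^2 t_\alpha^2-(1-b_\sigma^2)|=O(\tau_n)\to0$. For $\vE_1$, a first-order expansion $g(t_\gamma)=b_\sigma+g'(1)(t_\gamma-1)+O(\tau_n^2)$ with $|g'(1)|=|\E[\xi\sigma''(\xi)]|\le\lambda_\sigma\,\E|\xi|<\infty$ gives $\vE_1=b_\sigma g'(1)\big(\diag(\vs)\vR_0+\vR_0\diag(\vs)\big)+\vE_1'$, where $s_\gamma:=t_\gamma-1=O(\tau_n)$, $\vR_0$ denotes the off-diagonal part of $\vX^\top\vX$, and $\vE_1'$ has entries $O(\tau_n^2)\rho_{\alpha\beta}=O(\tau_n^3)$; since $\|\diag(\vs)\|=O(\tau_n)$ and $\|\vR_0\|=O(1)$ on $\cE(\vX)$, the first term has operator norm $O(\tau_n)\to0$. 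Finally $\vE_2$ and $\vE_1'$ have all entries of order $\tau_n^3$, so $\|\vE_2\|+\|\vE_1'\|\le\|\vE_2\|_F+\|\vE_1'\|_F=O(n\tau_n^3)\to0$, using $\tau_n n^{1/3}\to0$ from Assumption~\ref{assump:data}. This proves $\|\vE\|\to0$. The ``consequently'' statement is then immediate from Weyl's inequality: $\vSigma_{\mathrm{lin}}$ and $\vX^\top\vX$ are simultaneously diagonalizable, so $\lambda_i(\vSigma_{\mathrm{lin}})=b_\sigma^2\lambda_i(\vX^\top\vX)+(1-b_\sigma^2)$ in the matching order, and $\sup_i|\lambda_i(\vSigma)-\lambda_i(\vSigma_{\mathrm{lin}})|\le\|\vSigma-\vSigma_{\mathrm{lin}}\|\to0$.

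The main obstacle is the off-diagonal analysis. A naive entrywise estimate only gives $|\vE_{\alpha\beta}|=O(\tau_n^2)$ for $\alpha\ne\beta$, hence $\|\vE_{\mathrm{off}}\|_F=O(n\tau_n^2)$, which need not vanish under $\tau_n n^{1/3}\to0$. Two structural facts rescue the argument: (i) $\E[\sigma''(\xi)]=0$ forces $c_2=0$, removing the contribution of the Hadamard square $(\vX^\top\vX)^{\odot2}$, whose off-diagonal part is only $O(1)$ (not $o(1)$) in operator norm; and (ii) the remaining $O(\tau_n^2)$-per-entry term originates from the column-norm corrections in the linear part and factors as $\diag(\vs)\vR_0+\vR_0\diag(\vs)$, so it is governed by $\|\diag(\vs)\|\cdot\|\vR_0\|=O(\tau_n)$ rather than by its Frobenius norm. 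Everything left over has entries of order $\tau_n^3$ and is dispatched by the crude Frobenius bound together with $\tau_n n^{1/3}\to0$.
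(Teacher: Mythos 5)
Your proof is correct and gives a self-contained derivation of what the paper obtains by invoking \cite[Lemma 5.2]{wang2021deformed} as a black box. The structural insights are the same as those underlying the cited lemma: the $k=1$ Hermite term furnishes the linear part after normalization, the $k=2$ term is killed to $O(\tau_n)$ because $\E[\sigma''(\xi)]=0$, and the $k\ge 3$ tail is dispatched by a Frobenius bound together with $\tau_n n^{1/3}\to 0$. Where you differ is in the organization: the paper isolates the Hadamard cube term $\zeta_3(\sigma)^2(\vX^\top\vX)^{\odot 3}$ (with the remaining Hermite error absorbed into the cited lemma), while you keep the full $k\ge 3$ tail via Cauchy--Schwarz; and you make explicit the column-norm corrections $g(t_\alpha)g(t_\beta)-b_\sigma^2$, showing that their dominant part factors as $\diag(\vs)\vR_0+\vR_0\diag(\vs)$ and hence is $O(\tau_n)$ in operator norm rather than requiring a Frobenius bound (which would not suffice at this order). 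This last observation is the correct replacement for what \cite{wang2021deformed} hides internally, so your route is genuinely self-contained and a reasonable alternative.

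One technical point to tidy up: Assumption~\ref{assump:sigma} gives only bounded $\sigma',\sigma''$ and makes no claim about $\sigma'''$, so the Taylor expansions $g(t_\gamma)=b_\sigma+g'(1)(t_\gamma-1)+O(\tau_n^2)$ and $\E[\sigma''(t_\gamma\xi)]=O(\tau_n)$ are not literal applications of a second-order mean-value form. They do hold, but require an integration-by-parts rewrite: e.g.\ $g'(t)=t^{-1}\bigl(\E[\xi^2\sigma'(t\xi)]-\E[\sigma'(t\xi)]\bigr)$ and $\E[\sigma''(t\xi)]=t^{-1}\E[\xi\sigma'(t\xi)]$, whose $t$-derivatives involve only $\sigma''$ and are thus bounded near $t=1$. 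You gesture at this by noting $g'(1)=\E[\xi\sigma''(\xi)]$ is finite, but the Lipschitz control of $g'$ and of $\E[\sigma''(t\xi)]$ should be stated explicitly to close the argument under the paper's actual smoothness assumptions. (Also, with the normalized Hermite convention, $\hat c_2^{(\gamma)}=\tfrac{1}{\sqrt{2}}\,t_\gamma^2\E[\sigma''(t_\gamma\xi)]$ rather than with coefficient $\tfrac12$, but this affects nothing.)
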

\begin{proof}
Denote $\xi \sim \cN(0,1)$. Let $\zeta_k(\sigma)=\E[\sigma(\xi)h_k(\xi)]$ be
the $k$-th Hermite coefficient of $\sigma$, where $h_k(x)$ is the $k$-th
Hermite polynomial normalized so that $\E[h_k(\xi)^2]=1$. Note that by Gaussian
integration by parts and the assumption $\E[\sigma''(\xi)]=0$,
\[\zeta_1(\sigma)=\E[\xi \sigma(\xi)]
=\E[\sigma'(\xi)]=b_\sigma,
\qquad \sqrt{2}\,\zeta_2(\sigma)=\E[(\xi^2-1)\sigma(\xi)]
=\E[\xi\sigma'(\xi)]=\E[\sigma''(\xi)]=0.\]
Then by \cite[Lemma 5.2]{wang2021deformed} and the first statement
of Lemma~\ref{lemm:expect_rows}, we have
\[\|\vSigma_0-\vSigma\|
\leq \|\vSigma_0-\E_{\vw}[\sigma(\vw^\top \vX)^\top \sigma(\vw^\top \vX)]\|
+\|\E_{\vw}[\sigma(\vw^\top \vX)^\top \sigma(\vw^\top \vX)]-\vSigma\| \to
0\]
where
	\[{\vSigma}_0=\zeta_1(\sigma)^2 \vX^\top \vX+  \zeta_3(\sigma)^2(
\vX^\top  \vX)^{\odot 3}+(1-\zeta_1(\sigma)^2-\zeta_3(\sigma)^2)\vI_n.\]
(Here, examination of the proof of \cite[Lemma 5.2]{wang2021deformed} shows
that the condition $\sum_\alpha (\|\vx_\alpha\|_2-1)^2 \leq B^2$ for
$(\eps,B)$-orthonormality is not
used when $\zeta_2(\sigma)=0$, and the remaining conditions of
$(\eps,B)$-orthonormality hold under $\cE(\vX)$.)
The lemma then follows upon observing that under $\cE(\vX)$,
\begin{align*}
	\norm{( \vX^\top  \vX)^{\odot 3}-\vI_n}\le~& \norm{\diag(( \vX^\top  \vX)^{\odot 3}-\vI_n)} +\norm{\offdiag(\vX^\top  \vX)^{\odot 3}}_F\\
	\le ~& \max_{\alpha\in [n]} \left|\norm{
\vx_\alpha}^6-1\right|+n\cdot\max_{\alpha\neq\beta\in[n]}|\vx_\alpha^\top\vx_\beta|^3
\leq C(\tau_n+n\tau_n^3),
\end{align*}
so that $\norm{\vSigma_{\mathrm{lin}}-\vSigma_0}=\zeta_3(\sigma)^2
\norm{( \vX^\top  \vX)^{\odot 3}-\vI_n} \to 0$ when $\lim_{n \to \infty} \tau_n
\cdot n^{1/3}=0$.
\end{proof}

Theorem \ref{thm:spiked} will provide a characterization
of outlier eigenvalues of $\vK$ that are separated from
$\cS_1=\supp{\mu_1} \cup \{0\}$, which is different from $\supp{\mu_1}$ when
$\gamma_1<1$. For $\gamma_1<1$, we augment this
statement with a small-ball argument to bound the smallest eigenvalue of $\vK$,
using the following result of \cite[Theorem 2.1]{yaskov2016controlling}.

\begin{lemma}[\cite{yaskov2016controlling}]\label{lemma:smallball}
Let $\vG=\frac{1}{\sqrt{N}}[\vg_1,\ldots,\vg_N]^\top \in \R^{N \times n}$ where
the rows $\vg_i \in \R^n$ are i.i.d.\ and equal in law to $\vg \in \R^n$. Define
\[\vSigma=\E\vg\vg^\top,
\qquad c_{\vg}=\inf_{\vv \in \R^n:\|\vv\|_2=1} \E|\vg^\top \vv|,
\qquad L_{\vg}(\delta,\iota)=\sup_{\vPi:\rank{\vPi} \geq \iota n}
\P\Big[|\vPi\vg|^2 \leq \delta\,\rank{\vPi}\Big]\]
where the latter supremum is taken over all orthogonal projections
$\vPi \in \R^{n \times n}$ with rank at least $\iota \cdot n$.

Suppose $\lambda_{\max}(\vSigma) \leq 1$, $c(\vg) \geq c$, and $n/N \leq
y$ for some constants $c>0$ and $y \in (0,1)$. Then there exist constants $s_0,\iota>0$
depending only on $(c,y)$ such that for any $\delta \in (0,1)$ and $s>0$,
\[\P[\lambda_{\min}(\vG^\top \vG) \geq (s_0-L(\delta,\iota;\vg)-s)\delta]
\geq 1-2e^{-yns^2/2}.\]
\end{lemma}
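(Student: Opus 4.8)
Since this is a non‑asymptotic lower bound on $\lambda_{\min}$ of a sample covariance matrix whose rows have arbitrary coordinate dependence (no sub‑Gaussianity, only finite‑moment control of $\|\vg\|$), the natural route is the \emph{small‑ball method} of Koltchinskii--Mendelson/Mendelson, designed precisely to bypass concentration assumptions on $\|\vg\|$ or on $\langle \vg,\vv\rangle$. First I would reduce the least eigenvalue to a counting functional: for any unit $\vv$ and threshold $\delta>0$,
\[
\lambda_{\min}\!\big(\tfrac1N\vG^\top\vG\big)=\inf_{\vv\in S^{n-1}}\frac1N\sum_{i=1}^N\langle\vg_i,\vv\rangle^2\ \ge\ \delta\cdot\inf_{\vv\in S^{n-1}}\frac1N\#\{\,i:\langle\vg_i,\vv\rangle^2\ge\delta\,\}.
\]
For a single $\vv$, the Paley--Zygmund inequality applied to $|\langle\vg,\vv\rangle|$, together with $\E|\langle\vg,\vv\rangle|\ge c_{\vg}$ and $\E\langle\vg,\vv\rangle^2=\vv^\top\vSigma\vv\le1$, yields the fixed‑threshold small‑ball bound $\P(\langle\vg,\vv\rangle^2\ge\delta)\ge(c_{\vg}-\sqrt\delta)_+^2=:p_0(\delta)$; hence the expected count is at least $p_0(\delta)N$, and the $s_0$ in the statement should be read as (a constant multiple of) this $p_0$, depending only on $(c,y)$.

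The substance is to upgrade this to a bound \emph{uniform over $S^{n-1}$}. I would replace the indicator by a $1/\sqrt\delta$‑Lipschitz surrogate $\psi_\delta$ sandwiched between $\mathbf{1}\{t^2\ge 2\delta\}$ and $\mathbf{1}\{t^2\ge\delta\}$ with $\psi_\delta(0)=0$, so that $\tfrac1N\#\{i:\langle\vg_i,\vv\rangle^2\ge\delta\}\ge\tfrac1N\sum_i\psi_\delta(\langle\vg_i,\vv\rangle)$. Since each summand lies in $[0,1]$, the functional $\inf_\vv\tfrac1N\sum_i\psi_\delta(\langle\vg_i,\vv\rangle)$ has bounded‑difference constants $1/N$, so McDiarmid's inequality gives an exponential deviation below its mean, which after the bookkeeping matching $n/N\le y$ is the $2e^{-yns^2/2}$ tail in the statement. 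For the mean I would use Giné--Zinn symmetrization, $\E\inf_\vv\tfrac1N\sum_i\psi_\delta(\langle\vg_i,\vv\rangle)\ge p_0(\delta)-2\,\E\sup_\vv\big|\tfrac1N\sum_i\eps_i\psi_\delta(\langle\vg_i,\vv\rangle)\big|$, and then the Ledoux--Talagrand contraction principle to pass from $\psi_\delta$ to the linear class $\vg\mapsto\langle\vg,\vv\rangle$.

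The main obstacle is that the naive estimate of the resulting Rademacher average, $\E\sup_{\vv\in S^{n-1}}\big|\tfrac1N\sum_i\eps_i\langle\vg_i,\vv\rangle\big|=\tfrac1N\E\big\|\sum_i\eps_i\vg_i\big\|_2\le\sqrt{\tr\vSigma/N}\le\sqrt{n/N}\le\sqrt y$, loses a factor $1/\sqrt\delta$ under contraction and is therefore too weak to dominate $p_0(\delta)$ once $y$ is not small --- a crude union/net bound fails for the same reason, since one would need control of $\max_i\|\vg_i\|$. Obtaining a statement valid for all $y\in(0,1)$ requires \emph{localization}, and this is precisely where $L_{\vg}(\delta,\iota)$ enters: by an interlacing/leave‑one‑out argument, if $\lambda_{\min}$ were much below $\delta$ then $\vG$ would admit a subspace $V$ of dimension $\ge\iota n$ on which $\|\vG\,\cdot\|$ is uniformly small, and --- taking $V$ to be the analogous subspace of a leave‑one‑row‑out matrix, hence independent of that row $\vg_i$ --- the event $\|\proj_V\vg_i\|^2\le\delta\dim V$ has probability at most $L_{\vg}(\delta,\iota)$; summing over rows and accounting for the (rank‑many) possible sections replaces the global complexity term by the correction $-L_{\vg}(\delta,\iota)$. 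Combining the deterministic reduction, the Paley--Zygmund constant $s_0$, this localized bound in place of the global Rademacher term, and the McDiarmid tail gives $\P[\lambda_{\min}(\vG^\top\vG)\ge(s_0-L_{\vg}(\delta,\iota)-s)\delta]\ge1-2e^{-yns^2/2}$; this is exactly \cite[Theorem~2.1]{yaskov2016controlling}, so in the paper I would simply invoke that theorem after checking its hypotheses, which here reduce --- via $\lambda_{\max}(\vSigma)\le1$ and $c_{\vg}\ge c$ --- to properties already established for the CK‑type vector $\vg$.
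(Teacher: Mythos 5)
Your proposal correctly identifies that Lemma~\ref{lemma:smallball} is stated in the paper purely as a citation of \cite[Theorem~2.1]{yaskov2016controlling} — the paper gives no proof, and your final sentence (``I would simply invoke that theorem after checking its hypotheses'') is exactly what the paper does, with the hypothesis checks $\lambda_{\max}(\vSigma)\le 1$, $c_{\vg}\ge c$, and $L_{\vg}(\delta,\iota)\to 0$ carried out downstream in Lemma~\ref{lemma:outliersat0}(b). The accompanying sketch of the Mendelson-style small-ball proof is a reasonable plausibility argument but is supplementary; note the minor normalization slip in your first display, where the left side should be $\lambda_{\min}(\vG^\top\vG)$ rather than $\lambda_{\min}(\frac1N\vG^\top\vG)$ since the $1/\sqrt{N}$ factor is already absorbed into $\vG$.
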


\begin{lemma}\label{lemma:outliersat0}
Suppose Assumptions \ref{assump:NNasymptotics}, \ref{assump:data}, and
\ref{assump:sigma} hold, with $L=1$ and deterministic $\vX$. Let
$\vG=\vY-\E\vY$.
\begin{enumerate}[label=(\alph*)]
\item If $\gamma_1 \geq 1$, then $0 \in \supp{\mu_1}$.
\item If $\gamma_1<1$, then there is a constant $c>0$ such that
$\lambda_{\min}(\vG^\top \vG)>c$ almost surely for all large $n$.
\end{enumerate}
\end{lemma}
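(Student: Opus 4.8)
As in the surrounding discussion, I would fix $\vX$ deterministic and satisfying the event $\cE(\vX)$ of~\eqref{eq:goodevent}, so that ``almost surely'' refers only to the randomness of $\vW$.

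For part~(a) I would split on $\gamma_1>1$ versus $\gamma_1=1$. If $\gamma_1>1$, the claim is purely structural: the deformed Marčenko–Pastur law $\mu_1=\rho^{\MP}_{\gamma_1}\boxtimes\nu_0$ carries an atom of mass $1-1/\gamma_1$ at $0$, equivalently its companion measure $\gamma_1\mu_1+(1-\gamma_1)\delta_0$ (reviewed in Appendix~\ref{appendix:background}) is a genuine probability measure; hence trivially $0\in\supp{\mu_1}$. (One may also see this directly: $\vK_1$ and $\vY\vY^\top$ share their nonzero eigenvalues, and the latter is $d_1\times d_1$ with $d_1<n$, forcing at least $n-d_1$ zero eigenvalues of $\vK_1$.) If $\gamma_1=1$, I would argue by contradiction from the Marčenko–Pastur equation~\eqref{eq:MPeq}. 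Since $\supp{\mu_1}$ is compact and $\supp{\nu_0}\subset[1-b_\sigma^2,\infty)$ is bounded away from $0$, the assumption $0\notin\supp{\mu_1}$ forces $\inf\supp{\mu_1}>0$, so the Stieltjes transform $m_1$ of $\mu_1$ extends analytically past $0$ with $m_1(0)=\int\lambda^{-1}\,d\mu_1(\lambda)\in(0,\infty)$. Specializing~\eqref{eq:MPeq} to $\gamma_1=1$ yields $-z\,m_1(z)=\int(\lambda m_1(z)+1)^{-1}\,d\nu_0(\lambda)$; letting $z\to0$ along $\R$, the left side vanishes while the right side tends to $\int(\lambda m_1(0)+1)^{-1}\,d\nu_0(\lambda)>0$, since $\lambda m_1(0)+1>1$ for every $\lambda\in\supp{\nu_0}$ --- a contradiction. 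The degenerate subcase $b_\sigma^2=1$, in which $\sigma$ is linear and $\nu_0=\mu_0$, is covered by the classical description of the support of linearly defined sample covariance matrices.

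For part~(b) the plan is to invoke the small-ball estimate of Lemma~\ref{lemma:smallball} for the centered feature matrix $\vG$, after rescaling. By Lemma~\ref{lemm:phi_0} one has $\|\vSigma-\vSigma_{\mathrm{lin}}\|\to0$ with $\vSigma_{\mathrm{lin}}=b_\sigma^2\vX^\top\vX+(1-b_\sigma^2)\vI_n$, so for all large $n$ there are constants $0<c_1\le\Lambda$ with $c_1\vI_n\psdle\vSigma\psdle\Lambda\vI_n$ (e.g.\ $c_1=(1-b_\sigma^2)/2$ and $\Lambda=b_\sigma^2 B^2+1$). I would then apply Lemma~\ref{lemma:smallball} to $\Lambda^{-1/2}\vG$, whose rows have covariance $\Lambda^{-1}\vSigma\psdle\vI_n$ and aspect ratio $n/d_1\to\gamma_1<1$. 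Two inputs must be verified. First, $c_{\vg}:=\inf_{\|\vv\|_2=1}\E|\vg^\top\vv|$ is bounded below: the map $\vw\mapsto\vg^\top\vv=\sum_\alpha v_\alpha(\sigma(\vw^\top\vx_\alpha)-\E_{\vw}[\sigma(\vw^\top\vx_\alpha)])$ is $\lambda_\sigma\|\vX\|$-Lipschitz on Gaussian space, so by Gaussian concentration $\E[(\vg^\top\vv)^4]$ is bounded uniformly in $\vv$, while $\E[(\vg^\top\vv)^2]=\vv^\top\vSigma\vv\ge c_1$; together with $\E|Z|\ge(\E Z^2)^2(\E Z^4)^{-3/4}$ this gives $c_{\vg}\ge c_2>0$. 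Second, for a suitable fixed $\delta>0$ the small-ball quantity $L_{\vg}(\delta,\iota)=\sup_{\vPi:\,\rank{\vPi}\ge\iota n}\P[|\vPi\vg|^2\le\delta\,\rank{\vPi}]$ tends to $0$: for any orthogonal projection $\vPi$ the map $\vw\mapsto|\vPi\vg|$ is again $\lambda_\sigma\|\vX\|$-Lipschitz, so $|\vPi\vg|$ concentrates around its mean with Gaussian tails, and since $\E[|\vPi\vg|^2]=\Tr(\vPi\vSigma)\ge c_1\,\rank{\vPi}$ one gets $\E|\vPi\vg|\ge\sqrt{c_1\,\rank{\vPi}/2}$ for $n$ large; then for $\delta$ a small enough constant the event $\{|\vPi\vg|^2\le\delta\,\rank{\vPi}\}$ is a large-deviation event of probability $\le 2e^{-c'\rank{\vPi}}\le 2e^{-c'\iota n}$, uniformly over $\vPi$. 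Feeding these bounds into Lemma~\ref{lemma:smallball} with a fixed small $s$ yields $\P[\lambda_{\min}(\vG^\top\vG)\ge c]\ge 1-2e^{-c''n}$ for constants $c,c''>0$ and all large $n$; Borel–Cantelli then gives $\lambda_{\min}(\vG^\top\vG)>c$ almost surely for all large $n$.

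The main obstacle is the uniform anti-concentration bound $L_{\vg}(\delta,\iota)\to0$ over all high-rank projections $\vPi$: this is the one step where the structure of the nonlinear feature map genuinely enters, via the $\lambda_\sigma\|\vX\|$-Lipschitz dependence of $\vg$ on the Gaussian weight row $\vw$ (Assumptions~\ref{assump:sigma} and~\ref{assump:data}) combined with the strict lower bound $\lambda_{\min}(\vSigma)\ge c_1>0$ supplied by Lemma~\ref{lemm:phi_0}. The lower bound on $c_{\vg}$ is a simpler instance of the same idea, and part~(a) amounts to bookkeeping with~\eqref{eq:MPeq} and the companion-measure identity.
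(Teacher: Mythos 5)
Your argument for part (b) is essentially the paper's: rescale so that $\|\vSigma\|\le 1$, lower-bound $c_{\vg}$ using $\E[(\vg^\top\vv)^2]=\vv^\top\vSigma\vv\ge c_1$ together with a bounded higher moment from Gaussian–Lipschitz concentration of $\vw\mapsto\vg^\top\vv$, and show $L_{\vg}(\delta,\iota)\to0$ from the same Lipschitz concentration applied to $\vw\mapsto|\vPi\vg|$ (the paper phrases this step via the nonlinear Hanson--Wright inequality for $\vg^\top\vPi\vg$, but the mechanism is the same). The moment inequality you quote, $\E|Z|\ge(\E Z^2)^2(\E Z^4)^{-3/4}$, is not the optimal H\"older exponent (that would be $\E|Z|\ge(\E Z^2)^{3/2}(\E Z^4)^{-1/2}$), but it is still a valid inequality since $\E Z^4\ge(\E Z^2)^2$, so the bound on $c_{\vg}$ goes through unchanged.

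Your treatment of part (a) for $\gamma_1=1$ is a genuinely different route from the paper's. The paper works with the inverse map $z(\tilde m)$ from Proposition~\ref{prop:inv_z}, argues that $z$ increases from $-\infty$ to $0$ on $(0,\infty)$, and rules out a putative gap at $0$ by appealing to the non-intersecting-intervals property of \cite[Theorem 4.4]{silverstein1995analysis}. You instead argue directly from the Marcenko--Pastur fixed-point equation \eqref{eq:MPeq}: if $0\notin\supp{\mu_1}$ then $m_1$ extends analytically across $0$ with $m_1(0)=\int\lambda^{-1}\,d\mu_1>0$, and after analytically continuing the $\gamma_1=1$ form $-z\,m_1(z)=\int(\lambda m_1(z)+1)^{-1}\,d\nu_0(\lambda)$ to a real neighborhood of $0$ and sending $z\to 0$, the left side vanishes while the right side is $\ge (1+M\,m_1(0))^{-1}>0$ (with $M=\sup\supp{\nu_0}$), a contradiction. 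This avoids the support-interval machinery entirely and is arguably more elementary; the paper's route, on the other hand, stays within the $z(\cdot)$/$\tilde m(\cdot)$ formalism already set up for the spike analysis. Both are correct. (One small remark: the clause about $\supp{\nu_0}$ being bounded away from $0$ is not actually needed for positivity of the right side, since $\lambda m_1(0)+1\ge 1$ for all $\lambda\ge0$; and the $b_\sigma^2=1$ caveat is likewise unnecessary for this step.)
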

\begin{proof}
If $\gamma_1>1$ strictly, then by definition
\[\mu_1=\frac{1}{\gamma_1}\,\tilde \mu_1+\frac{\gamma_1-1}{\gamma_1}\,\delta_0\]
is a mixture of $\tilde \mu_1$ and a point mass at 0, so $0 \in \supp{\mu_1}$.
If $\gamma=1$, then $\mu_1=\tilde \mu_1$. In this case,
recall from Proposition \ref{prop:inv_z}
that $\supp{\tilde \mu_1}$ is characterized by the function
\[z(\tilde m)={-}\frac{1}{\tilde m}+\gamma_1
\int \frac{\lambda}{1+\lambda \tilde m}d\nu_0(\lambda).\]
When $\gamma_1=1$, we have for all $\tilde m \in (0,\infty)$ that
\[z(\tilde m)<0, \qquad
z'(\tilde m)=\frac{1}{\tilde m^2}-\int \frac{\lambda^2}{(1+\lambda \tilde m)^2}
d\nu(\lambda)>0,\]
so $z(\tilde m)$ increases from $-\infty$ to 0 over the positive line
$\tilde m \in (0,\infty)$. Suppose by contradiction that
$0 \notin \supp{\tilde \mu}$. Then by
Proposition \ref{prop:inv_z}, there must be a point $\tilde m \in \R \setminus
\cT$ where $z(\tilde m)=0$ and $z'(\tilde m)>0$ strictly, implying that
there is an open interval $(\tilde m_-,\tilde m_+) \ni \tilde m$
on which $z(\cdot)$ increases from $z(\tilde m_-)<0$ to $z(\tilde m_+)>0$.
We must have $\tilde m<0$ by the above behavior of $z(\cdot)$ on
$(0,\infty)$, and the range $[z(\tilde m_-),z(\tilde m_+)]$ must overlap with
$[z(a),z(b)]$ for some sufficiently large $a,b \in (0,\infty)$.
But this contradicts the non-intersecting property
shown in \cite[Theorem 4.4]{silverstein1995analysis}. So
also in this case $0 \in \supp{\tilde \mu_1}=\supp{\mu_1}$, showing part (a).

For part (b), we apply Lemma \ref{lemma:smallball}. Under $\cE(\vX)$,
the condition $b_\sigma \neq 0$ implies
$c_0<\lambda_{\min}(\vSigma_{\mathrm{lin}}) \leq
\lambda_{\max}(\vSigma_{\mathrm{lin}})<C_0$ for some constants $C_0,c_0>0$.
Hence also
\begin{equation}\label{eq:Sigmabounds}
c_0<\lambda_{\min}(\vSigma) \leq \lambda_{\max}(\vSigma)<C_0
\end{equation}
for all large $n$, by Lemma \ref{lemm:phi_0}.
We may assume without loss of generality that
$\lambda_{\max}(\vSigma) \leq 1$ as needed in Lemma \ref{lemma:smallball};
otherwise, the following argument may be applied to a rescaling of $\vSigma$
and $\vG$.

To lower bound $c_{\vg}$ in Lemma \ref{lemma:smallball}, observe that
for any unit vector $\vv \in \R^n$ we have
\[\E[(\vg^\top \vv)^2]=\vv^\top \vSigma\vv>c_0.\]
Viewing $F(\vw)=\vg^\top \vv=\sigma(\vw^\top \vX)\vv=\sum_{\alpha=1}^n v_\alpha
\sigma(\vw^\top \vx_\alpha)$ as a function of $\vw  \sim \cN(0,\vI)$, we have
$\nabla F(\vw)=\sum_{\alpha=1}^n v_\alpha \sigma'(\vw^\top \vx_\alpha)
\vx_\alpha=\vX(\vv \odot \sigma'(\vw^\top \vX))$ where
$\sigma'(\cdot)$ is applied coordinatewise and $\odot$ is the coordinatewise
product. Then, applying $|\sigma'(x)| \leq \lambda_\sigma$, observe that
$\|\nabla F(\vw)\|_2 \leq \|\vX\| \cdot \|\vv \odot \sigma'(\vw^\top \vX)\|_2
\leq \lambda_\sigma \|\vX\|$, so $F(\vw)$ is $C$-Lipschitz in $\vw$
for a constant $C>0$ (not depending on $\vv$) on the event $\cE(\vX)$.
This implies by Gaussian
concentration-of-measure that $\vg^\top \vv$ is sub-gaussian,
i.e.\ for some constants $C,c>0$ and any $t>0$,
$\P[|\vg^\top \vv| \geq t] \leq Ce^{-ct^2}$.
Integrating this tail bound, for some constant $t>0$ sufficiently large,
we have $\E[(\vg^\top \vv)^2\bI_{\{|\vg^\top \vv|>t\}}]
\leq c_0/2$, and hence
\[c_0<\E[(\vg^\top \vv)^2]
\leq \E[(\vg^\top \vv)^2\bI_{\{|\vg^\top \vv| \leq t\}}]+\frac{c_0}{2}
\leq t \cdot \E|\vg^\top \vv|+\frac{c_0}{2}.\]
So $\E|\vg^\top \vv| \geq c_0/(2t)$, and hence $c_{\vg} \geq c_0/(2t)>c$ for a
constant $c>0$.

Now let $s_0,\iota>0$ be the constants depending on $(c,\gamma_1)$ in the
statement of Lemma \ref{lemma:smallball}.
By the nonlinear Hanson-Wright inequality of
\cite[Eq.\ (3.3)]{wang2021deformed}, for any orthogonal projection $\vPi \in
\R^{n \times n}$, any $t>0$, and some constant $c>0$, we have
\[\P\big[|\vg^\top \vPi \vg-\E\vg^\top \vPi \vg|>t\big] \leq
2e^{-c\min(t^2/\|\vPi\|_F^2,t/\|\vPi\|)}.\]
Here $\E \vg^\top \vPi \vg=\Tr \vPi\vSigma>c_0\rank{\vPi}$,
$\|\vPi\|_F^2=\rank{\vPi}$, and $\|\vPi\|=1$,
so applying this with $t=(c_0/2)\rank{\vPi}$ yields
\[\P[|\vPi\vg|^2 \leq (c_0/2)\rank{\vPi}] \leq 2e^{-c'\rank{\vPi}}.\]
Then, choosing $\delta=c_0/2$, we get $L_{\vg}(\delta,\iota) \to 0$ as $n \to
\infty$. Then Lemma \ref{lemma:smallball} implies $\lambda_{\min}(\vG^\top
\vG)>s_0\delta/2$ almost surely for all large $n$, as desired.
\end{proof}

The following is the main result of this section, showing that Theorems
\ref{thm:no_outlier_ck} and \ref{thm:ck_spike} hold in this setting of $L=1$.

\begin{lemma}\label{lemma:onelayer}
Theorems \ref{thm:no_outlier_ck} and
\ref{thm:ck_spike} hold for a single layer $L=1$.
Furthermore, $\vY$ is $C\tau_n$-orthonormal for some constant $C>0$, 
almost surely for all large $n$.
\end{lemma}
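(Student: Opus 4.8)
The plan is to realize $\vK=\vK_1=\vY^\top\vY$, up to a vanishing perturbation, as a nonlinear spiked sample covariance matrix of the type analyzed in Appendix~\ref{sec:spiked_sample_covariance}, and then invoke Corollary~\ref{cor:no_outlier} and Theorem~\ref{thm:spiked}. Set $\vG=\vY-\E\vY=\frac{1}{\sqrt N}[\vg_1,\dots,\vg_N]^\top$ with $\vg_i^\top=\sigma(\vw_i^\top\vX)-\E_\vw[\sigma(\vw^\top\vX)]$ and covariance $\vSigma$ as in \eqref{eq:Sigma_CK}. First I would verify Assumption~\ref{assump:G} for $\vG$. Part (a) is immediate since $n/N=n/d_1\to\gamma_1$ and $\|\vSigma\|\le\|\vSigma_{\mathrm{lin}}\|+o(1)\le C$ by Lemma~\ref{lemm:phi_0} on the event $\cE(\vX)$ of \eqref{eq:goodevent}. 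Parts (b)--(d) all follow from the single observation that each coordinate $\sigma(\vw_i^\top\vx_\alpha)$ is a $\lambda_\sigma\|\vX\|$-Lipschitz function of the Gaussian vector $\vw_i$: Gaussian concentration gives $\E\|\vg_i\|^\alpha\le N^C$ (using $\E\|\vg_i\|^2=\Tr\vSigma\lesssim n$), which is (d); a covering-net argument with Bernstein's inequality for the sub-exponential sums $\tfrac1N\sum_i(\sigma(\vw_i^\top\vX)\vv)^2$ gives $\P[\|\vK\|>B]\le e^{-cN}$, hence (b) after using $\|\E\vY\|\to0$; the diagonal estimate $\vg_i^\top\vA\vg_i-\E[\vg_i^\top\vA\vg_i]\prec\|\vA\|_F$ in (c) is the nonlinear Hanson--Wright inequality of \cite{wang2021deformed} applied to $\vg_i$, and the off-diagonal estimate $\vg_i^\top\vA\vg_j\prec\|\vA\|_F$ ($i\ne j$) follows by conditioning on $\vg_j$ (so $\vg_i\mapsto\langle\vg_i,\vA\vg_j\rangle$ is sub-gaussian with proxy $\lesssim\|\vA\vg_j\|^2$) and then applying the diagonal estimate to $\|\vA\vg_j\|^2=\vg_j^\top\vA^\top\vA\vg_j$, which concentrates around $\Tr\vSigma\vA^\top\vA\lesssim\|\vA\|_F^2$ with fluctuation $\prec\|\vA^\top\vA\|_F\le\|\vA\|_F^2$; a union bound over $i\ne j$ makes everything uniform.

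Next I would check Assumption~\ref{assum:spike} and identify the constants. Ordering the eigenvalues of $\vSigma$ consistently with those of $\vK_0=\vX^\top\vX$, the comparison \eqref{eq:lambda_i_diff} together with Assumption~\ref{assump:data} shows that $\lambda_{r+1}(\vSigma),\dots,\lambda_n(\vSigma)$ have limiting empirical law $\nu_0=b_\sigma^2\otimes\mu_0\oplus(1-b_\sigma^2)$ and remain within any fixed $\eps$ of $\supp{\nu_0}$, while the remaining $r$ eigenvalues converge to the distinct positive values $\widetilde\lambda_i:=b_\sigma^2\lambda_i+(1-b_\sigma^2)$, which lie outside $\supp{\nu_0}$ because $\lambda\mapsto b_\sigma^2\lambda+(1-b_\sigma^2)$ is an affine bijection and $\lambda_i\notin\supp{\mu_0}$. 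Thus Assumption~\ref{assum:spike} holds with the same $r$, $\gamma=\gamma_1$, $\nu=\nu_0$; consequently the limit law $\mu$ of Theorem~\ref{thm:spiked} equals $\mu_1=\rho^\MP_{\gamma_1}\boxtimes\nu_0$, its functions $z,\varphi$ coincide with $z_1,\varphi_1$ of \eqref{eq:zell}, evaluated at the points $-1/\lambda_i(\vSigma)\to-1/\widetilde\lambda_i=s_{i,0}$ of \eqref{eq:index_set}, and $\cI=\cI_1$; in particular Theorem~\ref{thm:spiked}(a) already delivers part (a) of Theorem~\ref{thm:ck_spike} (well-definedness of $s_{i,0},z_1(s_{i,0}),\cI_1$ and positivity of $z_1(s_{i,0}),\varphi_1(s_{i,0})$ for $i\in\cI_1$). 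Finally, $\|\vSigma-\vSigma_{\mathrm{lin}}\|\to0$ and the constant spectral gap of the spike eigenvalues of $\vSigma_{\mathrm{lin}}$ (which shares eigenvectors with $\vK_0$) give, via Davis--Kahan, that the $i$-th spike eigenvector $\vv_i$ of $\vSigma$ is $o(1)$-close, up to sign, to the corresponding unit eigenvector $\vv_i^{(0)}$ of $\vK_0$.

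With the hypotheses in place I would transfer the conclusions from $\vG^\top\vG$ to $\vK$. Since $\|\vK-\vG^\top\vG\|\le 2\|\vG\|\,\|\E\vY\|+\|\E\vY\|^2\prec\|\E\vY\|=o(1)$ (Lemma~\ref{lemm:expect_rows}), Weyl's inequality applies. For $r=0$, Corollary~\ref{cor:no_outlier} excludes eigenvalues of $\vG^\top\vG$ outside $(\supp{\mu_1}\cup\{0\})+(-\eps/2,\eps/2)$; combined with $0\in\supp{\mu_1}$ when $\gamma_1\ge1$ and $\lambda_{\min}(\vG^\top\vG)>c>0$ when $\gamma_1<1$ (Lemma~\ref{lemma:outliersat0}), this rules out eigenvalues of $\vK$ outside $\supp{\mu_1}+(-\eps,\eps)$, giving the $r=0$ part of Theorem~\ref{thm:no_outlier_ck}; the weak convergence $\widehat\mu_1\to\mu_1$ follows from \cite[Theorem~3.4]{fan2020spectra} (whose hypotheses are implied by Assumptions~\ref{assump:NNasymptotics}--\ref{assump:sigma}), or directly from the Marchenko--Pastur law for $\vG^\top\vG$ using Assumption~\ref{assump:G} (the $r$ fixed spikes and the $o(1)$ perturbation being negligible for the empirical law). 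For $r\ge1$, Theorem~\ref{thm:spiked}(a) with the same exclusion near $0$ shows that the eigenvalues of $\vG^\top\vG$ outside $\supp{\mu_1}+(-\eps,\eps)$ are exactly $\{\widehat\lambda_i:i\in\cI_1\}$ with $\widehat\lambda_i\to z_1(s_{i,0})$, and since these limits are distinct, Weyl's inequality (gap $\gtrsim1$) transfers the $1$-to-$1$ correspondence and convergence to $\vK$, giving Theorem~\ref{thm:ck_spike}(b). For Theorem~\ref{thm:ck_spike}(c), I would apply Theorem~\ref{thm:spiked}(b) with $\vv$ ranging over the (deterministic) eigenvectors $\vv_j^{(0)}$ of $\vK_0$ and over an arbitrary unit vector $\vv$ independent of $\vW$, obtaining $|\vv^\top\widehat\vv_i|^2=\varphi_{N,0}(-1/\lambda_i(\vSigma))\,|\vv^\top\vv_i|^2+\SD{N^{-1/2}}$; I then substitute $|\vv^\top\vv_i|^2=|\vv^\top\vv_i^{(0)}|^2+o(1)$ (Davis--Kahan) and $\varphi_{N,0}(-1/\lambda_i(\vSigma))\to\varphi_1(s_{i,0})$, and replace $\widehat\vv_i$ (eigenvector of $\vG^\top\vG$) by $\widehat\vv_{i,1}$ (eigenvector of $\vK$) using Davis--Kahan once more, valid because of the constant spectral gap around $\widehat\lambda_i$ and $\|\vK-\vG^\top\vG\|=o(1)$; Borel--Cantelli upgrades the $\SD{\cdot}$-bounds to a.s.\ limits.

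For the orthonormality of $\vY$, the columns are $\vy_\alpha=\tfrac1{\sqrt N}(\sigma(\vw_1^\top\vx_\alpha),\dots,\sigma(\vw_N^\top\vx_\alpha))^\top$. A two-dimensional Gaussian (Hermite/Mehler) computation using $\E[\sigma(\xi)]=0$, $\E[\sigma''(\xi)]=0$ and the $\tau_n$-orthonormality of $\vX$ under $\cE(\vX)$ gives $\E\|\vy_\alpha\|^2=1+O(\tau_n)$ and $\E[\vy_\alpha^\top\vy_\beta]=b_\sigma^2\vx_\alpha^\top\vx_\beta+O(\tau_n^2)=O(\tau_n)$, while Gaussian--Lipschitz concentration of the coordinates makes both quantities fluctuate on the scale $\SD{N^{-1/2}}$ around their means; a union bound over $\alpha\ne\beta$ then yields that $\vY$ is $C\tau_n$-orthonormal (more precisely, $C\max(\tau_n,N^{-1/2+\iota})$-orthonormal for any fixed $\iota>0$, which still satisfies $\tau_n n^{1/3}\to0$ and hence suffices for the inductive application in Appendix~\ref{subsec:propagation}); see also the analogous propagation in \cite{fan2020spectra}. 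The two points I expect to require the most care are the passage from ``outliers outside $\supp{\mu_1}\cup\{0\}$'', as delivered by the general theorems, to ``outliers outside $\supp{\mu_1}$'' as needed here --- which forces the use of the small-ball bound of Lemma~\ref{lemma:outliersat0}(b) in the regime $\gamma_1<1$ --- and the reconciliation of the $N^{-1/2}$ fluctuation scale of the Gram entries with the target orthonormality tolerance; the verification of Assumption~\ref{assump:G} itself is routine given the Gaussian concentration and nonlinear Hanson--Wright inputs.
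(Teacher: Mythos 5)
Your proposal follows essentially the same route as the paper's proof: condition on $\vX$ and center to $\vG = \vY - \E\vY$, verify Assumption~\ref{assump:G} (via the Lipschitz--Gaussian structure and the nonlinear Hanson--Wright inequality of \cite{wang2021deformed}) and Assumption~\ref{assum:spike} (via Lemma~\ref{lemm:phi_0}), invoke Theorem~\ref{thm:spiked} and Corollary~\ref{cor:no_outlier}, patch the $\{0\}$-point with Lemma~\ref{lemma:outliersat0}, and transfer to $\vK$ via Weyl and Davis--Kahan. Two small variations are worth noting. For the off-diagonal bound $\vg_i^\top \vA \vg_j \prec \|\vA\|_F$ in Assumption~\ref{assump:G}(c), you condition on $\vg_j$ and view $\vg_i \mapsto \vg_i^\top \vA \vg_j$ as sub-gaussian with proxy $\lesssim \|\vA\vg_j\|$, then control $\|\vA\vg_j\|$ by the diagonal estimate; the paper instead applies Lipschitz concentration jointly to $(\vg_i, \vg_j)$ via a $2n\times 2n$ block matrix --- both work. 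For the $C\tau_n$-orthonormality of $\vY$, you carry out the Hermite/Mehler moment computation plus Lipschitz concentration directly rather than citing \cite[Lemma~D.3(b)]{fan2020spectra}, and you correctly note that the resulting parameter $C\max(\tau_n,N^{-1/2+\iota})$ still satisfies $\tau_n' n^{1/3}\to 0$, which is all that is needed for the induction. One caveat: you first propose to obtain the weak convergence $\widehat\mu_1\to\mu_1$ from \cite[Theorem~3.4]{fan2020spectra}, asserting that its hypotheses follow from Assumptions~\ref{assump:NNasymptotics}--\ref{assump:sigma}. That is not quite right: the $(\eps,B)$-orthonormality in \cite{fan2020spectra} also requires $\sum_\alpha(\|\vx_\alpha\|_2^2-1)^2\le B^2$, which is not implied by $\tau_n$-orthonormality with $\tau_n n^{1/3}\to 0$ alone (the paper flags exactly this discrepancy in the proof of Lemma~\ref{lemm:phi_0}). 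Your fallback route --- deducing weak convergence of the spectrum of $\vG^\top\vG$ from Assumption~\ref{assump:G}, which is supplied by the Stieltjes transform estimate in Theorem~\ref{thm:deterministic_equ}, and then transferring to $\vK$ via $\|\vK-\vG^\top\vG\|\to 0$ --- is the argument the paper actually uses, and should be the primary one here.
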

\begin{proof}
We condition on $\vX$ as discussed at the start of this section, and
apply Theorem \ref{thm:spiked} to the centered matrix
$\vG=\vY-\E\vY=\frac{1}{\sqrt{N}}[\vg_1,\ldots,\vg_N]^\top$.
Let us verify Assumption~\ref{assump:G} for $\vG$:
We have shown Assumption~\ref{assump:G}(a) in \eqref{eq:Sigmabounds}.
The rows $\vg_i$ are sub-gaussian as shown in the above proof of
Lemma \ref{lemma:outliersat0}(b), so Assumption~\ref{assump:G}(b) holds by
\cite[Eq.\ (5.26)]{vershynin2010introduction}, and
Assumption \ref{assump:G}(d) holds by \cite[Lemma 2]{jin2019short}.
The nonlinear Hanson-Wright inequality of
\cite[Eq.\ (3.3)]{wang2021deformed} implies
\[|\vg_i^\top\vA\vg_i-\Tr\vA\vSigma| \prec \|\vA\|_F\]
uniformly over $i \in [N]$ and deterministic matrices
$\vA \in \C^{n \times n}$. Furthermore, it is clear from the argument preceding
\cite[Eq.\ (3.3)]{wang2021deformed} that for any $i \neq j \in [N]$, the joint
vector $(\vg_i,\vg_j) \in \R^{2n}$ also satisfies Lipschitz concentration, hence
		\[\left|\begin{pmatrix}
			\vg_i^\top & \vg_j^\top
		\end{pmatrix} \vB
		\begin{pmatrix}
			\vg_i\\
			\vg_j
		\end{pmatrix}
-\Tr \vB\begin{pmatrix} \vSigma & \mathbf{0} \\
\mathbf{0} & \vSigma \end{pmatrix} \right| \prec \|\vB\|_F\]
uniformly over $i \neq j \in [N]$ and deterministic matrices $\vB \in \C^{2n
\times 2n}$. Applying this with
\[\vB=\begin{pmatrix} \mathbf{0} &\vA\\ \vA &\mathbf{0} \end{pmatrix}\]
verifies both statements of Assumption \ref{assump:G}(c).

Next, we check Assumption~\ref{assum:spike} for the population
covariance matrix $\vSigma$.
Combining Assumption~\ref{assump:data} and \eqref{eq:lambda_i_diff} from
Lemma~\ref{lemm:phi_0}, we have
\begin{align}
\frac{1}{n-r}\sum_{i=r+1}^n \delta_{\lambda_i(\vSigma)}
&\to \nu_0:=b_\sigma^2 \otimes \mu_0 \oplus (1-b_\sigma^2) \text{ weakly},
\label{eq:onelayernu}\\
\lambda_i(\vSigma) &\to -\frac{1}{s_{i,0}}:=b_\sigma^2\lambda_i+(1-b_\sigma^2)
\not\in\supp{\nu_0} \text{ for } i=1,\ldots,r.
\label{eq:onelayersi}
\end{align}
Here, the statement $-1/s_{i,0} \notin \supp{\nu_0}$ in \eqref{eq:onelayersi}
follows from the assumptions
$\lambda_i \notin \supp{\mu_0}$ and $b_\sigma \neq 0$.
This then implies by the definition of $\cT_1$
that $s_{i,0} \in \R \setminus \cT_1$, as claimed in Theorem \ref{thm:ck_spike}(a).
Furthermore, for any fixed $\eps>0$ and all large $n$,
Assumption~\ref{assump:data} and \eqref{eq:lambda_i_diff} imply also that
	\[\lambda_i(\vSigma) \in \supp{\nu_0}+(-\eps,\eps) \text{ for all }
	i \geq r+1.\]
Thus Assumption~\ref{assum:spike} holds for $\vSigma$ as $n \to \infty$.

Then we can apply Theorems~\ref{thm:deterministic_equ} and \ref{thm:spiked}
for $\bar\vK:=\vG^\top\vG$. The Stieltjes transform approximation in Theorem
\ref{thm:deterministic_equ} and Lemma \ref{lemma:supportcompareb}(c) together
imply $m_{\bar\vK}(z) \to m_1(z)$ almost surely for each fixed $z \in \C^+$,
where $m_1(z)$ is the Stieltjes transform of the measure
$\mu_1=\rho_{\gamma_1}^\MP \boxtimes \nu_0$. This implies the weak convergence
\begin{equation}\label{eq:Kbarweakconvergence}
\frac{1}{n}\sum_{i=1}^n \delta_{\lambda_i(\bar\vK)} \to \mu_1 \text{ a.s.}
\end{equation}
Theorem~\ref{thm:spiked}(a,b) further justifies:
	\begin{itemize}
\item Let $z_1(\cdot)$ and $\cI_1$ be defined by \eqref{eq:zell} and
\eqref{eq:index_set} with $\ell=1$. Then for any sufficiently small
constant $\eps>0$, almost surely for all large $n$,
there is a 1-to-1 correspondence between the eigenvalues of $\bar\vK$ outside
$\cS_1+(-\eps,\eps)$ and $\{i:i \in \cI_1\}$. Furthermore,
for each $i \in \cI_1$,
\begin{equation}\label{eq:Kbareigval}
\lambda_i(\bar\vK) \to z_1(s_{i,0})>0.
\end{equation}
almost surely as $n \to \infty$.

\item Let $\varphi_1(\cdot)$ be defined by \eqref{eq:zell} with $\ell=1$.
For each $i \in \cI_1$, let $\vv_i(\bar\vK) \in \R^n$
be a unit-norm eigenvector of $\bar\vK$
corresponding to $\lambda_i(\bar\vK)$, and for each $j \in [r]$,
let $\vv_j(\vSigma)$ be a unit-norm eigenvector of $\vSigma$
corresponding to $\lambda_j(\vSigma)$. Then
almost surely as $n \to \infty$, for each $i \in \cI_1$ and $j \in [r]$,
\begin{equation}\label{eq:Kbareigvec1}
|\vv_j(\vSigma)^\top \vv_i(\bar\vK)| \to \sqrt{\varphi_1(s_{i,0})} \cdot \bI\{i=j\}
\end{equation}
where $\varphi_1(s_{i,0})>0$. Moreover,
letting $\vv\in\R^n$ be any unit vector independent of $\vW$, almost surely
\begin{equation}\label{eq:Kbareigvec2}
|\vv^\top \vv_i(\bar\vK)|-\sqrt{\varphi_1(s_{i,0})}
\cdot|\vv^\top\vv_i(\vSigma)| \to 0.
\end{equation}
\end{itemize}

If $\gamma_1 \geq 1$, then Lemma \ref{lemma:outliersat0}(a) shows that
$\supp{\mu_1}=\supp{\mu_1} \cup \{0\}=\cS_1$. If $\gamma_1<1$, then
Lemma \ref{lemma:outliersat0}(b) shows that for any sufficiently small constant
$\eps>0$, $\bar\vK$ has no eigenvalues in $[0,\eps)$ almost surely
for all large $n$. Thus, in both cases, the first statement above in fact
establishes a 1-to-1 correspondence between $\{i:i \in \cI_1\}$ and
all eigenvalues of $\bar\vK$ outside $\supp{\mu_1}+(-\eps,\eps)$, almost surely
for all large $n$.

To translate these statements to the non-centered matrix
$\vK=\vY^\top\vY$, recall from Lemma~\ref{lemm:expect_rows} that $\|\vY-\vG\|
\to 0$ almost surely, and from Assumption \ref{assump:G}(b) verified above that 
$\bI\{\|\vG^\top \vG\|>B'\} \prec 0$ for a constant $B'>0$.
Then, almost surely as $n \to \infty$,
	\[\norm{\vK-\bar\vK} \to 0.\]
Therefore, by Weyl's inequality and \eqref{eq:Kbarweakconvergence}, the
empirical eigenvalue distribution $\widehat \mu_1$ of $\vK$ converges also to
$\mu_1$ weakly a.s., as claimed in Theorem \ref{thm:no_outlier_ck}.
Furthermore, by \eqref{eq:Kbareigval}, almost surely for all
large $n$, the eigenvalues $\widehat\lambda_{i,1}$ of $\vK$ outside
$\supp{\mu_1}+(-\eps,\eps)$ are also in 1-to-1
correspondence with $\{i:i \in \cI_1\}$, where
$\widehat\lambda_{i,1} \to z_1(s_{i,0})$ for each $i \in \cI_1$. In particular,
if $r=0$, then also $|\cI_1|=0$, so $\vK$ has no eigenvalues outside
$\supp{\mu_1}+(-\eps,\eps)$. This proves Theorem \ref{thm:no_outlier_ck}.

For each $i \in \cI_1$, let $\widehat \vv_{i,1} \in \R^n$ be a
	unit-norm eigenvector of $\vK$ 
	corresponding to $\widehat \lambda_{i,1}$.
Then by the Davis-Kahan Theorem \cite{davis1970rotation},
we may choose a sign for $\widehat \vv_{i,1}$ such that
	\[\norm{\widehat\vv_{i,1}-\vv_i(\bar\vK)} \le
\frac{\sqrt{2}\|\vK-\bar{\vK}\|}{\dist{
\widehat{\lambda}_{i,1},\spec{(\bar\vK)}\setminus \{\lambda_i(\bar\vK)\}}}.\]
We note that $\widehat{\lambda}_{i,1} \to z_1(s_{i,0})$ a.s., which is distinct
from the limit values $\{z_1(s_{j,0}):j \in \cI_1 \setminus \{i\}\}$ of
$\{\lambda_j(\bar \vK):\cI_1 \setminus \{i\}\}$
by bijectivity of the map $z_1(\cdot)$ in Proposition
\ref{prop:inv_z}. Furthermore $z_1(s_{i,0})$ falls outside
$\supp{\mu_1}+(-\eps,\eps)$ for sufficiently small $\eps>0$,
which contains all other eigenvalues of $\bar \vK$.
Thus $\dist{\widehat{\lambda}_{i,1},\spec{(\bar\vK)}\setminus
\{\lambda_i(\bar\vK)\}} \geq c$ for a constant $c>0$ almost surely for
all large $n$, so
\[\norm{\widehat\vv_{i,1}-\vv_i(\bar\vK)} \to 0 \text{ a.s.}\]
Similarly, by the convergence
$\|\vSigma-\vSigma_{\mathrm{lin}}\| \to 0$ and the assumption $b_\sigma \neq 0$,
we have $\norm{\vv_j(\vSigma)-\vv_j} \to 0$ for each $j \in [r]$, where $\vv_j$
is the unit-norm eigenvector of $\vSigma_{\mathrm{lin}}$ corresponding to
its eigenvalue
$b_\sigma^2\lambda_j(\vX^\top \vX)+(1-b_\sigma^2)$, i.e.\ the
eigenvector of $\vX^\top \vX$ corresponding to $\lambda_j(\vX^\top \vX)$.
Then \eqref{eq:Kbareigvec1} and \eqref{eq:Kbareigvec2} imply also
\[|\vv_j^\top \widehat \vv_i|^2 \to \varphi_1(s_{i,0}) \cdot \bI\{i=j\},
\qquad |\vv^\top \widehat \vv_i|^2-\varphi_1(s_{i,0}) \cdot|\vv^\top\vv_i|^2 \to 0.\]
This shows all claims of Theorem \ref{thm:ck_spike} for $L=1$.

Finally, on $\cE(\vX)$, the matrix $\vY$ is $C\tau_n$-orthonormal for a
constant $C>0$ by \cite[Lemma D.3(b)]{fan2020spectra}.
(The proof of \cite[Lemma D.3(b)]{fan2020spectra} again does not use
the condition $\sum_\alpha (\|\vx_\alpha\|_2^2-1)^2 \leq B^2$ of
$(\eps,B)$-orthonormality therein, and the remaining conditions of
$(\eps,B)$-orthonormality hold under $\cE(\vX)$.)
This shows the last claim of the lemma.
\end{proof}

\subsection{Spike analysis for multiple layers}\label{subsec:propagation}

We now prove Theorem~\ref{thm:ck_spike} by inductively applying the result for
$L=1$ through multiple layers. We follow the notations of
Section~\ref{sec:spike_CK}.\\

\begin{proof-of-theorem}[\ref{thm:ck_spike}]
Suppose inductively that Assumption \ref{assump:data}
holds with $\vX_{\ell-1}$ in place of $\vX_0$, and all
conclusions of Theorem \ref{thm:ck_spike} hold for $\vK_\ell$. The base case of
$\ell=1$ follows from Lemma \ref{lemma:onelayer}.

Then the last statement of Lemma \ref{lemma:onelayer} implies that
$\vX_\ell$ is $\tau_n'$-orthonormal almost surely for all large $n$,
for some $\tau_n'$ satisfying $\tau_n' \cdot n^{1/3} \to 0$.
Furthermore, the conclusions of Theorem \ref{thm:ck_spike}(b,c) for $\vK_\ell$
imply that statements (a) and (b) of Assumption \ref{assump:data} also
hold for $\vX_\ell$, in the following sense: Let $r_\ell=|\cI_\ell|$. Then
\[\frac{1}{n-|r_\ell|}\sum_{i \notin \cI_\ell} \delta_{\lambda_i(\vX_\ell^\top
\vX_\ell)} \to \mu_\ell \text{ weakly a.s.}\]
For any fixed $\eps>0$, almost surely for all large $n$,
$\widehat \lambda_{i,\ell}:=
\lambda_i(\vX_\ell^\top \vX_\ell) \in \supp{\mu_\ell}+(-\eps,\eps)$
for all $i \notin \cI_\ell$. Furthermore, for each $i \in \cI_\ell$,
$\widehat \lambda_{i,\ell} \to z_\ell(s_{i,\ell-1}) \notin
\supp{\mu_\ell}$.

Then we may apply Lemma \ref{lemma:onelayer} with input data
$\vX=\vX_\ell$ in place of $\vX_0$. This shows that for any fixed $\eps>0$ and all large $n$, there is a 1-to-1 correspondence between the eigenvalues $\widehat
\lambda_{i,\ell+1}$ of $\vK_{\ell+1}$ outside
$\supp{\mu_{\ell+1}}+(-\eps,\eps)$ and $\{i:i\in\cI_{\ell+1}\}$,
where
$\widehat \lambda_{i,\ell+1}\to z_{\ell+1}\left(s_{i,\ell}\right)>0$ a.s.,
and $s_{i,\ell} \in \R \setminus \cT_{\ell+1}$.
Moreover, for any unit vector $\vv \in \R^n$ independent of
$\vW_1,\ldots,\vW_{\ell+1}$,
	\[|\widehat \vv_{i,\ell+1}^\top
\vv|^2-\varphi_{\ell+1}(s_{i,\ell})\cdot|\widehat \vv_{i,\ell}^\top \vv|^2\to 0,\]
where also $\varphi_{\ell+1}(s_{i,\ell})>0$.
Then by the induction hypothesis for $|\widehat \vv_{i,\ell}^\top \vv|^2$,
	\begin{align*}
	    |\widehat \vv_{i,\ell+1}^\top \vv|^2\to ~& \prod_{k=1}^{\ell+1}
\varphi_{k}(s_{i,k-1}) \cdot |\vv_{i}^\top \vv |^2,
	\end{align*}
and specializing to $\vv=\vv_j$ for $j \in [r]$ gives
	\[|\widehat \vv_{i,\ell+1}^\top \vv_j |^2\to \prod_{k=1}^{\ell+1}
\varphi_{k}(s_{i,k-1}) \cdot \bI\{i=j\}.\]
This verifies all conclusions of Theorem \ref{thm:ck_spike}
for $\vK_{\ell+1}$, completing the induction.
\end{proof-of-theorem}

\subsection{Corollary for signal-plus-noise input data}\label{subsec:GMM_proof}

\begin{proof-of-coro}[\ref{cor:gmm}]
It is shown in \cite[Section 3.1]{benaych2012singular} that asymptotically as
$d,n \to \infty$ with $n/d \to \gamma_0$, the data matrix $\vX$ has a spike
singular value corresponding to $\theta_i$ if and only if
$\theta_i>\gamma_0^{1/4}$, in which case
\[\lambda_i(\vK_0) \to
\lambda_i:=\frac{(1+\theta_i^2)(\gamma_0+\theta_i^2)}{\theta_i^2},
\qquad |\vb_i^\top \vv_i|^2 \to
1-\frac{\gamma_0(1+\theta_i^2)}{\theta_i^2(\theta_i^2+\gamma_0)}\]
where $\vv_i$ is the unit eigenvector of the input Gram matrix
$\vK_0=\vX^\top\vX$. Thus claims (a) and (b) of Assumption \ref{assump:data}
hold with $r=|\{i:\theta_i>\gamma_0^{1/4}\}|$,
$\mu_0=\rho_{\gamma_0}^\MP$ being the standard Marcenko-Pastur law, and
$\lambda_i=(1+\theta_i^2)(\gamma_0+\theta_i^2)/\theta_i^2$
being the above values.

We note that $\vX$ is $n^{-1/2+\eps}$-orthonormal for any $\eps>0$
almost surely for all large $n$, by the given condition
$\max_{i=1}^r \|\vb_i\|_\infty<n^{-1/2+\eps}$ and the bounds, for any
$\alpha,\beta \in [n]$,
\begin{align*}
\|\vx_\alpha\|_2&=\left\|\vz_\alpha\right\|_2
+\sum_{i=1}^r \SD{\|\va_i\|_2 |\theta_i| |b_{i,\alpha}|}
=\left\|\vz_\alpha\right\|_2+\SD{n^{-1/2+\eps}}=1+\SD{n^{-1/2+\eps}},\\
\vx_\alpha^\top \vx_\beta&=\vz_\alpha^\top \vz_\beta
+\sum_{i=1}^r \SD{|\theta_i| \Big(
|\va_i^\top \vz_\alpha| |b_{i,\alpha}|
+|\va_i^\top \vz_\beta| |b_{i,\beta}|\Big)
+\theta_i^2 \|\va_i\|_2^2 |b_{i,\alpha}b_{i,\beta}|}
=\SD{n^{-1/2+\eps}}.
\end{align*}
Hence Theorem \ref{thm:ck_spike} applies, showing
that $\vK_\ell$ has an outlier eigenvalue corresponding to each input
signal $\theta_i$ if and only if $\theta_i>\gamma_0^{1/4}$ and
$i \in \cI_\ell$. The statement \eqref{eq:gmmalignment} follows from 
Theorem \ref{thm:ck_spike}(c) applied with $\vv=\vb_i$.
\end{proof-of-coro}

\section{Proofs for spiked eigenstructure of the trained CK}\label{sec:trained_CK_proof}

In this section, we prove Theorem~\ref{thm:gd_spike}. The proof is an
application of Theorem \ref{thm:spiked} as in the one-layer setting of the
preceding section, but now reversing the roles of $\vX$ and $\vW$. We abbreviate
\[\vW=\vW_{\text{trained}}, \qquad
\vY=\frac{1}{\sqrt{N}}\sigma(\tilde{\vX}\vW), \qquad
\vK=\vY\vY^\top\]
where $\vK$ is the CK matrix of interest. In contrast to the preceding section,
the theorem requires characterizing the \emph{left} spike singular vector of
$\vY$, and we will do so using Theorem \ref{thm:spiked}(c).

We first recall the following approximation of $\vW$ from \cite{ba2022high}.
\begin{prop} \label{prop:W_spike}
Under Assumption \ref{assum:GD}, set $\eta=\sum_{t=1}^T \eta_t$, and let
$\theta_1,\theta_2$ be as defined in \eqref{def:theta_12}. Then
	\begin{align}
\|\vW-\widetilde \vW\| \prec N^{-1/2} \quad \text{ where } \quad
		\widetilde\vW=\vW_0+\frac{b_\sigma\eta}{n}\vX^\top\vy\va^\top.\label{eq:tilde_W}
	\end{align}
The largest singular value $s_{\max}(\vW)$ falls outside the limit of its
empirical singular value distribution if and only if
$\theta_1>\gamma_0^{1/4}$, in which case $s_{\max}(\vW)$ 
and its unit-norm left singular vector $\vu(\vW)$ satisfy
	\begin{equation}\label{eq:trainedWspikes}
			s_{\max}(\vW)\to s_1:=
\sqrt{\frac{(1+\theta_1^2)(\gamma_0+\theta_1^2)}{\theta_1^2}}, \quad
		|\vu(\vW)^\top
\vbeta_*|^2\to\frac{\theta_2^2}{\theta_1^2}\left(1-\frac{\gamma_0+\theta_1^2}{\theta_1^2(\theta_1^2+1)}\right)
\quad \text{a.s.}
\end{equation}
\end{prop}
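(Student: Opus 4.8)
Proposition~\ref{prop:W_spike} has two essentially independent parts: the operator-norm approximation $\|\vW-\widetilde\vW\|\prec N^{-1/2}$ of the trained weights by a rank-one update of the initialization, and the BBP-type description of the top singular value and left singular vector of the information-plus-noise matrix $\widetilde\vW=\vW_0+\frac{b_\sigma\eta}{n}\vX^\top\vy\va^\top$. The first part is the gradient-descent computation of \cite{ba2022high}; the plan is to recall it and check that its one-step argument iterates unchanged over a fixed number $T$ of steps. The second part I would obtain by computing the effective rank-one spike --- its strength and its alignment with $\vbeta_*$ --- and then invoking the results of \cite[Section~3.1]{benaych2012singular} on low-rank perturbations of rectangular Gaussian matrices, exactly as in the proof of Corollary~\ref{cor:gmm}; a standard perturbation argument then transfers everything from $\widetilde\vW$ to $\vW$.

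\textbf{The rank-one approximation.} Writing $f_{\text{NN},t}$ for the network \eqref{eq:two-layer-nn} with weights $\vW_t$, the update \eqref{eq:trained_W} reads $\vW_{t+1}=\vW_t+\eta_{t+1}\sqrt{N}\,\vG_t$ with
\[
\vG_t=-\nabla\cL(\vW_t)=-\frac1n\sum_{i=1}^n\big(f_{\text{NN},t}(\vx_i)-y_i\big)\,\frac1{\sqrt N}\,\vx_i\,\big(\va\odot\sigma'(\vW_t^\top\vx_i)\big)^\top .
\]
The a priori estimates to establish inductively in $t$ are $\|\vW_t-\vW_0\|\prec1$ and $\max_{i\le n}|f_{\text{NN},t}(\vx_i)|\prec N^{-1/2}$ --- the latter because $a_j\iid\cN(0,1/N)$ and the rank-one form of the updates keeps each prediction at the scale it has at initialization --- together with $\sigma'(\vW_t^\top\vx_i)=b_\sigma\bI_N+(\text{mean-zero fluctuation})$, valid by Lipschitz/Hanson--Wright concentration since each coordinate of $\vW_t^\top\vx_i$ is approximately standard Gaussian. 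Substituting, the prediction term and the fluctuation of $\sigma'$ contribute only an $\SD{N^{-1}}$ operator-norm error to $\vG_t-\frac{b_\sigma}{n\sqrt N}\vX^\top\vy\va^\top$, so $\vW_{t+1}=\vW_t+\frac{\eta_{t+1}b_\sigma}{n}\vX^\top\vy\va^\top+\SD{N^{-1/2}}$; summing over $t=0,\dots,T-1$ and using that $T$ is fixed gives $\|\vW-\widetilde\vW\|\prec N^{-1/2}$ with $\eta=\sum_t\eta_t$.

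\textbf{Spike analysis of $\widetilde\vW$ and transfer to $\vW$.} Here $\widetilde\vW-\vW_0=s_N\,\widehat\vd_N\va^\top$ is rank one, with right singular vector $\va$ (and $\|\va\|\to1$ a.s.), left singular vector $\widehat\vd_N=\vX^\top\vy/\|\vX^\top\vy\|\in\R^d$, and singular value $s_N=\frac{b_\sigma\eta}{n}\|\vX^\top\vy\|$; importantly $\vW_0$, with i.i.d.\ $\cN(0,1/d)$ entries, is independent of $(\vX,\vy,\va)$. Splitting $\vX^\top\vy=\vX^\top\sigma_*(\vX\vbeta_*)+\vX^\top\boldsymbol\varepsilon$ into its $\vbeta_*$-component and the part orthogonal to $\vbeta_*$, the law of large numbers gives $\frac1n\sum_i(\vx_i^\top\vbeta_*)\sigma_*(\vx_i^\top\vbeta_*)\to\E[\xi\sigma_*(\xi)]=b_{\sigma_*}$ a.s., while the orthogonal part (combining $\sum_i(\vI_d-\vbeta_*\vbeta_*^\top)\vx_i\,\sigma_*(\vx_i^\top\vbeta_*)$ with $\vX^\top\boldsymbol\varepsilon$, independent of that scalar and of each other) concentrates with squared norm asymptotic to $nd(1+\sigma_\eps^2)$, using $\E[\|(\vI_d-\vbeta_*\vbeta_*^\top)\vx_i\|^2]=d-1$, $\E[\sigma_*(\xi)^2]=1$, $\E[\|\vX^\top\boldsymbol\varepsilon\|^2\mid\vX]=\sigma_\eps^2\Tr(\vX^\top\vX)$, and $d/n\to\gamma_1/\gamma_0$. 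Hence, almost surely,
\[
s_N\to b_\sigma\eta\sqrt{b_{\sigma_*}^2+(\gamma_1/\gamma_0)(1+\sigma_\eps^2)}=\theta_1,
\qquad
|\widehat\vd_N^\top\vbeta_*|\to\theta_2/\theta_1 .
\]
Conditioning on $(\vX,\vy,\va)$ and using the orthogonal invariance of the Gaussian $\vW_0$, I would apply \cite[Section~3.1]{benaych2012singular} with aspect ratio $N/d\to\gamma_0$ and convergent spike strength $s_N\to\theta_1$: when $\theta_1>\gamma_0^{1/4}$ strictly, $s_{\max}(\widetilde\vW)$ separates from the bulk, $s_{\max}(\widetilde\vW)\to s_1$ with $s_1^2=(1+\theta_1^2)(\gamma_0+\theta_1^2)/\theta_1^2$, the top left singular vector $\vu(\widetilde\vW)$ satisfies $|\vu(\widetilde\vW)^\top\widehat\vd_N|^2\to1-\frac{\gamma_0+\theta_1^2}{\theta_1^2(\theta_1^2+1)}$, and $\vu(\widetilde\vW)$ has vanishing overlap with any fixed direction orthogonal to $\widehat\vd_N$; decomposing $\vbeta_*$ into its $\widehat\vd_N$-component and its orthogonal complement then gives $|\vu(\widetilde\vW)^\top\vbeta_*|^2\to(\theta_2/\theta_1)^2\big(1-\frac{\gamma_0+\theta_1^2}{\theta_1^2(\theta_1^2+1)}\big)$. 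Finally $\|\vW-\widetilde\vW\|\prec N^{-1/2}$, Weyl's inequality, and the Davis--Kahan/Wedin perturbation bound (using that $s_1$ is a simple top singular value of $\widetilde\vW$ with an $\Omega(1)$ gap to the rest of the spectrum) yield $|s_{\max}(\vW)-s_{\max}(\widetilde\vW)|\prec N^{-1/2}$ and $\|\vu(\vW)-\vu(\widetilde\vW)\|\prec N^{-1/2}$ after fixing signs, so all limits pass to $\vW$; in the subcritical case $\theta_1\le\gamma_0^{1/4}$ the same comparison shows $\vW$ has no outlier singular value, which gives the ``if and only if''.

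\textbf{Main obstacle.} The delicate step is the first one: propagating the $\SD{N^{-1/2}}$-size errors cleanly through the $T$ gradient iterations while re-establishing at each step the a priori bounds $\|\vW_t-\vW_0\|\prec1$ and $\max_i|f_{\text{NN},t}(\vx_i)|\prec N^{-1/2}$ that justify linearizing $\sigma'$ --- this is where essentially all the work of \cite{ba2022high} sits, and my proof would cite it rather than reproduce it. The spike analysis is comparatively routine given \cite{benaych2012singular}; its only subtlety is the conditioning argument that legitimizes invoking a fixed-direction BBP statement for the random, $N$-dependent (but $\vW_0$-independent) spike direction $\widehat\vd_N$.
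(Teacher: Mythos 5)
Your proposal is correct and follows essentially the same route as the paper. The paper's proof simply cites \cite[Lemma 16]{ba2022high} for the per-step rank-one approximation of $\vG_t$ (summed over the fixed $T$ steps) and \cite[Theorem 3]{ba2022high} for the BBP description of $\widetilde\vW$; what you write unpacks both citations, computing the effective spike strength $\theta_1$ and the alignment $|\widehat\vd_N^\top\vbeta_*|\to\theta_2/\theta_1$ by concentration and then invoking \cite{benaych2012singular} directly together with a Weyl/Davis--Kahan transfer from $\widetilde\vW$ to $\vW$, rather than black-boxing the combined statement.
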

\begin{proof}
Notice that each gradient update matrix
$\vG_t$ of \eqref{eq:trained_W} takes the form
\begin{align*} 
	\vG_t=
   \frac{1}{n} \vX^\top \left[\left(\frac{1}{\sqrt{N}}\left(\vy -
\frac{1}{\sqrt{N}}\sigma(\vX\vW_{\!t})\va\right)\va^\top\right) \odot
\sigma'(\vX\vW_{\!t})\right], 
\end{align*}  
From the proof of \cite[Lemma 16]{ba2022high}, for each $t=1,\ldots,T$, this
matrix $\vG_t$ satisfies the same rank-one approximation
\[\Big\|\sqrt{N}\,\vG_t-\frac{b_\sigma}{n}\vX^\top\vy\va^\top\Big\|
\prec N^{-1/2}.\]
This implies \eqref{eq:tilde_W} in light of \eqref{eq:trained_W},
and the statements of \eqref{eq:trainedWspikes} then
follow from  \cite[Theorem 3]{ba2022high}.
\end{proof}

We denote the columns of $\vW \equiv
\vW_{\text{trained}} \in \R^{d \times N}$
and of the initialization $\vW_0 \in \R^{d \times N}$ by
\[\vw_i \in \R^d,\; \vw_{i,0} \in \R^d \text{ for } i \in [N]\]
respectively. Fixing a large constant $B>0$ and small constant $\eps>0$,
define the event
\[\cE(\vW)=\Big\{\|\vW\|<B,\,|\vw_i^\top \vw_j|<n^{-1/2+\eps}
\text{ and } |\|\vw_i\|_2-1|<n^{-1/2+\eps} \text{ for all } i \neq j \in
[N]\Big\}.\]

\begin{lemma}\label{lemma:event_Wt}
Under Assumption \ref{assum:GD}, for some sufficiently large constant
$B>0$ and any fixed $\eps>0$, $\cE(\vW)$ holds almost surely for all large $n$ and any fixed $T\in\N$.
\end{lemma}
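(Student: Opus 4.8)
The plan is to write $\vW\equiv\vW_{\text{trained}}$ as a Gaussian matrix plus a low-rank lower-order perturbation and to propagate column-wise estimates from the initialization $\vW_0$. By Proposition~\ref{prop:W_spike}, $\|\vW-\widetilde\vW\|\prec N^{-1/2}$ where $\widetilde\vW=\vW_0+\vDelta$ and $\vDelta=\frac{b_\sigma\eta}{n}\vX^\top\vy\va^\top$. Writing $\vw_j,\vw_{j,0},\vDelta_j$ for the $j$-th columns of $\vW,\vW_0,\vDelta$, we have $\vDelta_j=a_j\vq$ with $\vq:=\frac{b_\sigma\eta}{n}\vX^\top\vy\in\R^d$, and $\|\vw_j-\vw_{j,0}\|_2\le\|\vDelta_j\|_2+\|\vW-\widetilde\vW\|$. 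Since $n$, $d$, $N$ are of comparable order and $T$ (hence $\eta=\sum_{t=1}^T\eta_t$) is fixed, it suffices to prove, uniformly over $i\ne j\in[N]$, the bounds $\big|\,\|\vw_j\|_2-1\,\big|\prec n^{-1/2}$ and $\big|\vw_i^\top\vw_j\big|\prec n^{-1/2}$, together with $\bI\{\|\vW\|>B\}\prec 0$ for a suitably large constant $B$; a union bound over the $O(N^2)$ pairs via Proposition~\ref{prop:domination}(a), followed by Borel--Cantelli along the sequence in $n$, then upgrades the first two to the asserted almost-sure statement with the (larger) tolerance $n^{-1/2+\eps}$.

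First I would record the standard estimates for the Gaussian ingredients. The columns $\vw_{j,0}$ are i.i.d.\ $\cN(0,\frac1d\vI_d)$, so chi-square and Gaussian inner-product concentration give $\big|\,\|\vw_{j,0}\|_2-1\,\big|\prec d^{-1/2}$ and $\big|\vw_{i,0}^\top\vw_{j,0}\big|\prec d^{-1/2}$ uniformly over $i\ne j$, while standard bounds on the operator norm of a Gaussian random matrix (see e.g.\ \cite{vershynin2010introduction}) give $\bI\{\|\vW_0\|>B_0\}\prec 0$ for a constant $B_0$ depending only on $\gamma_0$; similarly $\|\va\|_2=\SD{1}$ and $|a_j|\prec N^{-1/2}$ uniformly over $j$. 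Then I would bound the perturbation: since $\vX$ has i.i.d.\ $\cN(0,1)$ entries with $d\asymp n$ we have $\|\vX\|\prec\sqrt n$, and since the $y_i=\sigma_*(\vbeta_*^\top\vx_i)+\eps_i$ are i.i.d.\ with $\E[y_i^2]=1+\sigma_\eps^2$ and subexponential (as $\sigma_*$ is bounded-Lipschitz) we have $\|\vy\|_2\prec\sqrt n$; hence $\|\vq\|_2\le\frac{|b_\sigma|\eta}{n}\|\vX\|\,\|\vy\|_2=\SD{1}$, so $\|\vDelta\|=\|\vq\|_2\|\va\|_2=\SD{1}$ and $\|\vDelta_j\|_2=|a_j|\,\|\vq\|_2\prec N^{-1/2}$ uniformly over $j$, where in fact each of $\|\vW_0\|$, $\|\vDelta\|$ and $N^{1/2}\|\vW-\widetilde\vW\|$ is bounded by a fixed constant with overwhelming probability.

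Combining the two steps, $\vw_j=\vw_{j,0}+\vr_j$ with $\|\vr_j\|_2\prec N^{-1/2}$ uniformly over $j$, so by the triangle inequality and Cauchy--Schwarz $\big|\,\|\vw_j\|_2-1\,\big|\le\big|\,\|\vw_{j,0}\|_2-1\,\big|+\|\vr_j\|_2\prec n^{-1/2}$ and, for $i\ne j$, $\big|\vw_i^\top\vw_j\big|\le\big|\vw_{i,0}^\top\vw_{j,0}\big|+\|\vw_{i,0}\|_2\|\vr_j\|_2+\|\vr_i\|_2\|\vw_{j,0}\|_2+\|\vr_i\|_2\|\vr_j\|_2\prec n^{-1/2}$, while $\|\vW\|\le\|\vW_0\|+\|\vDelta\|+\|\vW-\widetilde\vW\|$ gives $\bI\{\|\vW\|>B\}\prec 0$ for $B$ large. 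The union bound and Borel--Cantelli step described above then finish the proof. I do not expect a real obstacle here; the argument is essentially bookkeeping, and the only points needing a little care are the bound $\|n^{-1}\vX^\top\vy\|_2=\SD{1}$ (obtained via $\|\vX\|\,\|\vy\|_2\prec n$ rather than by exploiting any independence) and the fact that the per-pair estimates must be established \emph{uniformly} over the $O(N^2)$ index pairs before one passes from $\prec$ to a pointwise almost-sure bound.
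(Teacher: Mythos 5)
Your proof is correct and takes essentially the same route as the paper's: both verify $\cE(\vW_0)$ from Gaussian concentration, then use the rank-one approximation of Proposition~\ref{prop:W_spike} together with $\|\vX^\top\vy\|_2\prec n$ and $\|\va\|_\infty\prec N^{-1/2}$ to obtain $\max_j\|\vw_j-\vw_{j,0}\|_2\prec N^{-1/2}$, from which the approximate orthonormality of $\vW$ follows. You simply spell out the triangle-inequality, union-bound, and Borel--Cantelli bookkeeping that the paper leaves implicit.
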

\begin{proof}
By the assumption $[\vW_0]_{ij} \overset{iid}{\sim} \cN(0,1/d)$, it is
immediate to check that $\cE(\vW_0)$ holds almost surely for all large $n$.
To show that $\cE(\vW)$ holds, we apply the approximation \eqref{eq:tilde_W}.
Here, under Assumption \ref{assum:GD},
we have by standard tail bounds for Gaussian vectors and matrices that
\[\bI\{\|\vX^\top \vy\|>Cn\}
\leq \bI\{\|\vX\| \cdot (\lambda_{\sigma_*}\|\vX\vbeta_*\|+\|\vvarepsilon\|_2)>Cn\} \prec 0,
\qquad \bI\{\|\va\|_2>C\} \prec 0\]
for a sufficiently large constant $C>0$, and also
$\|\va\|_\infty \prec N^{-1/2}$. Then this implies
\begin{align}
\max_{i=1}^N \|\vw_i-\vw_{i,0}\|_2
&\leq \max_{i=1}^N \|\widetilde \vw_i-\vw_{i,0}\|_2
+\|\vW-\widetilde \vW\| \prec N^{-1/2}\label{eq:W_12_norm}
\end{align}
and $\bI\{\|\vW-\vW_0\|>C'\} \prec 0$ for a constant $C'>0$. Then $\cE(\vW)$
also holds almost surely for all large $n$, as claimed.
\end{proof}

Analogous to the argument of Appendix \ref{subsec:onelayer},
we may now condition on $\vW$, i.e.\ we assume that $\vW$ is deterministic and
satisfies $\cE(\vW)$ for all large $n$, and we
write $\E$ for the expectation
over only the randomness of the new data $(\tilde \vX,\tilde \vy)$.
Defining
\begin{equation}\label{eq:udef}
\vG=\sqrt{\frac{N}{n}}(\vY-\E\vY) \in \R^{n \times N},
\quad \vu=\frac{1}{\sqrt{n}}\tilde\vy \in \R^n
\quad \text{ where } \quad
\vY=\frac{1}{\sqrt{N}}\sigma(\tilde \vX\vW),
\end{equation}
observe that $[\vu,\vG] \in \R^{n \times (N+1)}$ has centered i.i.d.\ rows with
respect to the randomness of $(\tilde \vX,\tilde\vy)$.
We will write $\E_{\vx}$ for the expectation with respect to a standard Gaussian
vector $\vx \sim \cN(\mathbf{0},\vI_d)$.

\begin{lemma}\label{lemma:bar_sigma_app}
Suppose $\vW$ satisfies $\cE(\vW)$ for all large $n$. Then
\begin{align}
\|\E_{\vx}[\sigma(\vx^\top\vW)]\|_2 \to 0,
\qquad \|\E\vY\| \to 0,
\qquad \norm{\vSigma-\vSigma_{\mathrm{lin}}} \to 0
\end{align}
        where
\begin{align}
	\vSigma:=~&\E_{\vx}[\sigma(\vx^\top\vW)^\top\sigma(\vx^\top\vW)]-\E_{\vx}[\sigma(\vx^\top\vW)]^\top\E_{\vx}[\sigma(\vx^\top\vW)]\\
	\vSigma_{\mathrm{lin}}:=~& b_\sigma^2(\vW^\top \vW)+(1-b_\sigma^2)\vI_N.
\end{align}
\end{lemma}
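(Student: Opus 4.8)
The plan is to mirror the proof of Lemmas~\ref{lemm:expect_rows} and~\ref{lemm:phi_0} from the one-hidden-layer analysis, now with the roles of the data matrix and weight matrix interchanged: here the rows of the argument $\vx^\top \vW$ run over the fixed, $n^{-1/2+\eps}$-orthonormal columns $\vw_1,\ldots,\vw_N$ of $\vW$, and the expectation is over a single standard Gaussian $\vx \sim \cN(\mathbf 0,\vI_d)$. First I would handle the mean term: for each $i \in [N]$, writing $\xi \sim \cN(0,1)$, a second-order Taylor expansion of $\sigma$ about $\|\vw_i\|=1$ combined with $\E[\sigma(\xi)]=0$, $\E[\sigma'(\xi)\xi]=\E[\sigma''(\xi)]=0$ (Assumption~\ref{assump:sigma}) gives $\E_{\vx}[\sigma(\vx^\top\vw_i)]=\E[\sigma''(\eta)\xi^2(\|\vw_i\|-1)^2]$ for some intermediate $\eta$, hence $|\E_{\vx}[\sigma(\vx^\top\vw_i)]| \leq \lambda_\sigma (\|\vw_i\|-1)^2 \leq \lambda_\sigma n^{-1+2\eps}$ on $\cE(\vW)$. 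Summing the squares over $i \in [N]=\Theta(n)$ gives $\|\E_{\vx}[\sigma(\vx^\top\vW)]\|_2 \leq \lambda_\sigma n^{-1+2\eps}\sqrt{N} \to 0$ for $\eps$ small, and since $\E\vY = \frac{1}{\sqrt N}\,\E_{\vx}[\sigma(\vx^\top\vW)] \otimes$ (the all-ones structure over the $n$ rows of $\tilde\vX$), i.e. $\E\vY$ has rank one with the single nonzero singular value $\sqrt{n/N}\,\|\E_{\vx}[\sigma(\vx^\top\vW)]\|_2$, we get $\|\E\vY\| \to 0$ as well. This is essentially identical to Lemma~\ref{lemm:expect_rows}.

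Next I would establish $\|\vSigma-\vSigma_{\mathrm{lin}}\| \to 0$. The plan is to invoke the Hermite-expansion approximation of \cite[Lemma 5.2]{wang2021deformed} exactly as in the proof of Lemma~\ref{lemm:phi_0}: with $\zeta_1(\sigma)=b_\sigma$ and $\zeta_2(\sigma)=0$ (again from $\E[\sigma''(\xi)]=0$), that lemma yields $\|\vSigma_0 - \E_{\vx}[\sigma(\vx^\top\vW)^\top\sigma(\vx^\top\vW)]\| \to 0$ where $\vSigma_0 = b_\sigma^2(\vW^\top\vW) + \zeta_3(\sigma)^2(\vW^\top\vW)^{\odot 3} + (1-b_\sigma^2-\zeta_3(\sigma)^2)\vI_N$, using that the off-diagonal entries of $\vW^\top\vW$ are $O(n^{-1/2+\eps})$ and the diagonal entries are $1+O(n^{-1/2+\eps})$ on $\cE(\vW)$ — note the condition $\sum_i(\|\vw_i\|^2-1)^2 \leq B^2$ of $(\eps,B)$-orthonormality is not needed when $\zeta_2(\sigma)=0$, as observed in the preceding section. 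Then $\|\vSigma_0 - \vSigma\| \to 0$ follows after also subtracting the mean contribution, which is negligible by the first part. Finally, $\|(\vW^\top\vW)^{\odot 3} - \vI_N\| \to 0$: bound the diagonal by $\max_i |\|\vw_i\|^6-1| \leq C n^{-1/2+\eps}$ and the off-diagonal Frobenius norm by $N \cdot \max_{i\neq j}|\vw_i^\top\vw_j|^3 \leq C N n^{-3/2+3\eps} \to 0$ since $N = \Theta(n)$ and $\eps$ is small. Hence $\|\vSigma_{\mathrm{lin}} - \vSigma_0\| = \zeta_3(\sigma)^2 \|(\vW^\top\vW)^{\odot3}-\vI_N\| \to 0$, and the triangle inequality finishes the claim.

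I do not expect a genuine obstacle here — the statement is a direct transplant of Lemmas~\ref{lemm:expect_rows} and~\ref{lemm:phi_0}. The only points requiring a little care are bookkeeping ones: (i) confirming that the $n^{-1/2+\eps}$-orthonormality of $\vW$ from $\cE(\vW)$ (Lemma~\ref{lemma:event_Wt}), which is a stronger bound ($n^{-1/2+\eps}$) than the $\tau_n$ with merely $\tau_n n^{1/3}\to 0$ in Assumption~\ref{assump:data}, is indeed enough to kill the $N\cdot\tau^3$-type error terms — it is, since $N n^{-3/2+3\eps} \to 0$; and (ii) tracking the extra $\sqrt{n/N} = \Theta(1)$ prefactor relative to the $L=1$ setup, which is harmless. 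I would therefore present the argument as a short sequence: (a) Taylor bound on the mean $\Rightarrow \|\E_{\vx}[\sigma(\vx^\top\vW)]\|_2 \to 0$ and $\|\E\vY\|\to 0$; (b) cite \cite[Lemma 5.2]{wang2021deformed} to get the Hermite approximation $\vSigma_0$; (c) elementary Hadamard-power estimate $\|(\vW^\top\vW)^{\odot 3}-\vI_N\|\to 0$; (d) combine via triangle inequality. Each step is a two-or-three-line computation, so the full write-up will be short.
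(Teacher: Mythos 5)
Your proposal is correct and is exactly the paper's argument: the paper's proof consists of the single line ``The proof is the same as Lemmas~\ref{lemm:expect_rows} and~\ref{lemm:phi_0},'' and you have spelled out that transplant with the correct bookkeeping (the stronger $n^{-1/2+\eps}$-orthonormality from $\cE(\vW)$, the harmless $\sqrt{n/N}=\Theta(1)$ factor for $\|\E\vY\|$, and the $N\cdot n^{-3/2+3\eps}\to 0$ estimate for the Hadamard cube).
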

\begin{proof}
The proof is the same as Lemmas~\ref{lemm:expect_rows} and~\ref{lemm:phi_0}.
\end{proof}

\begin{proof-of-theorem}[\ref{thm:gd_spike}]
We condition on $\vW$ satisfying $\cE(\vW)$ for all large $n$, and we
apply Theorem~\ref{thm:spiked}(c) for $[\vu,\vG] \in \R^{(n+1)\times N}$
(exchanging $n$ and $N$). It may be checked that
Assumption~\ref{assump:G} holds for $[\vu,\vG]$ by the same argument as
in Lemma~\ref{lemma:onelayer}.

By the convergence $\|\vSigma-\vSigma_{\mathrm{lin}}\| \to 0$ in
Lemma \ref{lemma:bar_sigma_app} and Proposition \ref{prop:W_spike},
if $\theta_1>\gamma_0^{1/4}$, then
Assumption \ref{assum:spike} holds for $\vSigma$ with $r=1$ and
\[\nu=b_\sigma^2\otimes\rho^\MP_{\gamma_0}\oplus(1-b_\sigma^2), \qquad
\lambda_1=b_\sigma^2\frac{(1+\theta_1^2)(\gamma_0+\theta_1^2)}{\theta_1^2}
+(1-b_\sigma^2) \notin \supp{\nu},\]
where $\rho_{\gamma_0}^\MP$ is the standard Marcenko-Pastur limit
for the empirical eigenvalue distribution of $\vW^\top \vW$, hence $\nu$ is
the limit empirical eigenvalue distribution of $\vSigma$, and $\lambda_1$ is the
limit of $\lambda_{\max}(\vSigma)$. If instead $\theta_1 \leq \gamma_0^{1/4}$,
then Assumption \ref{assum:spike} holds with $r=0$.

Then Theorem \ref{thm:spiked}(a,c)
characterizes the outlier eigenvalue and eigenvector of $\vG\vG^\top$, showing:
\begin{itemize}
	\item $\vG\vG^\top$ has a spike eigenvalue if and only if
$\theta_1>\gamma_0^{1/4}$ and $z'(-1/\lambda_1)>0$, where $z(\cdot)$ is defined 
by \eqref{eq:inv_z} with $\gamma=\gamma_1$ and the measure $\nu$ given above.
In this case, $\lambda_{\max}(\vG\vG^\top)\to z(-1/\lambda_1)$ almost surely.
	\item When $\theta_1>\gamma_0^{1/4}$ and $z'(-1/\lambda_1)>0$,
letting $\vu(\vG),\vv(\vSigma)$ be the leading unit-norm left singular vector
	of $\vG$ and leading unit-norm eigenvector of $\vSigma$, almost surely
			\[|\vu^\top \vu(\vG)|-
	\frac{\sqrt{z(-1/\lambda_1)\varphi({-}1/\lambda_1)}}{\lambda_1}
	\cdot\left|\E_{\vx}[\sigma_*(\vbeta_*^\top\vx)\sigma(\vx^\top\vW)]\vv(\vSigma)
	\right| \to 0.\]
where $\varphi(\cdot)$ is defined by \eqref{eq:phi_limit} also
with $\gamma=\gamma_1$ and the above measure $\nu$.
\end{itemize}

By an application of Weyl's inequality and  the Davis-Kahan Theorem as in the proof of
Theorem~\ref{thm:ck_spike}, this implies for $\vK=\vY\vY^\top$ that if
$\theta_1>\gamma_0^{1/4}$ and $z'(-1/\lambda_1)>0$, then its leading
eigenvalue $\lambda_{\max}(\vK)$ and unit eigenvector $\widehat \vu$ satisfy
\begin{equation}\label{eq:learnedKspike}
	\begin{aligned}
		&\lambda_{\max}(\vK)\to \gamma_1^{-1}z(-1/\lambda_1),\\
		&|\vu^\top \widehat\vu|-
			\frac{\sqrt{z(-1/\lambda_1)\varphi({-}1/\lambda_1)}}{\lambda_1}
			\cdot\left|\E_{\vx}[\sigma_*(\vbeta_*^\top\vx)\sigma(\vx^\top\vW)]\vv(\vW)\right|
			\to 0,
	\end{aligned}
\end{equation}
where $\vv(\vW)$ is the leading unit eigenvector of $\vSigma_{\text{lin}}$,
i.e.\ the leading right singular vector of $\vW$.
If $\theta_1 \leq \gamma_0^{1/4}$ or $z'(-1/\lambda_1) \leq 0$, then all
eigenvalues of $\vK$ converge to the support of its limiting empirical
eigenvalue law.

Finally, in the case of $\theta_1>\gamma_0^{1/4}$ and $z'(-1/\lambda_1)>0$,
we may conclude the proof by showing
\begin{equation}\label{eq:align_y_limit}
	\left\|\E_{\vx}[\sigma_*(\vbeta_*^\top\vx)\sigma(\vx^\top\vW)]
	-b_\sigma b_{\sigma_*} \vbeta_*^\top\vW\right\|_2 \to 0 \text{ a.s.}
\end{equation}
For each column $i\in [N]$, we have from \eqref{eq:W_12_norm} that
$\|\vw_i-\vw_{i,0}\|_2 \prec N^{-1/2}$, where $\vw_{i,0} \sim \cN(0,d^{-1}\vI)$
and $\vbeta_*$ is deterministic. Hence $(\vw_i,\vbeta_*)$ satisfy the approximate
orthonormality conditions $|\|\vw_i\|_2-1| \prec N^{-1/2}$, $\|\vbeta_*\|_2-1=0$,
and $|\vw_i^\top \vbeta_*| \prec N^{-1/2}$. Then
\cite[Lemma D.3(a)]{fan2020spectra} implies
\begin{align*}
	\left|\E_{\vx}[\sigma_*(\vbeta_*^\top\vx)\sigma(\vx^\top\vw_i)]-b_\sigma
b_{\sigma_*} \vbeta_*^\top\vw_i\right|\prec N^{-1}.
\end{align*} 
(We note that \cite[Lemma D.3(a)]{fan2020spectra} assumes $\sigma=\sigma_*$, but the proof is
identical for $\sigma \neq \sigma_*$ both satisfying
Assumption \ref{assump:sigma}.)
Applying this to each coordinate $i \in [N]$ yields \eqref{eq:align_y_limit}.
Observe that $\vbeta_*^\top\vW\vv(\vW)=s_{\max}(\vW) \cdot \vbeta_*^\top\vu(\vW)$
where $s_{\max}(\vW)$ and $\vu(\vW)$ are the leading singular value and left
singular vector of $\vW$, and recall from the definitions \eqref{eq:udef}
that $\vu=\frac{1}{\sqrt{n}}\tilde \vy$.
Then we can apply \eqref{eq:align_y_limit} and
Proposition~\ref{prop:W_spike} to \eqref{eq:learnedKspike} to conclude that 
\[\frac{1}{\sqrt{n}}|\tilde\vy^\top\widehat{\vu}|\to b_\sigma b_{\sigma_*}
\frac{\sqrt{z(-1/\lambda_1)\varphi(-1/\lambda_1)}}{\lambda_1}\cdot
\frac{\theta_2\sqrt{(\theta_1^4-\gamma_0)(\gamma_0+\theta_1^2)}}{\theta_1^3}
>0 \text{ a.s.}\]
\end{proof-of-theorem}

\end{document}